\renewcommand{\textuparrow}{$\uparrow$}
\renewcommand{\textdownarrow}{$\downarrow$}
\newcommand{\cmark}{\ding{51}}%
\newcommand{\xmark}{\ding{55}}%
\definecolor{solarized@base03}{HTML}{002B36}
\definecolor{solarized@base02}{HTML}{073642}
\definecolor{solarized@base01}{HTML}{586e75}
\definecolor{solarized@base00}{HTML}{657b83}
\definecolor{solarized@base0}{HTML}{839496}
\definecolor{solarized@base1}{HTML}{93a1a1}
\definecolor{solarized@base2}{HTML}{EEE8D5}
\definecolor{solarized@base3}{HTML}{FDF6E3}
\definecolor{solarized@yellow}{HTML}{B58900}
\definecolor{solarized@orange}{HTML}{CB4B16}
\definecolor{solarized@red}{HTML}{DC322F}
\definecolor{solarized@magenta}{HTML}{D33682}
\definecolor{solarized@violet}{HTML}{6C71C4}
\definecolor{solarized@blue}{HTML}{268BD2}
\definecolor{solarized@cyan}{HTML}{2AA198}
\definecolor{solarized@green}{HTML}{859900}
\lstdefinestyle{mystyle}{
    backgroundcolor=\color{solarized@base3},
    rulesepcolor=\color{solarized@base03},
    numberstyle=\tiny\color{solarized@base01},
    keywordstyle=\color{solarized@green},
    stringstyle=\color{solarized@cyan}\ttfamily,
    identifierstyle=\color{solarized@blue},
    commentstyle=\color{solarized@magenta},
    emphstyle=\color{solarized@red},
    basicstyle=\ttfamily\scriptsize\color{solarized@base0},
    breakatwhitespace=false,         
    breaklines=true,                 
    captionpos=b,                    
    keepspaces=true,                 
    numbers=none,                    
    numbersep=5pt,                  
    showspaces=false,                
    showstringspaces=false,
    showtabs=false,                  
    tabsize=2
}
\crefname{lstlisting}{listing}{listings}
\Crefname{lstlisting}{Listing}{Listings}
\theoremstyle{plain}
\newtheorem{theorem}{Theorem}[section]
\newtheorem{lemma}[theorem]{Lemma}
\newtheorem{corollary}[theorem]{Corollary}
\theoremstyle{definition}
\newtheorem{definition}[theorem]{Definition}
\newtheorem{observation}[theorem]{Observation}
\theoremstyle{remark}
\DeclareMathOperator{\opE}{\mathbb{E}}
\DeclareMathOperator{\opH}{\mathbb{H}}
\DeclareMathOperator{\opHt}{\mathbb{H}_\theta}
\DeclareMathOperator{\opI}{\mathbb{I}}
\DeclareMathOperator{\opp}{p}
\DeclareMathOperator{\ophp}{\hat{p}}
\DeclareMathOperator{\oppt}{p_\theta}
\DeclareMathOperator{\opq}{q}
\DeclareMathOperator{\opVar}{Var}
\DeclareMathOperator{\opCov}{Cov}
\newcommand{\Var}[1]{\opVar[ #1 ]}
\newcommand{\Cov}[1]{\opCov[ #1 ]}
\newcommand{\Entropy}[1]{\opH [ #1 ]}
\newcommand{\Hc}[2]{\opH [ #1 \mathbin{\vert} #2 ]}
\newcommand{\Ht}[1]{\opHt [ #1 ]}
\newcommand{\Htc}[2]{\opHt [ #1 \mathbin{\vert} #2 ]}
\newcommand{\MIc}[3]{\opI [ #1 ; #2 \mathbin{\vert} #3 ]}
\newcommand{\prob}[1]{\opp ( #1 )}
\newcommand{\probc}[2]{\opp ( #1 \mathbin{\vert} #2 )}
\newcommand{\hpprob}[1]{\ophp ( #1 )}
\newcommand{\hpprobc}[2]{\ophp ( #1 \mathbin{\vert} #2 )}
\newcommand{\qprob}[1]{\opq ( #1 )}
\newcommand{\qprobc}[2]{\opq ( #1 \mathbin{\vert} #2 )}
\newcommand{\ptprob}[1]{\oppt ( #1 )}
\newcommand{\ptprobc}[2]{\oppt ( #1 \mathbin{\vert} #2 )}
\newcommand{\sqprob}[2]{\opq#2 ( #1 )}
\newcommand{\CrossEntropy}[2]{\opH ( #1 \mathbin{\vert \vert} #2 )}
\newcommand{\Kale}[2]{D_\mathrm{KL} ( #1 \mathbin{\vert \vert} #2 )}
\newcommand{\N}{\mathcal{N}}
\newcommand{\normaldistpdf}[3]{\N(#1;\,#2,\,#3)}
\newcommand{\E}[2]{\opE_{#1} \left [ #2 \right ]}
\newcommand{\chainedE}[2]{\opE_{#1} {#2}}
\newcommand{\data}{\mathcal{D}}
\definecolor{sns-orange}{HTML}{ff7f0e}
\definecolor{sns-ambiguous}{HTML}{00528C}
\definecolor{sns-nonambiguous}{HTML}{2CA9FF}
\definecolor{sns-blue}{HTML}{1f77b4}
\begin{document}

\twocolumn[
\icmltitle{Deep Deterministic Uncertainty: A Simple Baseline}

\icmlsetsymbol{equal}{*}

\begin{icmlauthorlist}
\icmlauthor{Jishnu Mukhoti}{equal,oat,torr}
\icmlauthor{Andreas Kirsch}{equal,oat}
\icmlauthor{Joost van Amersfoort}{oat}
\icmlauthor{Philip H.S. Torr}{torr}
\icmlauthor{Yarin Gal}{oat}
\end{icmlauthorlist}

\icmlaffiliation{oat}{OATML, 
Department of Computer Science,
University of Oxford}
\icmlaffiliation{torr}{Torr Vision Group,
Department of Engineering Science,
University of Oxford}

\icmlcorrespondingauthor{Jishnu Mukhoti}{jishnu.mukhoti@eng.ox.ac.uk}
\icmlcorrespondingauthor{Andreas Kirsch}{andreas.kirsch@cs.ox.ac.uk}

\icmlkeywords{Machine Learning, ICML}

\vskip 0.3in
]%

\printAffiliationsAndNotice{\icmlEqualContribution} %

\begin{abstract}
Reliable uncertainty from deterministic single-forward pass models is sought after because conventional methods of uncertainty quantification are computationally expensive.
We take two complex single-forward-pass uncertainty approaches, DUQ and SNGP, and examine whether they mainly rely on a well-regularized feature space. Crucially, without using their more complex methods for estimating uncertainty, a single softmax neural net with such a feature-space, achieved via residual connections and spectral normalization, \emph{outperforms} DUQ and SNGP's epistemic uncertainty predictions using simple Gaussian Discriminant Analysis \emph{post-training} as a separate feature-space density estimator---without fine-tuning on OoD data, feature ensembling, or input pre-procressing.
This conceptually simple \emph{Deep Deterministic Uncertainty (DDU)} baseline can also be used to disentangle aleatoric and epistemic uncertainty and performs as well as Deep Ensembles, the state-of-the art for uncertainty prediction, on several OoD benchmarks 
(CIFAR-10/100 vs SVHN/Tiny-ImageNet, ImageNet vs ImageNet-O) 
as well as in active learning settings across different model architectures, yet is \emph{computationally cheaper}. %
\end{abstract}
\section{Introduction}
\label{sec:intro}

Two types of uncertainty are often of interest in ML: \emph{epistemic uncertainty}, which is inherent to the model, caused by a lack of training data, and hence reducible with more data, and \emph{aleatoric uncertainty}, caused by inherent noise or ambiguity in data, and hence irreducible with more data \citep{der2009aleatory}.
Disentangling these two 
is critical for applications such as active learning \citep{gal2017deep} or detection of out-of-distribution (OoD) samples \citep{hendrycks2016baseline}: in active learning, we wish to avoid inputs with high aleatoric but low epistemic uncertainty, and in OoD detection, we wish to avoid mistaking ambiguous in-distribution (iD) examples as OoD.
This is particularly challenging for noisy and ambiguous datasets found in safety-critical applications like autonomous driving \citep{huang2020autonomous} and medical diagnosis \citep{esteva2017dermatologist, filos2019systematic}.

\textbf{Related Work.} Most well-known methods of uncertainty quantification in deep learning \citep{blundell2015weight, gal2016dropout, lakshminarayanan2017simple, wen2020batchensemble, dusenberry2020efficient} require multiple forward passes at test time.
Amongst these, Deep Ensembles have generally performed best in uncertainty prediction \citep{ovadia2019can}, but 
their significant memory and compute burden at training and test time hinders their adoption in real-life and mobile applications.
Consequently, there has been an increased interest in uncertainty quantification using deterministic single forward-pass neural networks which have a smaller footprint and lower latency.
Among these approaches, \citet{lee2018simple} uses Mahalanobis distances to quantify uncertainty by fitting a class-wise Gaussian distribution (with shared covariance matrices) on the feature space of a pre-trained ResNet encoder.
They do not consider the structure of the underlying feature-space, which might explain why their competitive results require input perturbations, ensembling GMM densities from multiple layers, and fine-tuning on OoD hold-out data.

\begin{table*}[!t]
    \centering
    \caption{\emph{OoD detection performance of different baselines using a Wide-ResNet-28-10 architecture with the CIFAR-10 vs SVHN/CIFAR-100/Tiny-ImageNet and CIFAR-100 vs SVHN/Tiny-ImageNet dataset pairs averaged over 25 runs.} \emph{SN:} Spectral Normalisation, \emph{JP:} Jacobian Penalty. The best deterministic single-forward pass method and the best method overall are in bold for each metric.}
    \label{table:ood_wrn}
    \resizebox{\linewidth}{!}{%
    \begin{tabular}{cccccccccc}
    \toprule
    \textbf{\small Train Dataset} & \textbf{\small Method} & \textbf{\small Penalty} & {\small Aleatoric Uncertainty} &
    \textbf{\small Epistemic Uncertainty} & {\small Accuracy (\textuparrow)} & {\small ECE (\textdownarrow)} & \textbf{\small AUROC SVHN (\textuparrow)} & \textbf{\small AUROC CIFAR-100 (\textuparrow)} & \textbf{\small AUROC Tiny-ImageNet (\textuparrow)}\\
    \midrule
    \multirow{7}{*}{CIFAR-10} & Softmax & - & \multirow{2}{*}{Softmax Entropy} & Softmax Entropy & \multirow{2}{*}{$95.98\pm0.02$} & \multirow{2}{*}{$\mathbf{0.85\pm0.02}$} & $94.44\pm0.43$ & $89.39\pm0.06$ & $88.42\pm0.05$ \\
    &Energy-based {\scriptsize \citep{liu2020energy}} & - && Softmax Density &&& $94.56\pm0.51$ & $88.89\pm0.07$ & $88.11\pm0.06$\\
    &DUQ {\scriptsize \citep{van2020simple}} & JP & Kernel Distance & Kernel Distance & $94.6\pm0.16$ & $1.55\pm0.08$ & $93.71\pm0.61$ & $85.92\pm0.35$ & $86.83\pm0.12$\\
    &SNGP {\scriptsize \citep{liu2020simple}} & SN & Predictive Entropy & Predictive Entropy & $\mathbf{96.04\pm0.09}$ & $1.8\pm0.1$ & $94.0\pm1.3$ & $91.13\pm0.15$ & $89.97\pm0.19$\\
    &\textbf{DDU (ours)} & \textbf{SN} & \textbf{Softmax Entropy} & \textbf{GMM Density} & $95.97\pm0.03$&$\mathbf{0.85\pm0.04}$&$\mathbf{97.86\pm0.19}$&$\mathbf{91.34\pm0.04}$ & $\mathbf{91.07\pm0.05}$\\
    \cmidrule{2-10}
    &5-Ensemble & \multirow{2}{*}{-} & \multirow{2}{*}{Predictive Entropy} & Predictive Entropy & \multirow{2}{*}{$\mathbf{96.59\pm0.02}$}&\multirow{2}{*}{$\mathbf{0.76\pm0.03}$}&$97.73\pm0.31$&$\mathbf{92.13\pm0.02}$ & $90.06\pm0.03$\\
    &{\scriptsize \citep{lakshminarayanan2017simple}} &&& Mutual Information &&&$97.18\pm0.19$&$91.33\pm0.03$ & $90.90\pm0.03$\\
    \midrule
    &&&&& {\small Accuracy (\textuparrow)} & {\small{ECE (\textdownarrow)}} & \multicolumn{2}{c}{\textbf{\small AUROC SVHN (\textuparrow)}} & \textbf{\small AUROC Tiny-ImageNet (\textuparrow)} \\
    \cmidrule{6-10}
    \multirow{6}{*}{CIFAR-100} & Softmax & - & \multirow{2}{*}{Softmax Entropy} & Softmax Entropy & \multirow{2}{*}{$80.26\pm0.06$}&\multirow{2}{*}{$4.62\pm0.06$}& \multicolumn{2}{c}{$77.42\pm0.57$} & $81.53\pm0.05$ \\
    &Energy-based {\scriptsize \citep{liu2020energy}} & - && Softmax Density &&&\multicolumn{2}{c}{$78\pm0.63$} & $81.33\pm0.06$ \\
    &SNGP {\scriptsize \citep{liu2020simple}} & SN & Predictive Entropy & Predictive Entropy & $80.00\pm0.11$ & $4.33\pm0.01$ & \multicolumn{2}{c}{$85.71\pm0.81$} & $78.85\pm0.43$ \\
    &\textbf{DDU (ours)} & \textbf{SN} & \textbf{Softmax Entropy} &  \textbf{GMM Density} & $\mathbf{80.98\pm0.06}$&$\mathbf{4.10\pm0.08}$& \multicolumn{2}{c}{$\mathbf{87.53\pm0.62}$}& $\mathbf{83.13\pm0.06}$ \\
    \bottomrule
    \cmidrule{2-10}
    &5-Ensemble & \multirow{2}{*}{-} & \multirow{2}{*}{Predictive Entropy} & Predictive Entropy &\multirow{2}{*}{$\mathbf{82.79\pm0.10}$}&\multirow{2}{*}{$\mathbf{3.32\pm0.09}$}& \multicolumn{2}{c}{$79.54\pm0.91$} & $82.95\pm0.09$\\
    &{\scriptsize \citep{lakshminarayanan2017simple}} &&& Mutual Information &&& \multicolumn{2}{c}{$77.00\pm1.54$} & $82.82\pm0.04$\\
    \end{tabular}}
\end{table*}
\begin{figure*}[!t]
    \centering
    \begin{subfigure}{0.20\linewidth}
        \centering
        \includegraphics[width=\linewidth]{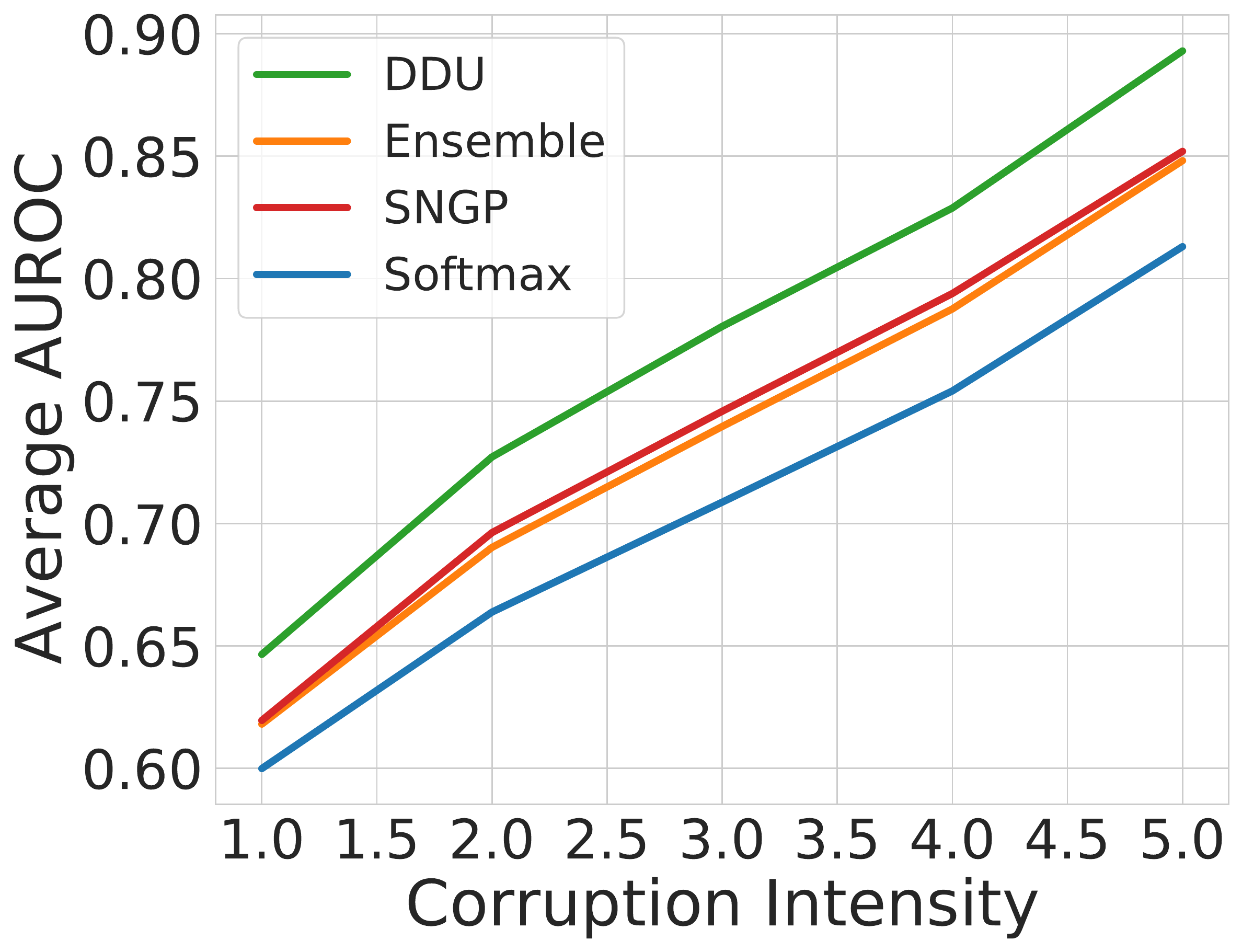}
        \caption{Wide-ResNet-28-10}
        \label{subfig:cifar10_c_wide_resnet}
    \end{subfigure}
    \begin{subfigure}{0.20\linewidth}
        \centering
        \includegraphics[width=\linewidth]{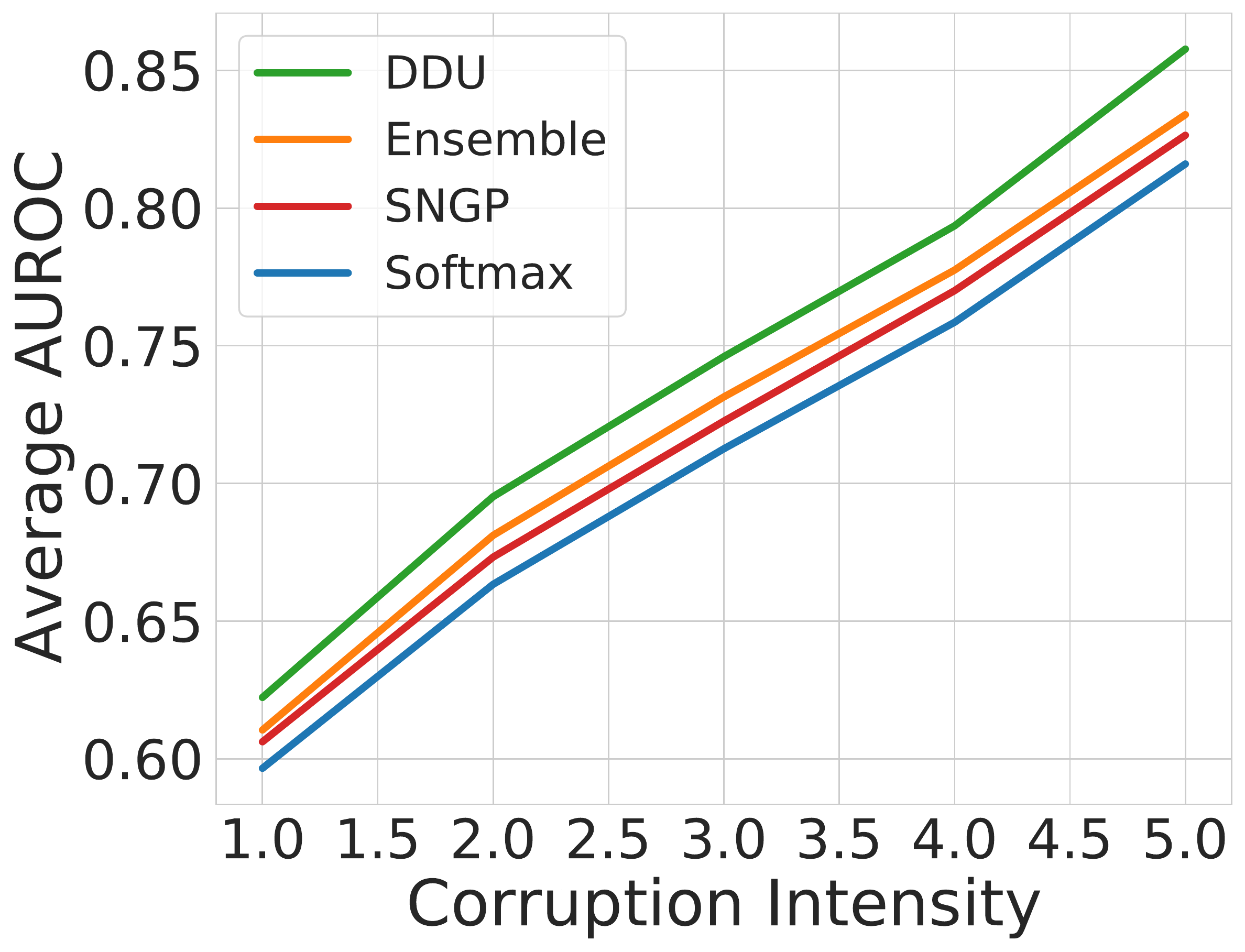}
        \caption{ResNet-50}
        \label{subfig:cifar10_c_resnet50}
    \end{subfigure} 
    \begin{subfigure}{0.20\linewidth}
        \centering
        \includegraphics[width=\linewidth]{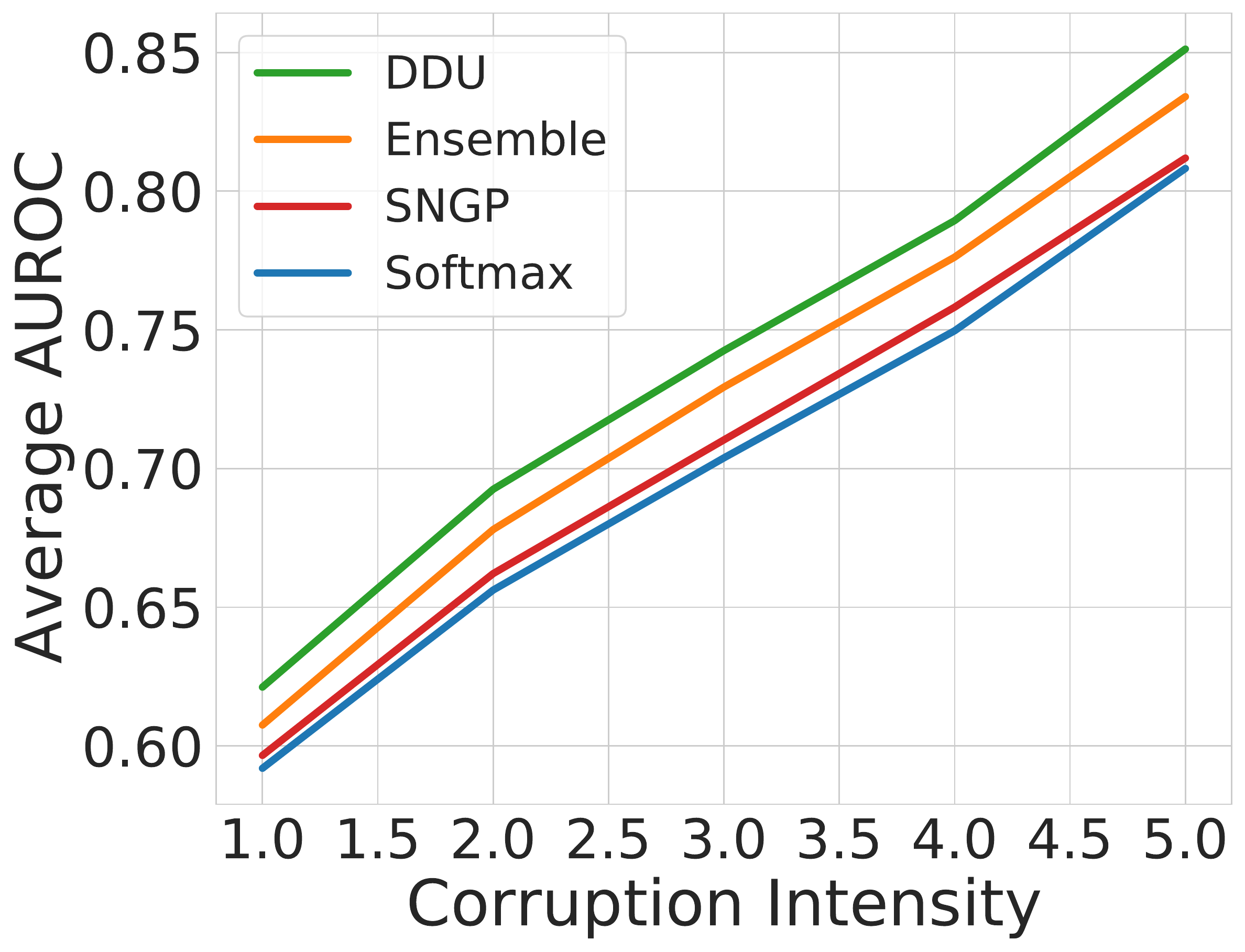}
        \caption{ResNet-110}
        \label{subfig:cifar10_c_resnet110}
    \end{subfigure} 
    \begin{subfigure}{0.20\linewidth}
        \centering
        \includegraphics[width=\linewidth]{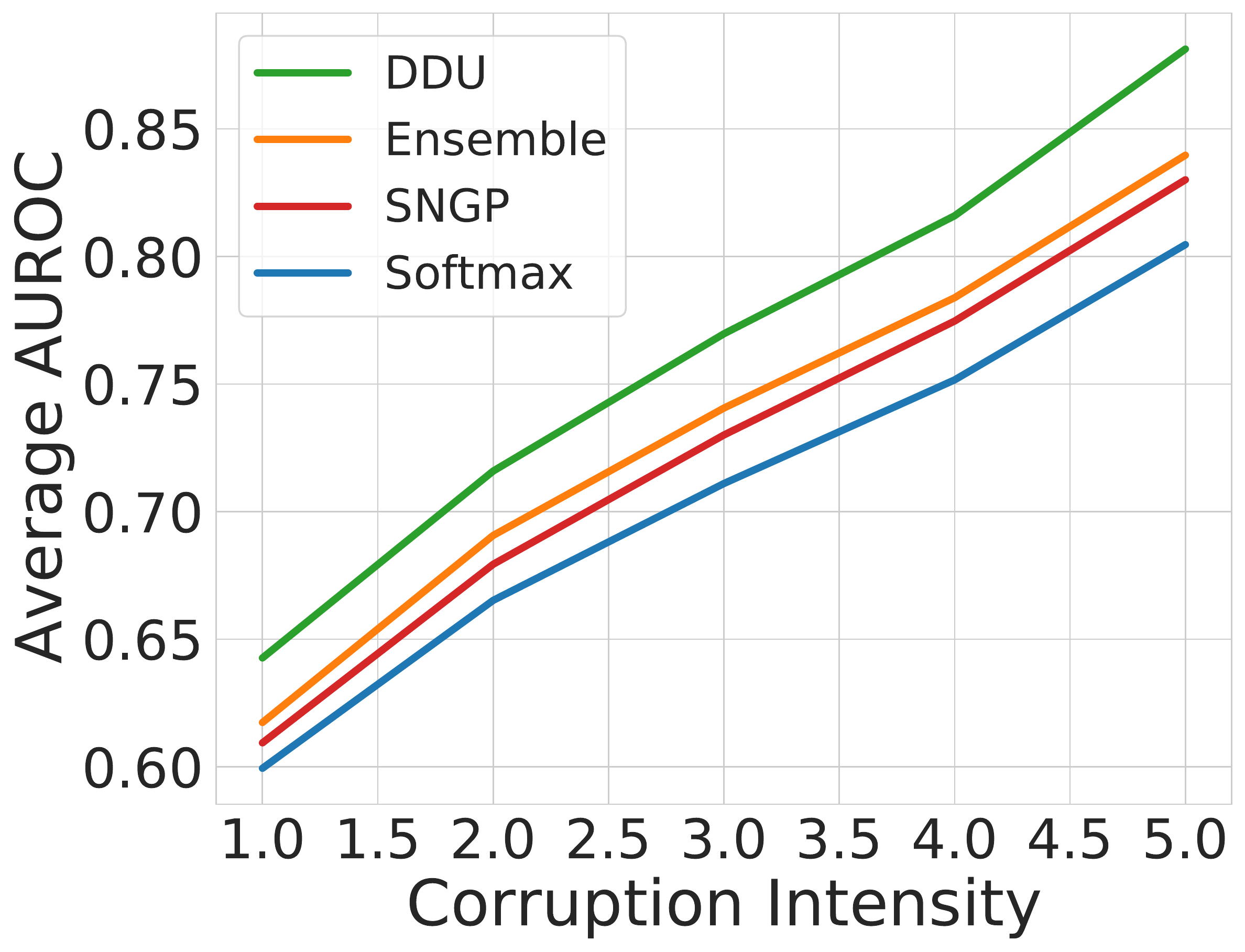}
        \caption{DenseNet-121}
        \label{subfig:cifar10_c_densenet121}
    \end{subfigure} 
    \vspace{-0.5em} %
    \caption{
    AUROC vs corruption intensity averaged over all corruption types in CIFAR-10-C for architectures: Wide-ResNet-28-10, ResNet-50, ResNet-110 and DenseNet-121 and baselines: Softmax Entropy, Ensemble (using Predictive Entropy as uncertainty), SNGP, and DDU feature density.
    More details in \S\ref{sec:experiments_ood_detection} and more model architectures and ablations in \S\ref{app:exp_details} in the appendix. 
    }
    \label{fig:cifar10_c_results}
    \vspace{-0.5em}
\end{figure*}

\begin{figure}[t!]
    \centering
    \begin{subfigure}{\linewidth}
        \includegraphics[width=\linewidth]{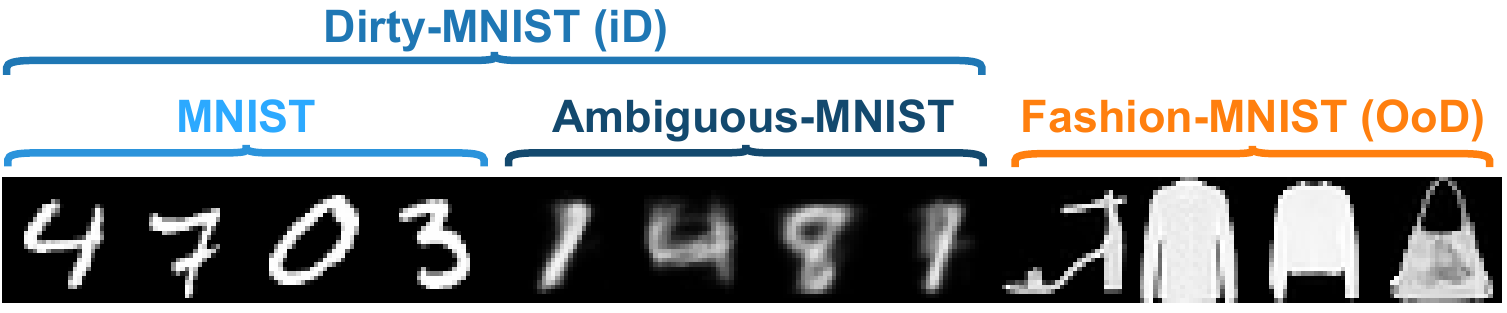}%
        \caption{{\color{sns-blue} Dirty-MNIST (iD)} and {\color{sns-orange} Fashion-MNIST (OoD)}}
        \label{fig:intro_sample_viz}
    \end{subfigure}
    \begin{subfigure}{\linewidth}
        \begin{subfigure}{0.33\linewidth}
            \centering
            \includegraphics[width=\linewidth]{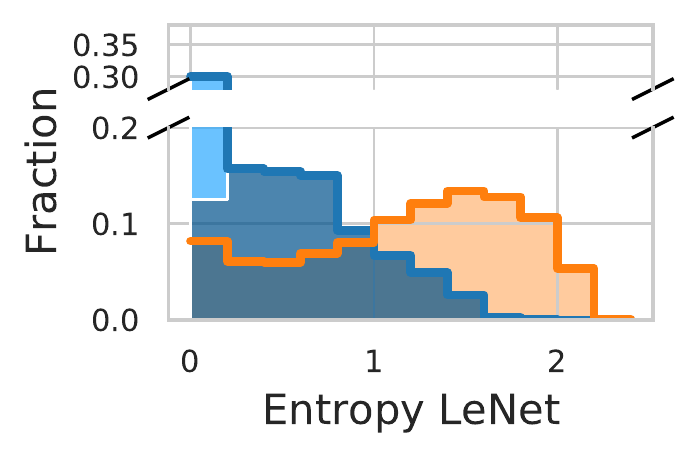}%
        \end{subfigure}%
        \begin{subfigure}{0.33\linewidth}
            \centering
            \includegraphics[width=\linewidth]{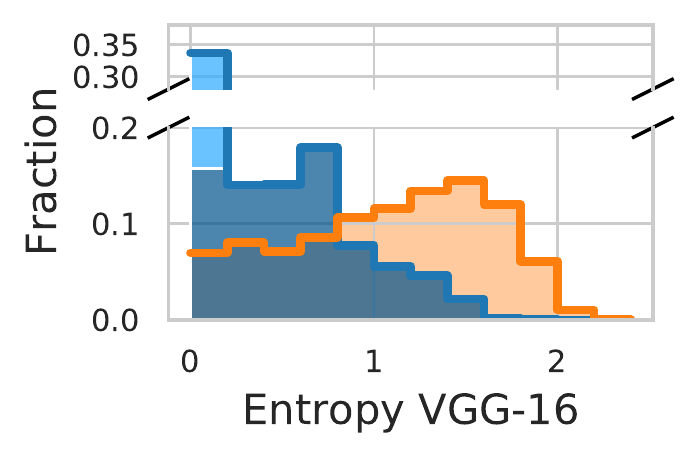}%
        \end{subfigure}%
        \begin{subfigure}{0.33\linewidth}
            \centering
            \includegraphics[width=\linewidth]{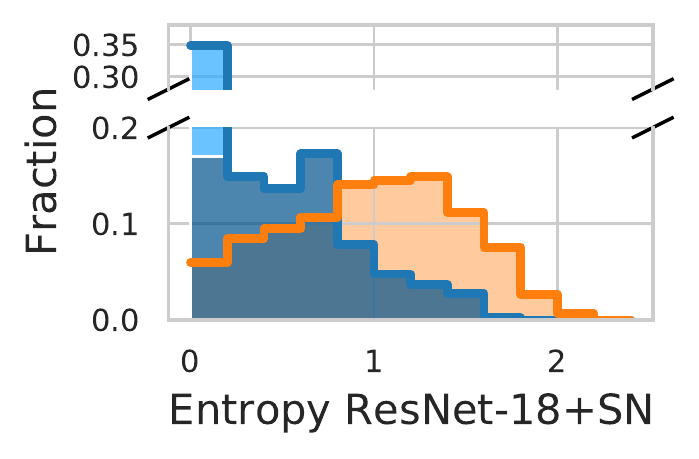}%
        \end{subfigure}%
        \subcaption{{Softmax entropy}}\label{fig:intro_softmax_ent}
    \end{subfigure}
    \begin{subfigure}{\linewidth}
        \begin{subfigure}{0.33\linewidth}
            \centering
            \includegraphics[width=\linewidth]{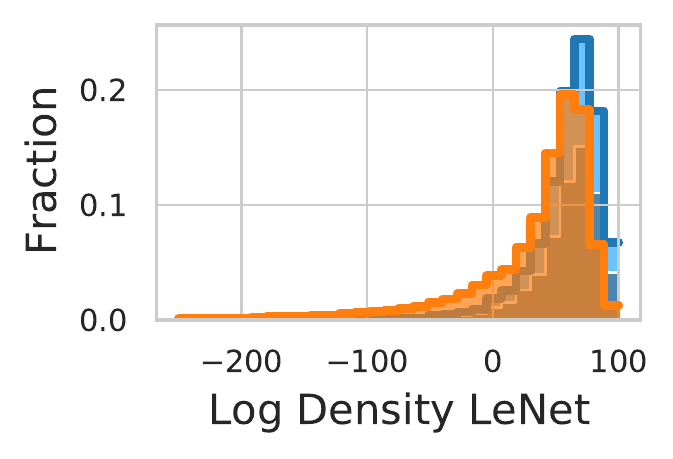}%
        \end{subfigure}%
        \begin{subfigure}{0.33\linewidth}
            \centering
            \includegraphics[width=\linewidth]{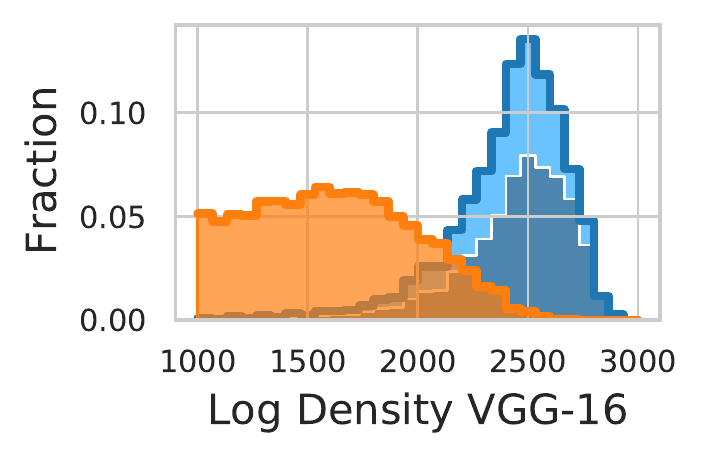}%
        \end{subfigure}%
        \begin{subfigure}{0.33\linewidth}
            \centering
            \includegraphics[width=\linewidth]{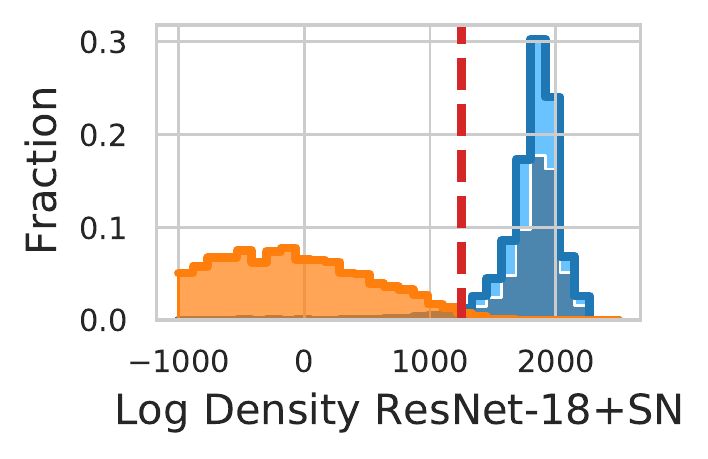}%
        \end{subfigure}%
        \subcaption{{Feature-space density}}\label{fig:intro_gmm}
    \end{subfigure}%
    \caption{
    \emph{
    Disentangling aleatoric and epistemic uncertainty on {\color{sns-blue}Dirty-MNIST (iD)} and {\color{sns-orange} Fashion-MNIST (OoD)} \textbf{\subref{fig:intro_sample_viz}} requires using \emph{softmax entropy} \textbf{\subref{fig:intro_softmax_ent}} and \emph{feature-space density (GMM)} \textbf{\subref{fig:intro_gmm}} with a well-regularized feature space (\emph{ResNet-18+SN} vs \emph{LeNet} \& \emph{VGG-16} without smoothness \& sensitivity).}
    \textbf{\subref{fig:intro_softmax_ent}:}
    Softmax entropy captures aleatoric uncertainty for iD data (Dirty-MNIST), thereby separating {\color{sns-nonambiguous}unambiguous MNIST samples} and {\color{sns-ambiguous}Ambiguous-MNIST samples}. However, iD and OoD are confounded: softmax entropy has arbitrary values for OoD, indistinguishable from iD.
    \textbf{\subref{fig:intro_gmm}:}
    With a well-regularized feature space (DDU with ResNet-18+SN), iD and OoD densities do not overlap, capturing epistemic uncertainty. However, without such feature space (LeNet \& VGG-16), feature density suffers from \emph{feature collapse}: iD and OoD densities overlap. Generally, feature-space density confounds unambiguous and ambiguous iD samples as their densities overlap.
    }
    \label{fig:intro_histograms}
    \vspace{-0.5em}
\end{figure}

\textbf{DUQ \& SNGP.} Two recent works in single forward-pass uncertainty, DUQ \citep{van2020simple} and SNGP \citep{liu2020simple}, propose distance-aware output layers, in the form of RBFs (radial basis functions) or GPs (Gaussian processes), and introduce additional inductive biases in the feature extractor using a Jacobian penalty \citep{gulrajani2017improved} or spectral normalisation \citep{miyato2018spectral}, respectively, which encourage smoothness and sensitivity in the latent space.
These methods perform well and are almost competitive with Deep Ensembles on OoD benchmarks.
However, they require training to be changed substantially, and introduce additional hyper-parameters due to the specialised output layers used at training.
Furthermore, DUQ and SNGP cannot disentangle aleatoric and epistemic uncertainty.
In DUQ, the feature representation of an ambiguous data point, high on aleatoric uncertainty, will be in between two centroids, but due to the exponential decay of the RBF it will seem far from both and thus have uncertainty similar to epistemically uncertain data points that are far from all centroids.
In SNGP, the predictive variance is computed using a mean-field approximation of the softmax likelihood, which cannot be disentangled, or using MC samples of the softmax likelihood.
In theory, the MC samples allow disentangling the uncertainty (see \Cref{eq:BALD}), but this requires modelling the covariance between the classes, which is not the case in SNGP.

We provide a more extensive review of related work in \S\ref{sec:related work}.

\textbf{Contributions.} Firstly, we investigate the question whether complex methods to estimate uncertainty like in DUQ and SNGP are necessary beyond feature-space regularization that encourages bi-Lipschitzness. When we use spectral normalisation like SNGP does, the short answer is an empirical no.
Indeed, with a well-regularized feature space using spectral normalisation, we find that we can fit a GDA \emph{after training}, similar to \citet{lee2018simple}, as feature-space density estimator to capture epistemic uncertainty---but, unlike \citet{lee2018simple}, who do not place any constraints on the feature space, we do not require training on ``OoD'' hold-out data, feature ensembling, and input pre-processing to obtain good performance (see \Cref{table:ood_wrn}). The regularizing effect of spectral normalisation seems sufficient to not need these additional steps, resulting in a conceptually simpler method.

Secondly, we investigate how to disentangle aleatoric and epistemic uncertainty. This is something that DUQ and SNGP do not address directly. As we only fit GDA after training, the original softmax layer is trained using cross-entropy as a proper scoring rule \citep{gneiting2007strictly} and can be temperature-scaled to provide good in-distribution calibration and aleatoric uncertainty.

This combination of using GDA for epistemic uncertainty and the softmax predictive distribution for aleatoric uncertainty after training with feature-space regularisation, e.g.\ using spectral normalisation, provides a simple baseline which we call \emph{Deep Deterministic Uncertainty (DDU)}.

Altogether, DDU performs as well as a Deep Ensemble's epistemic uncertainty \citep{lakshminarayanan2017simple} and outperforms SNGP and DUQ \citep{van2020simple, liu2020simple} ---with no changes to the model architecture beyond spectral normalisation---on several OoD benchmarks and active learning settings. It also outperforms regular softmax neural networks which might not capture epistemic uncertainty well, as illustrated in \Cref{fig:intro_histograms}.

\textbf{Additional Insights.} Beyond an empirical investigation and the description of DDU, we also provide several additional insights on potential pitfalls for practitioners. 
First, predictive entropy confounds aleatoric and epistemic uncertainty (\cref{fig:intro_softmax_ent}). This can be an issue in active learning in particular. Yet, this issue is often not visible for standard benchmark datasets without aleatoric noise. To examine this failure in more detail, we introduce a new dataset, Dirty-MNIST, which showcases the issue more clearly than artificially curated datasets like MNIST or CIFAR-10. \emph{Dirty-MNIST} is a modified version of MNIST \citep{lecun1998gradient} with additional ambiguous digits (Ambiguous-MNIST) with multiple plausible labels and thus higher aleatoric uncertainty (\cref{fig:intro_sample_viz}).
Secondly, the softmax entropy of a deterministic model, while being high for ambiguous points with high aleatoric uncertainty, might not be consistent for points with high epistemic uncertainty for models trained with maximum likelihood, i.e.\ the softmax entropy for the same OoD sample might be low, high or anything in between for different models trained on the same data (\cref{fig:intro_softmax_ent}).

\textbf{Feature-Space Regularization.} Feature-space density can be a well-performing and simpler approach\footnote{\citet{pearce2021understanding} argue for softmax confidence and entropy in their paper, yet feature-space density performs better in their experiments, too.} to estimate epistemic uncertainty (see \cref{fig:intro_gmm}). 
Crucially, the feature space needs to be well-regularized \citep{liu2020simple}:
without \emph{smoothness} and \emph{sensitivity}, feature-space density alone might not separate iD from OoD data, possibly explaining the limited empirical success of previous approaches which attempt to use feature-space density \citep{postels2020quantifying}.
This can be seen in \cref{fig:intro_gmm} where the feature-space density of a VGG-16 or LeNet model are not able to differentiate iD Dirty-MNIST from OoD Fashion-MNIST while a ResNet-18 with spectral normalization can do so better.

\textbf{Scope.} %
Our focus is on obtaining a well-regularized feature space using spectral normalization in common model architectures with residual connections, following \citep{liu2020simple}. 
Unsupervised methods using contrastive learning \citep{winkens2020contrastive} might also obtain such a feature space by training on very large datasets, but access to them is generally limited or training on them very expensive \citep{sun2017revisiting}.
Similarly, we only use GDA for estimating the feature-space density as it is straight-forward to implement and does not require performing expectation maximization or variational inference like other density estimators. Normalizing flows \citep{dinh2015nice} or other more complex density estimators might provide even better density estimates, of course. However, GDA is a very simple method and already sufficient to outperform other more complex approaches and obtain good results.
As the amount of training data available grows and feature extractors improve, the quality of feature representations might improve as well---an underlying hypothesis of this paper is that simple approaches will remain more applicable than more complex ones as our empirical results suggest.

\begin{figure*}[t]
    \begin{minipage}{0.59\linewidth}
    \centering
    \begin{subfigure}{0.32\linewidth}
        \centering
        \includegraphics[width=\linewidth]{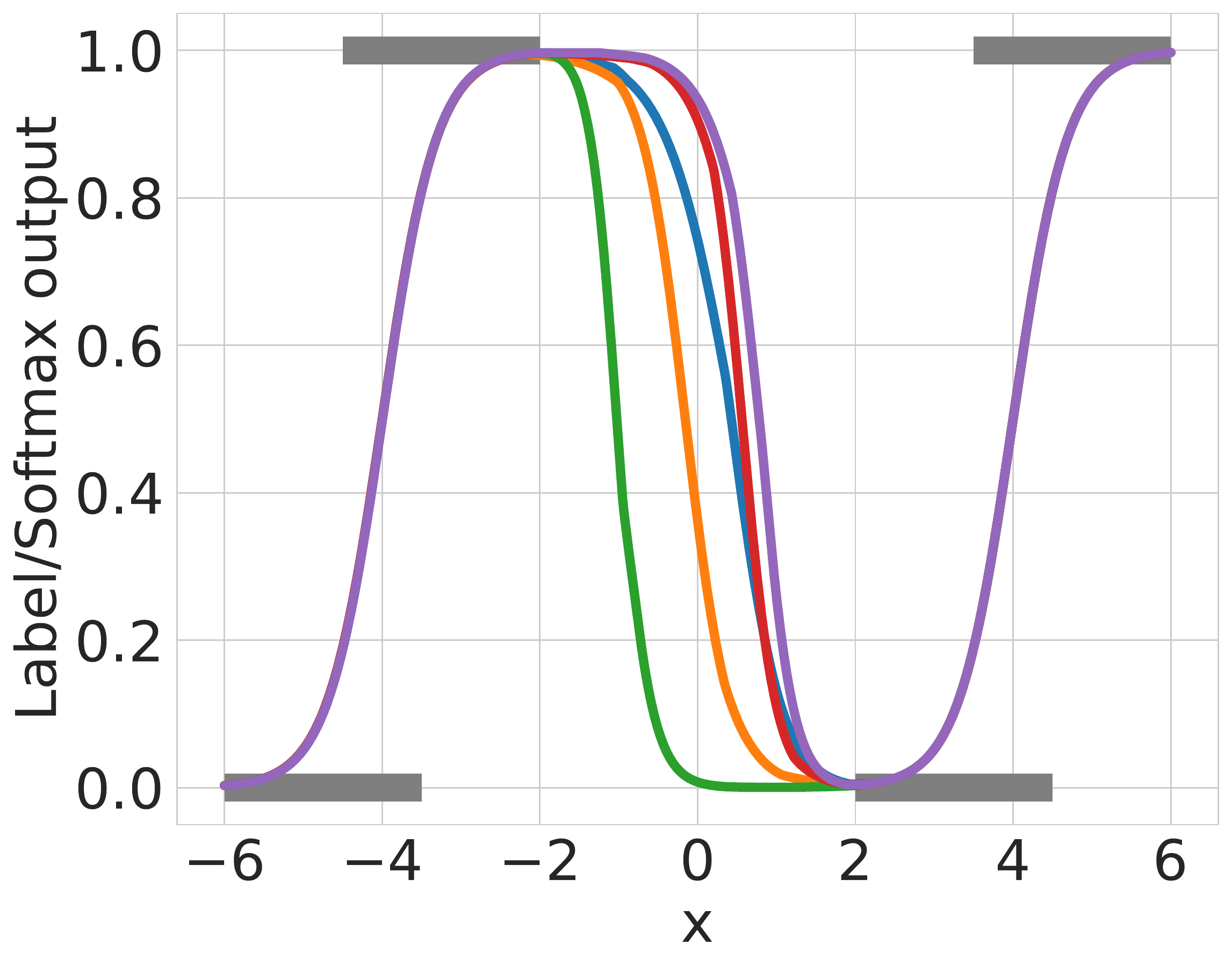}%
        \caption{Softmax output}
        \label{subfig:softmax_output}
    \end{subfigure}
    \begin{subfigure}{0.32\linewidth}
        \centering
        \includegraphics[width=\linewidth]{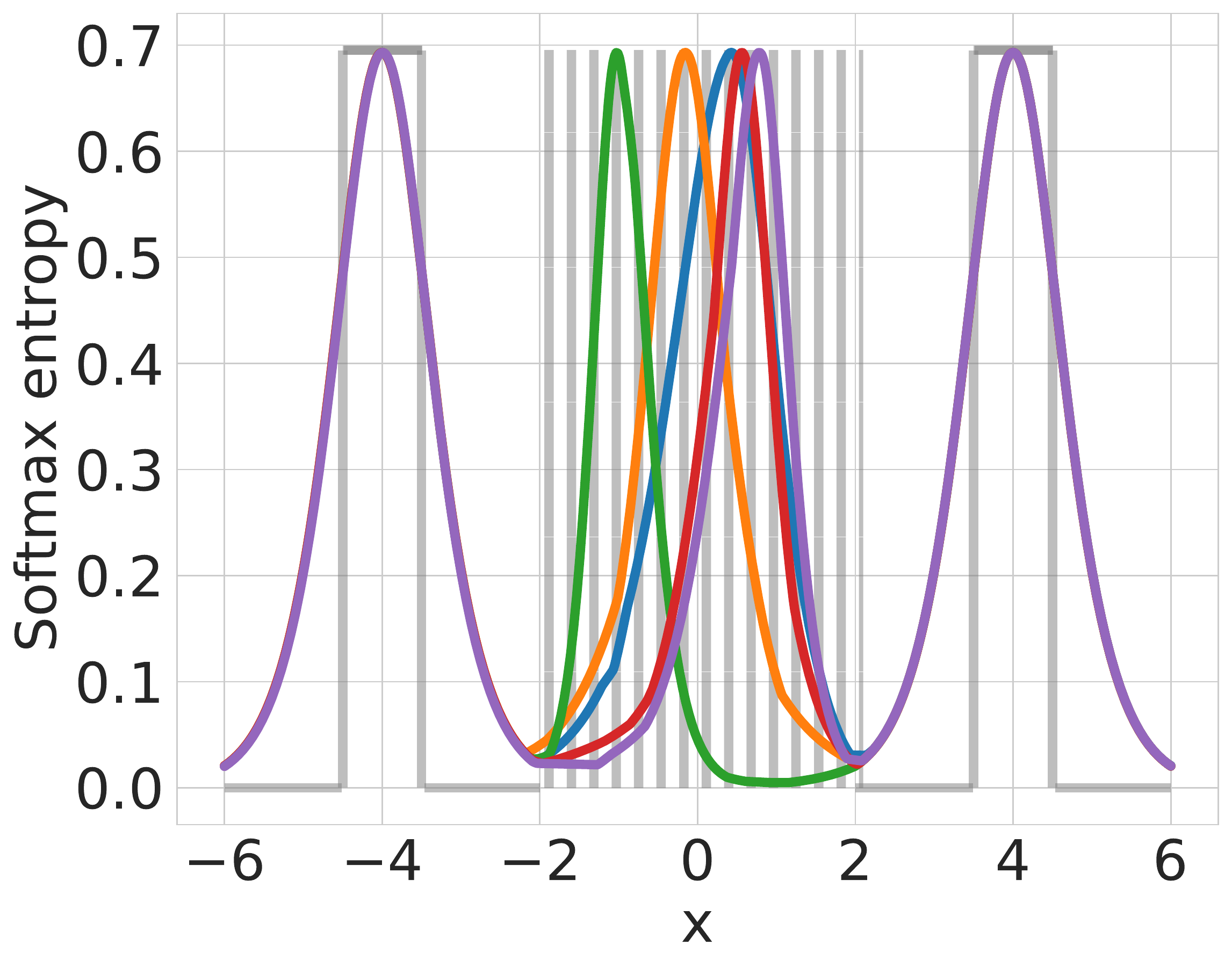}%
        \caption{Softmax entropy}
        \label{subfig:softmax_entropy}
    \end{subfigure}
    \begin{subfigure}{0.32\linewidth}
        \centering
        \includegraphics[width=\linewidth]{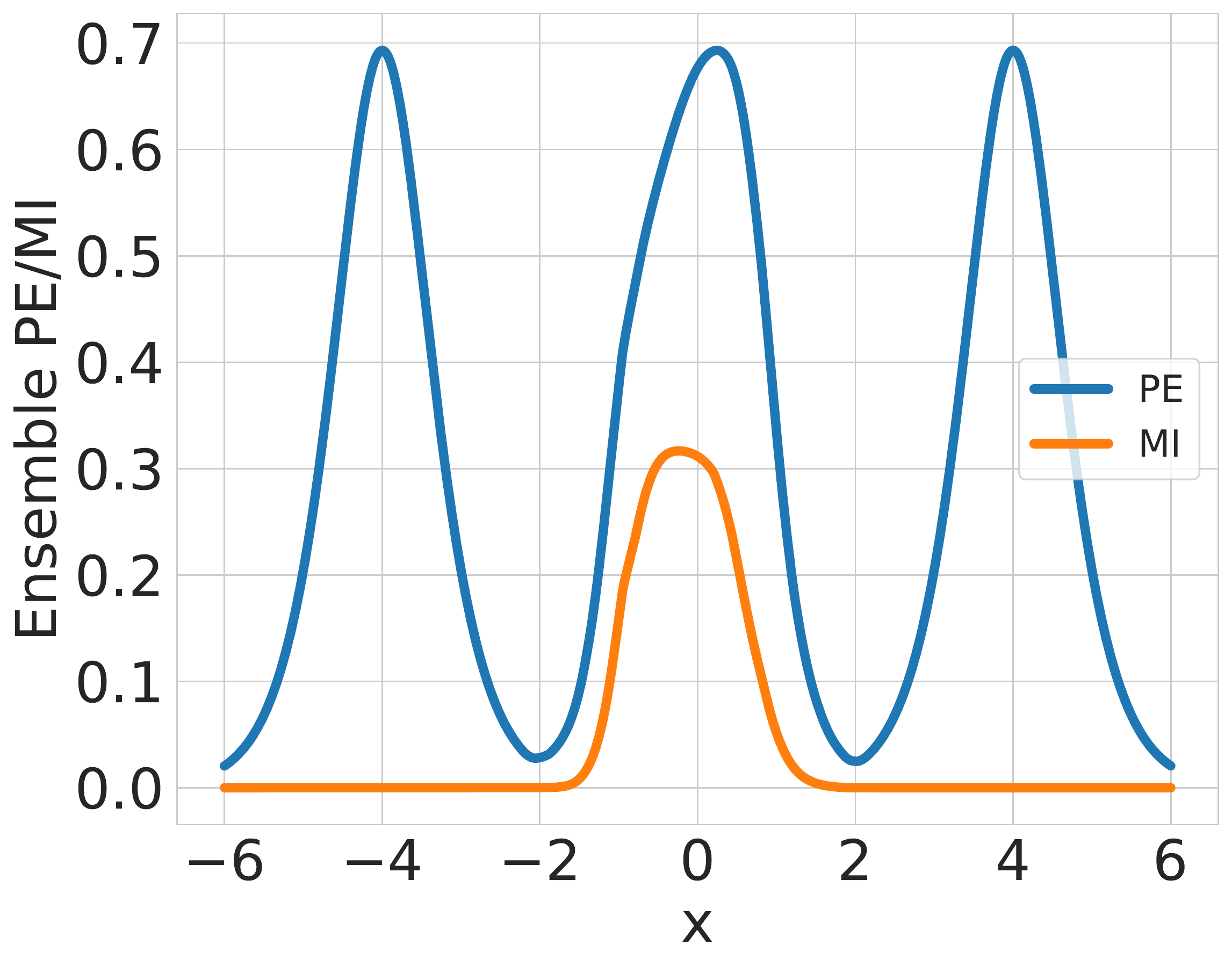}%
        \caption{5-Ensemble}
        \label{subfig:pe_mi}
    \end{subfigure}%
    \caption{
    \emph{Softmax outputs \& entropies for 5 softmax models along with the predictive entropy (PE) and mutual information (MI) for the resulting 5-Ensemble.}
    \subref{subfig:softmax_output} and \subref{subfig:softmax_entropy} show that the softmax entropy is only reliably high for ambiguous iD points ($\pm$3.5--4.5),
    whereas it can be low or high for OoD points (-2--2). The different colors are the different ensemble components. Similarly, \subref{subfig:pe_mi} shows that the MI of the ensemble is only high for OoD,
    whereas the PE is high for both OoD and for regions of ambiguity. See \S\ref{app:5_ensemble}.
    }
    \label{fig:lewis_vis}
\end{minipage}
\hfill
\begin{minipage}{0.39\linewidth}
    \centering
    \begin{subfigure}{0.49\linewidth}
        \centering
        \includegraphics[width=\linewidth]{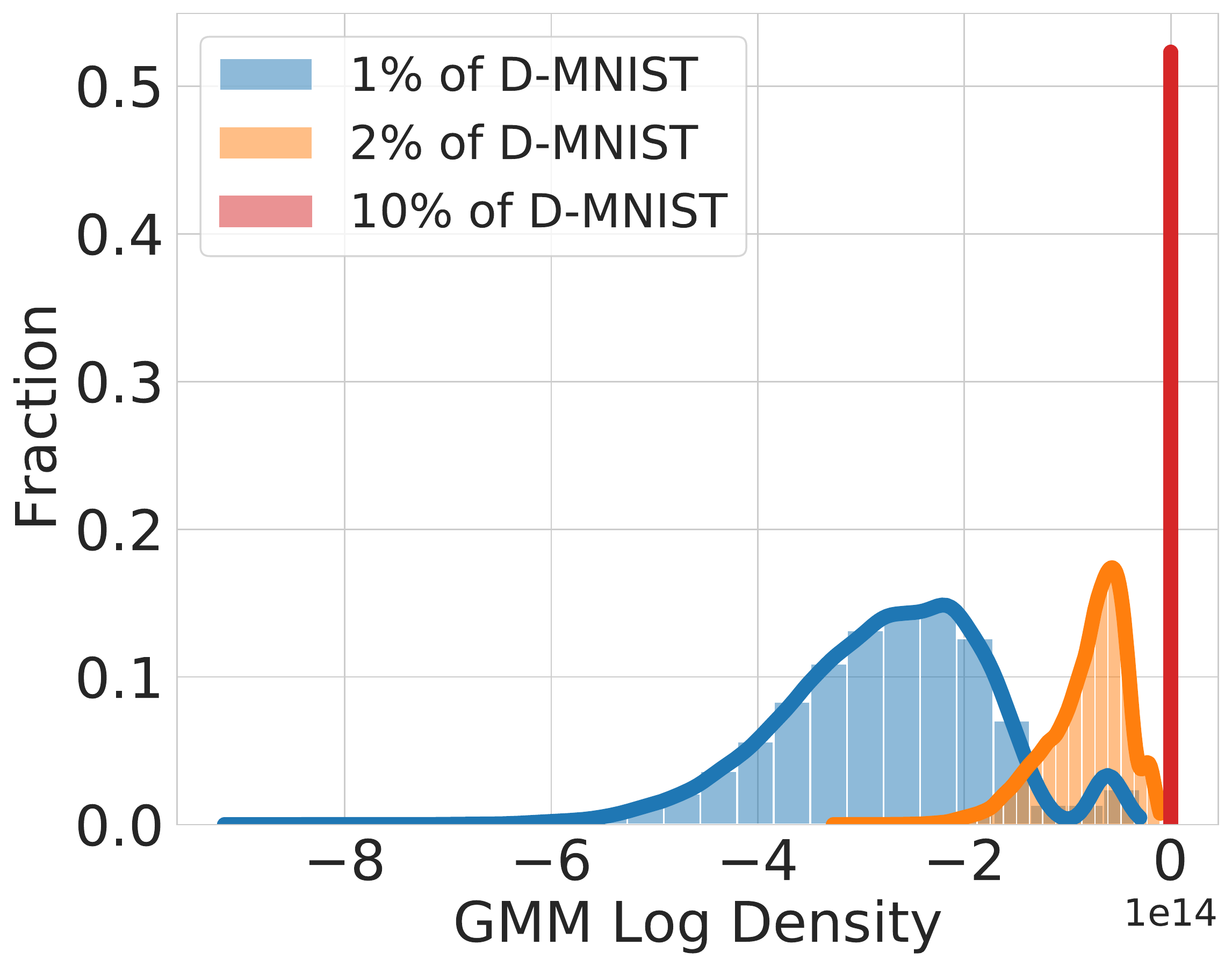}%
        \caption{Epistemic}
        \label{subfig:kendall_viz_gmm_density}
    \end{subfigure}
    \begin{subfigure}{0.49\linewidth}
        \centering
        \includegraphics[width=\linewidth]{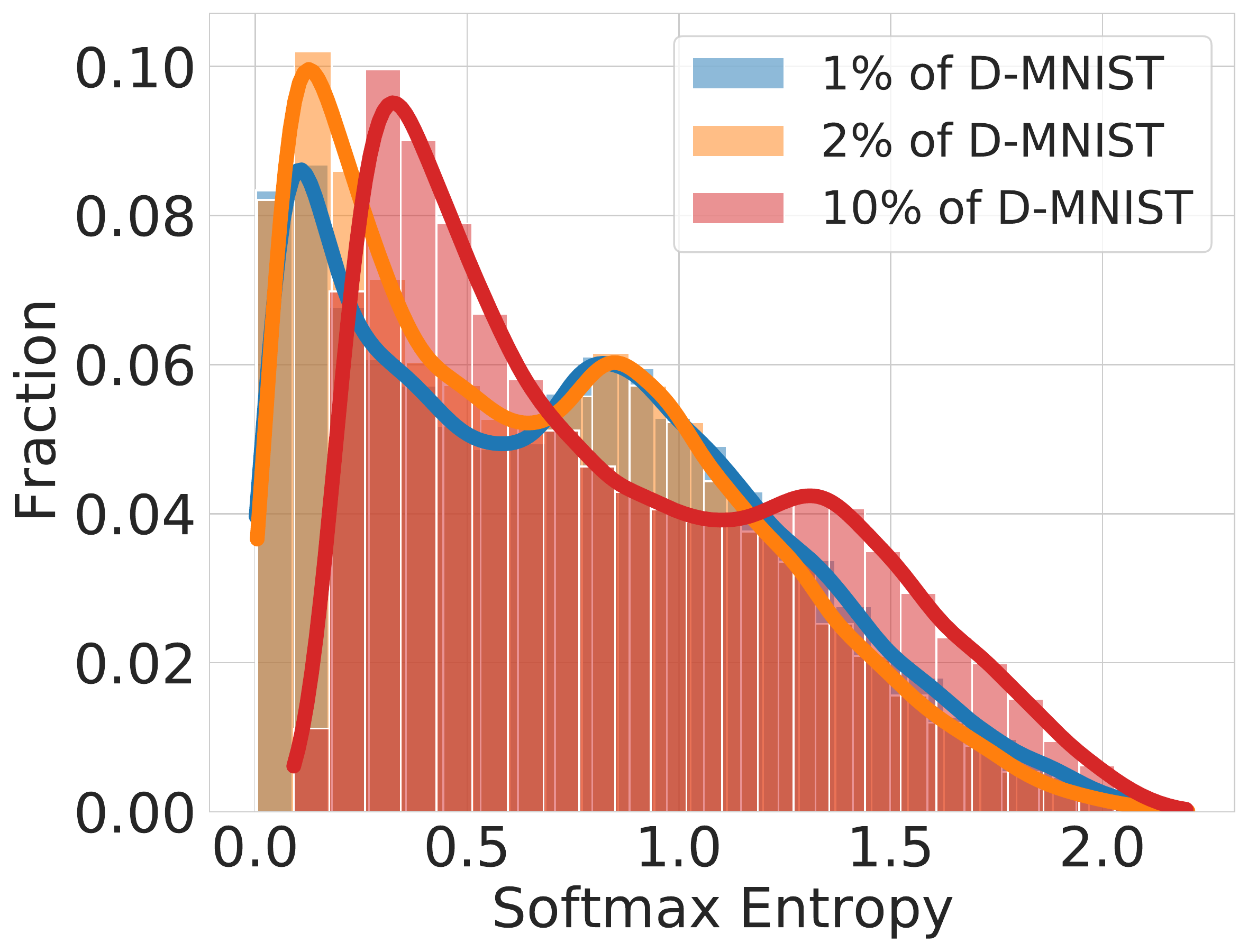}%
        \caption{Aleatoric}
        \label{subfig:kendall_viz_softmax_entropy}
    \end{subfigure}%
    \caption{
    \emph{Epistemic and aleatoric uncertainty of ResNet-18+SN models trained on increasingly large subsets of DirtyMNIST.}
    The feature-space density increases while the softmax entropy stays roughly the same, consistent with epistemic and aleatoric uncertainty being reducible and irreducible with more data, respectively. See \S\ref{app:kendall_viz} for a discussion on this.
    }%
    \label{fig:kendall_viz}
\end{minipage}
\vspace{-0.5em}
\end{figure*}

\vspace{-0.5em}
\section{Background}
\label{sec:background}
In this section, we review concepts important for quantifying uncertainty.

\textbf{Epistemic Uncertainty} at point $x$ is a quantity which is high for a previously unseen $x$, and decreases when $x$ is added to the training set and the model is updated \citep{kendall2017uncertainties}. This conforms with using mutual information in Bayesian models and deep ensembles \citep{kirsch2019batchbald} and feature-space density in deterministic models as surrogates for epistemic uncertainty \citep{postels2020quantifying} as we examine below---and depicted in \Cref{subfig:kendall_viz_gmm_density} (also \S\ref{app:kendall_viz}). 

\textbf{Aleatoric Uncertainty} at point $x$ is a quantity which is high for ambiguous or noisy samples \citep{kendall2017uncertainties}, i.e\ if multiple labels were to be observed at $x$, aleatoric will be high. It does not decrease with more data---depicted in \Cref{subfig:kendall_viz_softmax_entropy}. Note that aleatoric uncertainty is only meaningful in-distribution,
as, by definition, it quantifies the level of ambiguity between the different classes which might be observed for input $x$\footnote{If the probability of observing $x$ under the data generating distribution is zero, $p(y|x)=\frac{p(x, y)}{p(x)}$, and hence, the entropy as a measure of aleatoric uncertainty, is not defined.}.

\textbf{Bayesian Models} \citep{neal2012bayesian, mackay1992bayesian} provide a principled way of measuring uncertainty. Starting with a prior distribution $p(\omega)$ over model parameters $\omega$, they infer a posterior $p(\omega|\mathcal{D})$, given the training data $\mathcal{D}$. The predictive distribution $p(y|x, \mathcal{D})$ for a given input $x$ is computed via marginalisation over the posterior: $p(y|x, \mathcal{D}) = \mathbb{E}_{\omega \sim p(\omega|\mathcal{D})}[p(y|x, \omega)]$. Its predictive entropy $\mathbb{H}[Y|x, \mathcal{D}]$ upper-bounds the epistemic uncertainty, where epistemic uncertainty is quantified as the mutual information $\mathbb{I}[Y; \omega|x, \mathcal{D}]$ (\emph{expected information gain}) between parameters $\omega$ and output $y$ \citep{Gal2016Uncertainty, smith2018understanding}:
\begin{align}
    \underbrace{\mathbb{H}[Y|x, \mathcal{D}]}_\text{predictive} = \underbrace{\mathbb{I}[Y; \omega|x, \mathcal{D}]}_\text{epistemic} + \underbrace{\mathbb{E}_{\probc{\omega}{\data}} [\mathbb{H}[Y|x, \omega]]}_\text{aleatoric (for iD $x$)}.
    \label{eq:BALD}
\end{align}
Predictive uncertainty will be high whenever either epistemic uncertainty is high,
or when aleatoric uncertainty is high.
The intractability of exact Bayesian inference in deep learning has led to the development of methods for approximate inference \citep{hinton1993keeping, hernandez2015probabilistic, blundell2015weight, gal2016dropout}. In practice, however, these methods are either unable to scale to large datasets and model architectures, suffer from low uncertainty quality, or require expensive Monte-Carlo sampling.

\textbf{Deep Ensembles} 
are an ensemble of neural networks which average the models' softmax outputs. Uncertainty is then estimated as the entropy of this averaged softmax vector.
Despite incurring a high computational overhead at training and test time, Deep Ensembles, along with recent extensions \citep{smith2018understanding,wen2020batchensemble, dusenberry2020efficient} form the state-of-the-art in uncertainty quantification in deep learning.

\textbf{Deterministic Models} produce a softmax distribution $p(y|x, \omega)$, and commonly either the \emph{softmax confidence} $\max_c p(y=c|x, \omega)$ or the \emph{softmax entropy} $\mathbb{H}[Y|x, \omega]$ are used as a measure of uncertainty \citep{hendrycks2016baseline}. 
Popular approaches to improve these metrics include pre-processing of inputs and post-hoc calibration methods \citep{liang2017enhancing, guo2017calibration}, alternative objective functions \citep{lee2017training, devries2018learning}, and exposure to outliers \citep{hendrycks2018deep}. However, these methods suffer from several shortcomings including failing to perform under distribution shift \citep{ovadia2019can}, requiring significant changes to the training setup, and assuming the availability of OoD samples during training (which many applications do not have access to).

\textbf{Feature-Space Distances} \citep{lee2018simple, van2020simple, liu2020simple}  and \textbf{Feature-Space Density} \citep{postels2020quantifying, liu2020energy} offer a different approach for estimating uncertainty in deterministic models. Following the definition above, epistemic uncertainty must decrease when previously unseen samples are added to the training set, and
feature-space distance and density methods realise this by estimating distance or density, respectively, to training data in the feature space---see again \Cref{subfig:kendall_viz_gmm_density}.
A previously unseen point with high distance (low density), once added to the training data, will have low distance (high density).
Hence, they can be used as a proxy for epistemic uncertainty---under important assumptions about the feature space as detailed below.
None of these methods, however, is competitive with the state-of-the-art, Deep Ensembles, in uncertainty quantification, potentially for the reasons discussed next.

\textbf{Feature Collapse}
\citep{van2020simple} is a reason as to why distance and density estimation in the feature space may fail to capture epistemic uncertainty out of the box:
feature extractors  might map the features of OoD inputs to iD regions in the feature space \citep[c.f. Figure 2]{van2021improving}.

\textbf{Smoothness \& Sensitivity} can be encouraged to prevent feature collapse by subjecting the feature extractor $f_\theta$, with parameters $\theta$ to a \emph{bi-Lipschitz constraint}:
\begin{equation*}
    \label{eq:bi-lipschitz}
    K_L \; d_I(\mathbf{\mathrm{x}}_1, \mathbf{\mathrm{x}}_2) \leq d_F(f_\theta(\mathbf{\mathrm{x}}_1), f_\theta(\mathbf{\mathrm{x}}_2)) \leq K_U \; d_I(\mathbf{\mathrm{x}}_1, \mathbf{\mathrm{x}}_2),
\end{equation*}
for all inputs, $\mathbf{\mathrm{x}}_1$ and  $\mathbf{\mathrm{x}}_2$, where $d_I$ and $d_F$ denote metrics for the input and feature space respectively, and $K_L$ and $K_U$ the lower and upper Lipschitz constants \citep{liu2020simple}. 
The lower bound ensures \emph{sensitivity} to distances in the input space,
and the upper bound ensures \emph{smoothness} in the features, preventing them from becoming too sensitive to input variations, which, otherwise, can lead to poor generalisation and loss of robustness \citep{van2020simple}.
Methods of enouraging bi-Lipschitzness include: 
\textbf{i)} gradient penalty, by applying a two-sided penalty to the L2 norm of the Jacobian \citep{gulrajani2017improved}, and
\textbf{ii)} spectral normalisation \citep{miyato2018spectral} in models with residual connections, like ResNets \citep{he2016deep}.
\citet{smith2021convolutional} provide in-depth analysis which supports that spectral normalisation leads to bi-Lipschitzness.
Compared to the Jacobian gradient penalty used in \citep{van2020simple}, spectral normalisation is significantly faster and has more stable training dynamics.
Additionally, using a gradient penalty with residual connection leads to difficulties as discussed in \citep{liu2020simple}.

\vspace{-0.5em}
\section{{Deep Deterministic Uncertainty}}
\label{section:methods}

As introduced in \S\ref{sec:intro}, we propose to use a deterministic neural network with an appropriately regularized feature-space, using spectral normalization \citep{liu2020simple}, and to disentangle aleatoric and epistemic uncertainty by fitting a GDA after training without any additional steps (no hold-out ``OoD'' data, feature ensembling, or input pre-processing ala \citet{lee2018simple}).

\textbf{Ensuring Sensitivity \& Smoothness.} %
We ensure sensitivity and smoothness using spectral normalisation in models with residual connections. We make minor changes to the standard ResNet model architecture to further encourage sensitivity 
without sacrificing accuracy---details in \S\ref{app:more_model_architecture}.

\textbf{Disentangling Epistemic \& Aleatoric Uncertainty.} %
To quantify epistemic uncertainty, we fit a feature-space density estimator after training. We use GDA, a GMM $q(y, z)$ with a single Gaussian mixture component per class, and fit each class component by computing the empirical mean and covariance, per class, of the feature vectors $z=f_\theta(x)$, which are the outputs of the last convolutional layer of the model computed on the training samples $x$. \emph{Note that we do not require OoD data to fit these}.
Unlike the Expectation Maximization algorithm, this only requires a single pass through the training set given a trained model.

\textbf{Evaluation.} At test time, we estimate the epistemic uncertainty by evaluating the marginal likelihood of the feature representation under our density $\qprob{z} = \sum_y \qprob{z|y} \qprob{y}$.
To quantify aleatoric uncertainty for in-distribution samples, we use the entropy $\mathbb{H}[Y|x, \theta]$ of the softmax distribution $\probc{y}{x, \theta}$. %
Note that the softmax distribution thus obtained can be further calibrated using temperature scaling \citep{guo2017calibration}.
Thus, for a given input, a high feature-space density indicates low epistemic uncertainty (iD), and we can trust the aleatoric uncertainty and predictions estimated from the softmax layer.
The sample can then be either unambiguous (low softmax entropy) or ambiguous (high softmax entropy).
Conversely, a low feature-space density %
indicates high epistemic uncertainty (OoD), and we cannot trust the predictions.
The algorithm and a pseudo-code implementation %
can be found in \S\ref{app:implementation}.

\vspace{-0.5em}
\section{Experiments}
\label{sec:experiments}

We evaluate DDU's quality of epistemic uncertainty estimation in active learning \citep{cohn1996active} using MNIST, CIFAR-10 and an ambiguous version of MNIST (Dirty-MNIST). We also test DDU on challenging OoD detection settings including CIFAR-10 vs SVHN/CIFAR-100/Tiny-ImageNet/CIFAR-10-C, CIFAR-100 vs SVHN/Tiny-ImageNet and ImageNet vs ImageNet-O dataset pairings, where we outperform other deterministic single-forward-pass methods and perform on par with deep ensembles. In the appendix, we also examine DDU's performance on the well-known Two Moons toy dataset in \S\ref{app:2_moons}; we elaborate how DDU can disentangle epistemic and aleatoric uncertainty, the setting depicted in \Cref{fig:intro_histograms}, in \S\ref{app:experiments_disentangling}, and the effect of feature-space regularisation in \S\ref{app:feature_collapse_fig2}. 

\begin{figure}[!t]
    \centering
    \begin{minipage}{0.31\linewidth}
        \begin{subfigure}{\linewidth}%
            \centering
            \includegraphics[width=\linewidth]{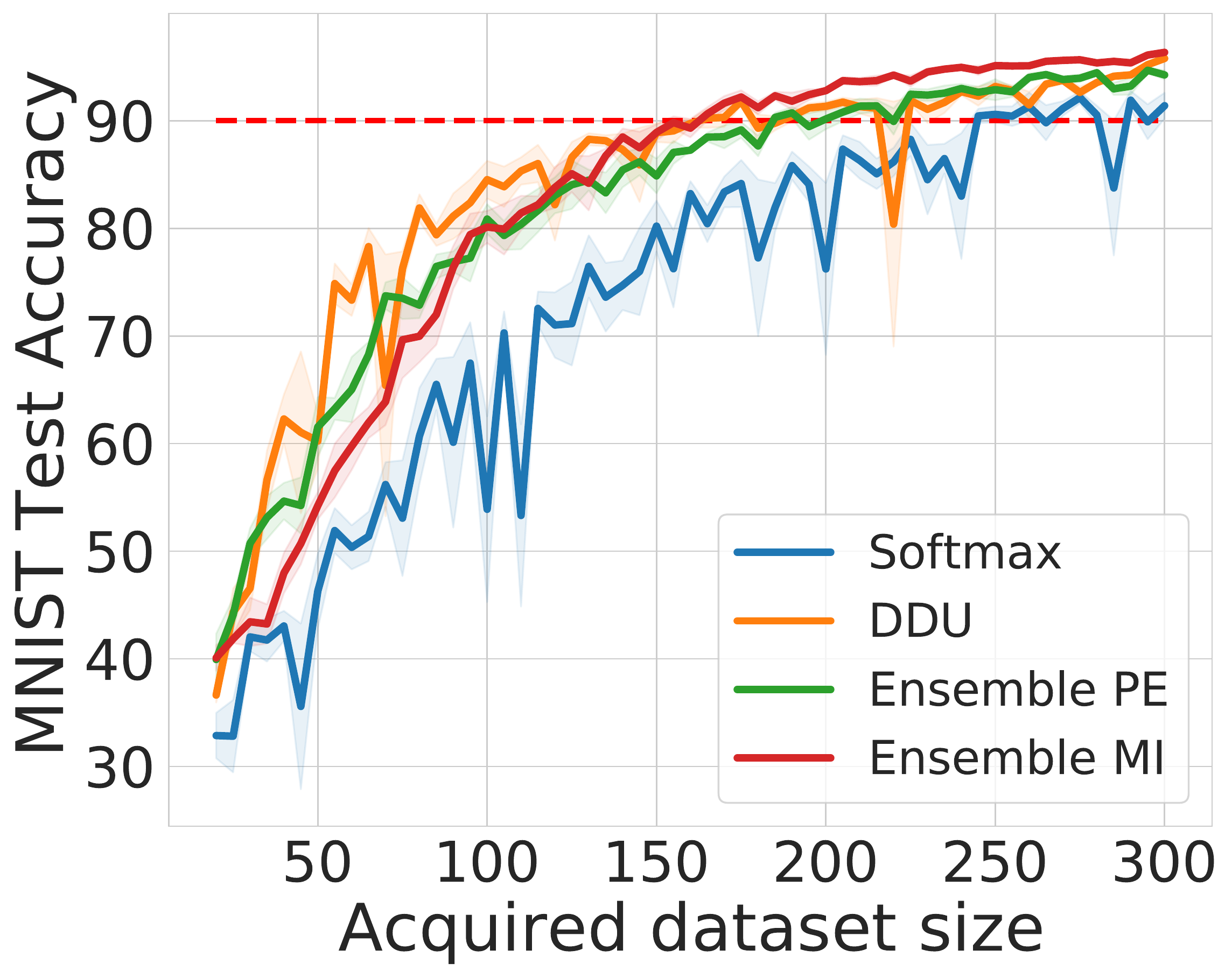}%
            \caption{MNIST}
            \label{subfig:test_accuracy_active_learning_mnist}
        \end{subfigure}
        \begin{subfigure}{\linewidth}%
            \centering
            \includegraphics[width=\linewidth]{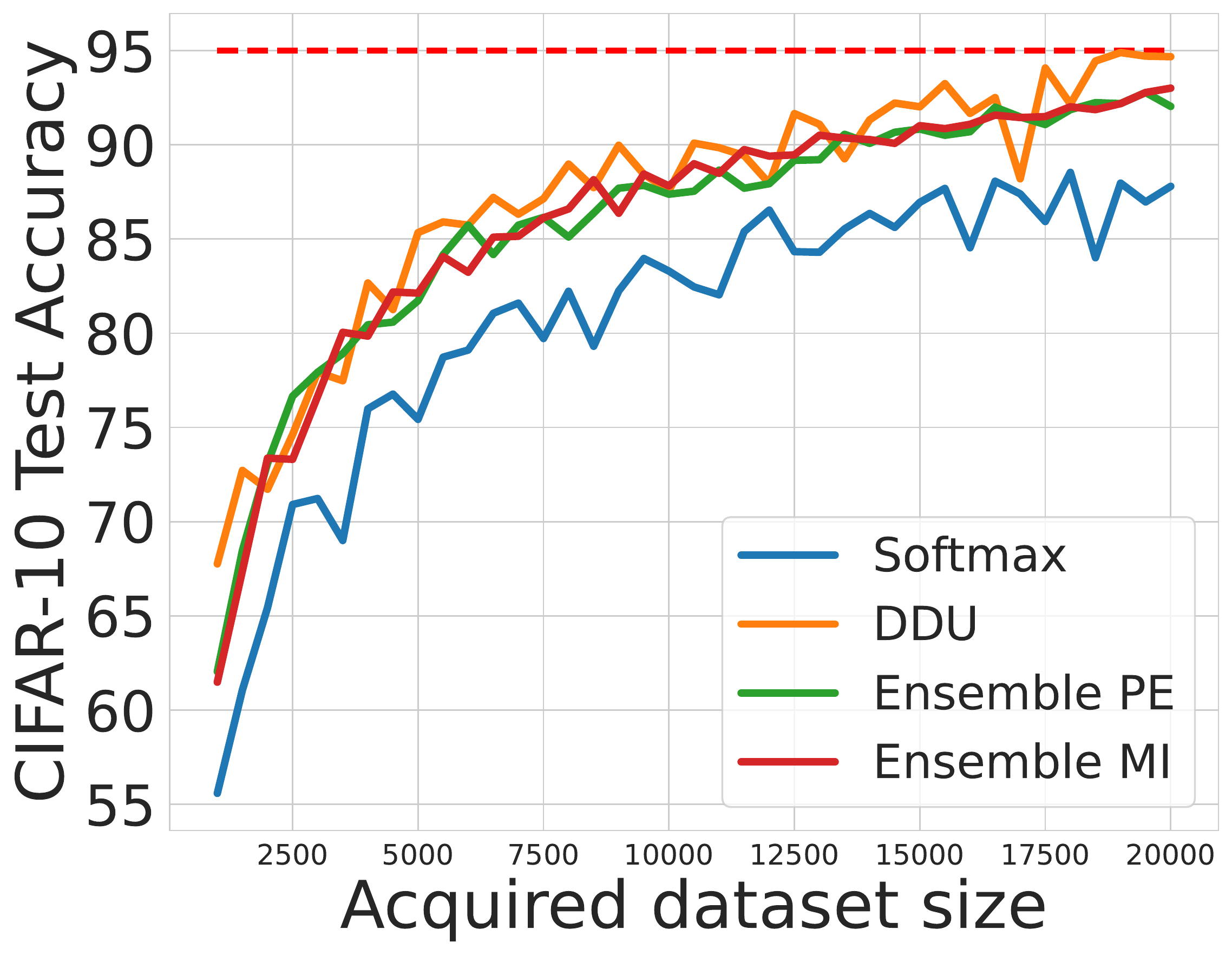}
            \caption{CIFAR-10}
            \label{subfig:test_accuracy_active_learning_cifar10}
        \end{subfigure}
    \end{minipage}
    \begin{subfigure}{0.68\linewidth}
        \centering
        \includegraphics[width=\linewidth]{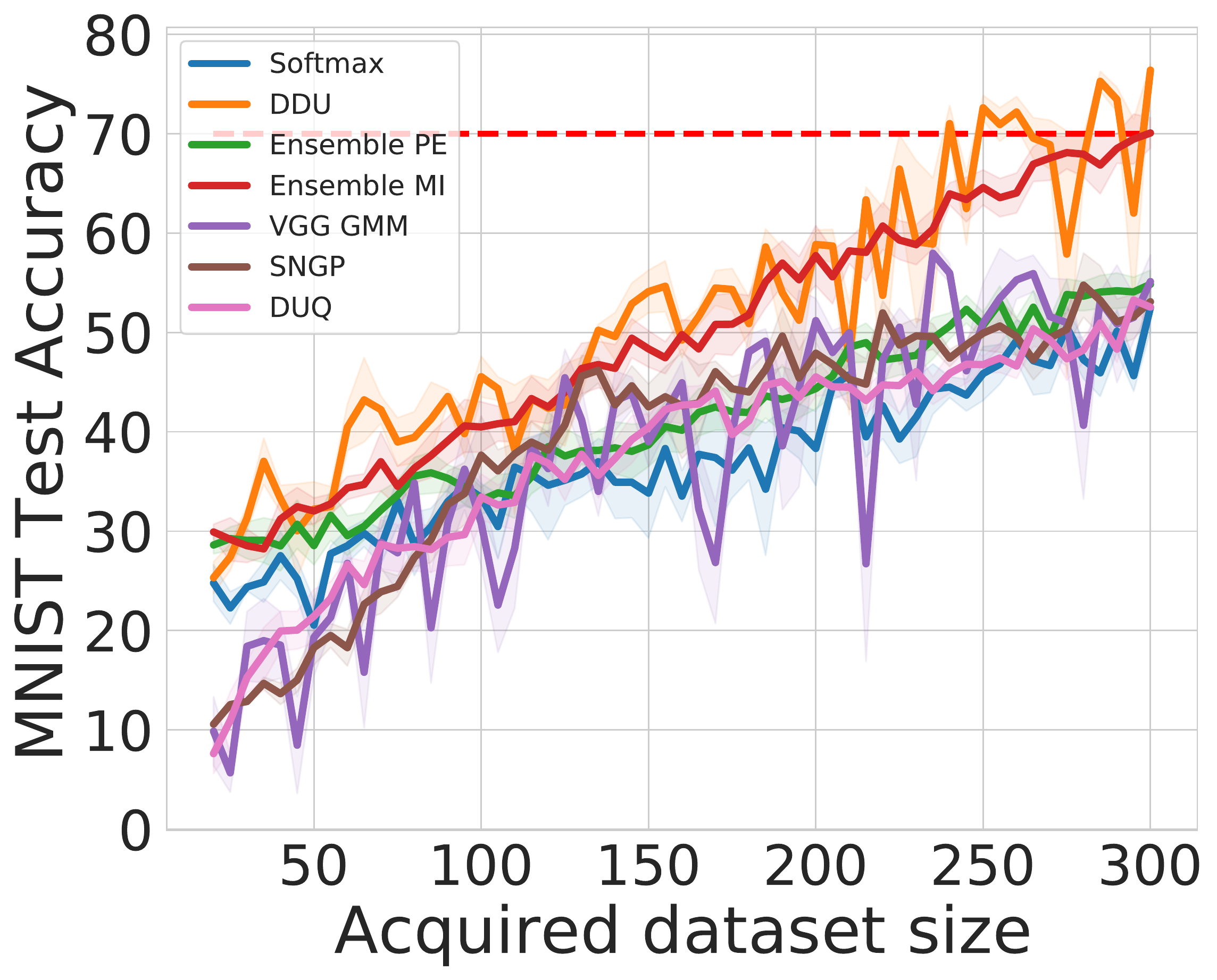}%
        \caption{Dirty-MNIST}
        \label{subfig:test_accuracy_active_learning_dirty_mnist_1000}
    \end{subfigure} 
    \caption{
    \emph{Active Learning experiments.} Acquired training set size vs test accuracy. DDU performs on par with Deep Ensembles.
    }
    \label{fig:active_learning_mnist_dirty_mnist}
    \vspace{-0.5em}
\end{figure}

\subsection{Active Learning}
\label{sec:experiments_active_learning}

We first demonstrate the quality of our uncertainty disentanglement in active learning (AL) \citep{cohn1996active}.
AL aims to train models in a data-efficient manner.
Additional training samples are iteratively acquired from a large pool of unlabelled data and labelled with the help of an expert. After each acquisition step, the model is retrained on the newly expanded training set. This is repeated until the model achieves a desirable accuracy---or when a maximum number of samples have been acquired.

Data-efficient acquisition relies on acquiring labels for the most informative samples.
This can be achieved by selecting points with high epistemic uncertainty \citep{gal2017deep}.
Conversely, repeated acquisition of points with high aleatoric uncertainty is not informative for the model and such acquisitions lead to data inefficiency.
AL, therefore, makes an excellent application for evaluating epistemic uncertainty and the ability of models to separate different sources of uncertainty.
We evaluate DDU on three different setups: \textbf{i)} with clean MNIST samples in the pool set, \textbf{ii)} with clean CIFAR-10 samples in the pool set, and \textbf{ii)} with Dirty-MNIST, having a 1:60 ratio of MNIST to Ambiguous-MNIST samples, in the pool set. In the first two setups, we compare 3 baselines: \textbf{i)} a ResNet-18 with softmax entropy as the acquisition function,
\textbf{ii)} DDU trained using a ResNet-18 with feature density as acquisition function,
and \textbf{iii)} a Deep Ensemble of 3 ResNet-18s with the predictive entropy (PE) and mutual information (MI) of the ensemble as the acquisition functions. In the last setup, in addition to the above 3 approaches, we also use \textbf{iv)} feature density of a VGG-16 instead of ResNet-18+SN as an ablation to see if feature density of a model without inductive biases performs well, \textbf{v)} SNGP and \textbf{vi)} DUQ as additional baselines.
For MNIST and Dirty-MNIST, we start with an initial training-set size of 20 randomly chosen MNIST points, and in each iteration, acquire the 5 samples with highest reported epistemic uncertainty. For each step, we train the models using Adam \citep{kingma2014adam} for 100 epochs and choose the one with the best validation set accuracy. We stop the process when the training set size reaches 300. For CIFAR-10, we start with 1000 samples and go up to 20000 samples with an acquisition size of 500 samples in each step.

\textbf{MNIST \& CIFAR-10} In \Cref{fig:active_learning_mnist_dirty_mnist}\subref{subfig:test_accuracy_active_learning_mnist} and \Cref{fig:active_learning_mnist_dirty_mnist}\subref{subfig:test_accuracy_active_learning_cifar10}, for regular curated MNIST and CIFAR-10 in the pool set, DDU clearly outperforms the deterministic softmax baseline and is competitive with Deep Ensembles. For MNIST, the softmax baseline reaches 90\% test-set accuracy at a training-set size of 245. DDU reaches 90\% accuracy at a training-set size of 160, whereas Deep Ensemble reaches the same at 185 and 155 training samples with PE and MI as the acquisition functions respectively. Note that DDU is three times faster than a Deep Ensemble, which needs to train three models independently after every acquisition.

\textbf{Dirty-MNIST.} Real-life datasets often contain observation noise and ambiguous samples.
What happens when the pool set contains a lot of such noisy samples having high aleatoric uncertainty?
In such cases, it becomes important for models to identify unseen and informative samples with high epistemic uncertainty and not with high aleatoric uncertainty.
To study this, we construct a pool set with samples from Dirty-MNIST (see \S\ref{app:dirty_mnist}). We significantly increase the proportion of ambiguous samples by using a 1:60 split of MNIST to Ambiguous-MNIST (a total of 1K MNIST and 60K Ambiguous-MNIST samples).
In \Cref{fig:active_learning_mnist_dirty_mnist}\subref{subfig:test_accuracy_active_learning_dirty_mnist_1000}, for Dirty-MNIST in the pool set, the difference in the performance of DDU and the deterministic softmax model is stark. While DDU achieves a test set accuracy of 70\% at a training set size of 240 samples, the accuracy of the softmax baseline peaks at a mere 50\%. In addition, all baselines, including SNGP, DUQ and the feature density of a VGG-16, which fail to solely capture epistemic uncertainty, are significantly outperformed by DDU and the MI baseline of the deep ensemble. However, note that DDU also performs better than Deep Ensembles with the PE acquisition function. The difference gets larger as the training set size grows: DDU's feature density and Deep Ensemble's MI solely capture epistemic uncertainty and hence, do not get confounded by iD ambiguous samples with high aleatoric uncertainty. 
\subsection{OoD Detection}
\label{sec:experiments_ood_detection}

\begin{table*}[!t]
    \centering
    \caption{\emph{OoD detection performance of different baselines using ResNet-50, Wide-ResNet-50-2 and VGG-16 architectures on ImageNet vs ImageNet-O \cite{hendrycks2021natural}.} Best AUROC scores are marked in bold.}
    \label{table:ood_imagenet}
    \resizebox{\linewidth}{!}{%
    \begin{tabular}{cccccccccc}
    \toprule
    \textbf{\small Model} & \multicolumn{2}{c}{{\small Accuracy (\textuparrow)}} & \multicolumn{2}{c}{{\small ECE (\textdownarrow)}} &
    \multicolumn{5}{c}{\textbf{\small AUROC (\textuparrow)}} \\
    & {\small Deterministic} & {\small 3-Ensemble} & {\small Deterministic} & {\small 3-Ensemble} & \textbf{\small Softmax Entropy} & \textbf{\small Energy-based Model} & \textbf{\small DDU} & \textbf{\small 3-Ensemble PE} & \textbf{\small 3-Ensemble MI} \\
    \midrule
    {\small ResNet-50} & $74.8\pm0.05$ & $76.01$ & $2.08\pm0.11$ & $2.07$ & $50.65\pm0.63$ & $53.88\pm0.80$ & $\mathbf{59.44\pm0.15}$ & $51.19$ & $55.46$ \\
    {\small Wide-ResNet-50-2} & $76.75\pm0.11$ & $77.58$ & $1.18\pm0.07$ & $1.22$ & $50.44\pm0.25$ & $54.92\pm0.43$ & $\mathbf{63.15\pm0.18}$ & $51.83$ & $57.98$ \\
    \midrule
    {\small VGG-16} & $72.48\pm0.02$ & $73.54$ & $2.62\pm0.11$ & $2.59$ & $50.51\pm0.25$ & $51.15\pm0.27$ & $51.73\pm0.21$ & $51.89$ & $\mathbf{56.56}$ \\
    \bottomrule
    \end{tabular}}
    \vspace{-0.5em} %
\end{table*}

OoD detection is an application of epistemic uncertainty quantification: if we do not train on OoD data, we expect OoD data points to have higher epistemic uncertainty than iD data.
We evaluate CIFAR-10 vs SVHN/CIFAR-100/Tiny-ImageNet/CIFAR-10-C, CIFAR-100 vs SVHN/Tiny-ImageNet and ImageNet vs ImageNet-O as iD vs OoD dataset pairs for this experiment \citep{krizhevsky2009learning, netzer2011reading,deng2009imagenet,hendrycks2019benchmarking}.
We also evaluate DDU on different architectures: Wide-ResNet-28-10, Wide-ResNet-50-2, ResNet-50, ResNet-110 and DenseNet-121 \citep{zagoruyko2016wide,he2016deep,huang2017densely}.
The training setup is described in \S\ref{app:exp_details_cifar}.
In addition to using softmax entropy of a deterministic model (\emph{Softmax}) for both aleatoric and epistemic uncertainty, we also compare with the following baselines that do not require training or fine-tuning on OoD data:
\begin{itemize}[leftmargin=*]
\item \emph{Energy-based model} \citep{liu2020energy}: We use the softmax entropy of a deterministic model as aleatoric uncertainty and the unnormalized softmax density (the logsumexp of the logits) as epistemic uncertainty \emph{without} regularisation to avoid feature collapse. We only compare with the version that does not train on OoD data.

\item \emph{DUQ \citep{van2020simple} \& SNGP \citep{liu2020simple}}: We compare with the state-of-the-art deterministic methods for uncertainty quantification including DUQ and SNGP. For SNGP, we use the exact predictive covariance computation and we use the entropy of the average of the MC softmax samples as uncertainty. For DUQ, we use the closest kernel distance. Note that for CIFAR-100, DUQ's one-vs-all objective did not converge during training and hence, we do not include the DUQ baseline for CIFAR-100.

\item \emph{5-Ensemble}: We use an ensemble of 5 networks
and compute the predictive entropy of the ensemble as both epistemic and aleatoric uncertainty and mutual information as epistemic uncertainty.

\end{itemize}
\Cref{table:ood_wrn} shows the AUROC scores for Wide-ResNet-28-10 based models on CIFAR-10 vs SVHN/CIFAR-100/Tiny-ImageNet and CIFAR-100 vs SVHN/Tiny-ImageNet along with their respective test set accuracy and test set ECE after temperature scaling. The equivalent results for the other architectures, ResNet-50, ResNet-110 and DenseNet-121 can be found in \Cref{table:ood_resnet50}, \Cref{table:ood_resnet110} and \Cref{table:ood_densenet121} respectively in the appendix. Note that for DDU, post-hoc calibration, e.g. in the form of temperature scaling \citep{guo2017calibration}, is straightforward as it does not affect the GMM density. In addition, we plot the AUROC averaged over all corruption types vs corruption intensity for CIFAR-10 vs CIFAR-10-C in \Cref{fig:cifar10_c_results}, with detailed AUROC plots for each corruption type in \Cref{fig:cifar10_c_results_wide_resnet}, \Cref{fig:cifar10_c_results_resnet50}, \Cref{fig:cifar10_c_results_resnet110} and \Cref{fig:cifar10_c_results_densenet121} of the appendix. Finally, in \Cref{table:ood_imagenet}, we present AUROC scores for models trained on ImageNet.

For OoD detection, \emph{DDU outperforms all other deterministic single-forward-pass methods, DUQ, SNGP and the energy-based model approach from \citet{liu2020energy}, on CIFAR-10 vs SVHN/CIFAR-100/Tiny-ImageNet, CIFAR-10 vs CIFAR-10-C and CIFAR-100 vs SVHN/Tiny-ImageNet, often performs on par with state-of-the-art Deep Ensembles---and even performing better in a few cases}. This holds true for all the architectures we experimented on. Similar observations can be made on ImageNet vs ImageNet-O as well.
Importantly, the great performance in OoD detection comes without compromising on the single-model test set accuracy in comparison to other deterministic methods.

Additional ablations for the CIFAR-10/100 experiments are detailed in \S\ref{app:additional_exp_results}: \Cref{table:ood_2} and \ref{table:ood_3}.
These tables along with observations in \Cref{table:ood_imagenet}, show that \emph{the feature density of a VGG-16 (i.e.\ without residual connections and spectral normalisation) is unable to beat a VGG-16 ensemble, whereas a Wide-ResNet-28-10 with spectral normalisation outperforms its corresponding ensemble in almost all the cases}. %
This result further validates the importance of having the bi-Lipschitz constraint (spectral normalisation) on the model to obtain smoothness and sensitivity. Finally, even without spectral normalisation, a Wide-ResNet-28 has the inductive bias of residual connections built into its model architecture, which can be a contributing factor towards good performance in general as residual connections already make the model sensitive to changes in the input space.

\begin{figure*}[t]
    \begin{subfigure}{0.59\linewidth}
        \centering
        \includegraphics[width=\linewidth]{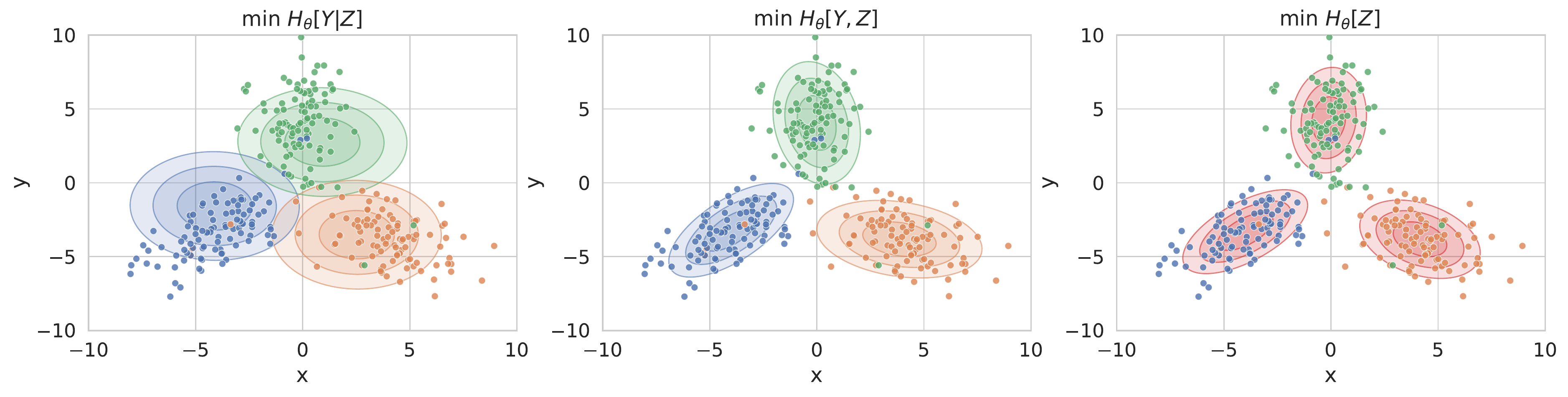}%
        \caption{
        \emph{Density.} Contours at 68.26\%, 95.44\%, and 99.7\%.}
        \label{fig:objective_mismatch_density}
    \end{subfigure}
    \hfill
    \begin{subfigure}{0.40\linewidth}
        \centering
        \includegraphics[width=\linewidth]{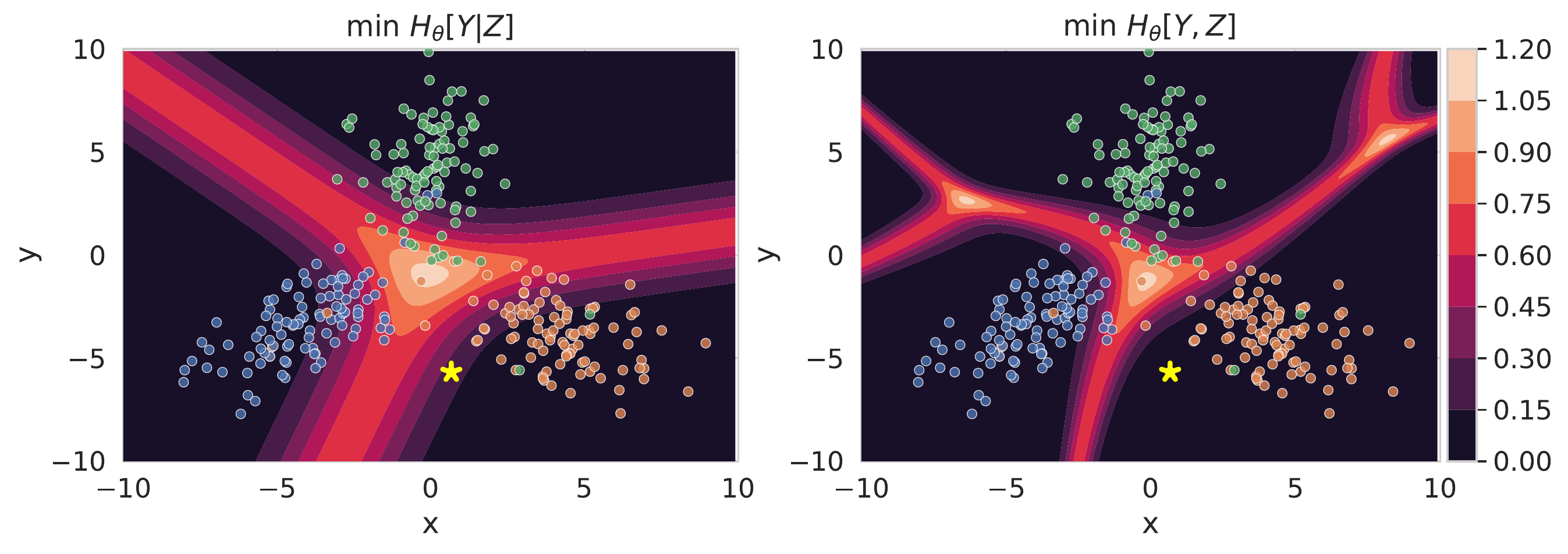}%
        \caption{\emph{Entropy.} Darker is lower.}
        \label{fig:objective_mismatch_entropy}
    \end{subfigure}
    \caption{\emph{3-component GMM fitted to a synthetic dataset with 3 different classes (differently colored) with 4\% label noise using different objectives}.
    \textbf{\subref{fig:objective_mismatch_density}:} The optimas for conditional log-likelihood $\Htc{Y}{Z}$, joint log-likelihood $\Ht{Y, Z}$, and marginalised log-likehood $\Ht{Z}$ all differ. Hence, the best calibrated model ($\Htc{Y}{Z}$) will not provide the best density estimate ($\Ht{Z}$), and vice-versa. 
    \textbf{\subref{fig:objective_mismatch_entropy}:} A mixture model that optimizes $\Ht{Y, Z}$ (GDA) does not have calibrated decision boundaries for aleatoric uncertainty: the ambiguous sample (due to label noise) marked by the yellow star has no aleatoric uncertainty under the GDA model. See \S\ref{app:objective_mismatch_toy_example} for details.
    }
    \label{fig:objective_mismatch}
    \vspace{-0.5mm}
\end{figure*}

\vspace{-3mm}
\section{{Additional Insights}}
\label{sec:motivation}

We elaborate on potential pitfalls of predictive entropy in general and softmax entropy of determinstic models in particular in the following section. Additional proofs for all statements in this section are provided in \S\ref{app:theory}.

\textbf{Potential Pitfalls of Predictive Entropy.} %
Conceptually, we note that \emph{predictive entropy confounds epistemic and aleatoric uncertainty}. 
Ensembling might also be seen as performing Bayesian Model Averaging \citep{He2020Bayesian, wilson2020bayesian}, as each ensemble member, producing a softmax output $\probc{y}{x, \omega}$, can be considered to be drawn from some distribution $\probc{\omega}{\data}$ over the trained model parameters $\omega$, which is induced by the pushforward of the  weight initialization under stochastic optimization. As a result, eq.~\eqref{eq:BALD} can also be applied to Deep Ensembles to disentangle epistemic from predictive uncertainty via computing the mutual information.
Both mutual information $\MIc{Y}{\omega}{x, \mathcal{D}}$ and predictive entropy $\Hc{Y}{x, \mathcal{D}}$ could be used to detect OoD samples. However, previous empirical findings show the predictive entropy outperforming mutual information \citep{malinin2018predictive}.
Indeed, much of recent literature only focuses on predictive entropy and the related confidence score for OoD detection (see \S\ref{app:entropy_confidence_lit_review} for a review).
This can be explained by the following observation:
\vspace{-0.75em}
\begin{observation}
When we \emph{already know} that aleatoric uncertainty or epistemic uncertainty is \emph{low} for a sample, predictive entropy is a good measure of the other quantity.
\end{observation}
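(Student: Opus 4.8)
The plan is to read the result directly off the additive decomposition of predictive entropy in eq.~\eqref{eq:BALD}. Writing $H \coloneqq \Hc{Y}{x, \mathcal{D}}$ for the predictive entropy, $I \coloneqq \MIc{Y}{\omega}{x, \mathcal{D}}$ for the epistemic term, and $A \coloneqq \E{\probc{\omega}{\data}}{\Hc{Y}{x, \omega}}$ for the aleatoric term, that equation reads $H = I + A$. The key structural fact I would exploit is that both summands are individually nonnegative: $I$ is a (conditional) mutual information and $A$ is an expectation of Shannon entropies, so $I \geq 0$ and $A \geq 0$. Everything else is bookkeeping on top of this split.

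First I would record the two one-sided consequences of nonnegativity, namely $H \geq I$ and $H \geq A$, so predictive entropy always upper-bounds each component (this is the sense in which it "overcounts"). Then, to make "good measure" quantitative, I would read the hypothesis "aleatoric is low" as $A \leq \epsilon$ and "epistemic is low" as $I \leq \epsilon$ for some small $\epsilon \geq 0$. The decomposition supplies the exact error identities $H - I = A$ and $H - A = I$, so under $A \leq \epsilon$ we obtain $0 \leq H - I \leq \epsilon$, i.e.\ $H$ approximates the epistemic term $I$ to within $\epsilon$; symmetrically, under $I \leq \epsilon$ we obtain $0 \leq H - A \leq \epsilon$, i.e.\ $H$ approximates the aleatoric term $A$ to within $\epsilon$. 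The degenerate exact cases $A = 0 \Rightarrow H = I$ and $I = 0 \Rightarrow H = A$ are the cleanest illustration and are what I would state first.

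There is no genuine analytic obstacle here: the content is entirely the additive split together with nonnegativity of its two parts, both of which are inherited from eq.~\eqref{eq:BALD} and the standard nonnegativity of mutual information and entropy. The only point requiring care is the modeling caveat already flagged in the paper, namely that the aleatoric term $A$ is the expected conditional entropy $\Hc{Y}{x, \omega}$, which presupposes $\prob{x} > 0$ and is therefore only meaningful in-distribution; so the observation should be read for iD $x$, matching the annotation under eq.~\eqref{eq:BALD}. Accordingly, I would phrase the statement with an explicit $\epsilon$ tolerance so that both "low" and "good measure" are quantitative, and confine the aleatoric direction to the iD setting.
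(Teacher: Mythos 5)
Your proof is correct and takes essentially the same route as the paper: the paper offers no separate formal proof for this Observation, justifying it directly from the additive decomposition in eq.~\eqref{eq:BALD}, where predictive entropy is the sum of the nonnegative epistemic (mutual information) and aleatoric (expected softmax entropy) terms, so that when one term is known to be small the predictive entropy tracks the other. Your explicit $\epsilon$-tolerance bounds and the in-distribution caveat for the aleatoric term merely formalize what the paper states informally.
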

\vspace{-0.5em}
Hence, the predictive entropy as an upper-bound of the mutual information can separate iD and OoD data better for curated datasets with low aleatoric uncertainty.
However, as seen in eq.~\eqref{eq:BALD}, predictive entropy can be high for both iD ambiguous samples (high aleatoric uncertainty) as well as for OoD samples (high epistemic uncertainty) (see \Cref{fig:lewis_vis}) and might \emph{not} be an effective measure for OoD detection when used with datasets that are not curated and ambiguous samples, like Dirty-MNIST in \Cref{fig:intro_histograms}. 

\textbf{Potential Pitfalls of Softmax Entropy.} The softmax entropy for deterministic models trained with maximum likelihood can be \emph{inconsistent}.
The mechanism underlying Deep Ensemble uncertainty that pushes epistemic uncertainty to be high on OoD data is the function disagreement between different ensemble components, i.e.\ arbitrary predictive extrapolations of the softmax models composing the ensemble:
\begin{restatable}{proposition}{ensemblesoftmax}
\label{pro:ensemble_softmax}
Let $x_1$ and $x_2$ be points such that $x_1$ has \textbf{higher} epistemic uncertainty than $x_2$ under the ensemble:
\begin{math}
\MIc{Y_1}{\omega}{x_1, \data} > \MIc{Y_2}{\omega}{x_2, \data} + \delta,
\end{math}
$\delta \ge 0$.
Further assume both have similar predictive entropy 
\begin{math}
    | \Hc{Y_1}{x_1, \data} - \Hc{Y_2}{x_2, \data} | \le \epsilon,
\end{math}
$\epsilon \ge 0$.
Then, there exist sets of ensemble members $\Omega$
with $\probc{\Omega}{\data} > 0$, such that for all softmax models $\omega \in \Omega$ the softmax entropy of $x_1$ is \textbf{lower} than the softmax entropy of $x_2$: $\Hc{Y_1}{x_1, \omega} < \Hc{Y_2}{x_2, \omega} - (\delta - \epsilon)$.
\end{restatable}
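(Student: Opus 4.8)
The plan is to reduce the statement to an elementary averaging argument built on the information-theoretic decomposition in eq.~\eqref{eq:BALD}. First I would apply that identity at both $x_1$ and $x_2$, rewriting each mutual information as the gap between the predictive entropy and the ensemble-averaged softmax entropy: for $i \in \{1,2\}$, $\MIc{Y_i}{\omega}{x_i, \data} = \Hc{Y_i}{x_i, \data} - \E{\probc{\omega}{\data}}{\Hc{Y_i}{x_i, \omega}}$. Solving for the expected softmax entropy and substituting into the assumed epistemic gap is the key algebraic move, since it converts hypotheses about ensemble-level quantities into a statement about the \emph{average} of the per-member softmax entropies.

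Next I would form the difference of the two expected softmax entropies by subtracting the two decompositions, obtaining
\begin{equation*}
\E{\probc{\omega}{\data}}{\Hc{Y_1}{x_1, \omega}} - \E{\probc{\omega}{\data}}{\Hc{Y_2}{x_2, \omega}} = \left( \Hc{Y_1}{x_1, \data} - \Hc{Y_2}{x_2, \data} \right) - \left( \MIc{Y_1}{\omega}{x_1, \data} - \MIc{Y_2}{\omega}{x_2, \data} \right).
\end{equation*}
Bounding the predictive-entropy difference above by $\epsilon$ (from the second hypothesis) and the mutual-information difference below by $\delta$ (from the first), I would conclude that the expected difference of softmax entropies is strictly less than $\epsilon - \delta = -(\delta - \epsilon)$. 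Here I must be careful to track which inequality is strict: the strictness in the epistemic-gap assumption is what ultimately produces a strictly negative mean, which the final step needs.

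Finally, the existence of a positive-probability set $\Omega$ follows from a one-line contrapositive argument. Defining the random variable $G(\omega) = \Hc{Y_1}{x_1, \omega} - \Hc{Y_2}{x_2, \omega}$, the previous step gives $\E{\probc{\omega}{\data}}{G(\omega)} < -(\delta - \epsilon)$, so the event $\{ G(\omega) < -(\delta - \epsilon) \}$ must carry positive posterior mass: were $G(\omega) \ge -(\delta - \epsilon)$ almost surely, the mean could not fall below $-(\delta - \epsilon)$, a contradiction. Taking $\Omega$ to be this event yields exactly the claimed inequality $\Hc{Y_1}{x_1, \omega} < \Hc{Y_2}{x_2, \omega} - (\delta - \epsilon)$ for every $\omega \in \Omega$.

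I expect the substantive content to lie entirely in correctly invoking eq.~\eqref{eq:BALD} and in bookkeeping the strict-versus-nonstrict inequalities; the measure-theoretic step is standard. The main subtlety worth flagging is interpretive rather than technical: this argument only guarantees that \emph{some} positive-mass set of ensemble members reverses the softmax-entropy ordering, not that a typical member does, which is precisely the sense in which a single deterministic softmax model's entropy can be \emph{inconsistent} as an epistemic-uncertainty signal.
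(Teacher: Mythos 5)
Your proof is correct and follows essentially the same route as the paper's: both invoke eq.~\eqref{eq:BALD} to rewrite the expected per-member softmax entropies in terms of predictive entropy and mutual information, combine the two hypotheses into $\E{\probc{\omega}{\data}}{\Hc{Y_1}{x_1, \omega}} < \E{\probc{\omega}{\data}}{\Hc{Y_2}{x_2, \omega}} - (\delta - \epsilon)$, and finish with a contrapositive averaging argument. The one place where your write-up is actually tighter than the paper's is the last step: you apply the averaging argument to the single random variable $G(\omega) = \Hc{Y_1}{x_1, \omega} - \Hc{Y_2}{x_2, \omega}$, so positive mass of the event $\{ G(\omega) < -(\delta - \epsilon) \}$ \emph{is} the claimed per-member inequality. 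The paper instead contraposes the hypothesis ``$\Hc{Y_1}{x_1, \omega} \ge \E{\probc{\omega}{\data}}{\Hc{Y_2}{x_2, \omega}} - (\delta - \epsilon)$ for all $\omega$'', which compares $\Hc{Y_1}{x_1, \omega}$ to the \emph{expected} entropy at $x_2$ rather than to $\Hc{Y_2}{x_2, \omega}$, and then states the per-member conclusion; strictly read, that leaves a small bookkeeping gap that your formulation closes. Your closing interpretive remark---that only some positive-mass set of members, not a typical member, is guaranteed to reverse the entropy ordering---matches exactly how the paper uses the proposition.
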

If a sample is assigned higher epistemic uncertainty (in the form of mutual information) by a Deep Ensemble than another sample, it will necessarily be assigned lower softmax entropy by at least one of the ensemble's members. 
As a result, a priori, we cannot know whether a softmax model preserves the order or not, and \emph{the empirical observation that the mutual information of an ensemble can quantify epistemic uncertainty well implies that the softmax entropy of a deterministic model might not}.
This can be seen in \Cref{fig:intro_softmax_ent}, \ref{fig:lewis_vis} and \S\ref{app:app_softmax_theory} where we observe the softmax entropy for OoD samples to have values which can be high, low or anywhere in between. Note \emph{not all} model architectures will behave like this, but when the mutual information of a corresponding Deep Ensemble works well empirically (for example in active learning), \Cref{pro:ensemble_softmax} holds.

\textbf{Objective Mismatch.} %
The predictive probability induced by a feature-density estimator will generally not be well-calibrated as there is an objective mismatch. %
This was overlooked in previous research on uncertainty quantification for deterministic models \citep{lee2018simple,liu2020simple,van2020simple,he2016deep, postels2020quantifying}.
Specifically, a mixture model $\qprob{y, z} = \sum_y \qprobc{z}{y} \qprob{y}$, using one component per class, cannot be optimal for both feature-space density and predictive distribution estimation as there is an \emph{objective mismatch}\footnote{This follows \citet[Ex. 4.20, p. 145]{murphy2012machine}.}:
\begin{restatable}{proposition}{objectivemismatch}
\label{pro:objectivemismatch}
For an input $x$, let $z=f_\theta(x)$ denote its feature representation in a feature extractor $f_\theta$ with parameters $\theta$. Then the following hold:
\begin{enumerate}[leftmargin=*]
    \item A discriminative classifier $\probc{y}{z}$, e.g. a softmax layer, is well-calibrated in its predictions when it maximises the conditional log-likelihood $\log \probc{y}{z}$;
    \item A feature-space density estimator $\qprob{z}$ is optimal when it maximises the marginalised log-likelihood $\log \qprob{z}$;
    \item A mixture model $\qprob{y, z} = \sum_y \qprobc{z}{y} \qprob{y}$ might not maximise both objectives, conditional log-likelihood and marginalised log-likelihood, at the same time. In the specific instance that a GMM with one component per class does maximise both, the resulting model must be a GDA (but the opposite does not hold).
\end{enumerate}
\end{restatable}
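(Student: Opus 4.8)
The plan is to read each of the three objectives as an expected cross-entropy (equivalently a KL divergence up to a model-independent constant) and then to exploit the chain-rule decomposition $\Ht{Y, Z} = \Htc{Y}{Z} + \Ht{Z}$ already used in \Cref{fig:objective_mismatch}.

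For the first part, maximising the conditional log-likelihood, i.e.\ minimising $\Htc{Y}{Z}$, equals minimising $\E{\prob{z}}{\Kale{\probc{y}{z}}{\qprobc{y}{z}}}$ up to the (model-independent) true conditional entropy. Since the log-loss is a strictly proper scoring rule \citep{gneiting2007strictly}, this KL term vanishes exactly when $\qprobc{y}{z} = \probc{y}{z}$, the true posterior; a classifier reporting the true posterior is perfectly calibrated by definition, which gives the claim. The second part is the same argument with $z$ replacing $y \mid z$: maximising $\log \qprob{z}$ minimises $\Kale{\prob{z}}{\qprob{z}}$, whose unique minimiser $\qprob{z} = \prob{z}$ is the optimal density estimate.

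For the third part the key observation is the additive identity $\Ht{Y, Z} = \Htc{Y}{Z} + \Ht{Z}$. Suppose a GMM with one Gaussian component per class simultaneously minimises both $\Htc{Y}{Z}$ (the conditional objective) and $\Ht{Z}$ (the marginal objective) over the model class. Adding the two inequalities term by term shows it also minimises the joint objective $\Ht{Y, Z}$. I would then compute the joint maximum-likelihood estimate within the one-Gaussian-per-class family: $\log \qprob{y, z}$ separates into a per-class Gaussian log-density term plus a prior term, so its maximiser is obtained independently per class as the empirical class means, covariances, and class frequencies---which is precisely GDA. Hence any model maximising both objectives must be a GDA.

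For the converse I would argue that a GDA generally fails to maximise either component objective: it maximises the \emph{sum} $\Ht{Y, Z}$ and is therefore free to trade conditional fit against marginal fit, so generically it is optimal for neither. To make this concrete I would invoke the synthetic three-class example of \Cref{fig:objective_mismatch} (detailed in \S\ref{app:objective_mismatch_toy_example}), where the optima of $\Htc{Y}{Z}$, $\Ht{Y, Z}$, and $\Ht{Z}$ are visibly distinct and the GDA mis-calibrates the label-noise region. The main obstacle is exactly this objective-mismatch step: the forward direction is a one-line summation plus a textbook MLE computation, but showing that the three optima genuinely differ---rather than coincidentally coinciding---requires exhibiting a distribution on which the conditional-optimal, joint-optimal, and marginal-optimal one-Gaussian-per-class models are provably distinct. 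Some care is also needed to fix the regime (population versus empirical) and the model class over which we optimise, so that both the term-by-term summation and the uniqueness of the GDA joint-MLE remain valid.
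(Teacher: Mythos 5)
Your proposal is correct and follows the paper's overall route: parts 1 and 2 by the strictly-proper-scoring-rule/variational argument (the paper states it as the cross-entropy bounds $\Htc{Y}{Z} \ge \Hc{Y}{Z}$ and $\Ht{Z} \ge \Entropy{Z}$ with equality at the true distributions, which is exactly your KL formulation), and part 3 by the chain rule $\Ht{Y, Z} = \Htc{Y}{Z} + \Ht{Z}$ together with identifying the joint one-Gaussian-per-class MLE with GDA. Two differences are worth recording. First, where you add the two minimality inequalities term by term to conclude that a simultaneous minimizer of $\Htc{Y}{Z}$ and $\Ht{Z}$ also minimizes $\Ht{Y, Z}$, the paper argues via gradients: $\nabla \Htc{Y}{Z} = 0$ and $\nabla \Ht{Z} = 0$ imply $\nabla \Ht{Y, Z} = 0$. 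Your version is more elementary and in fact tighter---the gradient argument alone only establishes stationarity of the joint objective, not global minimality, and requires differentiability---whereas the paper's formulation buys the trade-off identity $\nabla \Htc{Y}{Z} = -\nabla \Ht{Z}$ at stationary points of $\Ht{Y, Z}$, which it later uses to argue that an EM-fitted GMM is a better density estimator than GDA and a softmax layer a better classifier. Second, for the ``might not coincide'' half, the paper does not rest on the empirical example alone: it characterizes coincidence---both objectives are simultaneously minimizable \emph{exactly when} the class-conditional features are perfectly separated, so that an unconstrained density-fit GMM would assign each class to its own component---and then reads off the counterexample (features drawn from a single Gaussian with random labels, or any data with label noise) as a corollary. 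You honestly flag this as your remaining gap; note that the paper's own treatment of that step is also informal (the GDA equivalence is ``easy to check'' and the separation claim is asserted rather than proved), so your appeal to the label-noise construction of \Cref{fig:objective_mismatch} is at the paper's level of rigor, but stating the separation criterion is what turns that example from an empirical observation into a principled reason the optima diverge.
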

Hence, importantly, DDU uses \emph{both} a discriminative classifier (softmax layer) to capture aleatoric uncertainty for iD samples and a separate feature-density estimator to capture epistemic uncertainty even on a model trained using conditional log-likelihood, i.e.\ the usual cross-entropy objective. \Cref{fig:objective_mismatch} and \S\ref{app:objective_mismatch_toy_example} provide additional intuitions.

\section{Conclusion}
\label{sec:conclusion}
Deep Deterministic Uncertainty (DDU) can outperform state-of-the-art deterministic single-pass uncertainty methods in active learning and OoD detection by fitting a GDA for feature-space density estimation after training a model with residual connections and spectral normalization \citep{lee2018simple, liu2020simple}, and it manages to perform as well as deep ensembles in various settings.
Hence, DDU provides a very simple method to produce good epistemic and aleatoric uncertainty estimates and might be taken into consideration as an alternative to deep ensembles without requiring the complexities or computational cost of the current state-of-the-art. Reliable uncertainty quantification is an important requirement to make deep neural nets safe for deployment. Thus, we hope our work will contribute to increasing safety, reliability and trust in AI.

\newpage
\bibliography{ddu}

\clearpage

\appendix
\onecolumn

\section{Related Work}
\label{sec:related work}
Several existing approaches model uncertainty using feature-space density but underperform without fine-tuning on OoD data. Our work identifies feature collapse and objective mismatch as possible reasons for this.
Among these approaches, \citet{lee2018simple} uses Mahalanobis distances to quantify uncertainty by fitting a class-wise Gaussian distribution (with shared covariance matrices) on the feature space of a pre-trained ResNet
encoder. The competitive results they report require input perturbations, ensembling GMM densities from multiple layers, and fine-tuning on OoD hold-out data.
They do not discuss any constraints which the ResNet encoder should satisfy,
and therefore,
are vulnerable to feature collapse. %
In \Cref{fig:intro_gmm}, for example, the feature density of a LeNet and a VGG are unable to distinguish OoD from iD samples. 
\citet{postels2020quantifying} also propose a density-based estimation of aleatoric and epistemic uncertainty. Similar to \citet{lee2018simple}, they do not constrain their pre-trained ResNet encoder.
They do discuss
feature collapse though, noting that they do not address this problem. 
Moreover, they do not consider the objective mismatch that arises (see \Cref{pro:objectivemismatch} below) and use a single estimator for both epistemic and aleatoric uncertainty. Consequently, they report worse epistemic uncertainty: 74\% AUROC on CIFAR-10 vs SVHN, which we show to considerably fall behind modern approaches for uncertainty estimation in deep learning in \S\ref{sec:experiments}.
Likewise, \citet{liu2020energy} compute an unnormalized density based on the softmax logits %
without taking into account the need for inductive biases to ensure smoothness and sensitivity of the feature space.

\citet{winkens2020contrastive} use contrastive training on the feature extractor before estimating the feature-space density.
Our method is orthogonal from this work as we restrict ourselves to the supervised setting and show that the inductive biases that result in bi-Lipschitzness
\citep{van2020simple, liu2020simple} are sufficient for the feature-space density to reliably capture epistemic uncertainty.

Lastly, our method improves upon \citet{van2020simple} and \citet{liu2020simple} by alleviating the need for additional hyperparameters:
DDU only needs minimal changes from the standard softmax setup to outperform DUQ and SNGP on uncertainty benchmarks, and our GMM parameters are optimised for the already trained model using the training set. In particular, DDU does not require training or fine-tuning with OoD data.
Moreover, our insights in \S\ref{sec:motivation} explain why \citet{liu2020simple} found that a baseline that uses \emph{the softmax entropy instead of the feature-space density} of a deterministic network with bi-Lipschitz constraint underperforms.

\subsection{Predictive Entropy and Confidence in Recent Works}
\label{app:entropy_confidence_lit_review}

\Cref{table:confidence_entropy_lit_review} shows a selection of recently published papers which use entropy or confidence as OoD score. Only two papers examine using Mutual Information with Deep Ensembles as OoD score at all. None of the papers examines the possible confounding of aleatoric and epistemic uncertainty when using predictive entropy or confidence, or the consistency issues of softmax entropy (and softmax confidence), detailed in \S\ref{sec:motivation}. This list is not exhaustive, of course.

\begin{table}[!t]

\centering
\caption{\emph{A sample of recently published papers and OoD metrics.} Many recently published papers only use Predictive Entropy or Predictive Confidence  (for Deep Ensembles) or Softmax Confidence (for deterministic models) as OoD scores without addressing the possible confounding of aleatoric and epistemic uncertainty, that is ambiguous iD samples with OoD samples. Only two papers examine using Mutual Information with Deep Ensembles as OoD score at all.}
\label{table:confidence_entropy_lit_review}
\resizebox{\linewidth}{!}{%
\renewcommand{\arraystretch}{1.3} 
\begin{tabular}{m{30em} lrrrr}
\toprule
Title & Citation & Softmax Confidence & Predictive Confidence & Predictive Entropy & Mutual Information \\
\midrule
A Baseline for Detecting Misclassified and Out-of-Distribution Examples in Neural Networks & \citet{hendrycks2016baseline} & \cmark & \xmark & \xmark & \xmark \\
Deep Anomaly Detection with Outlier Exposure & \citet{hendrycks2018deep} & \cmark & \xmark & \xmark & \xmark \\
Enhancing The Reliability of Out-of-distribution Image Detection in Neural Networks & \citet{liang2017enhancing} & \cmark & \xmark & \xmark & \xmark \\
Training Confidence-calibrated Classifiers for Detecting Out-of-Distribution Samples & \citet{lee2017training} & \cmark & \xmark & \xmark & \xmark \\
Learning Confidence for Out-of-Distribution Detection in Neural Networks & \citet{devries2018learning} & \cmark & \xmark & \xmark & \xmark \\
Simple and Scalable Predictive Uncertainty Estimation using Deep Ensembles & \citet{lakshminarayanan2017simple} & \xmark & \cmark & \cmark & \xmark \\
Predictive Uncertainty Estimation via Prior Networks & \citet{malinin2018predictive} & \xmark & \cmark & \cmark & \cmark \\
Ensemble Distribution Distillation & \citet{Malinin2020EnsembleDD} & \xmark & \xmark & \cmark & \cmark \\
Generalized ODIN: Detecting Out-of-Distribution Image Without Learning From Out-of-Distribution Data & \citet{hsu2020generalized} & \cmark & \xmark & \xmark & \xmark \\
Being Bayesian, Even Just a Bit, Fixes Overconfidence in ReLU Networks & \citet{Kristiadi2020BeingBE} & \xmark & \cmark & \xmark & \xmark \\
\bottomrule
\end{tabular}
}
\end{table}

\section{Ambiguous- and Dirty-MNIST}
\label{app:dirty_mnist}
Each sample in Ambiguous-MNIST is constructed by decoding a linear combination of latent representations of 2 different MNIST digits from a pre-trained VAE \citep{kingma2013auto}. Every decoded image is assigned several labels sampled from the softmax probabilities of an off-the-shelf MNIST neural network ensemble, with points filtered based on an ensemble's MI (to remove `junk' images) and then stratified class-wise based on their softmax entropy (some classes are inherently more ambiguous, so we ``amplify'' these; we stratify per-class to try to preserve a wide spread of possible entropy values, and avoid introducing additional ambiguity which will increase all points to have highest entropy). All off-the-shelf MNIST neural networks were then discarded and new models were trained to generate Fig 1 (and as can be seen, the ambiguous points we generate indeed have high entropy regardless of the model architecture used).
We create 60K such training and 10K test images to construct Ambiguous-MNIST. Finally, the Dirty-MNIST dataset in this experiment contains MNIST and Ambiguous-MNIST samples in a 1:1 ratio (with 120K training and 20K test samples). In \Cref{fig:ambiguous_mnist_samples}, we provide some samples from Ambiguous-MNIST.

\begin{figure*}[!t]
    \centering
    \begin{subfigure}{0.80\linewidth}
        \centering
        \includegraphics[width=\linewidth]{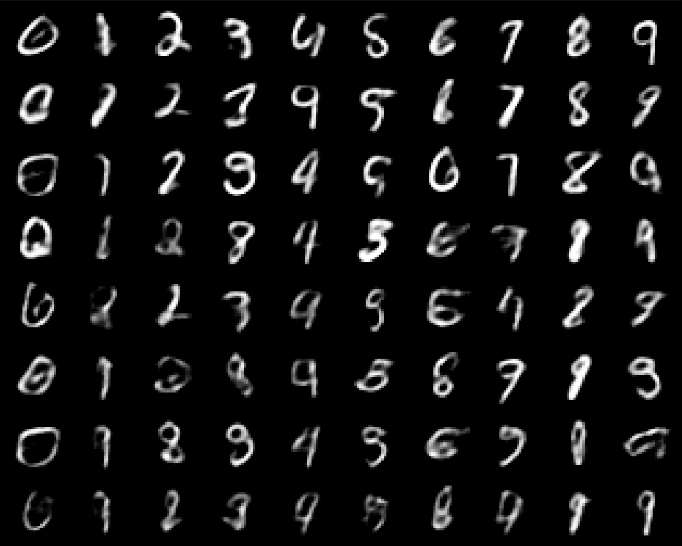}
    \end{subfigure}
    \caption{
    \emph{Samples from Ambiguous-MNIST.}
    }
    \label{fig:ambiguous_mnist_samples}
\end{figure*}

\section{Algorithm}

\begin{algorithm}[t]
    \caption{Deep Deterministic Uncertainty}
    \label{algorithm:fancy}
    \begin{algorithmic}[1]
        \State \textbf{Definitions:}

        - Regularized feature extractor $f_\theta: x \rightarrow \mathbb{R}^d$

        - Softmax output predictions: $p(y|x)$
        
        - GMM density: $q(z) = \sum_y q(z|y=c) \, q(y=c)$

        - Dataset $(X, Y)$

        \item[] %
        
        \Procedure{train}{}

        \State train NN $p(y| f_\theta(x))$ with $(X, Y)$
        
        \For{each class $c$ with samples $\mathbf{x}_c  \subset X$}
        \State $\mu_c \leftarrow \frac{1}{|\mathbf{x}_c|} \sum_{\mathbf{x}_c} f_\theta(\mathbf{x}_c)$
        
        \State $\Sigma_c \leftarrow \frac{1}{|\mathbf{x}_c| - 1} (f_\theta(\mathbf{x}_c) - \mu_c)(f_\theta(\mathbf{x}_c) - \mu_c)^T$
        \State $\pi_c \leftarrow \frac{\sum_{\mathbf{x}_c} 1}{\lvert X \rvert}$
        \EndFor
        
        \EndProcedure
        
        \item[] %
    
        \Function{disentangle\_uncertainty}{sample $x$}
        
        \State compute feature representation $z = f_\theta(x)$        
        \State compute density under GMM: $q(z) = \sum_y \qprobc{z}{y} \qprob{y}$ with $\qprobc{z}{y} \sim \mathcal{N}(\mu_y;\sigma_y), \qprob{y} = \pi_y$ 
        \State compute softmax entropy: $H_p[Y|x]$
    
        \item[] %
        
        \If {low density $q(z)$}
            \State \Return OoD
        \ElsIf {high density $q(z)$}
            \If {high entropy $H_p[Y|x]$}
                \State \Return ambiguous iD
            \ElsIf {low entropy $H_p[Y|x]$}
                \State \Return iD
            \EndIf
        \EndIf
        \EndFunction
    \end{algorithmic}
\end{algorithm}

\begin{lstlisting}[float=tp, language=Python, caption=Deep Deterministic Uncertainty Pseudo-Code, label=algorithm, mathescape=true]
# instantiate models
model = create_sensitive_smooth_model()
gda = create_gda()

# train
training_samples, training_labels = load_training_set()
model.fit(training_samples, training_labels)

training_features = model.features(training_samples)
gda.fit(training_features, training_labels)

# test
test_features = model.features(test_sample)

epistemic_uncertainty = gda.log_density(test_features)

is_ood = ood_threshold <= epistemic_uncertainty
if not is_ood:
  predictions = model.softmax_layer(test_features)
  aleatoric_uncertainty = entropy(predictions)
\end{lstlisting}

\subsection{Increasing sensitivity}
\label{app:more_model_architecture}

Using residual connections to enforce sensitivity works well in practice when the layer is defined as $x' = x + f(x)$.
However, there are several places in the network where additional spatial downsampling is done in $f(\cdot)$ (through a strided convolution), and in order to compute the residual operation $x$ needs to be downsampled as well.
These downsampling operations are crucial for managing memory consumption and generalisation.
The way this is traditionally done in ResNets is by introducing an additional function $g(\cdot)$ on the residual branch (obtaining $x' = g(x) + f(x)$) which is a strided 1x1 convolution.
In practice, the stride is set to 2 pixels, which leads to the output of $g(\cdot)$ only being dependent on the top-left pixel of each 2x2 patch, which reduces sensitivity.
We overcome this issue by making an architectural change that improves uncertainty quality without sacrificing accuracy.
We use a strided average pooling operation instead of a 1x1 convolution in $g(\cdot)$. This makes the output of $g(\cdot)$ dependent on all input pixels.
Additionally, we use leaky ReLU activation functions, which are equivalent to ReLU activations when the input is larger than 0, but below 0 they compute $p * x$ with $p = 0.01$ in practice.
These further improve sensitivity as all negative activations still propagate in the network.

\subsection{Algorithm \& Pseudo-Code Implementation}
\label{app:implementation}

The algorithm is provided in \Cref{algorithm:fancy}. A simple Python pseudo-code implementation using a scikit-learn-like API \citep{sklearn_api} is shown in \Cref{algorithm}. Note that in order to compute thresholds for low and high density or entropy, we can simply use the training set containing iD data. We set all points having density lower than $99\%$ quantile as OoD. 

\section{Experimental Details}
\label{app:exp_details}

\subsection{Dirty-MNIST}
\label{app:exp_details_dirty_mnist}

We train for 50 epochs using SGD with a momentum of 0.9 and an initial learning rate of 0.1.
The learning rate drops by a factor of 10 at training epochs 25 and 40. 
Following SNGP \citep{liu2020simple}, we apply online spectral normalisation with one step of a power iteration on the convolutional weights.
For 1x1 convolutions, we use the exact algorithm, and for 3x3 convolutions, the approximate algorithm from \citet{gouk2018regularisation}.
The coefficient for SN is a hyper-parameter which we set to 3 using cross-validation.

\subsection{OoD Detection Training Setup}
\label{app:exp_details_cifar}
We train the softmax baselines on CIFAR-10/100 for 350 epochs using SGD as the optimiser with a momentum of 0.9, and an initial learning rate of 0.1. The learning rate drops by a factor of 10 at epochs 150 and 250. We train the 5-Ensemble baseline using this same training setup. The SNGP and DUQ models were trained using the setup of SNGP and hyper-parameters mentioned in their respective papers \citep{liu2020simple, van2020simple}. For models trained on ImageNet, we train for 90 epochs with SGD optimizer, an initial learning rate of 0.1 and a weight decay of 1e-4. We use a learning rate warmup decay of 0.01 along with a step scheduler with step size of 30 and a step factor of 0.1.

\subsection{Compute Resources}
\label{app:exp_details_compute}

Each model (ResNet-18, Wide-ResNet-28-10, ResNet-50, ResNet-110, DenseNet-121 or VGG-16) used for the large scale active learning, CIFAR-10 vs SVHN/CIFAR-100/Tiny-ImageNet/CIFAR-10-C and CIFAR-100 vs SVHN/Tiny-ImageNet tasks was trained on a single Nvidia Quadro RTX 6000 GPU. Each model (LeNet, VGG-16 and ResNet-18) used to get the results in \Cref{fig:intro_histograms} and \Cref{table:auroc_tab1} was trained on a single Nvidia GeForce RTX 2060 GPU. Each model (ResNet-50, Wide-ResNet-50-2, VGG-16) trained on ImageNet was trained using 8 Nvidia Quadro RTX 6000 GPUs.

\begin{table*}[!t]
\centering
\caption{\emph{OoD detection performance of different baselines using a ResNet-50 architecture with the CIFAR-10 vs SVHN/CIFAR-100/Tiny-ImageNet and CIFAR-100 vs SVHN/Tiny-ImageNet dataset pairs averaged over 25 runs.} Note: SN stands for Spectral Normalisation, JP stands for Jacobian Penalty. We highlight the best deterministic and best method overall in bold for each metric.}
\label{table:ood_resnet50}
\resizebox{\linewidth}{!}{%
\begin{tabular}{cccccccccc}
\toprule
\textbf{\small Train Dataset} & \textbf{\small Method} & \textbf{\small Penalty} & \textbf{\small Aleatoric Uncertainty} &
\textbf{\small Epistemic Uncertainty} & \textbf{\small Test Accuracy (\textuparrow)} & \textbf{\small Test ECE (\textdownarrow)} & \textbf{\small AUROC SVHN (\textuparrow)} & \textbf{\small AUROC CIFAR-100 (\textuparrow)} & \textbf{\small AUROC Tiny-ImageNet (\textuparrow)}\\
\midrule
\multirow{7}{*}{CIFAR-10} & Softmax & - & \multirow{2}{*}{Softmax Entropy} & Softmax Entropy & \multirow{2}{*}{$\mathbf{95.04\pm0.05}$} & \multirow{2}{*}{$\mathbf{0.97\pm0.04}$} & $93.80\pm0.41$ & $88.91\pm0.07$ & $88.32\pm0.07$ \\
&Energy-based {\scriptsize \citep{liu2020energy}} & - && Softmax Density &&& $94.48\pm0.44$ & $88.84\pm0.08$ & $88.45\pm0.08$\\
&DUQ {\scriptsize \citep{van2020simple}} & JP & Kernel Distance & Kernel Distance & $94.05\pm0.11$ & $1.71\pm0.07$ & $93.14\pm0.43$ & $83.87\pm0.27$ & $84.28\pm0.26$\\
&SNGP {\scriptsize \citep{liu2020simple}} & SN & Predictive Entropy & Predictive Entropy & $94.90\pm0.11$ & $1.01\pm0.03$ & $93.15\pm0.85$ & $89.32\pm0.10$ & $88.96\pm0.13$\\
&\textbf{DDU (ours)} & SN & Softmax Entropy & GMM Density & $94.92\pm0.06$&$1\pm0.04$&$\mathbf{94.77\pm0.35}$&$\mathbf{89.98\pm0.17}$ & $\mathbf{89.12\pm0.13}$\\
\cmidrule{2-10}
&5-Ensemble & \multirow{2}{*}{-} & \multirow{2}{*}{Predictive Entropy} & Predictive Entropy & \multirow{2}{*}{$\mathbf{96.06\pm0.04}$}&\multirow{2}{*}{$1.65\pm0.07$}&$94.75\pm0.39$&$89.87\pm0.06$ & $88.69\pm0.05$\\
&{\scriptsize \citep{lakshminarayanan2017simple}} &&& Mutual Information &&&$94.09\pm0.20$&$89.76\pm0.06$ & $89.04\pm0.03$\\
\midrule
&&&&& \textbf{\small Test Accuracy (\textuparrow)} & \textbf{\small{Test ECE (\textdownarrow)}} & \multicolumn{2}{c}{\textbf{\small AUROC SVHN (\textuparrow)}} & \textbf{\small AUROC Tiny-ImageNet (\textuparrow)} \\
\cmidrule{6-10}
\multirow{6}{*}{CIFAR-100} & Softmax & - & \multirow{2}{*}{Softmax Entropy} & Softmax Entropy & \multirow{2}{*}{$77.91\pm0.09$} & \multirow{2}{*}{$4.32\pm0.10$} & \multicolumn{2}{c}{$81.32\pm0.65$} & $79.83\pm0.07$ \\
&Energy-based {\scriptsize \citep{liu2020energy}} & - && Softmax Density &&&\multicolumn{2}{c}{$82.05\pm0.69$} & $79.61\pm0.08$ \\
&SNGP {\scriptsize \citep{liu2020simple}} & SN & Predictive Entropy & Predictive Entropy & $74.73\pm0.22$ & $7.68\pm0.13$ & \multicolumn{2}{c}{$82.50\pm2.09$} & $77.05\pm0.16$ \\
&\textbf{DDU (ours)} & SN & Softmax Entropy & GMM Density & $\mathbf{79.26\pm0.16}$&$\mathbf{4.07\pm0.06}$& \multicolumn{2}{c}{$\mathbf{87.34\pm0.64}$}& $\mathbf{82.11\pm0.20}$ \\
\cmidrule{2-10}
&5-Ensemble & \multirow{2}{*}{-} & \multirow{2}{*}{Predictive Entropy} & Predictive Entropy & \multirow{2}{*}{$\mathbf{81.06\pm0.07}$}&\multirow{2}{*}{$\mathbf{3.54\pm0.12}$}& \multicolumn{2}{c}{$83.42\pm0.89$} & $77.69\pm0.12$\\
&{\scriptsize \citep{lakshminarayanan2017simple}} &&& Mutual Information &&& \multicolumn{2}{c}{$84.24\pm0.90$} & $81.59\pm0.05$\\
\bottomrule
\end{tabular}}
\end{table*}
\begin{table*}[!t]
\centering
\caption{\emph{OoD detection performance of different baselines using a ResNet-110 architecture with the CIFAR-10 vs SVHN/CIFAR-100/Tiny-ImageNet and CIFAR-100 vs SVHN/Tiny-ImageNet dataset pairs averaged over 25 runs.} Note: SN stands for Spectral Normalisation, JP stands for Jacobian Penalty. We highlight the best deterministic and best method overall in bold for each metric.}
\label{table:ood_resnet110}
\resizebox{\linewidth}{!}{%
\begin{tabular}{cccccccccc}
\toprule
\textbf{\small Train Dataset} & \textbf{\small Method} & \textbf{\small Penalty} & \textbf{\small Aleatoric Uncertainty} &
\textbf{\small Epistemic Uncertainty} & \textbf{\small Test Accuracy (\textuparrow)} & \textbf{\small Test ECE (\textdownarrow)} & \textbf{\small AUROC SVHN (\textuparrow)} & \textbf{\small AUROC CIFAR-100 (\textuparrow)} & \textbf{\small AUROC Tiny-ImageNet (\textuparrow)}\\
\midrule
\multirow{7}{*}{CIFAR-10} & Softmax & - & \multirow{2}{*}{Softmax Entropy} & Softmax Entropy & \multirow{2}{*}{$\mathbf{95.08\pm0.04}$} & \multirow{2}{*}{$1.02\pm0.04$} & $93.12\pm0.44$ & $88.7\pm0.1$ & $88.07\pm0.11$ \\
&Energy-based {\scriptsize \citep{liu2020energy}} & - && Softmax Density &&& $93.67\pm0.47$ & $88.60\pm0.11$ & $88.13\pm0.11$\\
&DUQ {\scriptsize \citep{van2020simple}} & JP & Kernel Distance & Kernel Distance & $94.32\pm0.17$ & $1.21\pm0.07$ & $94.02\pm0.45$ & $86.17\pm0.35$ & $85.24\pm0.21$\\
&SNGP {\scriptsize \citep{liu2020simple}} & SN & Predictive Entropy & Predictive Entropy & $94.85\pm0.09$ & $1.04\pm0.02$ & $93.17\pm0.53$ & $89.23\pm0.10$ & $88.80\pm0.12$\\
&\textbf{DDU (ours)} & SN & Softmax Entropy & GMM Density & $94.82\pm0.06$&$\mathbf{1.01\pm0.04}$&$\mathbf{95.48\pm0.30}$&$\mathbf{90.08\pm0.13}$ & $\mathbf{89.18\pm0.15}$\\
\cmidrule{2-10}
&5-Ensemble & \multirow{2}{*}{-} & \multirow{2}{*}{Predictive Entropy} & Predictive Entropy & \multirow{2}{*}{$\mathbf{96.18\pm0.05}$}& \multirow{2}{*}{$1.57\pm0.05$}&$95.07\pm0.45$&$\mathbf{90.23\pm0.04}$ & $89\pm0.03$\\
&{\scriptsize \citep{lakshminarayanan2017simple}} &&& Mutual Information &&&$94.72\pm0.34$&$89.69\pm0.05$ & $88.35\pm0.05$\\
\midrule
&&&&& \textbf{\small Test Accuracy (\textuparrow)} & \textbf{\small{Test ECE (\textdownarrow)}} & \multicolumn{2}{c}{\textbf{\small AUROC SVHN (\textuparrow)}} & \textbf{\small AUROC Tiny-ImageNet (\textuparrow)} \\
\cmidrule{6-10}
\multirow{6}{*}{CIFAR-100} & Softmax & - & \multirow{2}{*}{Softmax Entropy} & Softmax Entropy & \multirow{2}{*}{$78.65\pm0.10$} & \multirow{2}{*}{$3.93\pm0.13$} & \multicolumn{2}{c}{$82.04\pm0.57$} & $80.13\pm0.07$ \\
&Energy-based {\scriptsize \citep{liu2020energy}} & - && Softmax Density &&&\multicolumn{2}{c}{$82.78\pm0.60$} & $80.01\pm0.09$ \\
&SNGP {\scriptsize \citep{liu2020simple}} & SN & Predictive Entropy & Predictive Entropy & $76.16\pm0.27$ & $6.43\pm0.75$ & \multicolumn{2}{c}{$83.94\pm0.10$} & $78.54\pm0.28$ \\
&\textbf{DDU (ours)} & SN & Softmax Entropy & GMM Density & $\mathbf{78.89\pm0.17}$&$\mathbf{3.79\pm0.07}$& \multicolumn{2}{c}{$\mathbf{88.66\pm0.56}$}& $\mathbf{82.58\pm0.24}$ \\
\cmidrule{2-10}
&5-Ensemble & \multirow{2}{*}{-} & \multirow{2}{*}{Predictive Entropy} & Predictive Entropy & \multirow{2}{*}{$\mathbf{81.80\pm0.10}$} & \multirow{2}{*}{$\mathbf{3.67\pm0.11}$} & \multicolumn{2}{c}{$83.68\pm0.33$} & $81.12\pm0.13$\\
&{\scriptsize \citep{lakshminarayanan2017simple}} &&& Mutual Information &&& \multicolumn{2}{c}{$85.11\pm0.57$} & $81.94\pm0.06$\\
\bottomrule
\end{tabular}}
\end{table*}
\begin{table*}[!t]
\centering
\caption{\emph{OoD detection performance of different baselines using a DenseNet-121 architecture with the CIFAR-10 vs SVHN/CIFAR-100/Tiny-ImageNet and CIFAR-100 vs SVHN/Tiny-ImageNet dataset pairs averaged over 25 runs.} Note: SN stands for Spectral Normalisation, JP stands for Jacobian Penalty. We highlight the best deterministic and best method overall in bold for each metric.}
\label{table:ood_densenet121}
\resizebox{\linewidth}{!}{%
\begin{tabular}{cccccccccc}
\toprule
\textbf{\small Train Dataset} & \textbf{\small Method} & \textbf{\small Penalty} & \textbf{\small Aleatoric Uncertainty} &
\textbf{\small Epistemic Uncertainty} & \textbf{\small Test Accuracy (\textuparrow)} & \textbf{\small Test ECE (\textdownarrow)} & \textbf{\small AUROC SVHN (\textuparrow)} & \textbf{\small AUROC CIFAR-100 (\textuparrow)} & \textbf{\small AUROC Tiny-ImageNet (\textuparrow)}\\
\midrule
\multirow{7}{*}{CIFAR-10} & Softmax & - & \multirow{2}{*}{Softmax Entropy} & Softmax Entropy & \multirow{2}{*}{$95.16\pm0.03$} & \multirow{2}{*}{$1.10\pm0.04$} & $94\pm0.44$ & $87.55\pm0.11$ & $86.99\pm0.12$ \\
&Energy-based {\scriptsize \citep{liu2020energy}} & - && Softmax Density &&& $94.07\pm0.54$ & $86.73\pm0.15$ & $86.43\pm0.16$\\
&DUQ {\scriptsize \citep{van2020simple}} & JP & Kernel Distance & Kernel Distance & $95.02\pm0.14$ & $1.08\pm0.08$ & $94.67\pm0.41$ & $87.38\pm0.21$ & $86.72\pm0.14$\\
&SNGP {\scriptsize \citep{liu2020simple}} & SN & Predictive Entropy & Predictive Entropy & $94.31\pm0.21$ & $1.08\pm0.10$ & $94.48\pm0.34$ & $88.86\pm0.46$ & $88.40\pm0.48$\\
&\textbf{DDU (ours)} & SN & Softmax Entropy & GMM Density & $\mathbf{95.21\pm0.03}$&$\mathbf{1.05\pm0.03}$&$\mathbf{96.21\pm0.31}$&$\mathbf{90.84\pm0.06}$ & $\mathbf{89.70\pm0.06}$\\
\cmidrule{2-10}
&5-Ensemble & \multirow{2}{*}{-} & \multirow{2}{*}{Predictive Entropy} & Predictive Entropy & \multirow{2}{*}{$\mathbf{96.18\pm0.05}$}&\multirow{2}{*}{$1.07\pm0.07$}&$95.78\pm0.11$&$90.65\pm0.03$ & $89.62\pm0.06$\\
&{\scriptsize \citep{lakshminarayanan2017simple}} &&& Mutual Information &&&$95.75\pm0.10$&$90.71\pm0.04$ & $89.34\pm0.06$\\
\midrule
&&&&& \textbf{\small Test Accuracy (\textuparrow)} & \textbf{\small{Test ECE (\textdownarrow)}} & \multicolumn{2}{c}{\textbf{\small AUROC SVHN (\textuparrow)}} & \textbf{\small AUROC Tiny-ImageNet (\textuparrow)} \\
\cmidrule{6-10}
\multirow{6}{*}{CIFAR-100} & Softmax & - & \multirow{2}{*}{Softmax Entropy} & Softmax Entropy & \multirow{2}{*}{$79.02\pm0.08$} & \multirow{2}{*}{$4.11\pm0.08$} & \multicolumn{2}{c}{$85.86\pm0.42$} & $81.10\pm0.07$ \\
&Energy-based {\scriptsize \citep{liu2020energy}} & - && Softmax Density &&&\multicolumn{2}{c}{$87.09\pm0.49$} & $80.84\pm0.08$ \\
&SNGP {\scriptsize \citep{liu2020simple}} & SN & Predictive Entropy & Predictive Entropy & $79.15\pm0.15$ & $6.73\pm0.10$ & \multicolumn{2}{c}{$85.00\pm0.12$} & $79.76\pm0.15$ \\
&\textbf{DDU (ours)} & SN & Softmax Entropy & GMM Density & $\mathbf{79.15\pm0.07}$&$\mathbf{4.11\pm0.06}$& \multicolumn{2}{c}{$\mathbf{88.44\pm0.55}$}& $\mathbf{81.85\pm0.11}$ \\
\cmidrule{2-10}
&5-Ensemble & \multirow{2}{*}{-} & \multirow{2}{*}{Predictive Entropy} & Predictive Entropy & \multirow{2}{*}{$\mathbf{81.01\pm0.13}$}&\multirow{2}{*}{$4.81\pm0.05$}& \multicolumn{2}{c}{$88.32\pm0.61$} & $81.45\pm0.12$\\
&{\scriptsize \citep{lakshminarayanan2017simple}} &&& Mutual Information &&& \multicolumn{2}{c}{$88.36\pm0.17$} & $81.73\pm0.06$\\
\bottomrule
\end{tabular}}
\end{table*}
\begin{table*}[!t]
\centering
\caption{\emph{OoD detection performance of different ablations trained on CIFAR-10 using Wide-ResNet-28-10 and VGG-16 architectures with SVHN, CIFAR-100 and Tiny-ImageNet as OoD datasets averaged over 25 runs.} Note: SN stands for Spectral Normalisation. We highlight the best deterministic and best method overall in bold for each metric.}
\label{table:ood_2}
\scriptsize
\resizebox{\linewidth}{!}{%
\begin{tabular}{cccccccccccc}
\toprule
\multicolumn{5}{c}{\textbf{Ablations}} & \textbf{Aleatoric Uncertainty} &
\textbf{Epistemic Uncertainty} & \textbf{Test Accuracy (\textuparrow)} & \textbf{Test ECE (\textdownarrow)} & \textbf{AUROC SVHN (\textuparrow)} & \textbf{AUROC CIFAR-100 (\textuparrow)} & \textbf{AUROC Tiny-ImageNet (\textuparrow)} \\
\cmidrule{1-5}
\textbf{Architecture} & \textbf{Ensemble} & \textbf{Residual Connections} & \textbf{SN} & \textbf{GMM} &&&&&\\
\midrule
\multirow{10}{*}{Wide-ResNet-28-10} & \multirow{8}{*}{\xmark} & \multirow{8}{*}{\cmark} & \multirow{4}{*}{\xmark} & \multirow{2}{*}{\xmark} & \multirow{2}{*}{Softmax Entropy} & Softmax Entropy & \multirow{2}{*}{$\mathbf{95.98\pm0.02}$}&\multirow{2}{*}{$0.85\pm0.02$}&$94.44\pm0.43$&$89.39\pm0.06$&$88.42\pm0.05$ \\
   &&                         &                         &                         &                                  & Softmax Density &&&$94.56\pm0.51$&$88.89\pm0.07$&$88.11\pm0.06$ \\
                                                                                 \cmidrule{5-12}
   &&                         &                         &   \cmark                & Softmax Entropy & GMM Density   & $95.98\pm0.02$&$0.85\pm0.02$&$96.08\pm0.25$&$90.94\pm0.03$&$90.62\pm0.05$ \\
                                                       \cmidrule{4-12}
   &&                         & \multirow{4}{*}{\cmark} & \multirow{2}{*}{\xmark}  & \multirow{2}{*}{Softmax Entropy} & Softmax Entropy & \multirow{2}{*}{$95.97\pm0.03$}&\multirow{2}{*}{$0.85\pm0.04$}&$94.05\pm0.26$&$90.02\pm0.07$&$89.07\pm0.06$ \\
   &&                         &                         &                         &                                   & Softmax Density &&&$94.31\pm0.33$&$89.78\pm0.08$&$88.96\pm0.07$ \\
                                                                                 \cmidrule{5-12}
   &&                         &                         &      \cmark                   & \textbf{Softmax Entropy} &   \textbf{GMM Density}                               & $95.97\pm0.03$&$\mathbf{0.85\pm0.04}$&$\mathbf{97.86\pm0.19}$&$\mathbf{91.34\pm0.04}$&$\mathbf{91.07\pm0.05}$ \\
   \cmidrule{2-12}
   & \multirow{2}{*}{\cmark} & \multirow{2}{*}{\cmark} & \multirow{2}{*}{\xmark} & \multirow{2}{*}{\xmark} & \multirow{2}{*}{Predictive Entropy} & Predictive Entropy & \multirow{2}{*}{$\mathbf{96.59\pm0.02}$} & \multirow{2}{*}{$\mathbf{0.76\pm0.03}$} & $97.73\pm0.31$ & $\mathbf{92.13\pm0.02}$ & $90.06\pm0.03$\\
   & & & & & & Mutual Information &&& $97.18\pm0.19$ & $91.33\pm0.03$ & $90.90\pm0.03$\\
\midrule
\multirow{10}{*}{VGG-16} & \multirow{8}{*}{\xmark} & \multirow{8}{*}{\cmark} & \multirow{4}{*}{\xmark} & \multirow{2}{*}{\xmark} & \multirow{2}{*}{Softmax Entropy} & Softmax Entropy & \multirow{2}{*}{$93.63\pm0.04$}&\multirow{2}{*}{$1.64\pm0.03$}&$85.76\pm0.84$&$82.48\pm0.14$&$83.07\pm0.12$ \\
&&                         &                         &                         &                                  & Softmax Density &&&$84.24\pm1.04$&$81.91\pm0.17$&$82.82\pm0.14$ \\
                                                                             \cmidrule{5-12}
&&                         &                         &   \cmark                & Softmax Entropy & GMM Density   & $93.63\pm0.04$&$1.64\pm0.03$&$89.25\pm0.36$&$86.55\pm0.10$&$86.78\pm0.09$ \\
                                                   \cmidrule{4-12}
&&                         & \multirow{4}{*}{\cmark} & \multirow{2}{*}{\xmark}  & \multirow{2}{*}{Softmax Entropy} & Softmax Entropy & \multirow{2}{*}{$93.62\pm0.04$}&\multirow{2}{*}{$1.78\pm0.04$}&$87.54\pm0.41$&$82.71\pm0.09$&$83.33\pm0.08$ \\
&&                         &                         &                         &                                   & Softmax Density &&&$86.28\pm0.51$&$82.15\pm0.11$&$83.07\pm0.10$ \\
                                                                             \cmidrule{5-12}
&&                         &                         &      \cmark                   & Softmax Entropy &   GMM Density                               & $93.62\pm0.04$&$1.78\pm0.04$&$89.62\pm0.37$&$86.37\pm0.14$&$86.63\pm0.11$ \\
\cmidrule{2-12}
& \multirow{2}{*}{\cmark} & \multirow{2}{*}{\cmark} & \multirow{2}{*}{\xmark} & \multirow{2}{*}{\xmark} & \multirow{2}{*}{Predictive Entropy} & Predictive Entropy & \multirow{2}{*}{$94.9\pm0.05$} & \multirow{2}{*}{$2.03\pm0.03$} & $92.80\pm0.18$ & $89.01\pm0.08$ & $87.66\pm0.08$\\
& & & & & & Mutual Information &&& $91\pm0.22$ & $88.43\pm0.08$ & $88.74\pm0.05$\\
\bottomrule
\end{tabular}}
\end{table*}
\begin{table*}[!t]
\centering
\caption{\emph{OoD detection performance of different ablations trained on CIFAR-100 using Wide-ResNet-28-10 and VGG-16 architectures with SVHN and Tiny-ImageNet as the OoD dataset averaged over 25 runs.} Note: SN stands for Spectral Normalisation. We highlight the best deterministic and best method overall in bold for each metric.}
\label{table:ood_3}
\scriptsize
\resizebox{\linewidth}{!}{%
\begin{tabular}{ccccccccccc}
\toprule
\multicolumn{5}{c}{\textbf{Ablations}} & \textbf{Aleatoric Uncertainty} &
\textbf{Epistemic Uncertainty} & \textbf{Test Accuracy (\textuparrow)} & \textbf{Test ECE (\textdownarrow)} & \textbf{AUROC SVHN (\textuparrow)} & \textbf{AUROC Tiny-ImageNet (\textuparrow)} \\
\cmidrule{1-5}
\textbf{Architecture} & \textbf{Ensemble} & \textbf{Residual Connections} & \textbf{SN} & \textbf{GMM} &&&&\\
\midrule
\multirow{10}{*}{Wide-ResNet-28-10} & \multirow{8}{*}{\xmark} & \multirow{8}{*}{\cmark} & \multirow{4}{*}{\xmark} & \multirow{2}{*}{\xmark} & \multirow{2}{*}{Softmax Entropy} & Softmax Entropy & \multirow{2}{*}{$80.26\pm0.06$}&\multirow{2}{*}{$4.62\pm0.06$}&$77.42\pm0.57$&$81.53\pm0.05$ \\
       &&                         &                         &                         &                                  & Softmax Density &&&$78.00\pm0.63$&$81.33\pm0.06$ \\
                                                                                     \cmidrule{5-11}
       &&                         &                         &   \cmark                & Softmax Entropy & GMM Density   & $80.26\pm0.06$&$4.62\pm0.06$&$87.54\pm0.61$&$78.13\pm0.08$ \\
                                                           \cmidrule{4-11}
       &&                         & \multirow{4}{*}{\cmark} & \multirow{2}{*}{\xmark}  & \multirow{2}{*}{Softmax Entropy} & Softmax Entropy & \multirow{2}{*}{$80.98\pm0.06$}&\multirow{2}{*}{$4.10\pm0.08$}&$85.37\pm0.36$&$82.57\pm0.03$ \\
       &&                         &                         &                         &                                   & Softmax Density &&&$86.41\pm0.38$&$82.49\pm0.04$ \\
                                                                                     \cmidrule{5-11}
       &&                         &                         &      \cmark                   & \textbf{Softmax Entropy} &   \textbf{GMM Density}                               & $\mathbf{80.98\pm0.06}$&$\mathbf{4.10\pm0.08}$&$\mathbf{87.53\pm0.62}$&$\mathbf{83.13\pm0.06}$ \\
       \cmidrule{2-11}
       & \multirow{2}{*}{\cmark} & \multirow{2}{*}{\cmark} & \multirow{2}{*}{\xmark} & \multirow{2}{*}{\xmark} & \multirow{2}{*}{Predictive Entropy} & Predictive Entropy & \multirow{2}{*}{$\mathbf{82.79\pm0.10}$} & \multirow{2}{*}{$\mathbf{3.32\pm0.09}$} & $79.54\pm0.91$&$82.95\pm0.09$ \\
       & & & & & & Mutual Information &&& $77.00\pm1.54$ & $82.82\pm0.04$ \\
\midrule
\multirow{10}{*}{VGG-16} & \multirow{8}{*}{\xmark} & \multirow{8}{*}{\cmark} & \multirow{4}{*}{\xmark} & \multirow{2}{*}{\xmark} & \multirow{2}{*}{Softmax Entropy} & Softmax Entropy & \multirow{2}{*}{$73.48\pm0.05$}&\multirow{2}{*}{$4.46\pm0.05$}&$76.73\pm0.72$&$76.43\pm0.05$ \\
       &&                         &                         &                         &                                  & Softmax Density &&&$77.70\pm0.86$&$74.68\pm0.07$ \\
                                                                                     \cmidrule{5-11}
       &&                         &                         &   \cmark                & Softmax Entropy & GMM Density   & $73.48\pm0.05$&$4.46\pm0.05$&$75.65\pm0.95$&$74.32\pm1.73$ \\
                                                           \cmidrule{4-11}
       &&                         & \multirow{4}{*}{\cmark} & \multirow{2}{*}{\xmark}  & \multirow{2}{*}{Softmax Entropy} & Softmax Entropy & \multirow{2}{*}{$73.58\pm0.06$}&\multirow{2}{*}{$4.32\pm0.06$}&$77.21\pm0.77$&$76.59\pm0.06$ \\
       &&                         &                         &                         &                                   & Softmax Density &&&$77.76\pm0.90$&$74.86\pm0.08$ \\
                                                                                     \cmidrule{5-11}
       &&                         &                         &      \cmark                   & Softmax Entropy &   GMM Density                               & $73.58\pm0.06$&$4.32\pm0.06$&$75.99\pm1.23$&$74.06\pm1.67$ \\
       \cmidrule{2-11}
       & \multirow{2}{*}{\cmark} & \multirow{2}{*}{\cmark} & \multirow{2}{*}{\xmark} & \multirow{2}{*}{\xmark} & \multirow{2}{*}{Predictive Entropy} & Predictive Entropy & \multirow{2}{*}{$77.84\pm0.11$} & \multirow{2}{*}{$5.32\pm0.10$} & $79.62\pm0.73$&$78.66\pm06$ \\
       & & & & & & Mutual Information &&& $72.07\pm0.48$ & $76.27\pm0.05$ \\
\bottomrule
\end{tabular}}
\end{table*}

\begin{figure}[!t]
    \centering
    \begin{subfigure}{0.18\linewidth}
        \centering
        \includegraphics[width=\linewidth]{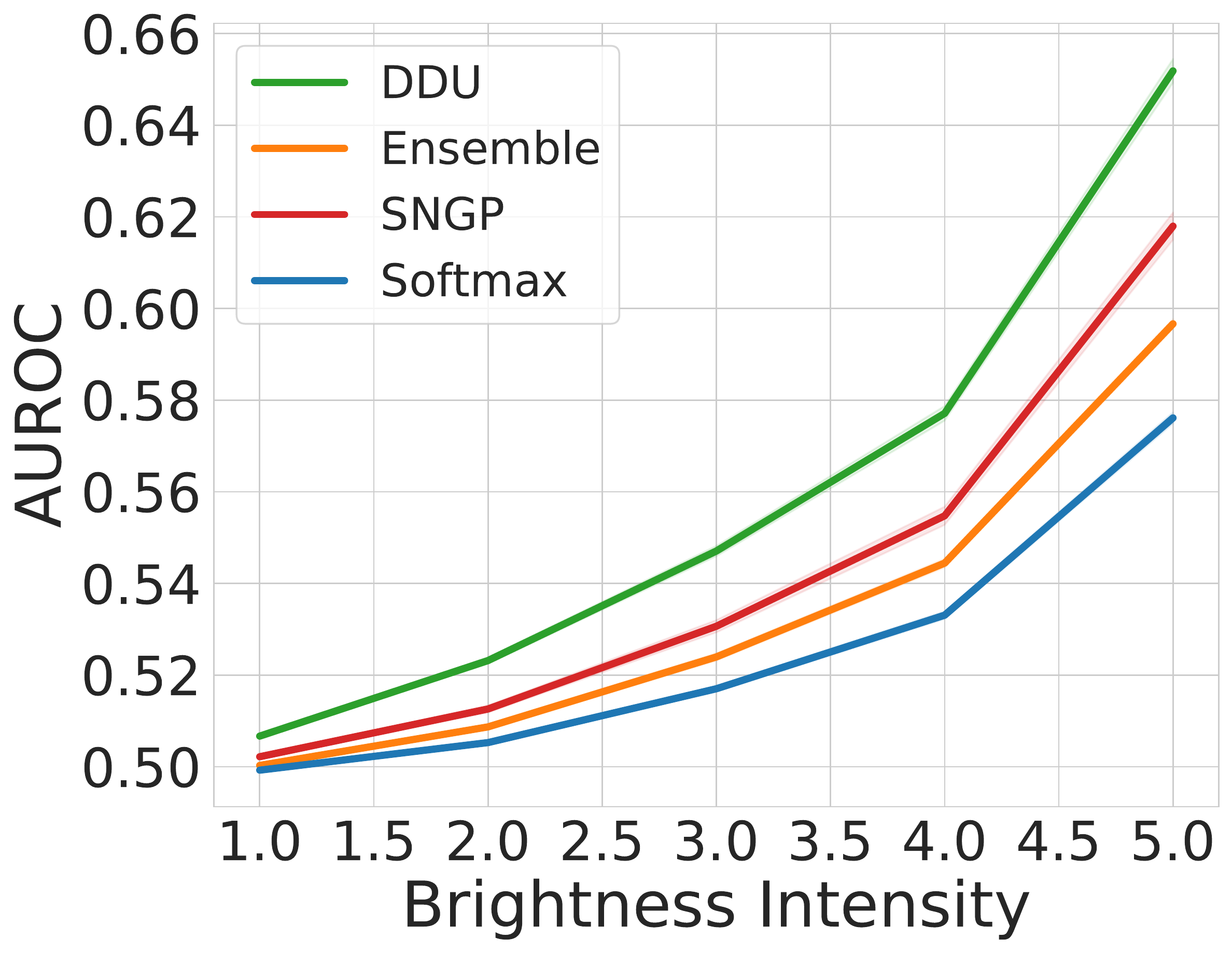}
    \end{subfigure}
    \begin{subfigure}{0.18\linewidth}
        \centering
        \includegraphics[width=\linewidth]{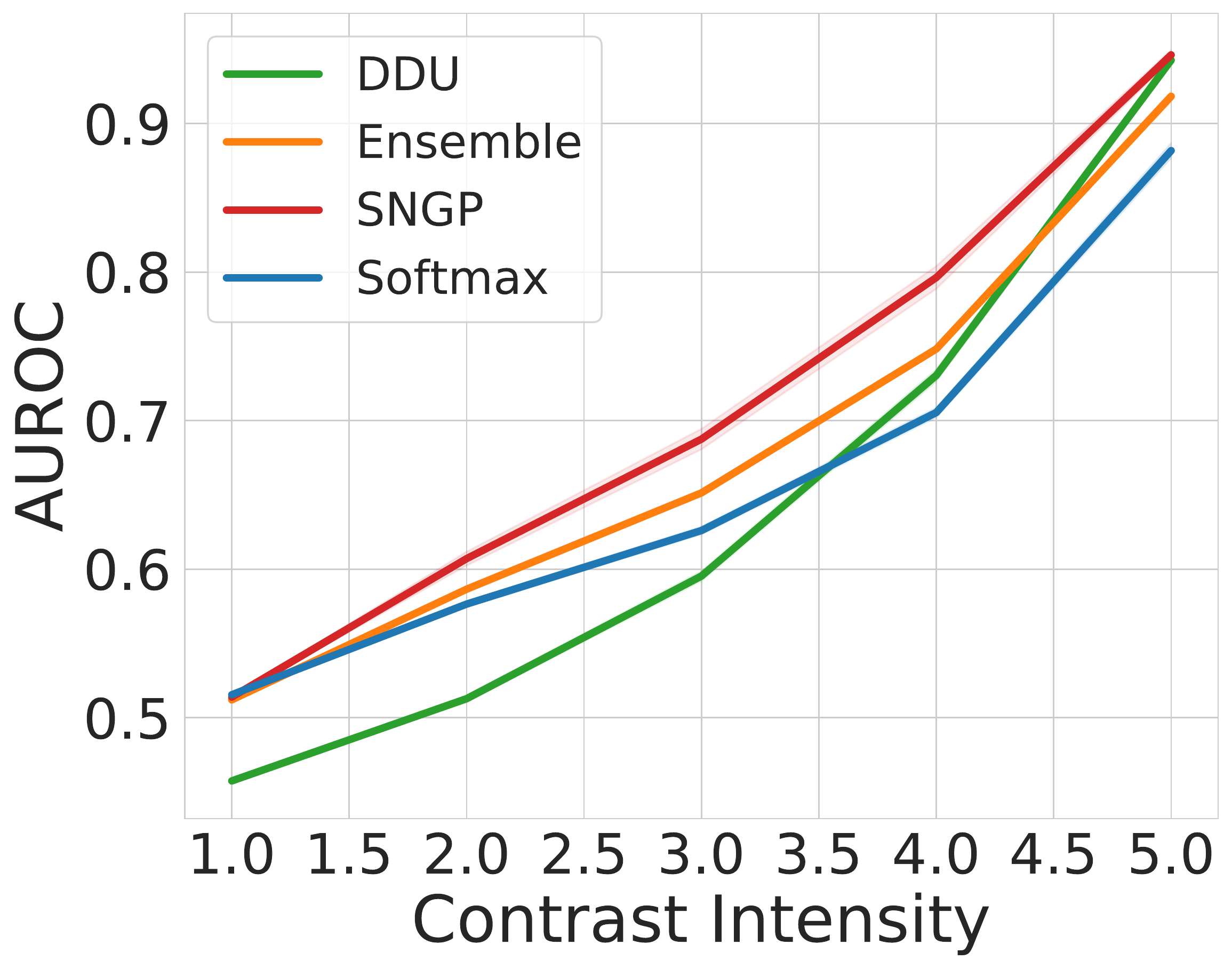}
    \end{subfigure} 
    \begin{subfigure}{0.18\linewidth}
        \centering
        \includegraphics[width=\linewidth]{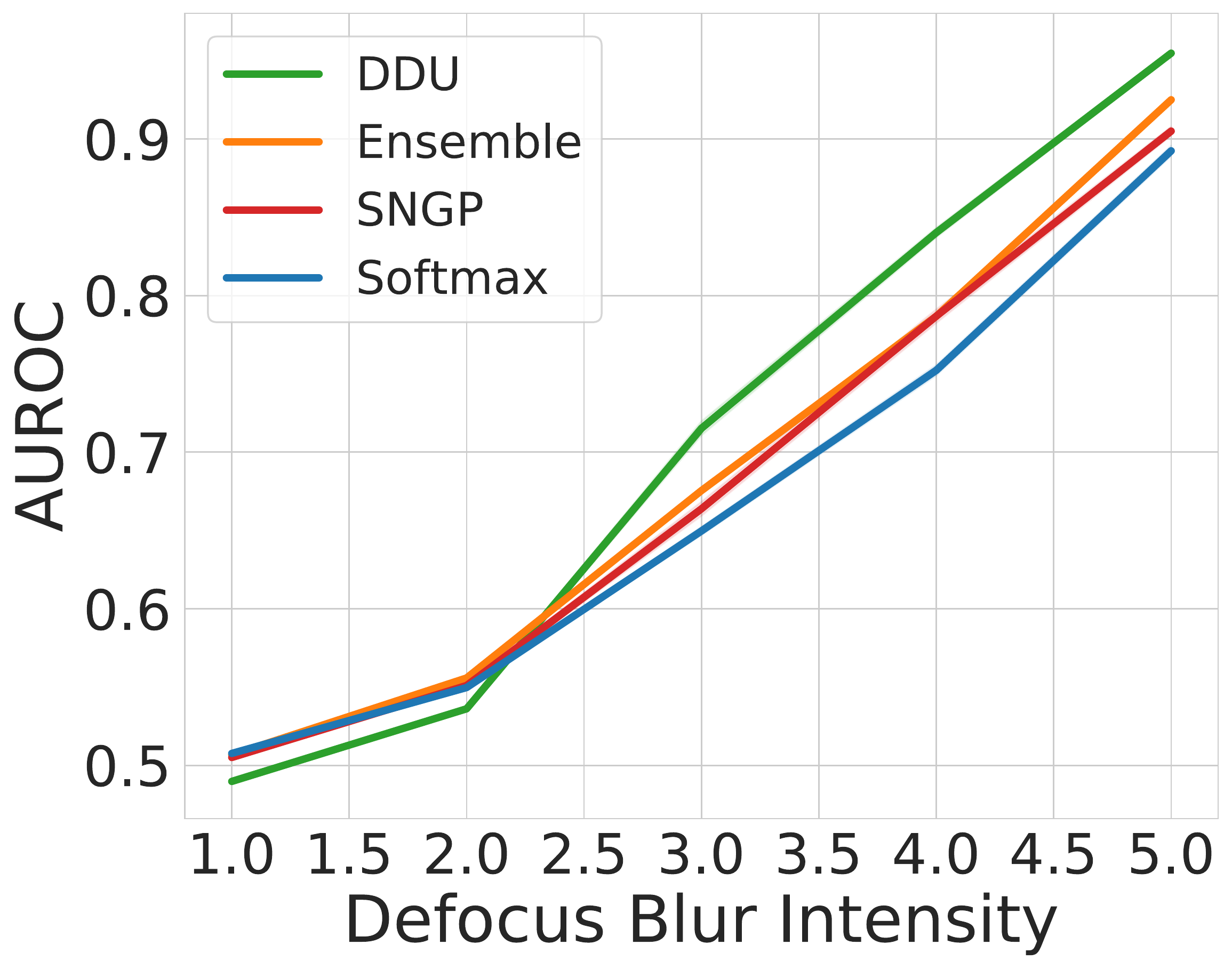}
    \end{subfigure} 
    \begin{subfigure}{0.18\linewidth}
        \centering
        \includegraphics[width=\linewidth]{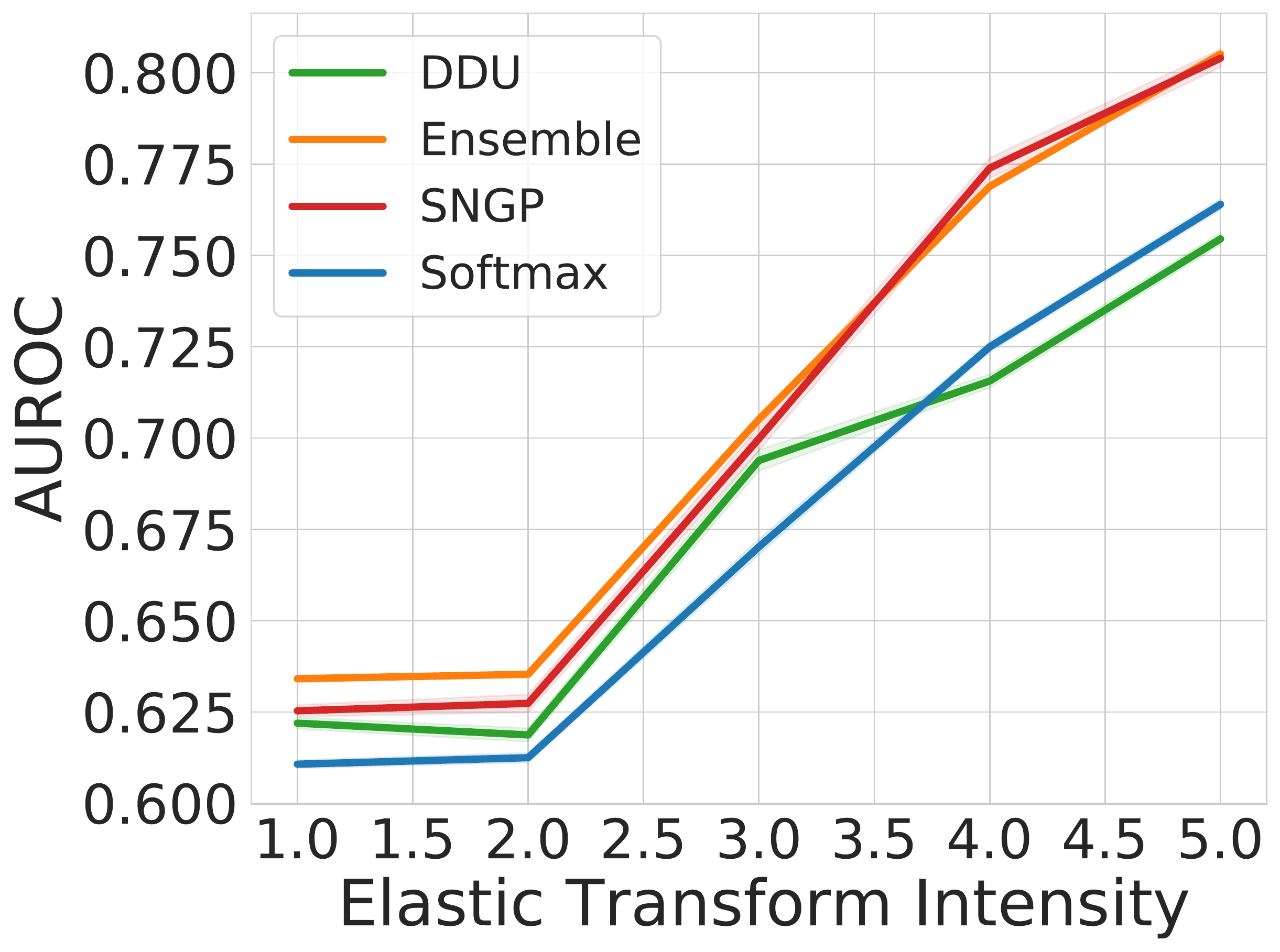}
    \end{subfigure}
    \begin{subfigure}{0.18\linewidth}
        \centering
        \includegraphics[width=\linewidth]{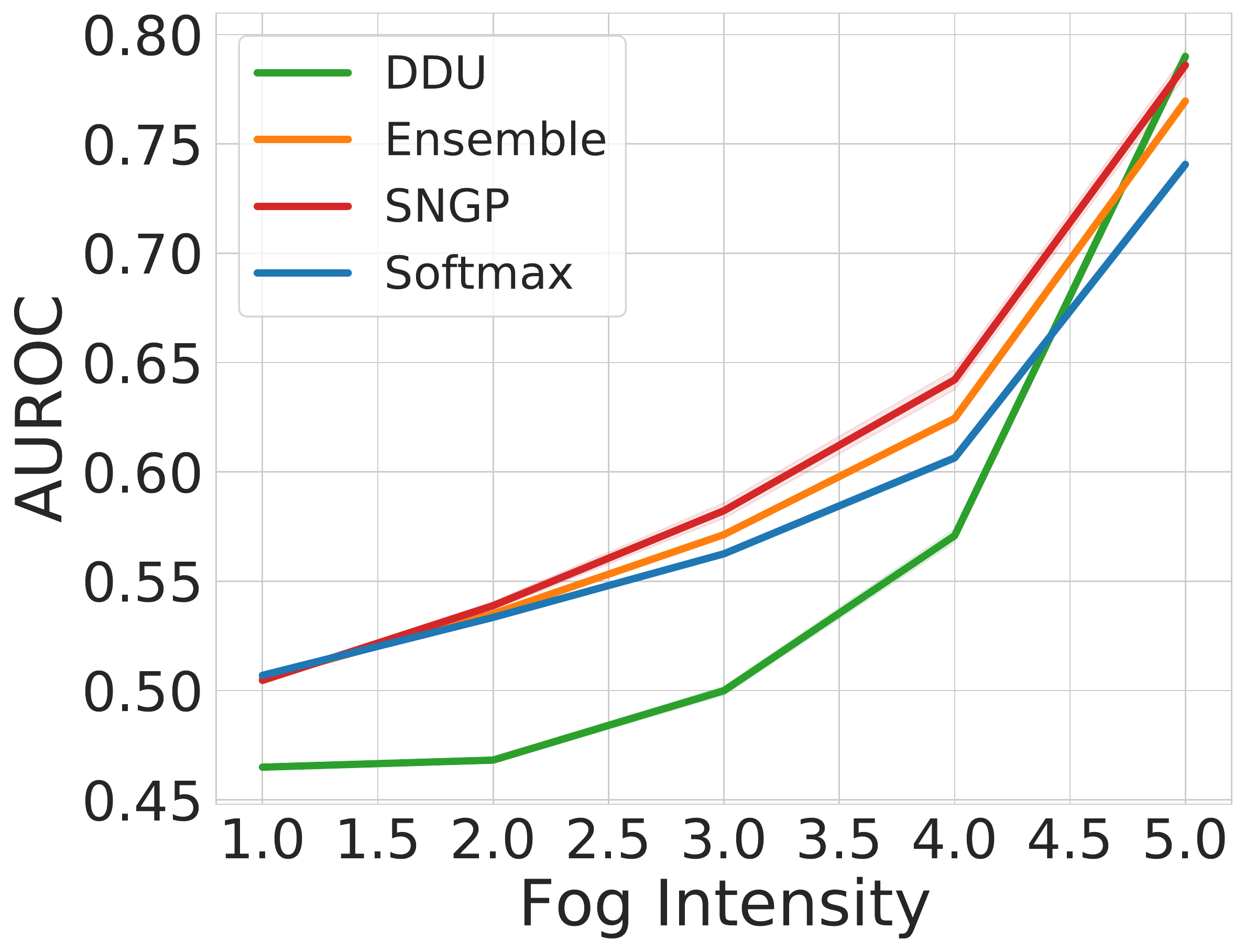}
    \end{subfigure}
    \begin{subfigure}{0.18\linewidth}
        \centering
        \includegraphics[width=\linewidth]{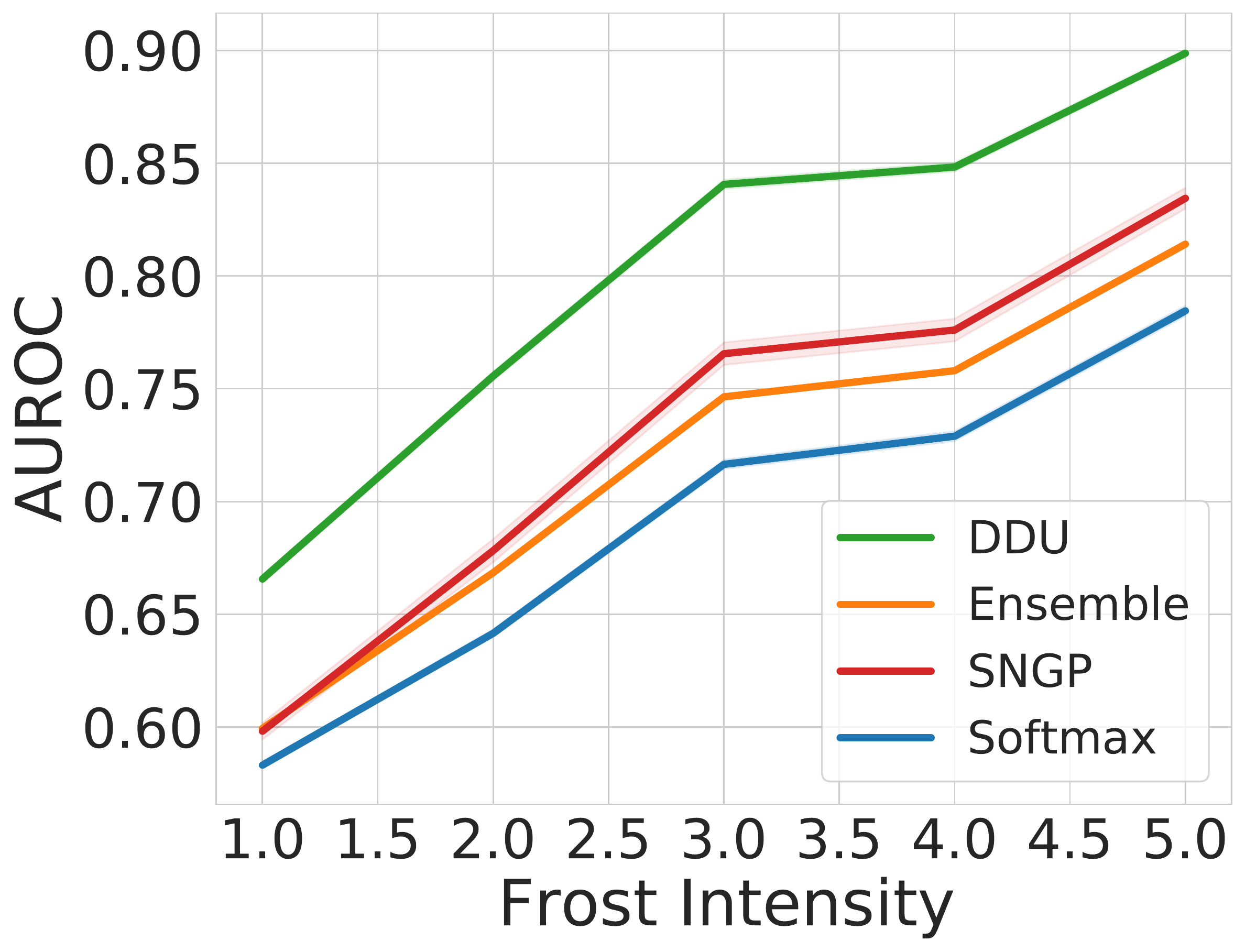}
    \end{subfigure}
    \begin{subfigure}{0.18\linewidth}
        \centering
        \includegraphics[width=\linewidth]{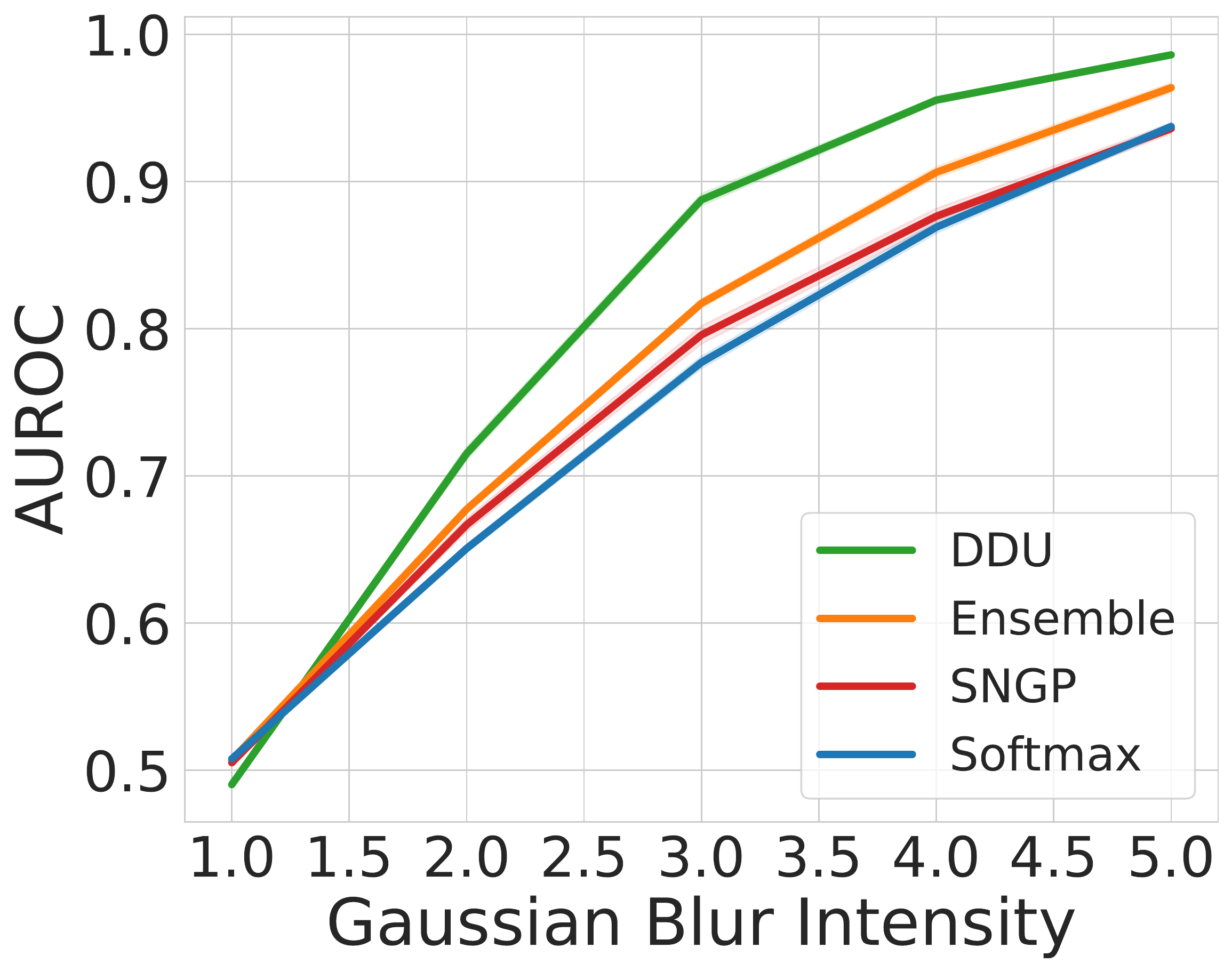}
    \end{subfigure}
    \begin{subfigure}{0.18\linewidth}
        \centering
        \includegraphics[width=\linewidth]{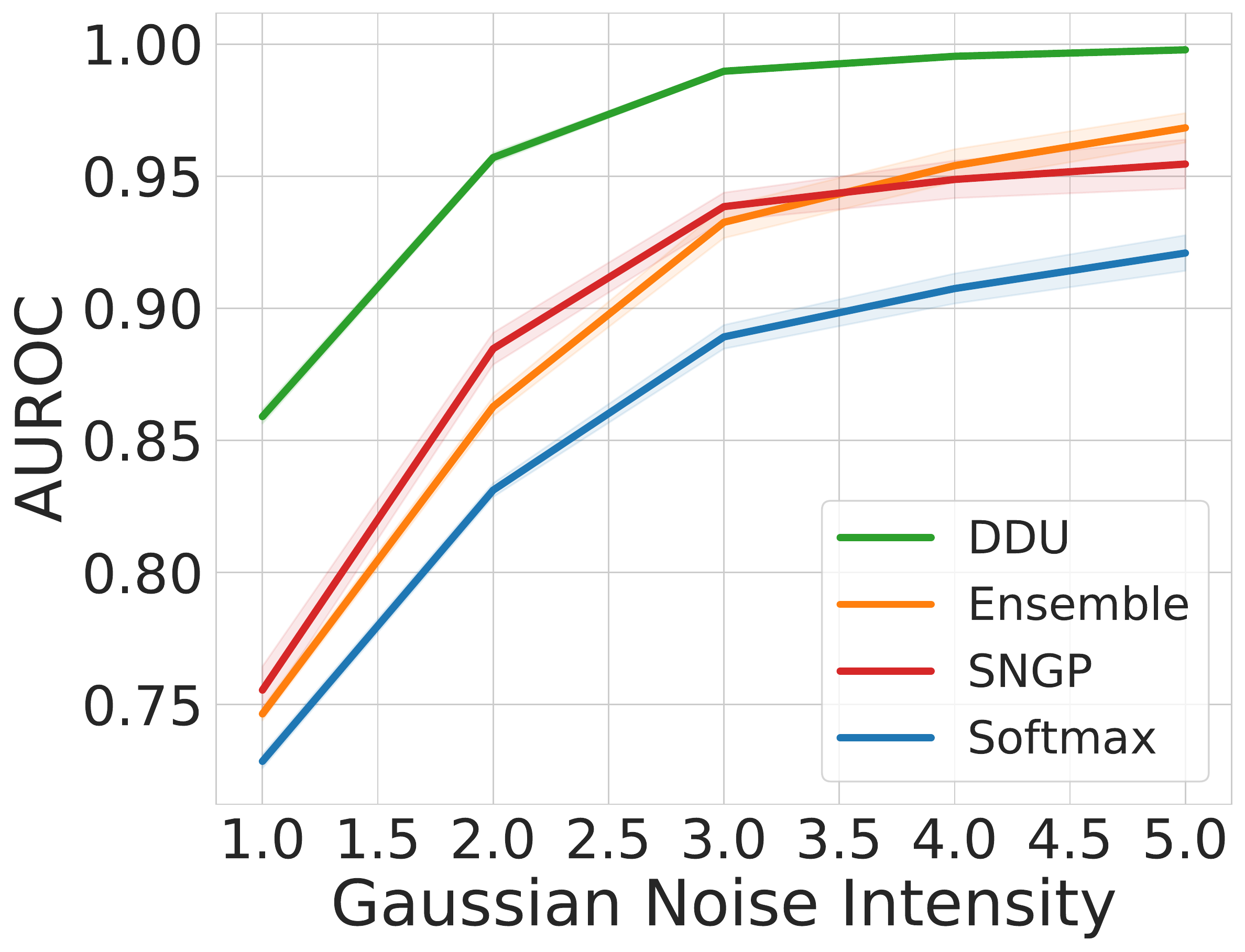}
    \end{subfigure}
    \begin{subfigure}{0.18\linewidth}
        \centering
        \includegraphics[width=\linewidth]{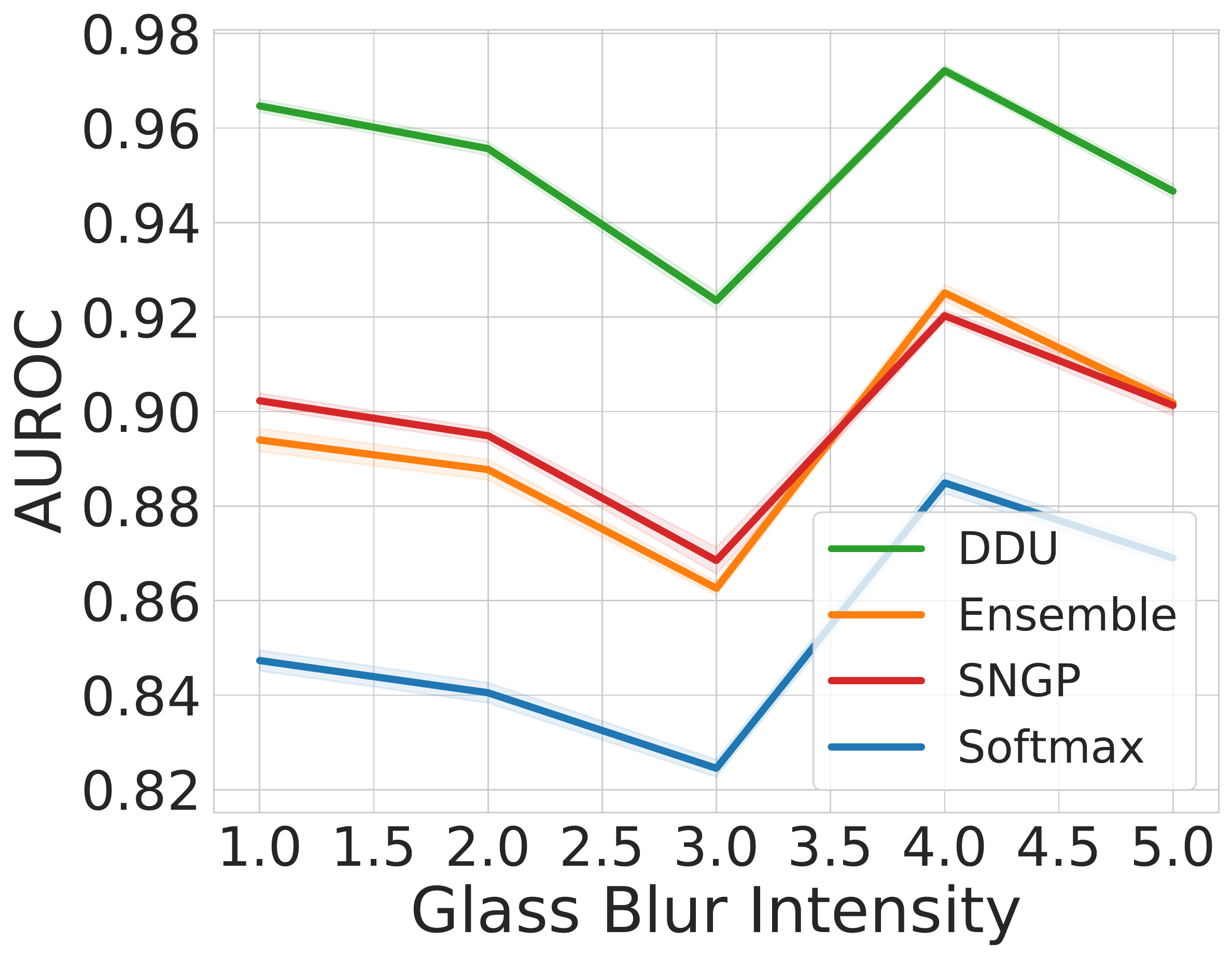}
    \end{subfigure}
    \begin{subfigure}{0.18\linewidth}
        \centering
        \includegraphics[width=\linewidth]{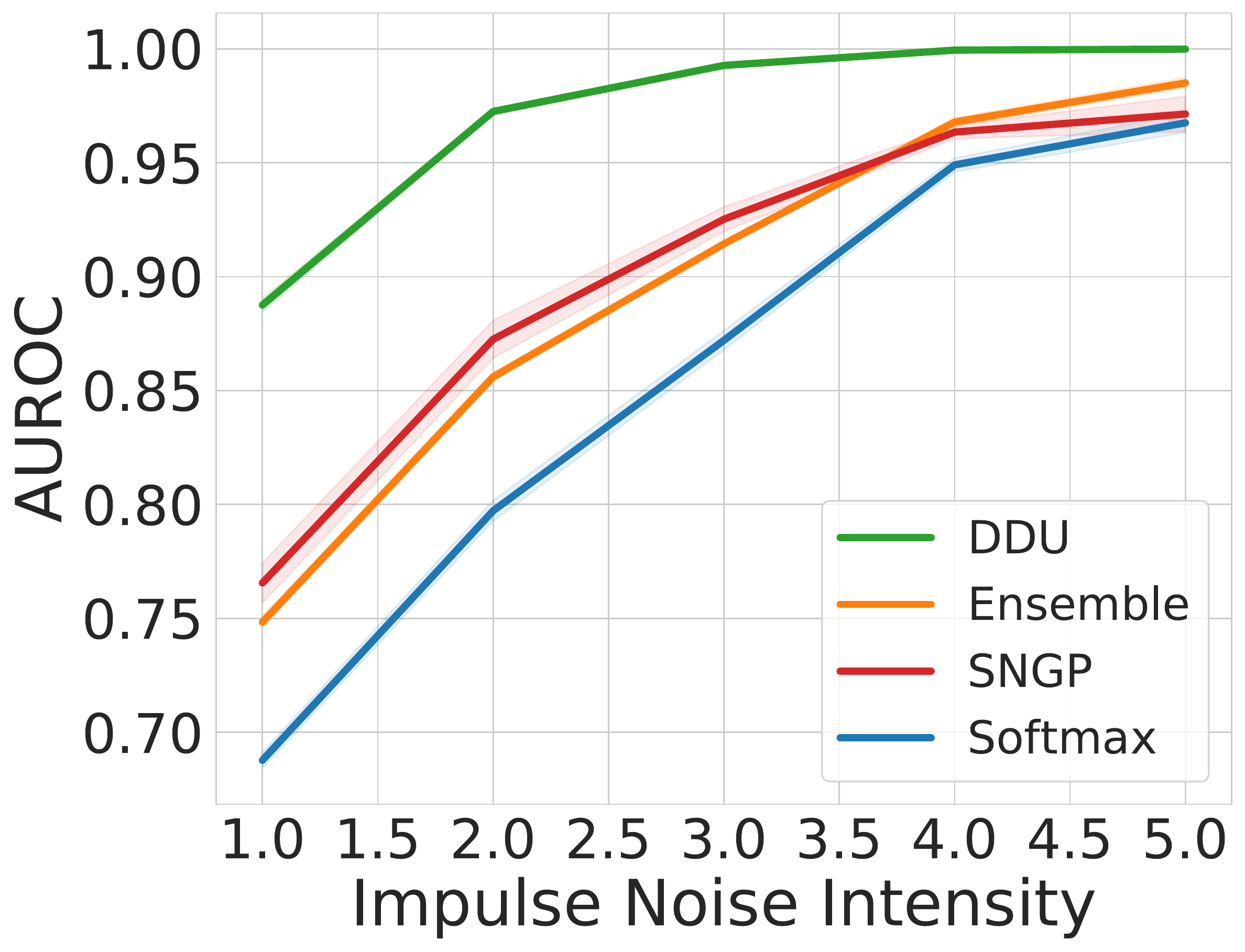}
    \end{subfigure}
    \begin{subfigure}{0.18\linewidth}
        \centering
        \includegraphics[width=\linewidth]{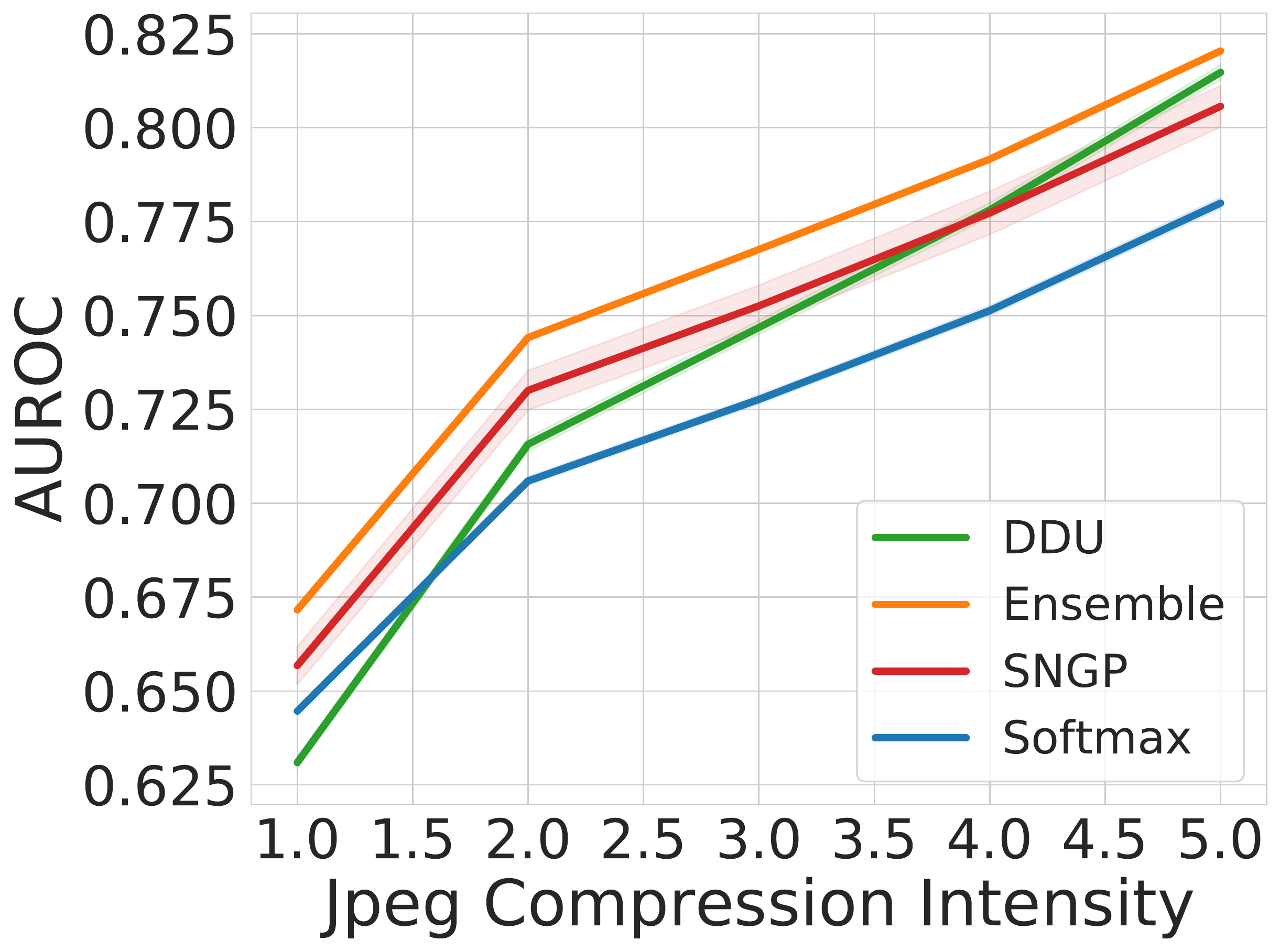}
    \end{subfigure}
    \begin{subfigure}{0.18\linewidth}
        \centering
        \includegraphics[width=\linewidth]{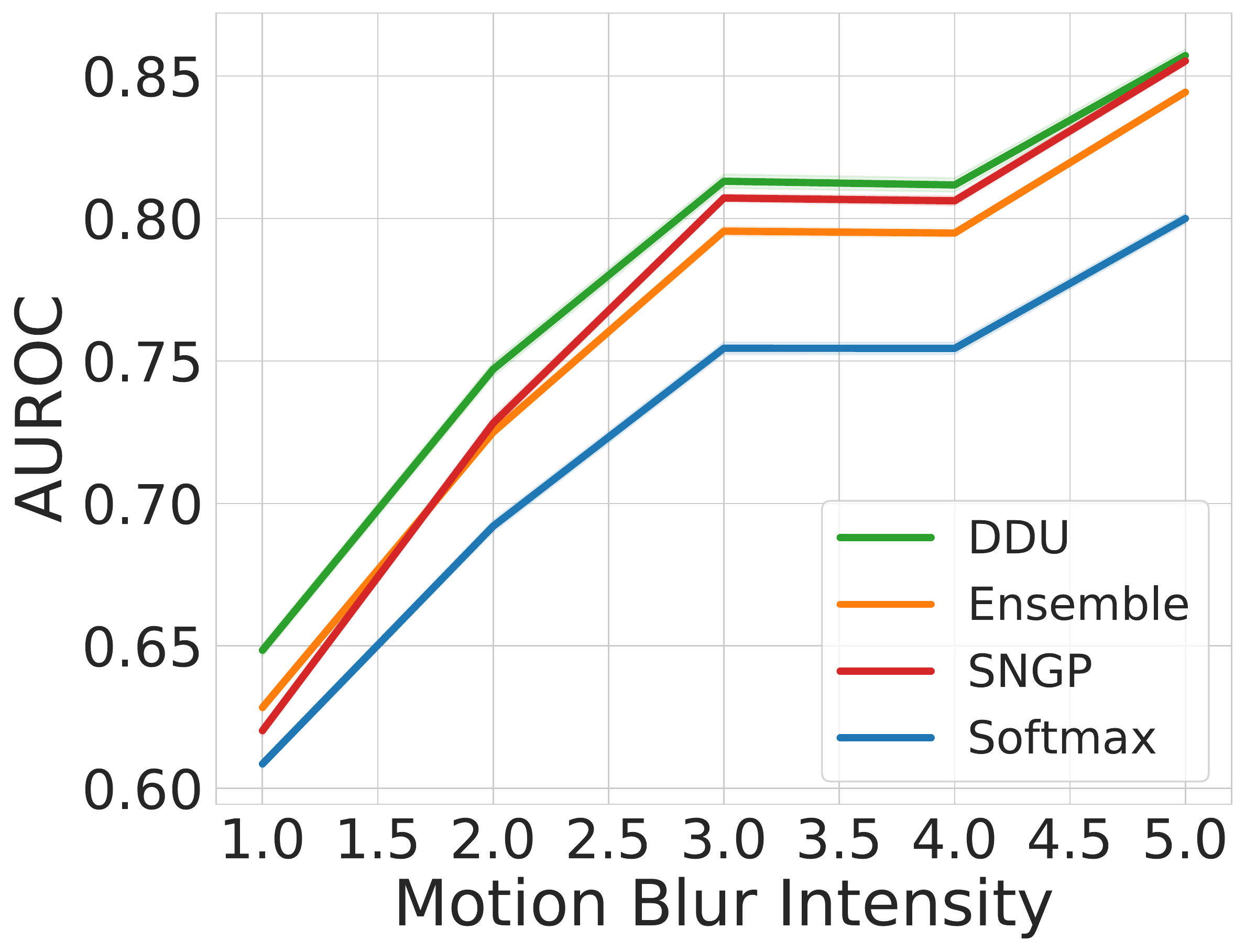}
    \end{subfigure}
    \begin{subfigure}{0.18\linewidth}
        \centering
        \includegraphics[width=\linewidth]{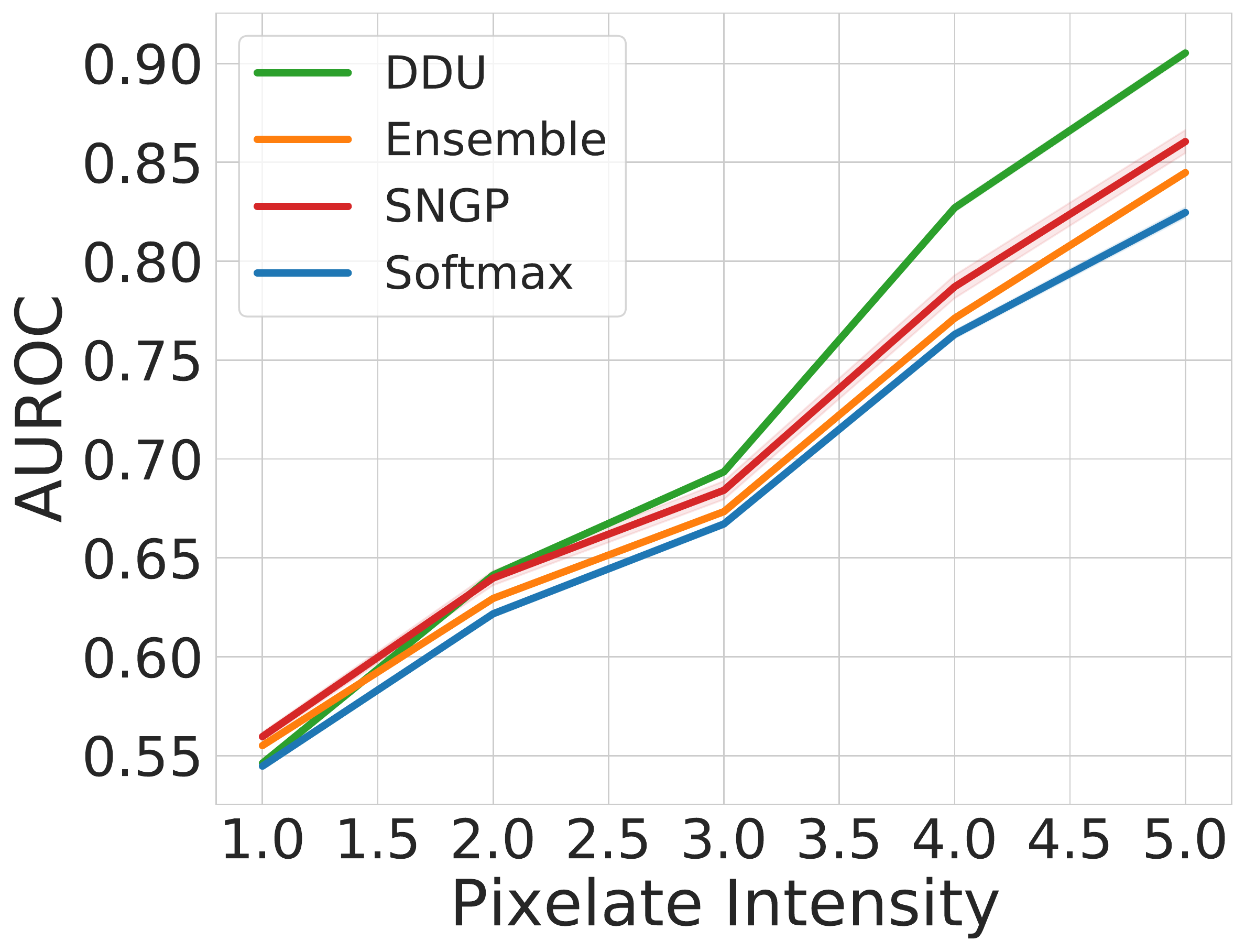}
    \end{subfigure}
    \begin{subfigure}{0.18\linewidth}
        \centering
        \includegraphics[width=\linewidth]{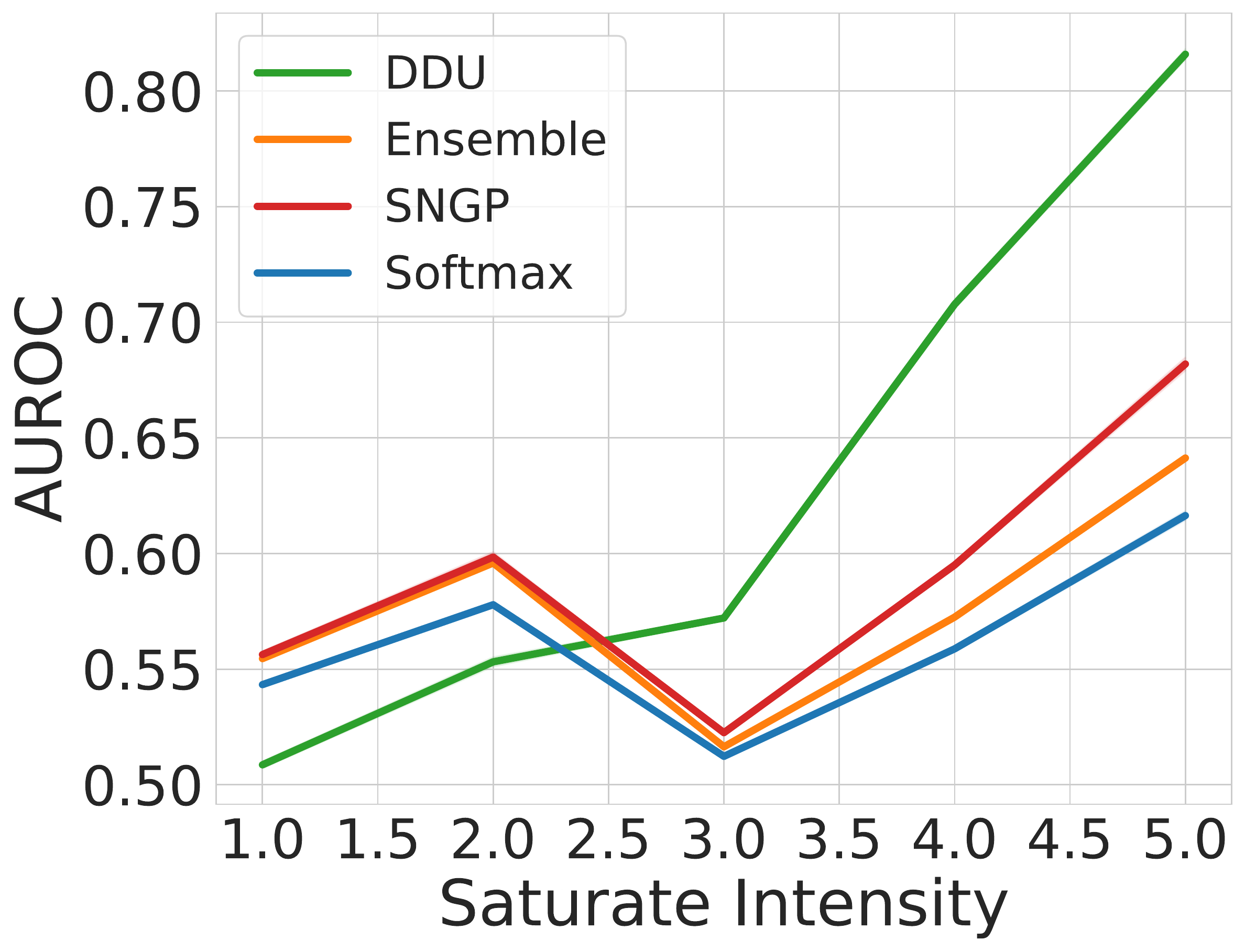}
    \end{subfigure}
    \begin{subfigure}{0.18\linewidth}
        \centering
        \includegraphics[width=\linewidth]{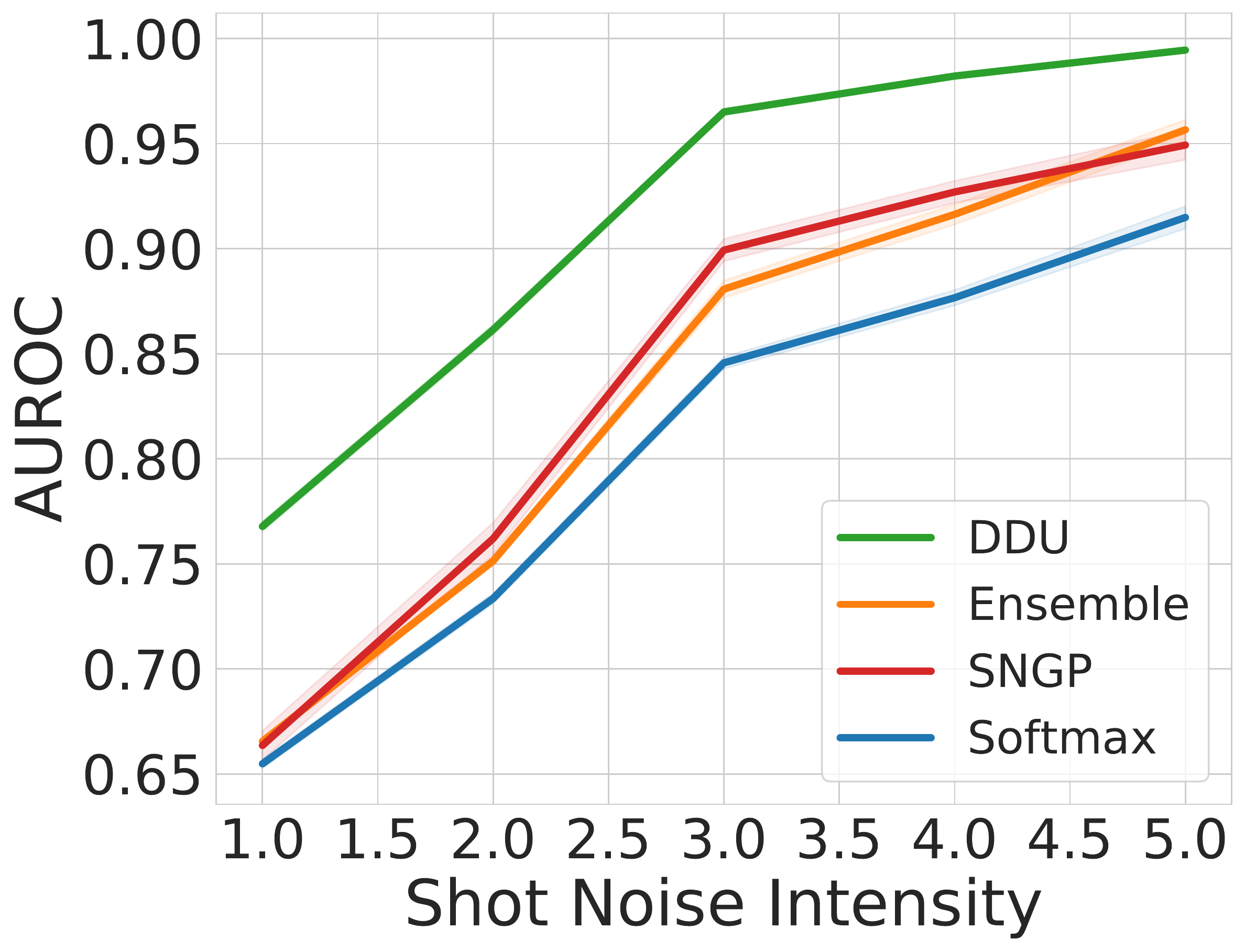}
    \end{subfigure}
    \begin{subfigure}{0.18\linewidth}
        \centering
        \includegraphics[width=\linewidth]{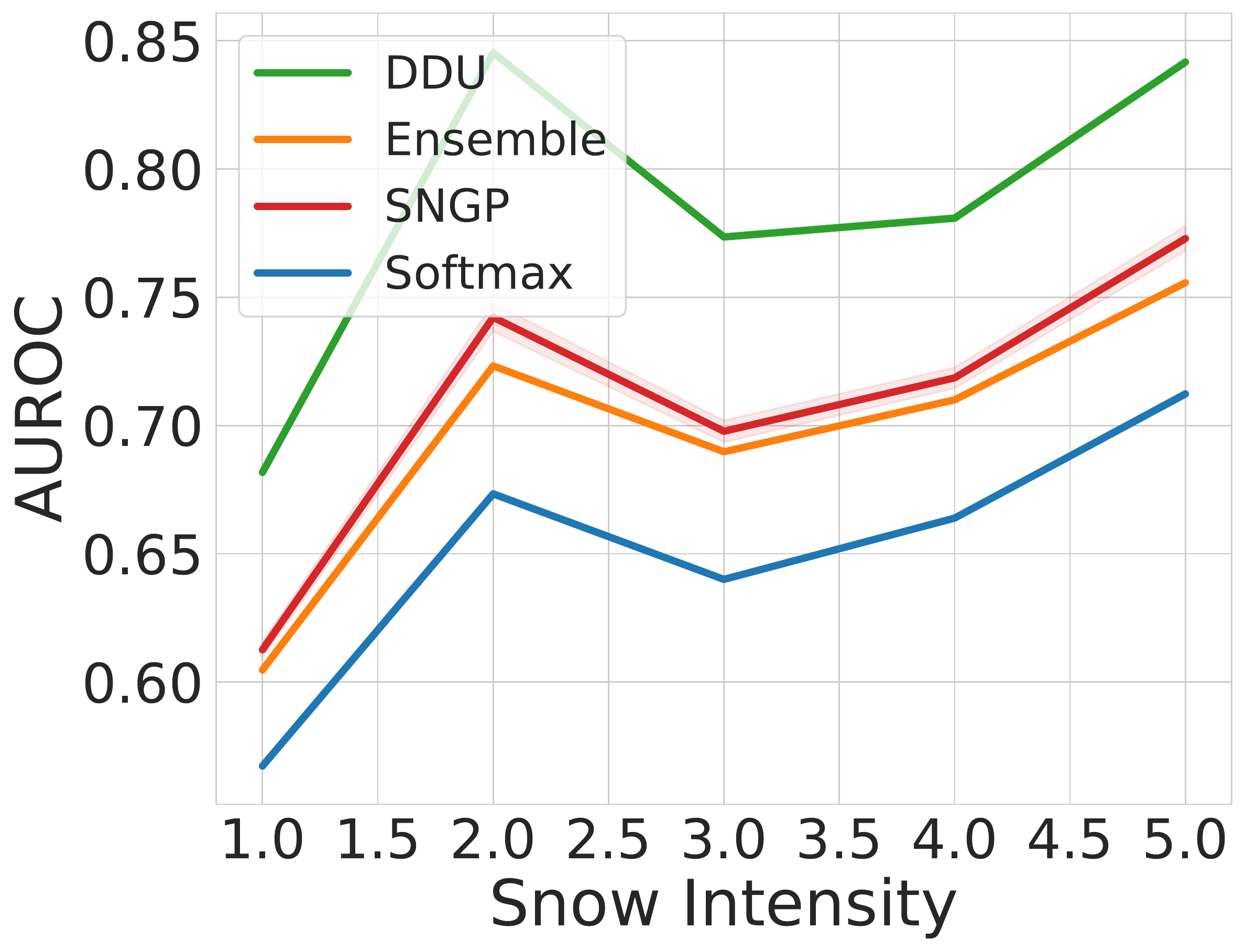}
    \end{subfigure}
    \begin{subfigure}{0.18\linewidth}
        \centering
        \includegraphics[width=\linewidth]{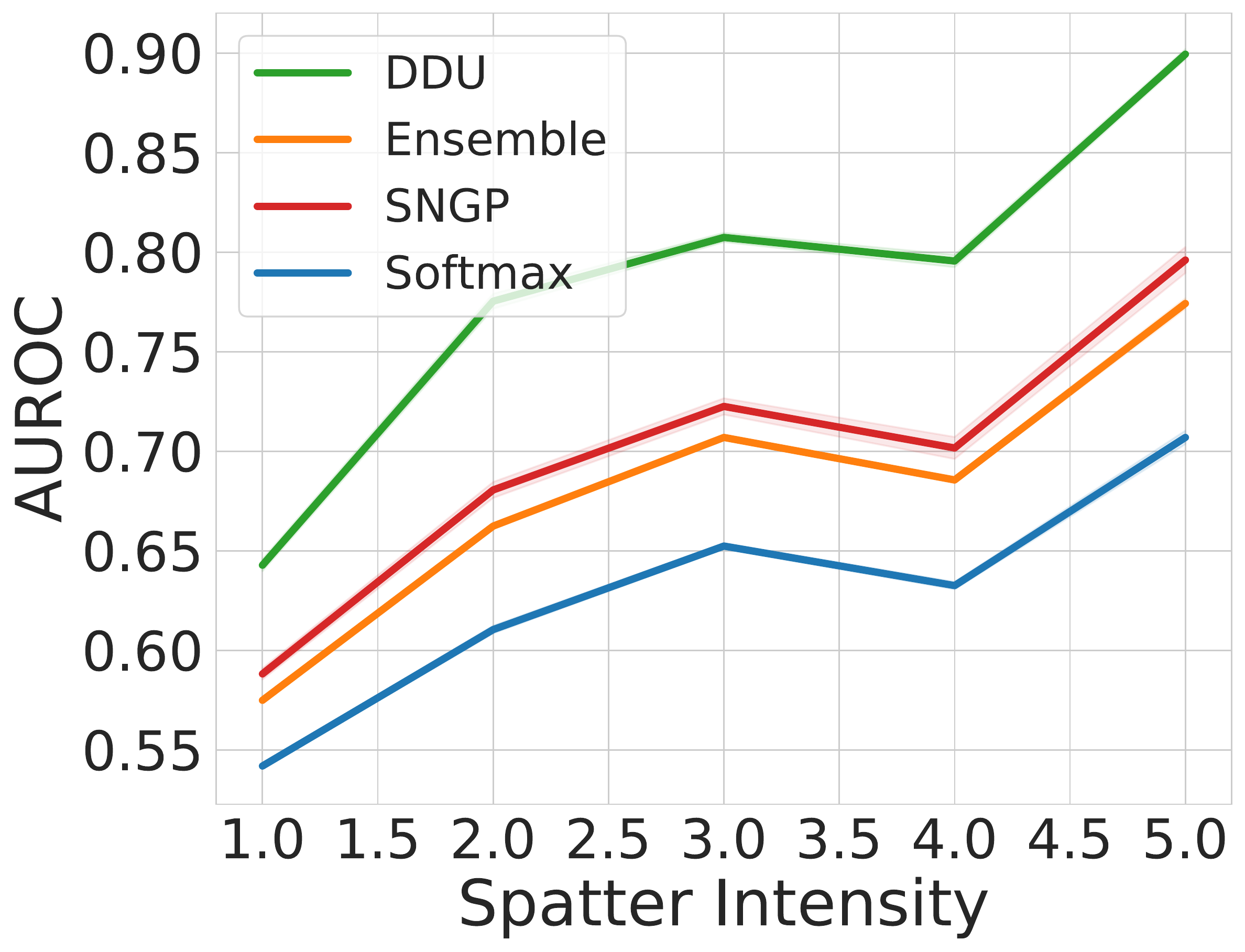}
    \end{subfigure}
    \begin{subfigure}{0.18\linewidth}
        \centering
        \includegraphics[width=\linewidth]{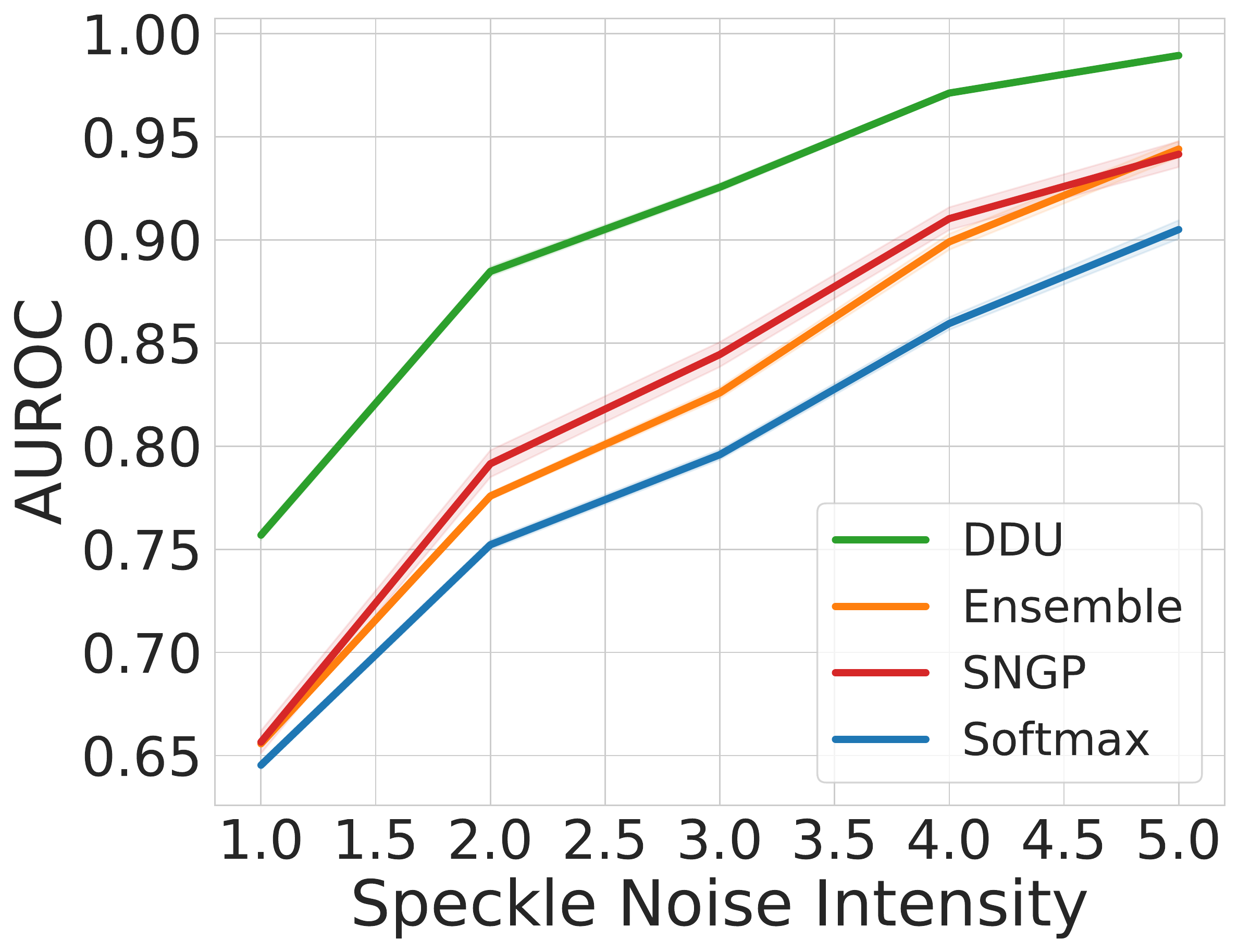}
    \end{subfigure}
    \begin{subfigure}{0.18\linewidth}
        \centering
        \includegraphics[width=\linewidth]{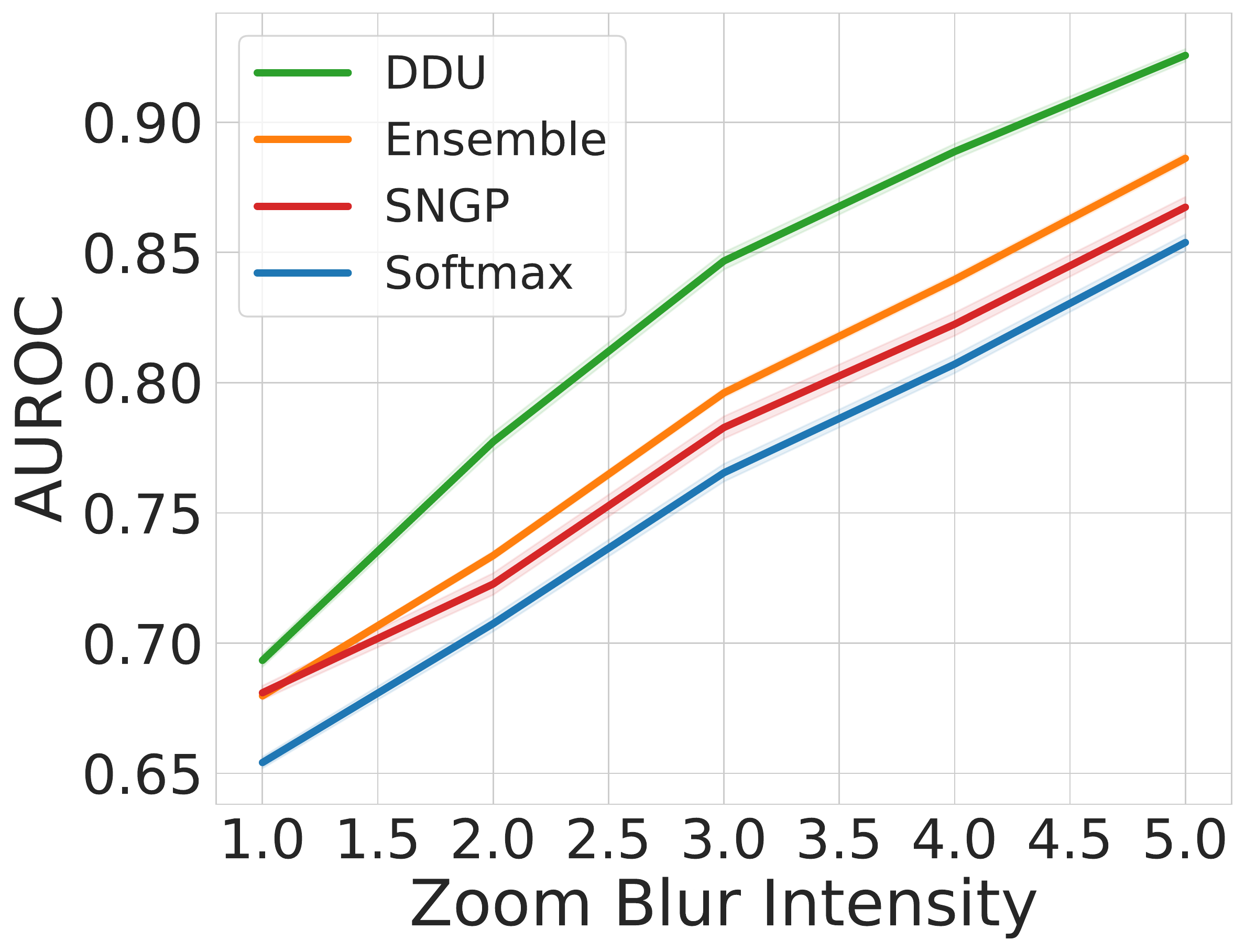}
    \end{subfigure}
    \caption{
    AUROC vs corruption intensity for all corruption types in CIFAR-10-C with Wide-ResNet-28-10 as the architecture and baselines: Softmax Entropy, Ensemble (using Predictive Entropy as uncertainty), SNGP and DDU feature density.
    }
    \label{fig:cifar10_c_results_wide_resnet}
\end{figure}

\begin{figure}[!t]
    \centering
    \begin{subfigure}{0.18\linewidth}
        \centering
        \includegraphics[width=\linewidth]{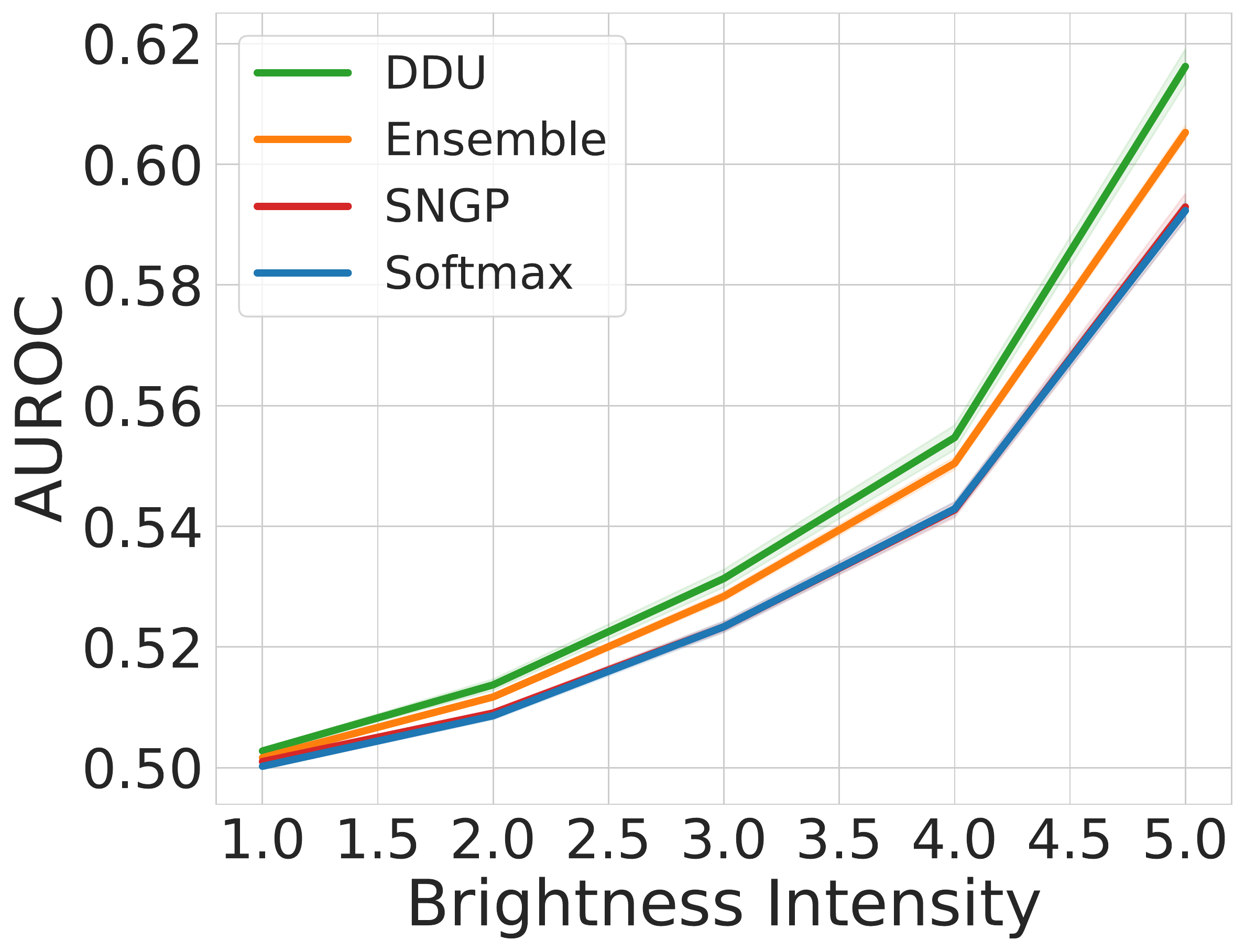}
    \end{subfigure}
    \begin{subfigure}{0.18\linewidth}
        \centering
        \includegraphics[width=\linewidth]{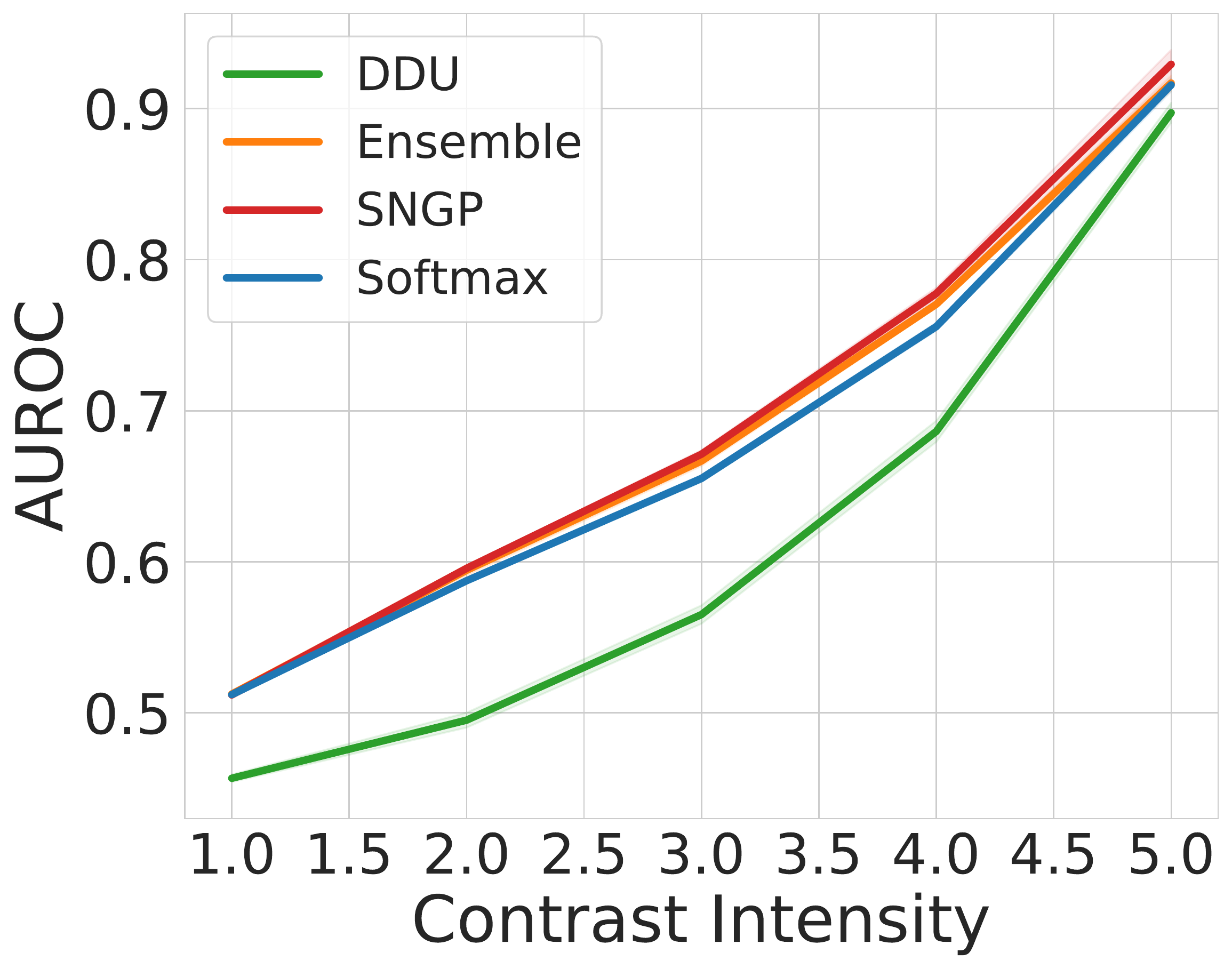}
    \end{subfigure} 
    \begin{subfigure}{0.18\linewidth}
        \centering
        \includegraphics[width=\linewidth]{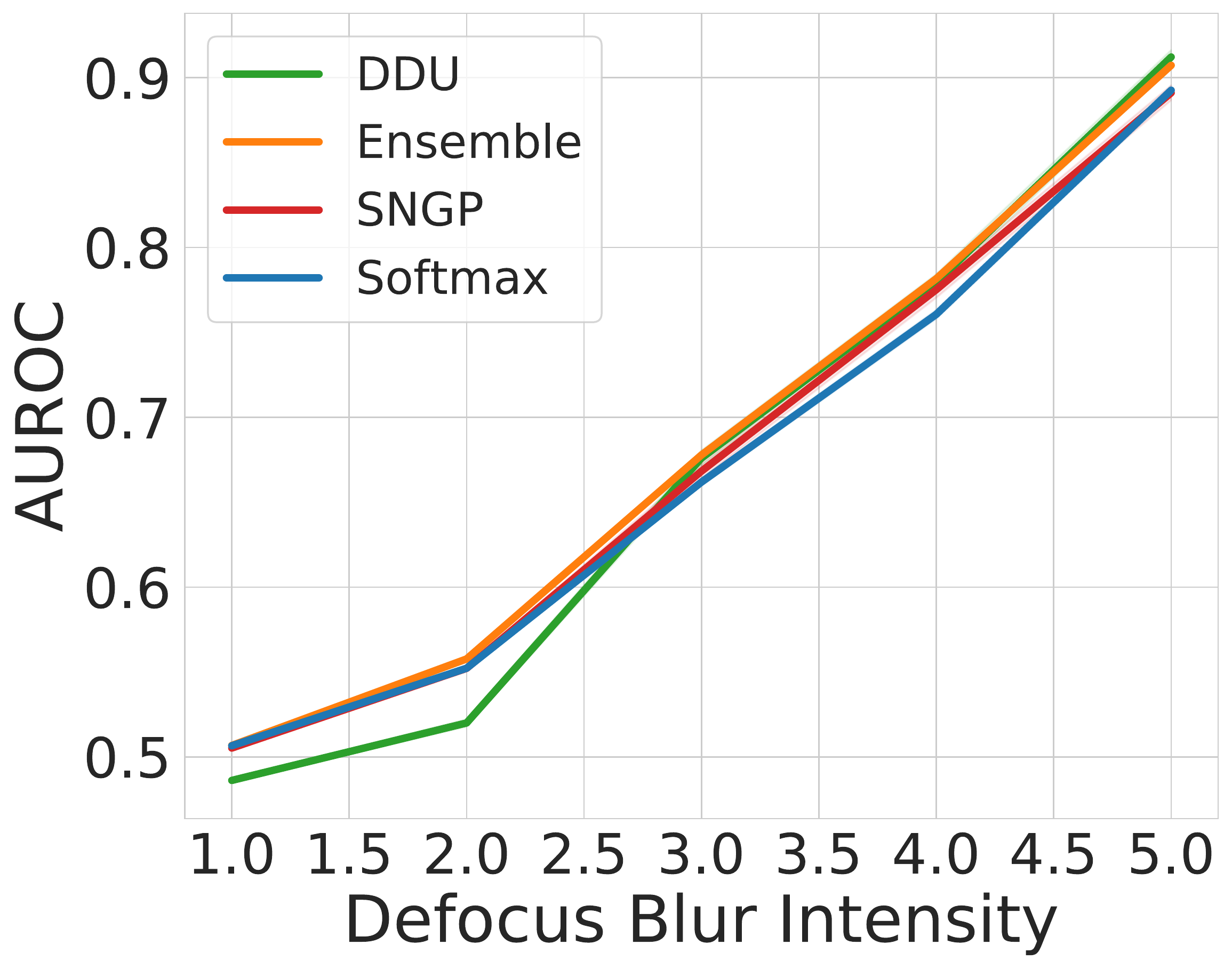}
    \end{subfigure} 
    \begin{subfigure}{0.18\linewidth}
        \centering
        \includegraphics[width=\linewidth]{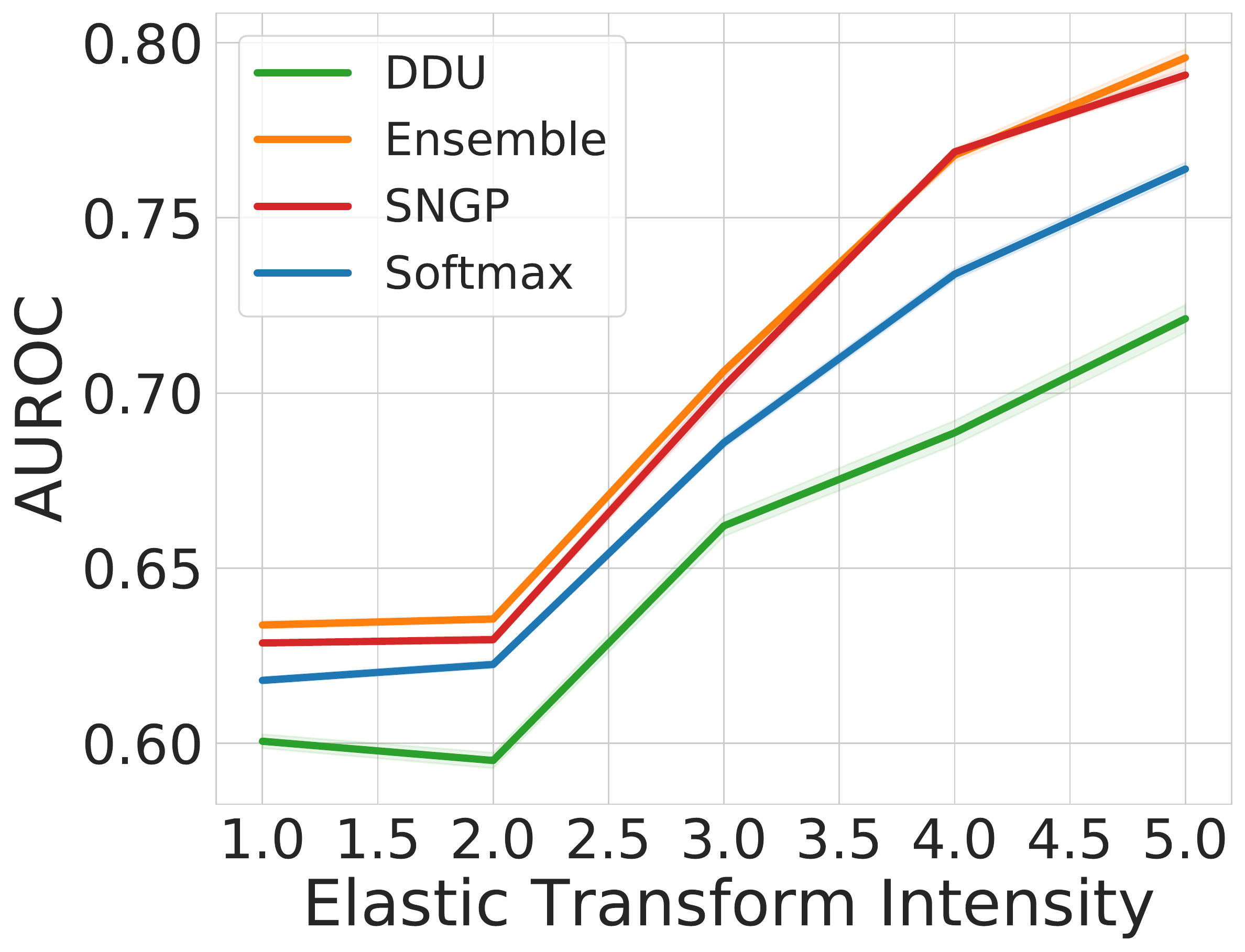}
    \end{subfigure}
    \begin{subfigure}{0.18\linewidth}
        \centering
        \includegraphics[width=\linewidth]{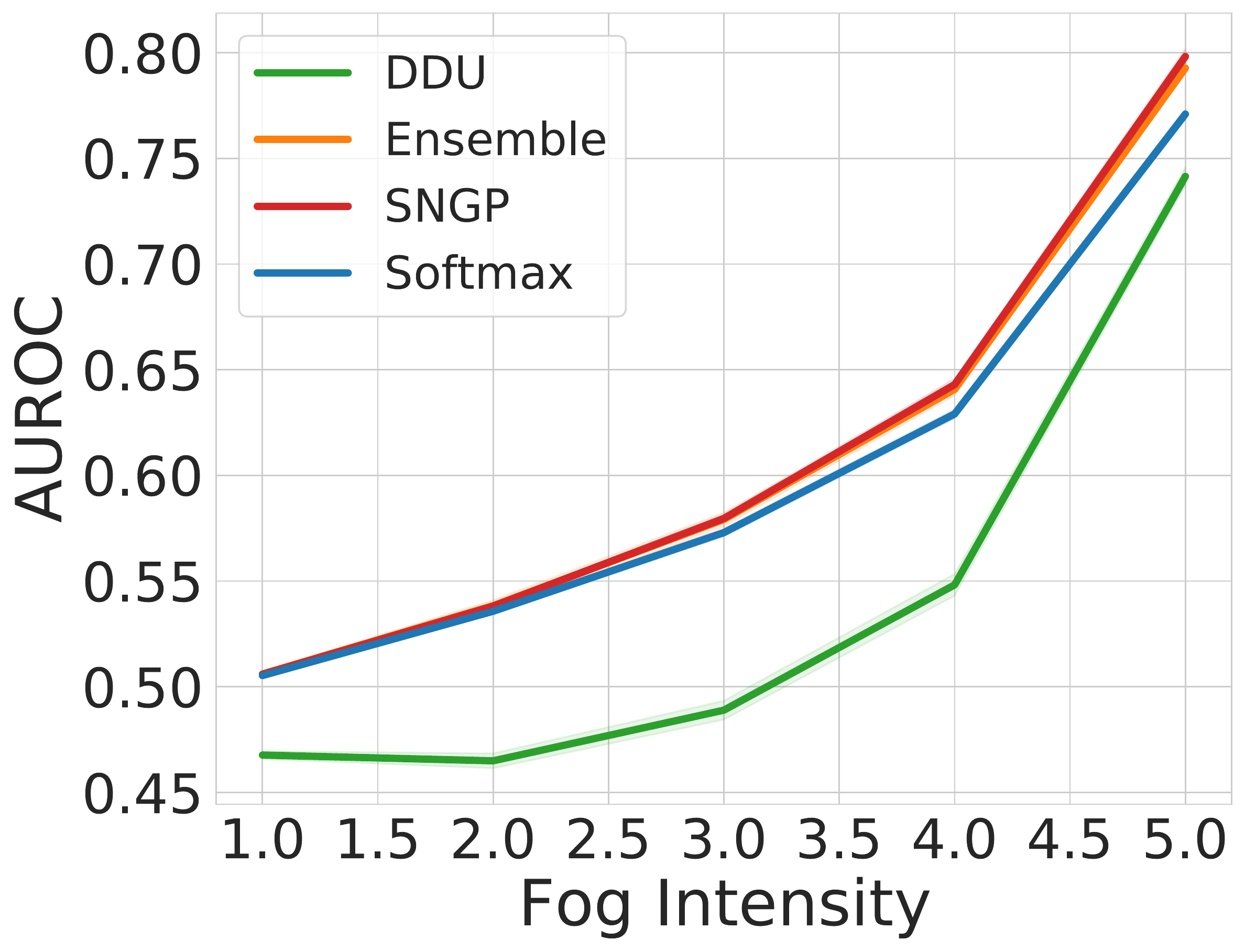}
    \end{subfigure}
    \begin{subfigure}{0.18\linewidth}
        \centering
        \includegraphics[width=\linewidth]{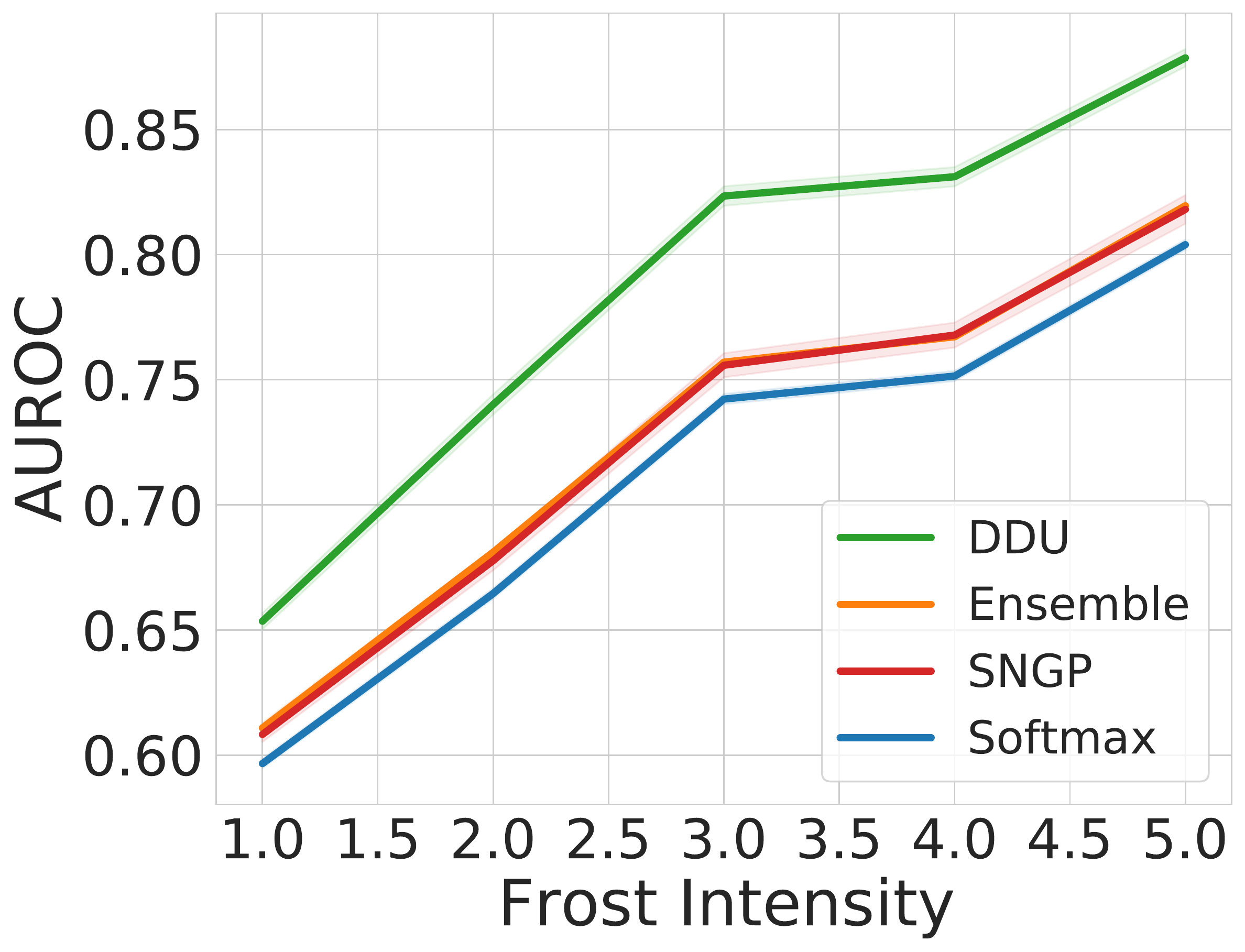}
    \end{subfigure}
    \begin{subfigure}{0.18\linewidth}
        \centering
        \includegraphics[width=\linewidth]{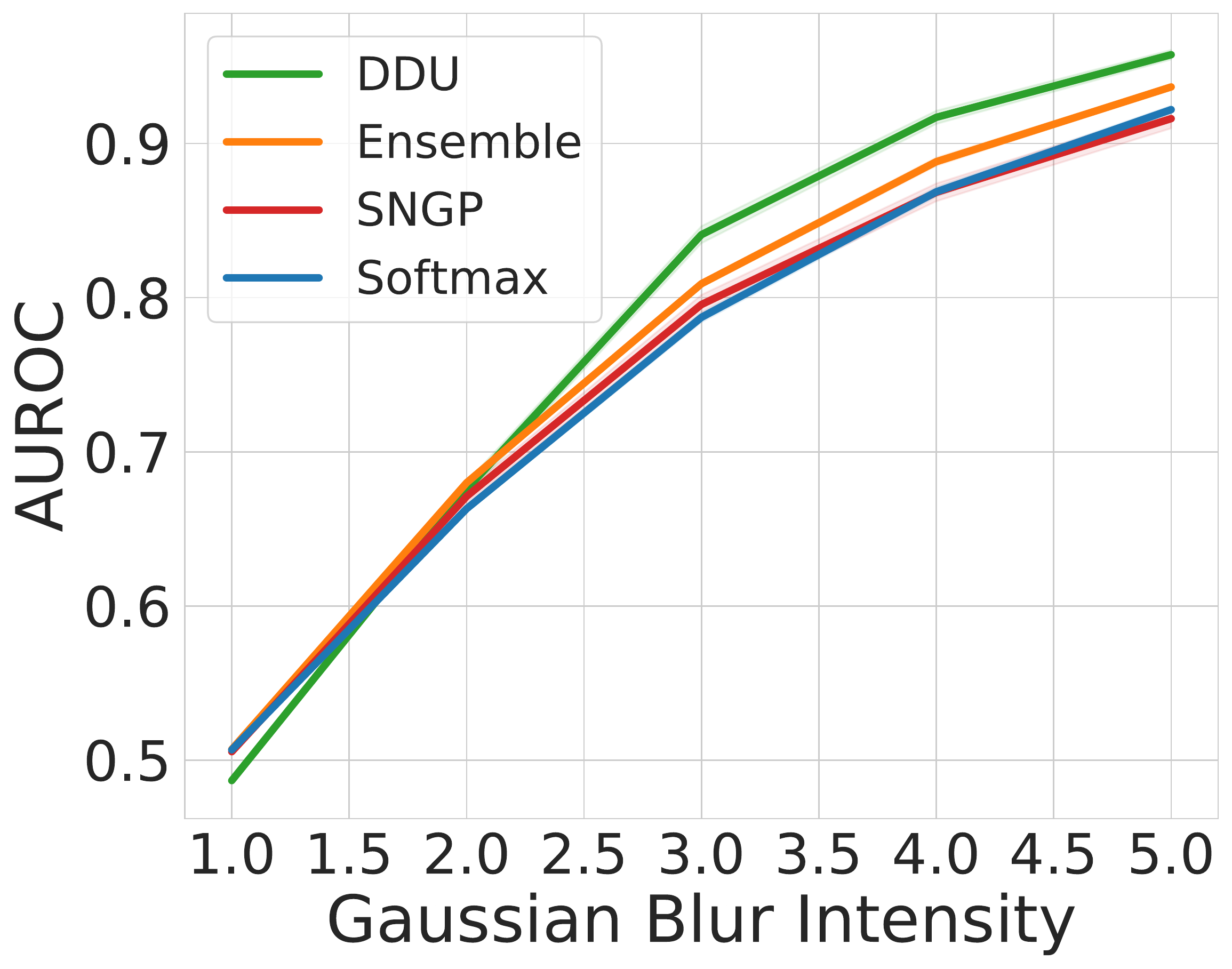}
    \end{subfigure}
    \begin{subfigure}{0.18\linewidth}
        \centering
        \includegraphics[width=\linewidth]{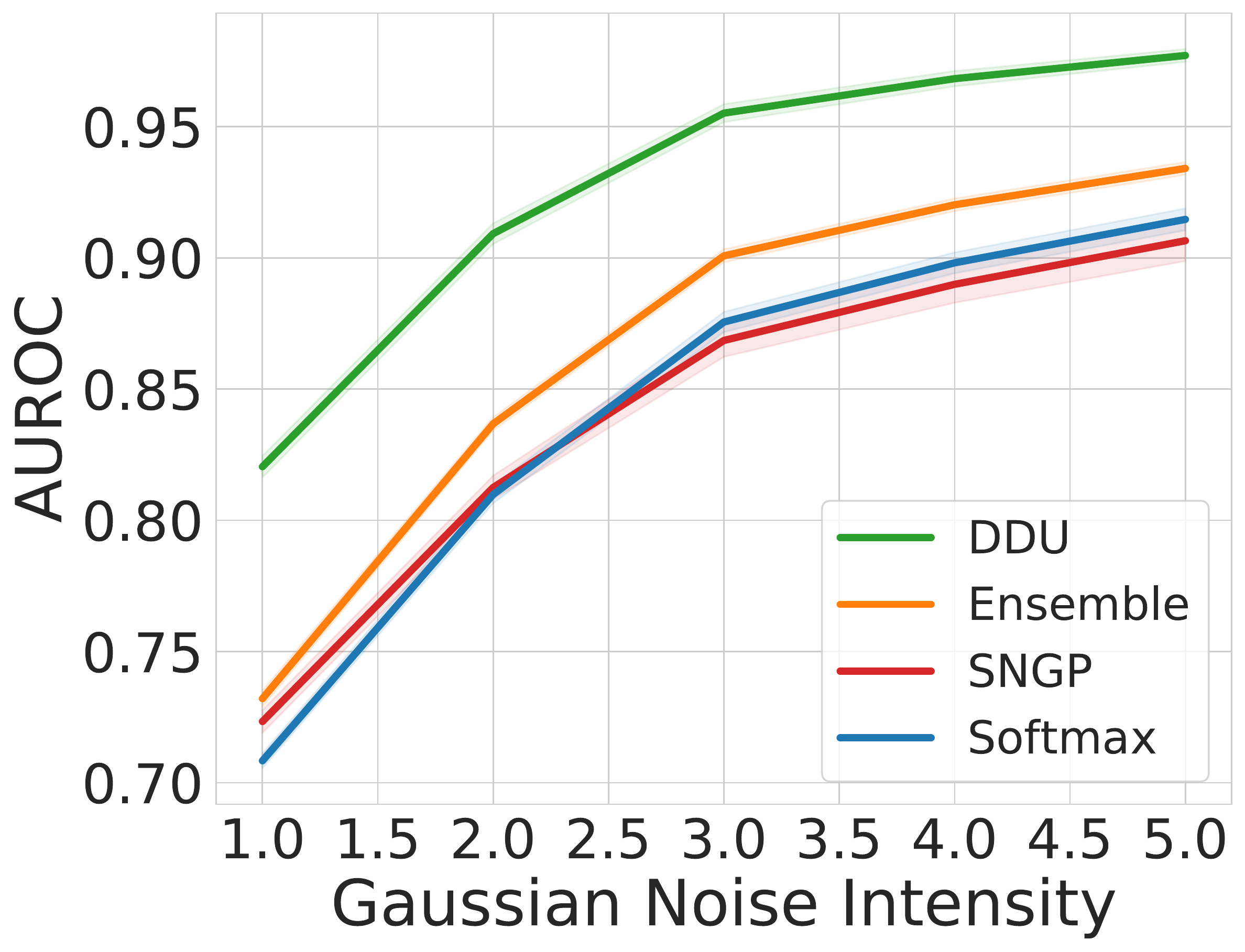}
    \end{subfigure}
    \begin{subfigure}{0.18\linewidth}
        \centering
        \includegraphics[width=\linewidth]{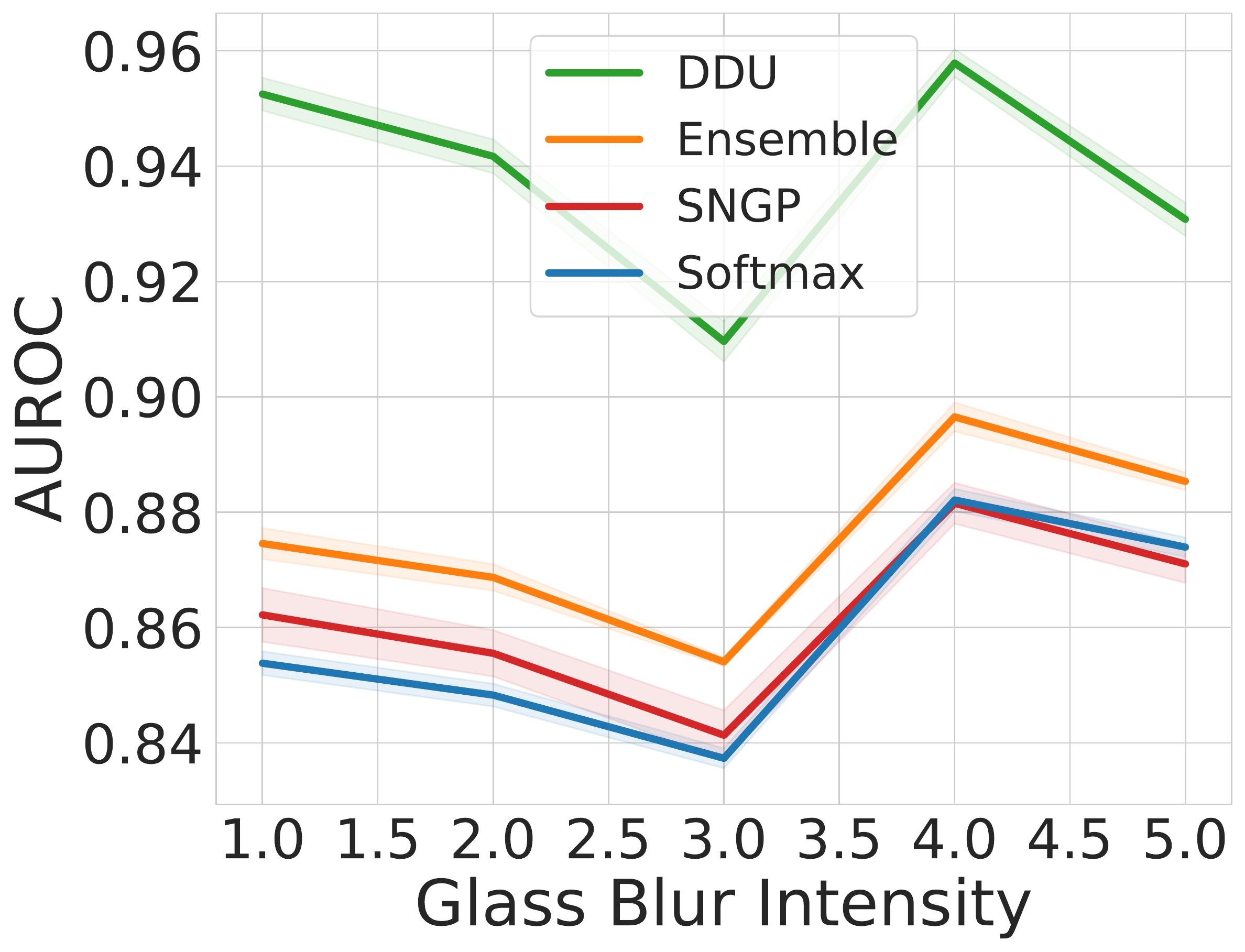}
    \end{subfigure}
    \begin{subfigure}{0.18\linewidth}
        \centering
        \includegraphics[width=\linewidth]{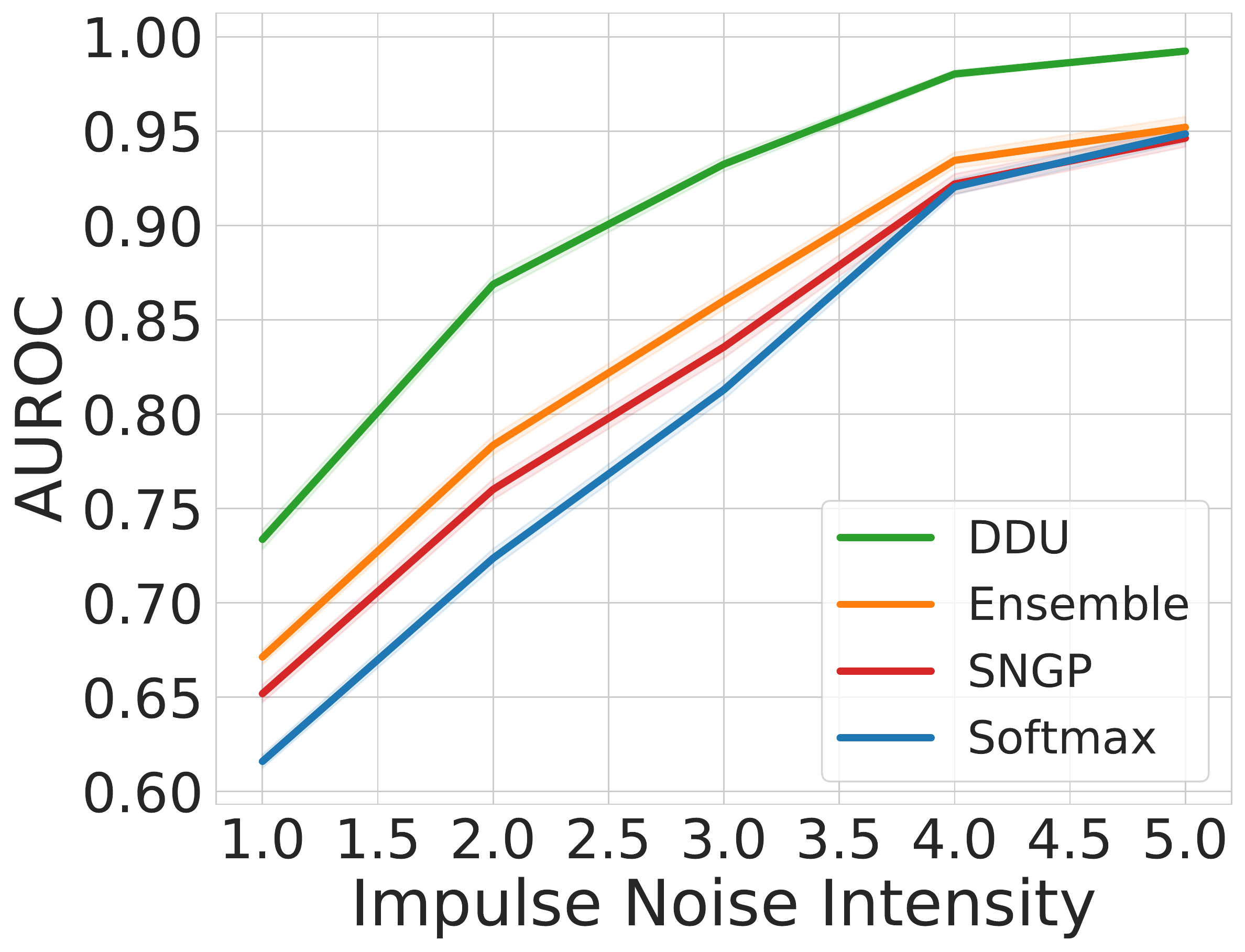}
    \end{subfigure}
    \begin{subfigure}{0.18\linewidth}
        \centering
        \includegraphics[width=\linewidth]{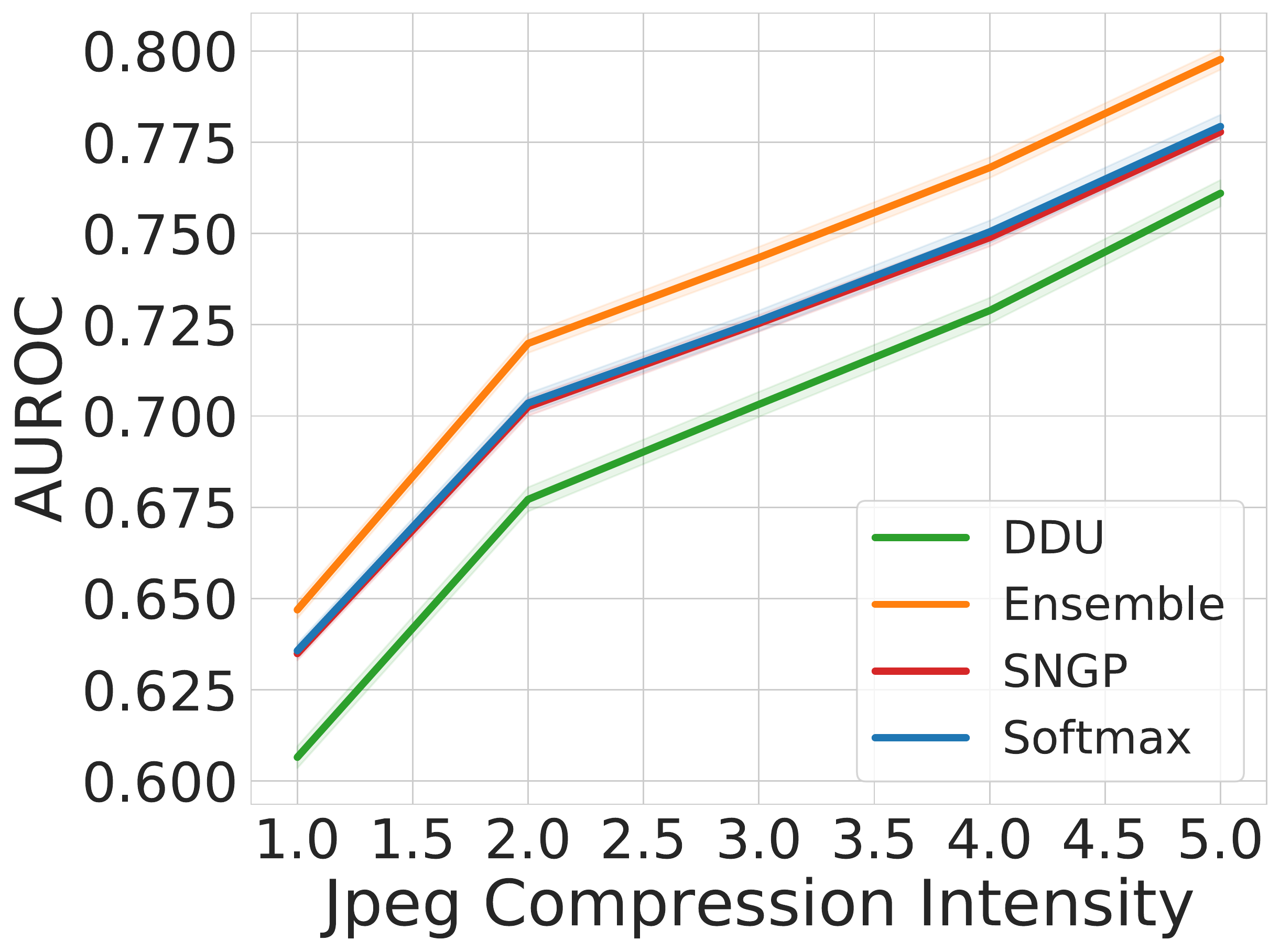}
    \end{subfigure}
    \begin{subfigure}{0.18\linewidth}
        \centering
        \includegraphics[width=\linewidth]{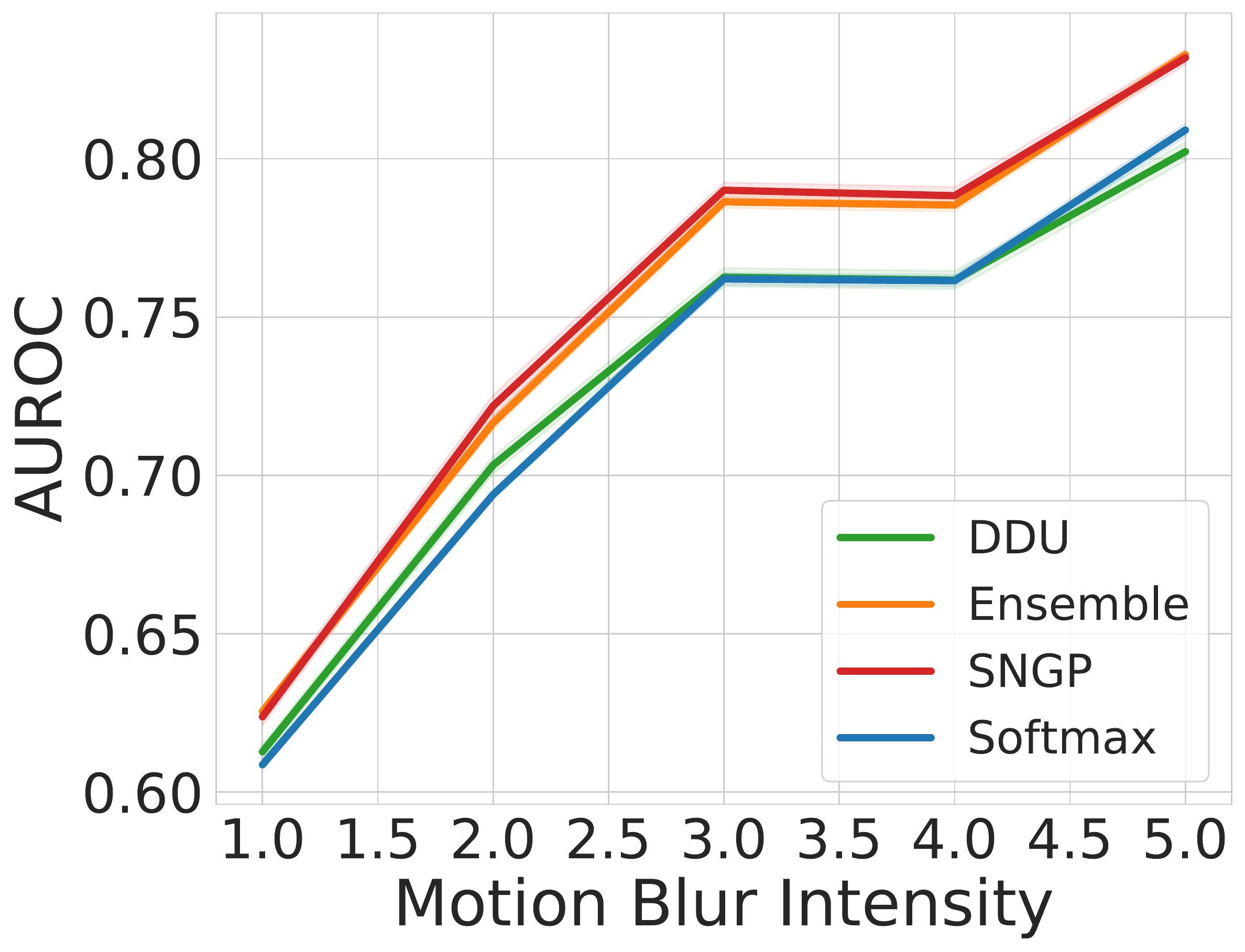}
    \end{subfigure}
    \begin{subfigure}{0.18\linewidth}
        \centering
        \includegraphics[width=\linewidth]{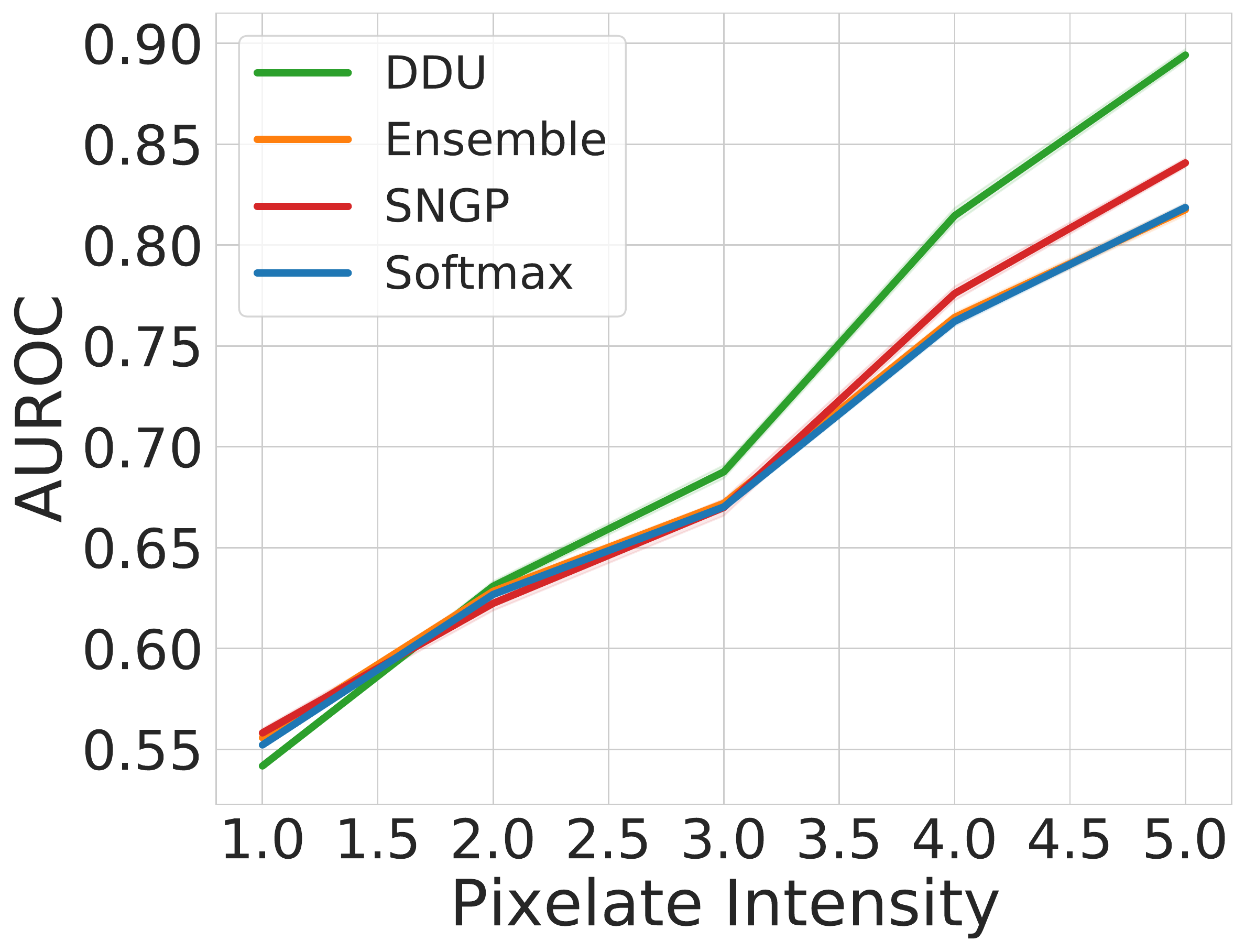}
    \end{subfigure}
    \begin{subfigure}{0.18\linewidth}
        \centering
        \includegraphics[width=\linewidth]{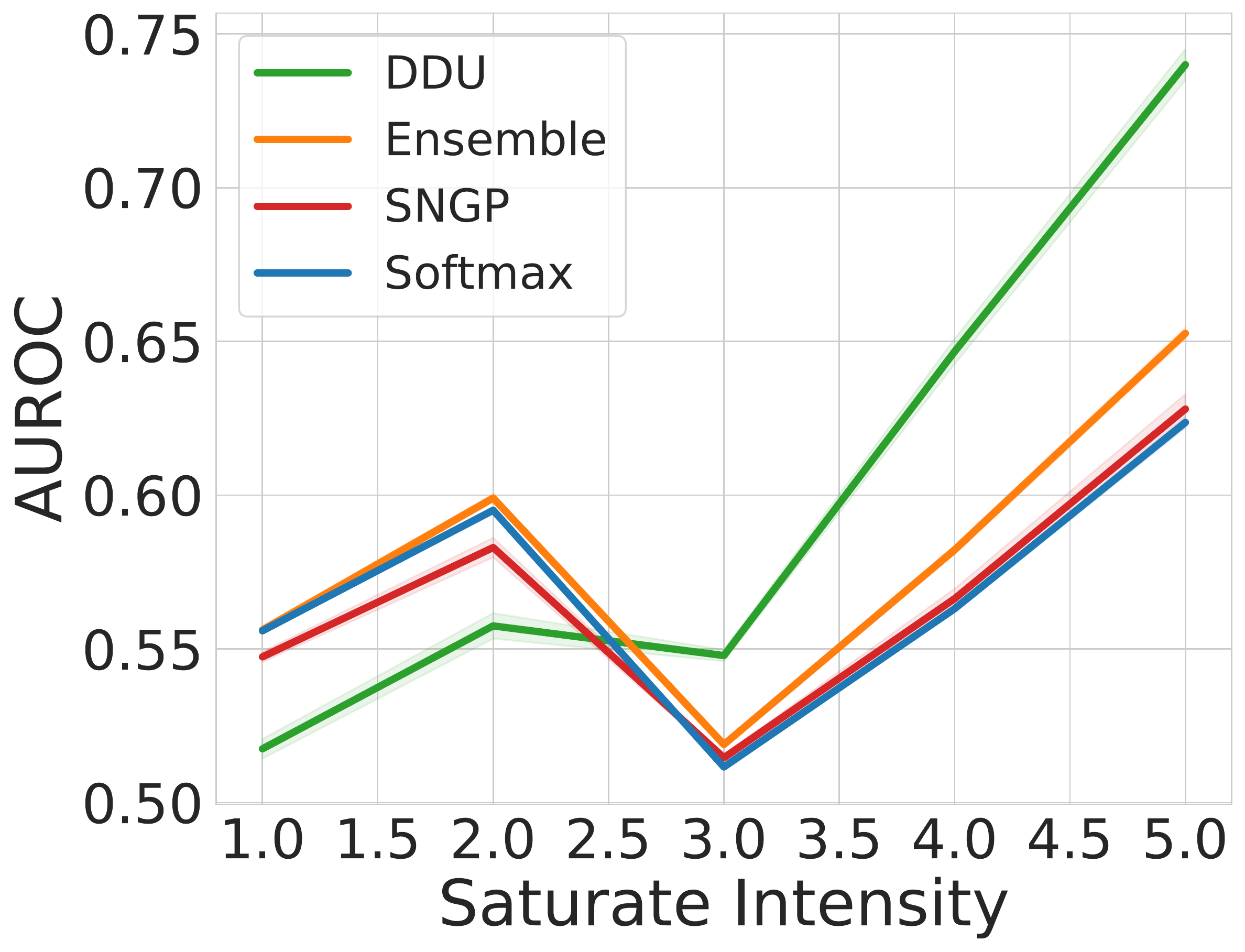}
    \end{subfigure}
    \begin{subfigure}{0.18\linewidth}
        \centering
        \includegraphics[width=\linewidth]{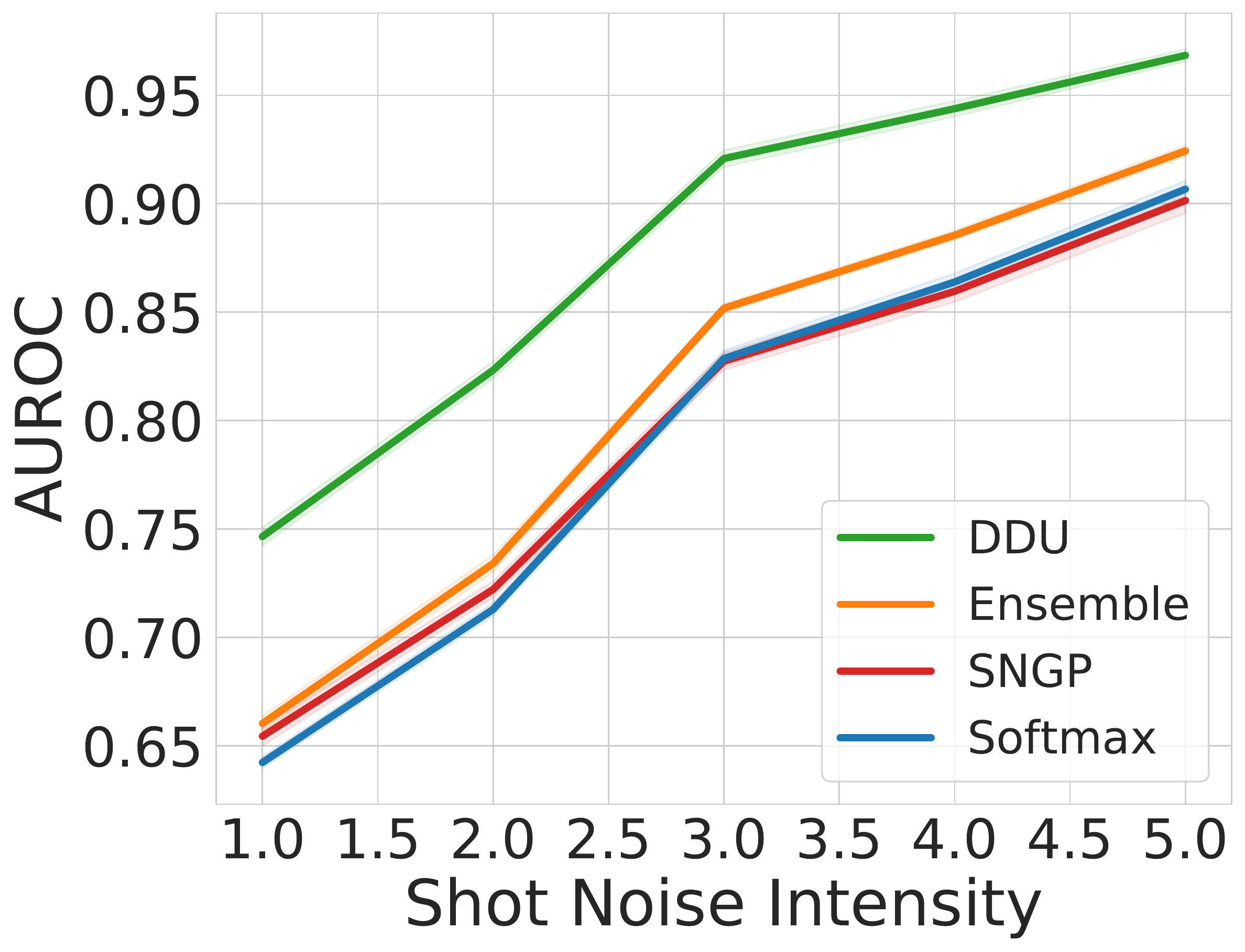}
    \end{subfigure}
    \begin{subfigure}{0.18\linewidth}
        \centering
        \includegraphics[width=\linewidth]{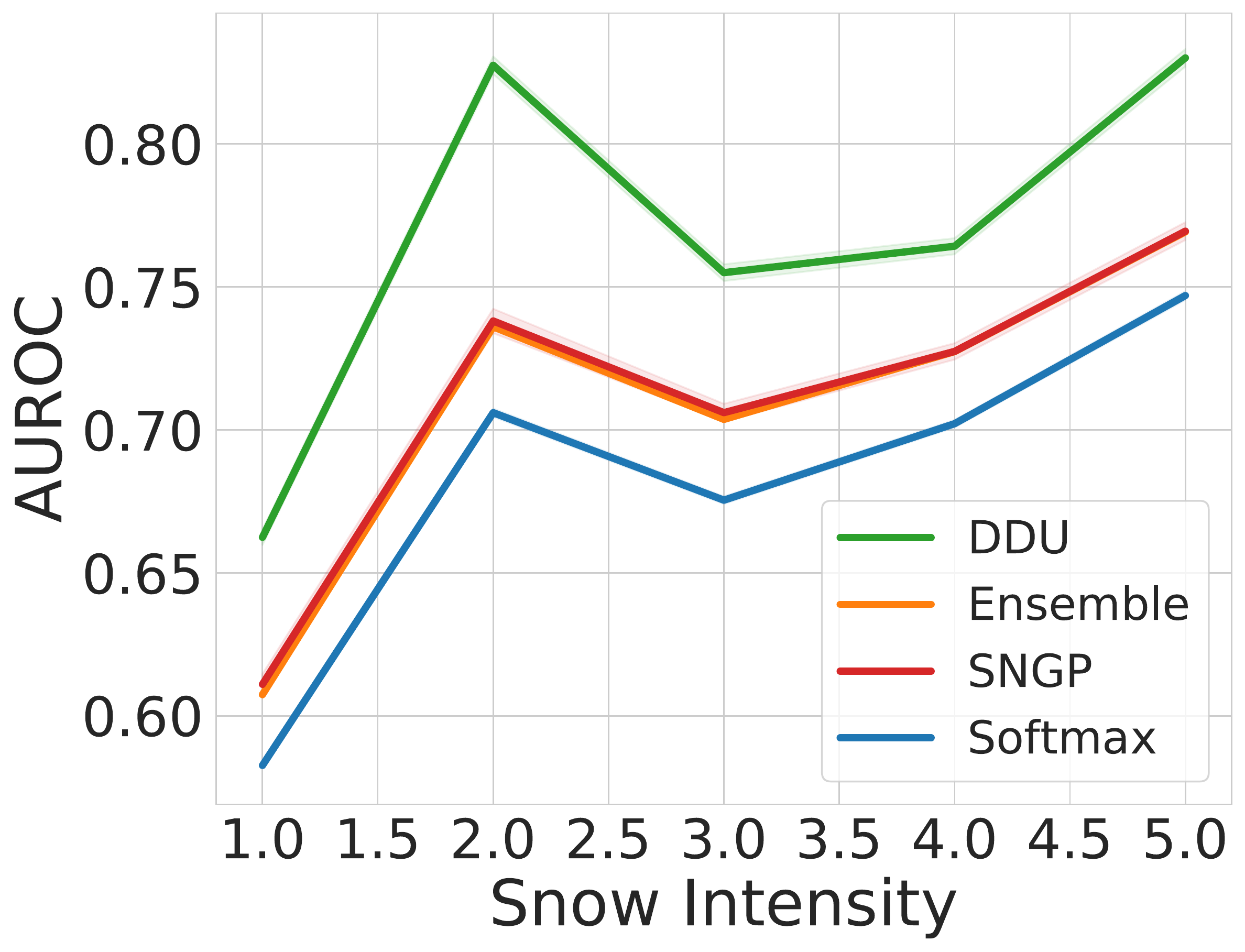}
    \end{subfigure}
    \begin{subfigure}{0.18\linewidth}
        \centering
        \includegraphics[width=\linewidth]{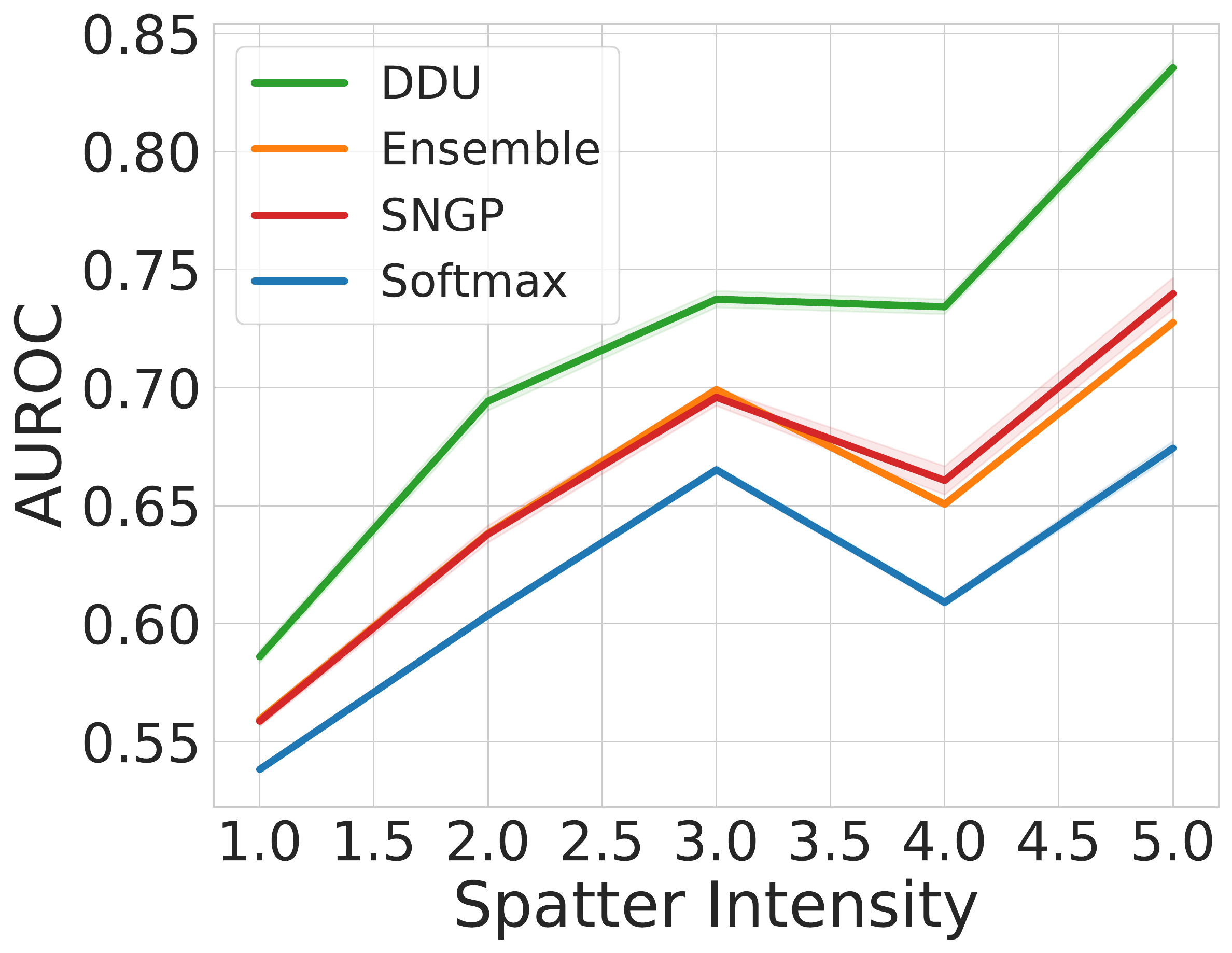}
    \end{subfigure}
    \begin{subfigure}{0.18\linewidth}
        \centering
        \includegraphics[width=\linewidth]{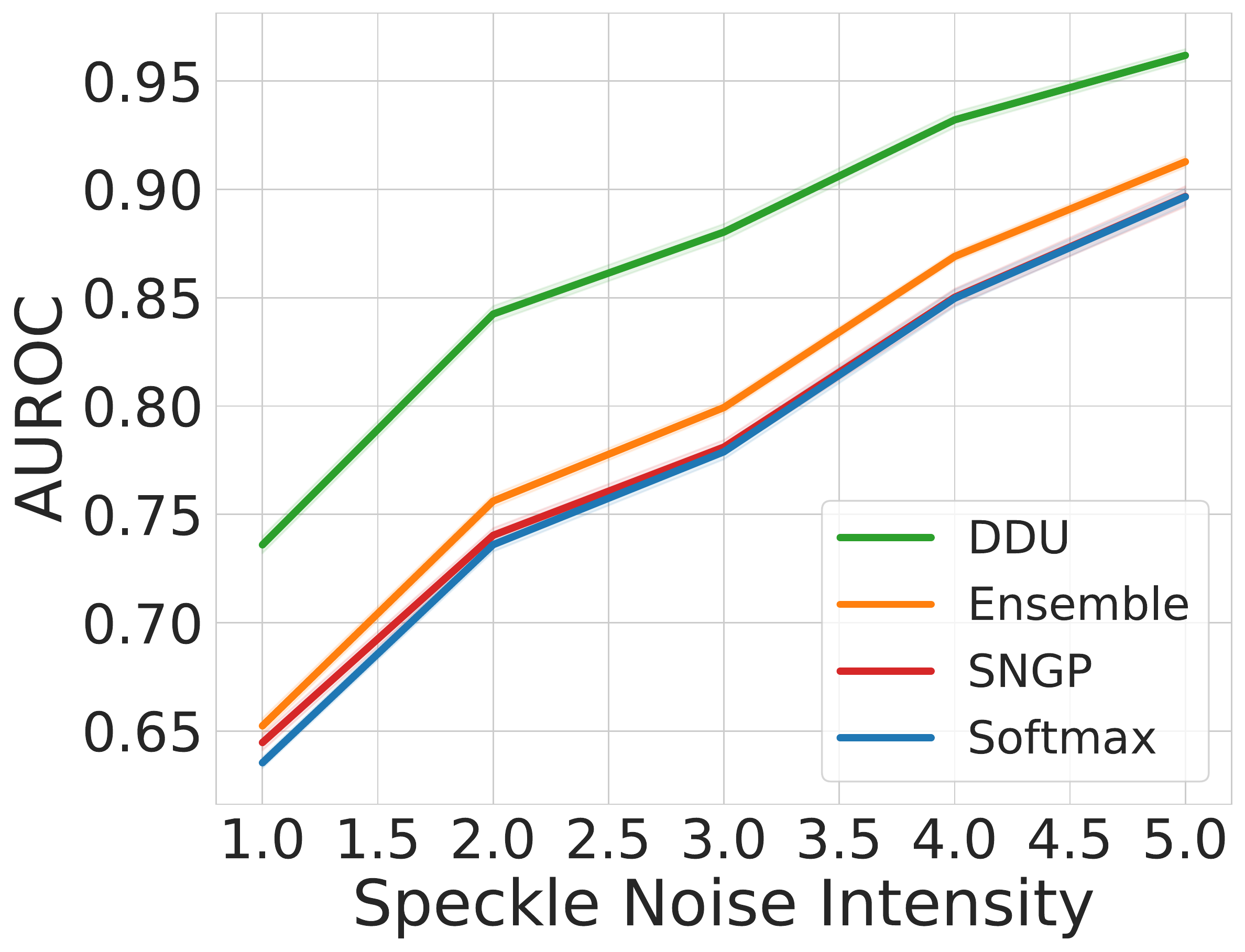}
    \end{subfigure}
    \begin{subfigure}{0.18\linewidth}
        \centering
        \includegraphics[width=\linewidth]{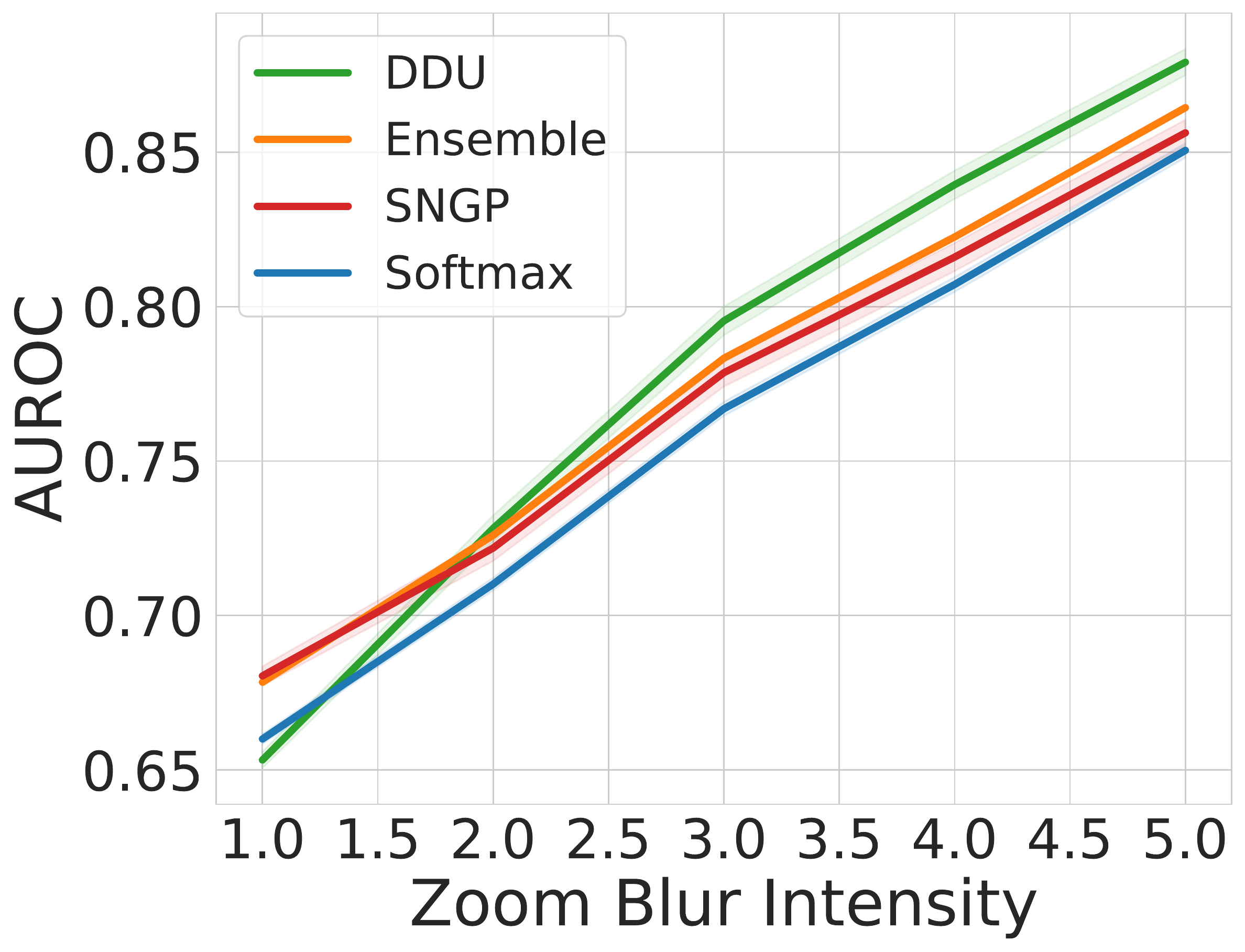}
    \end{subfigure}
    \caption{
    AUROC vs corruption intensity for all corruption types in CIFAR-10-C with ResNet-50 as the architecture and baselines: Softmax Entropy, Ensemble (using Predictive Entropy as uncertainty), SNGP and DDU feature density.
    }
    \label{fig:cifar10_c_results_resnet50}
\end{figure}

\begin{figure}[!t]
    \centering
    \begin{subfigure}{0.18\linewidth}
        \centering
        \includegraphics[width=\linewidth]{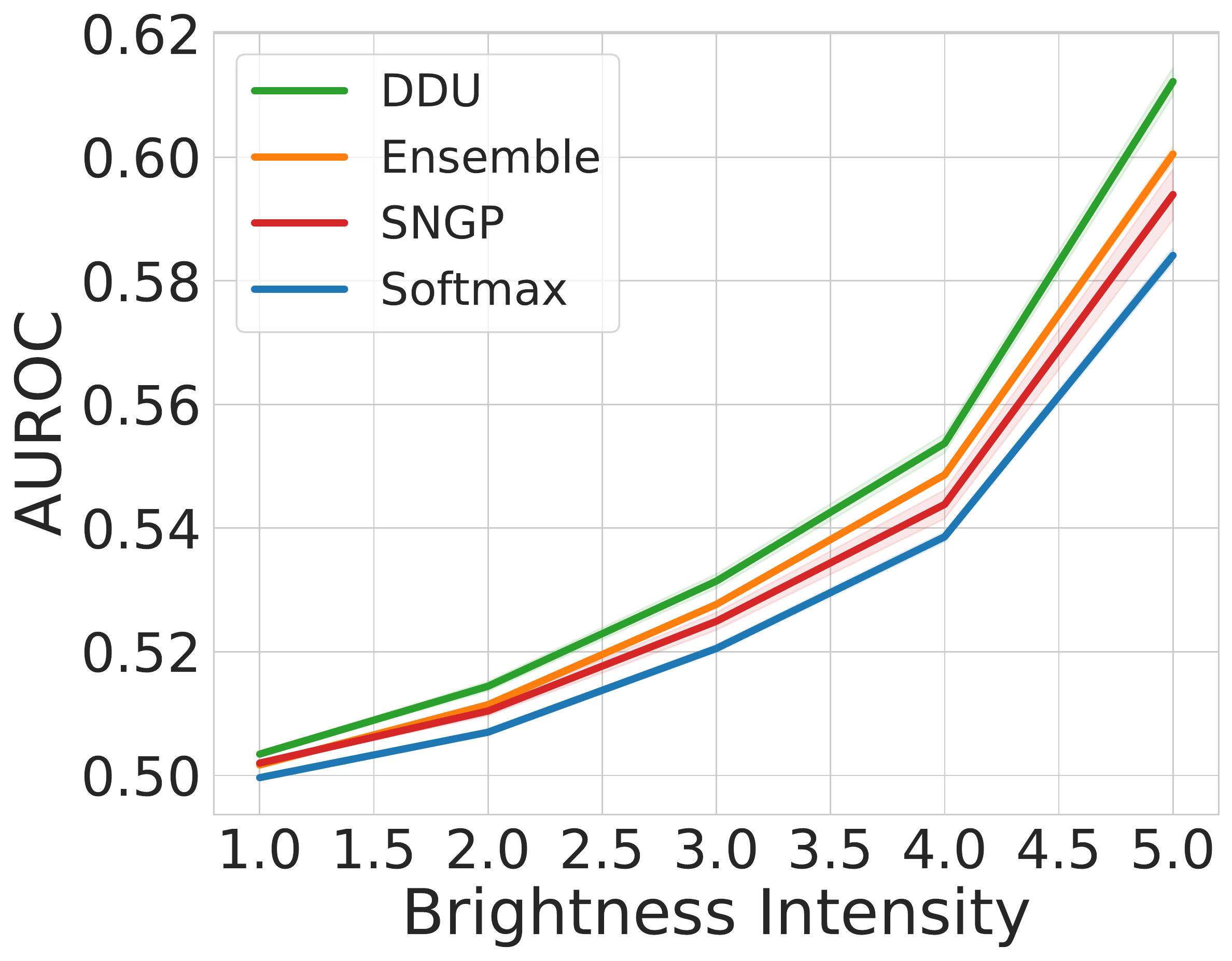}
    \end{subfigure}
    \begin{subfigure}{0.18\linewidth}
        \centering
        \includegraphics[width=\linewidth]{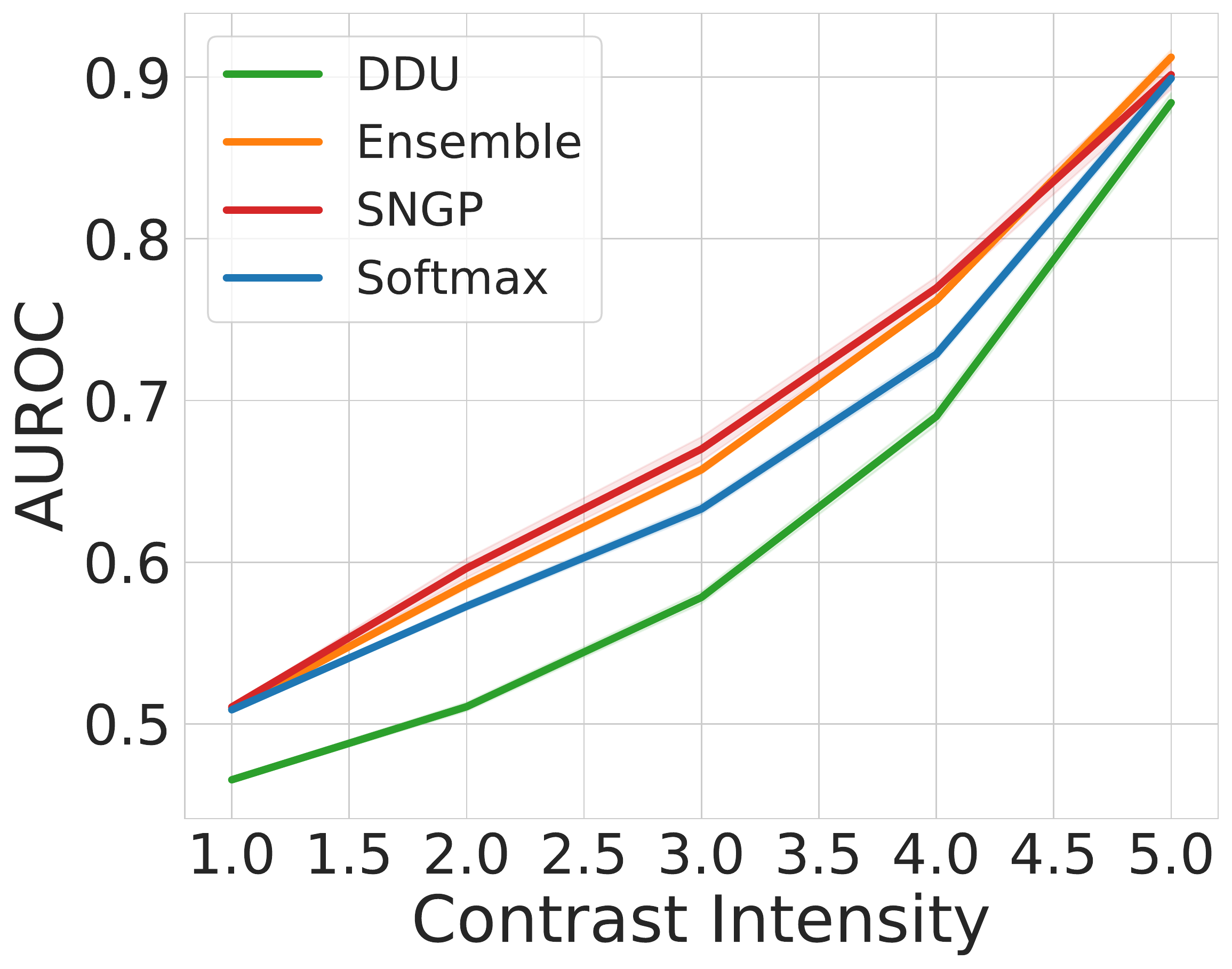}
    \end{subfigure} 
    \begin{subfigure}{0.18\linewidth}
        \centering
        \includegraphics[width=\linewidth]{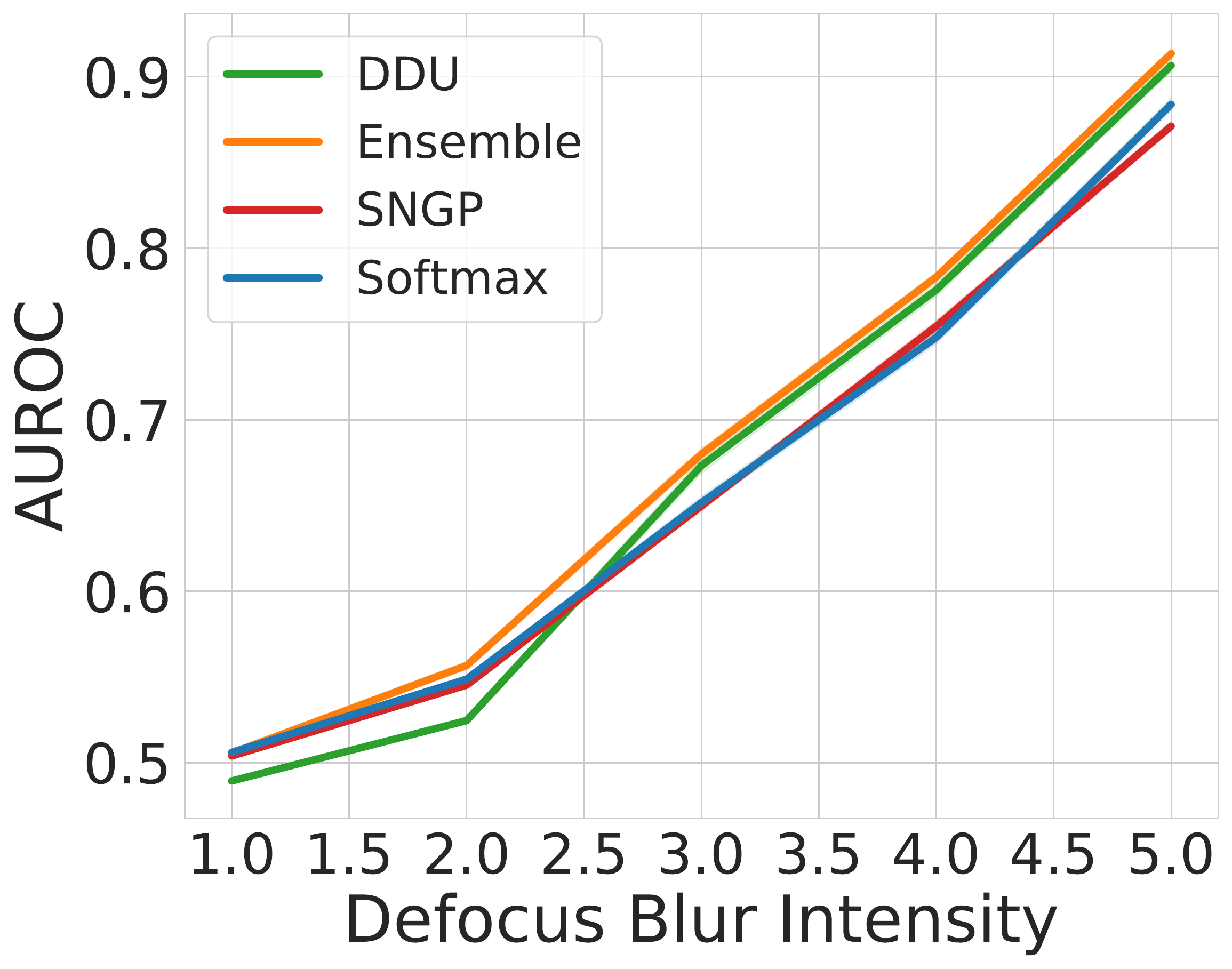}
    \end{subfigure} 
    \begin{subfigure}{0.18\linewidth}
        \centering
        \includegraphics[width=\linewidth]{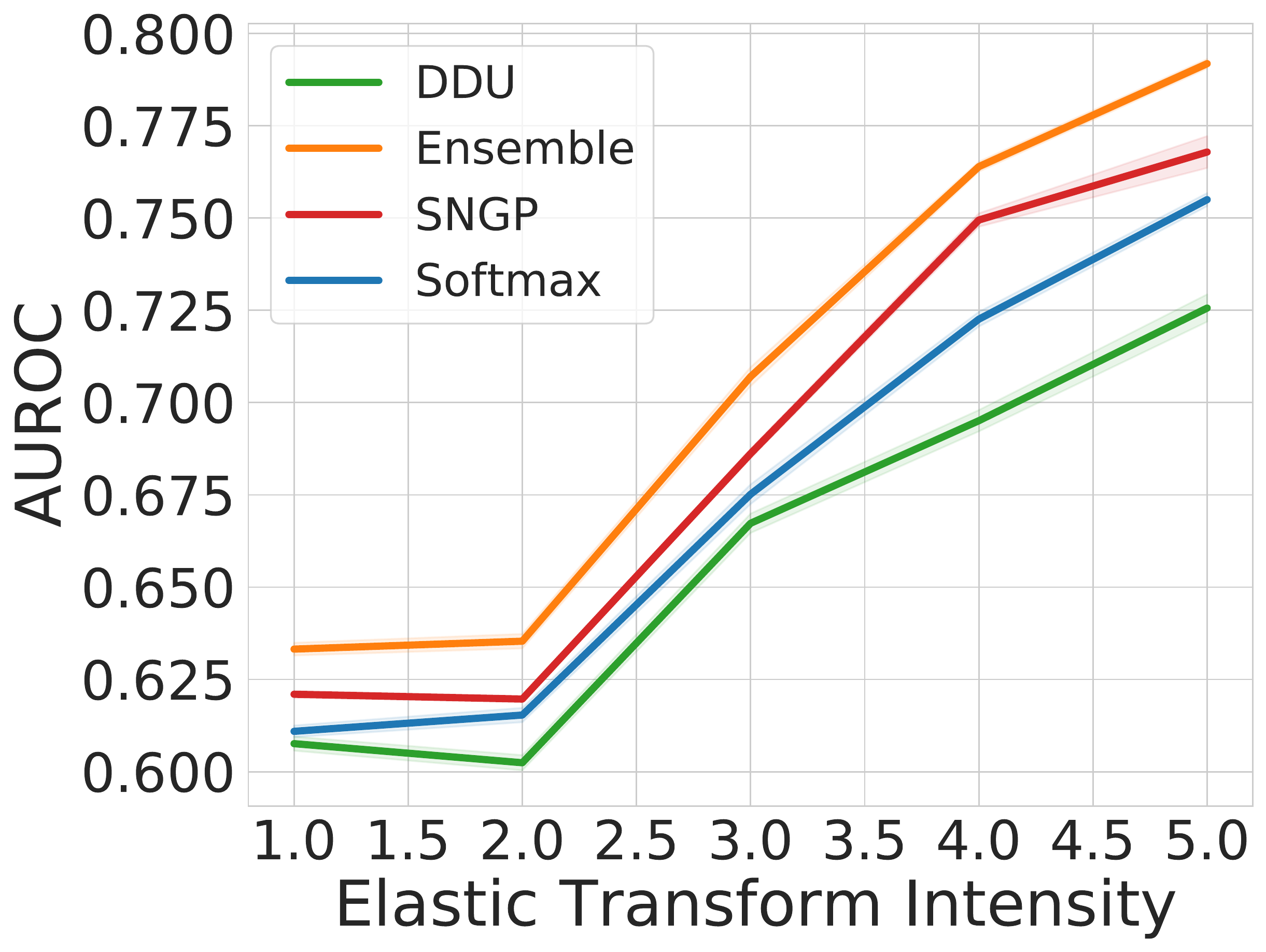}
    \end{subfigure}
    \begin{subfigure}{0.18\linewidth}
        \centering
        \includegraphics[width=\linewidth]{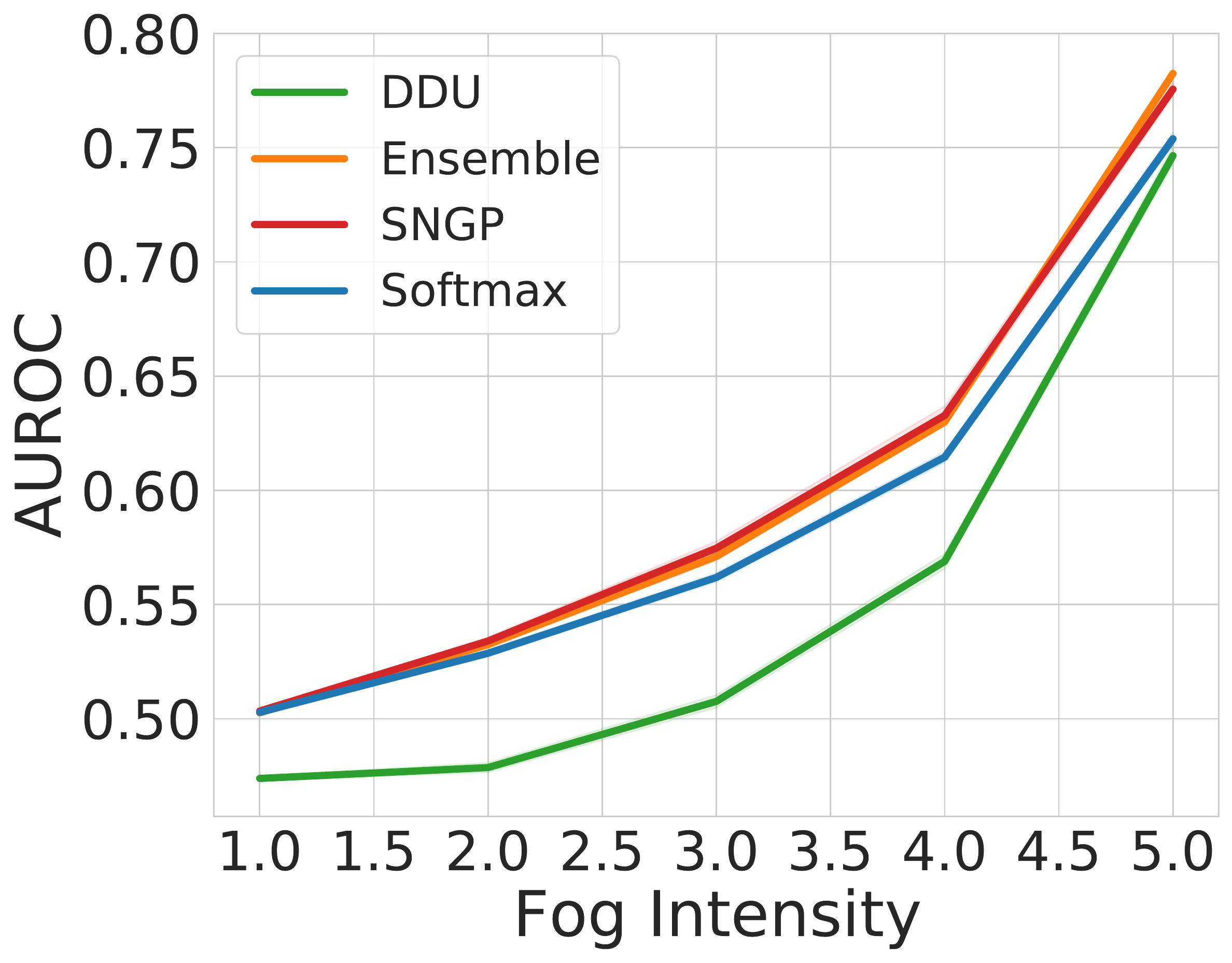}
    \end{subfigure}
    \begin{subfigure}{0.18\linewidth}
        \centering
        \includegraphics[width=\linewidth]{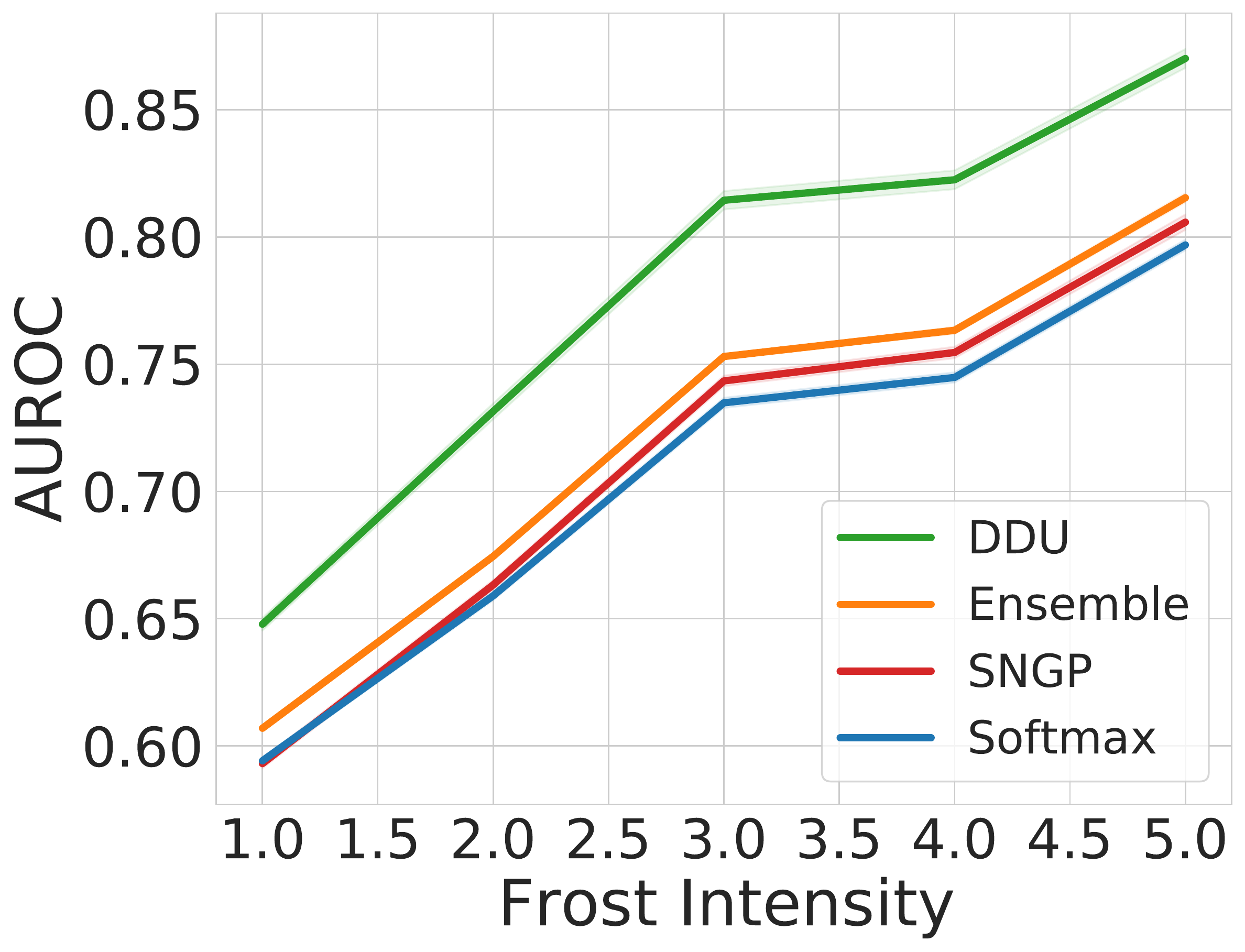}
    \end{subfigure}
    \begin{subfigure}{0.18\linewidth}
        \centering
        \includegraphics[width=\linewidth]{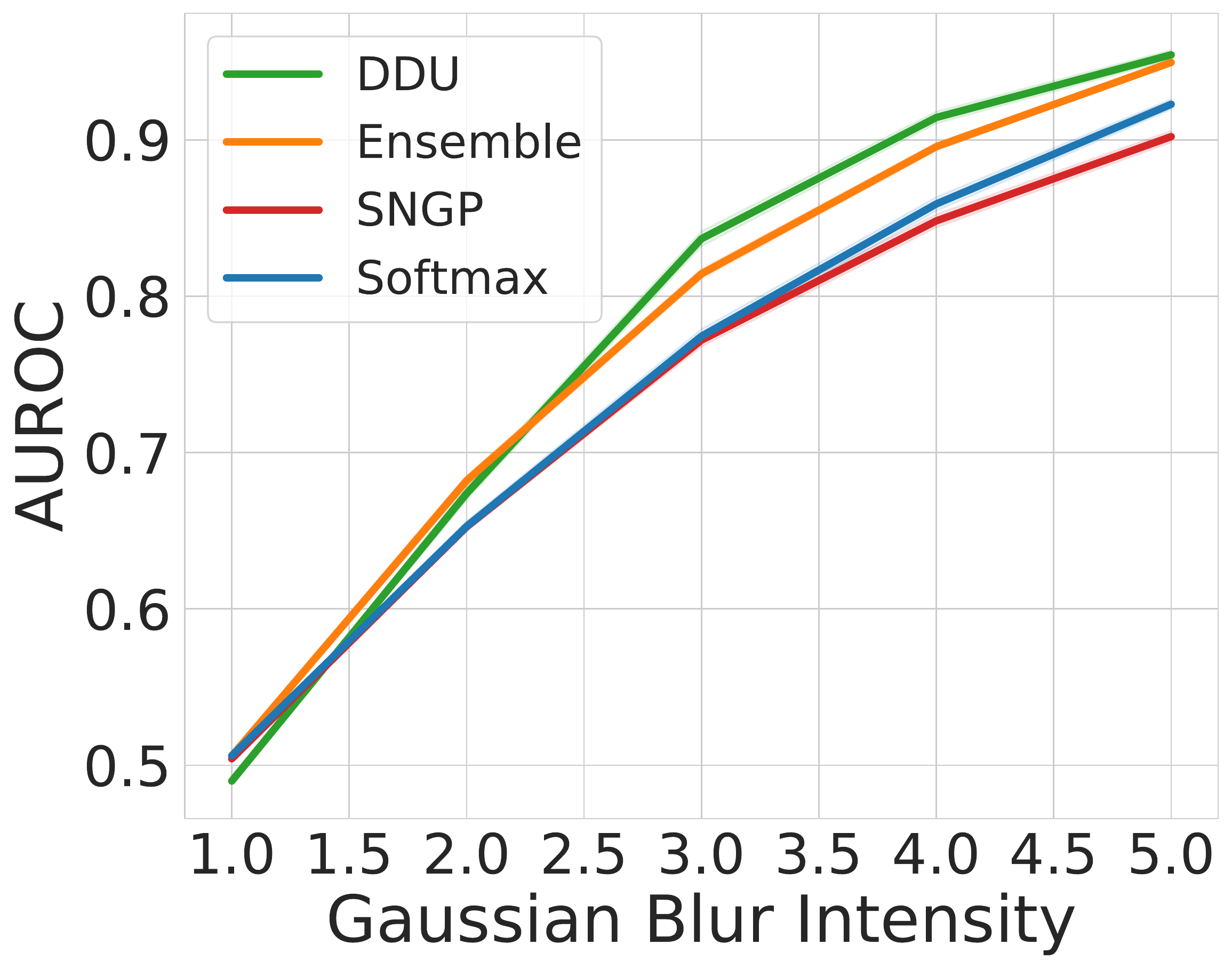}
    \end{subfigure}
    \begin{subfigure}{0.18\linewidth}
        \centering
        \includegraphics[width=\linewidth]{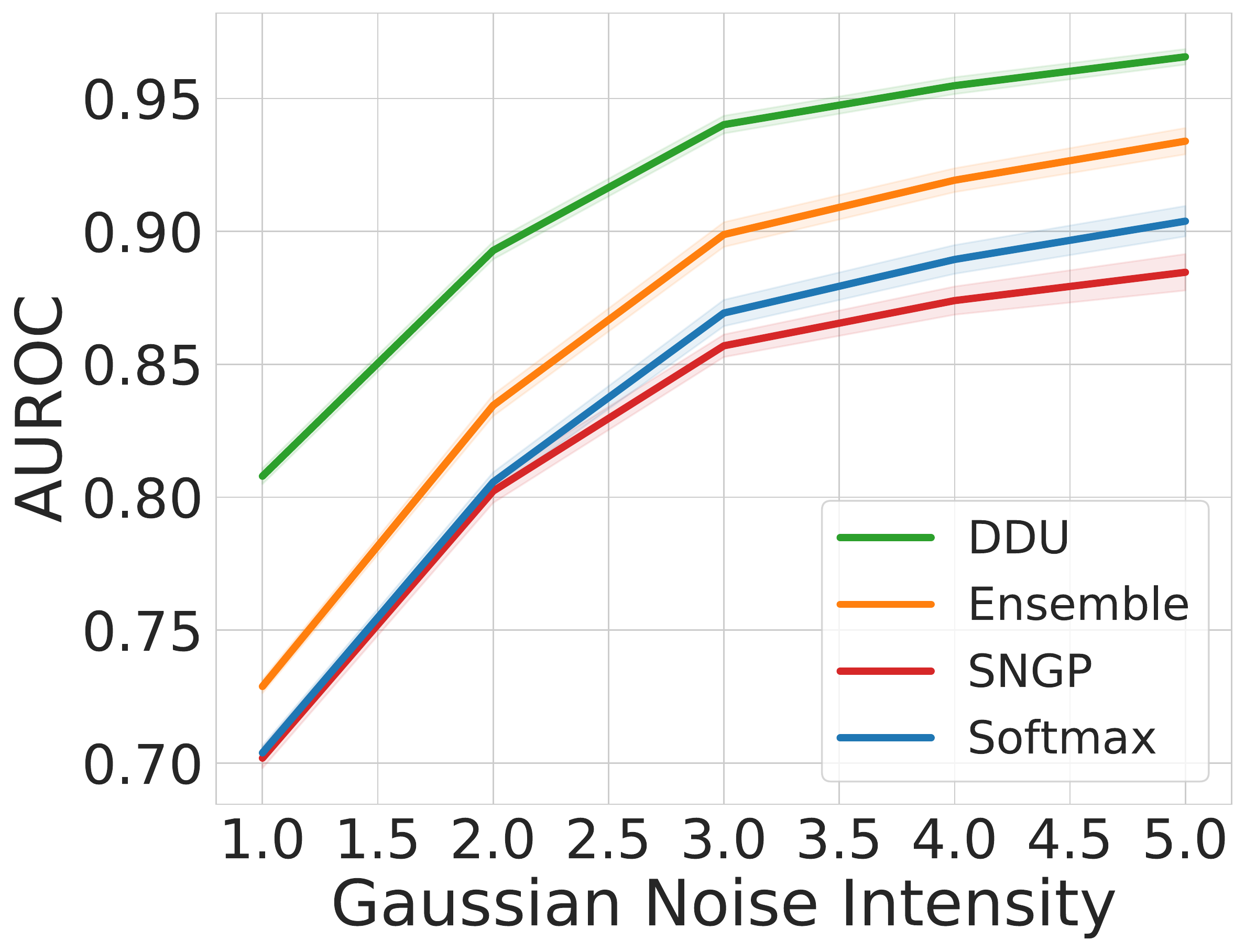}
    \end{subfigure}
    \begin{subfigure}{0.18\linewidth}
        \centering
        \includegraphics[width=\linewidth]{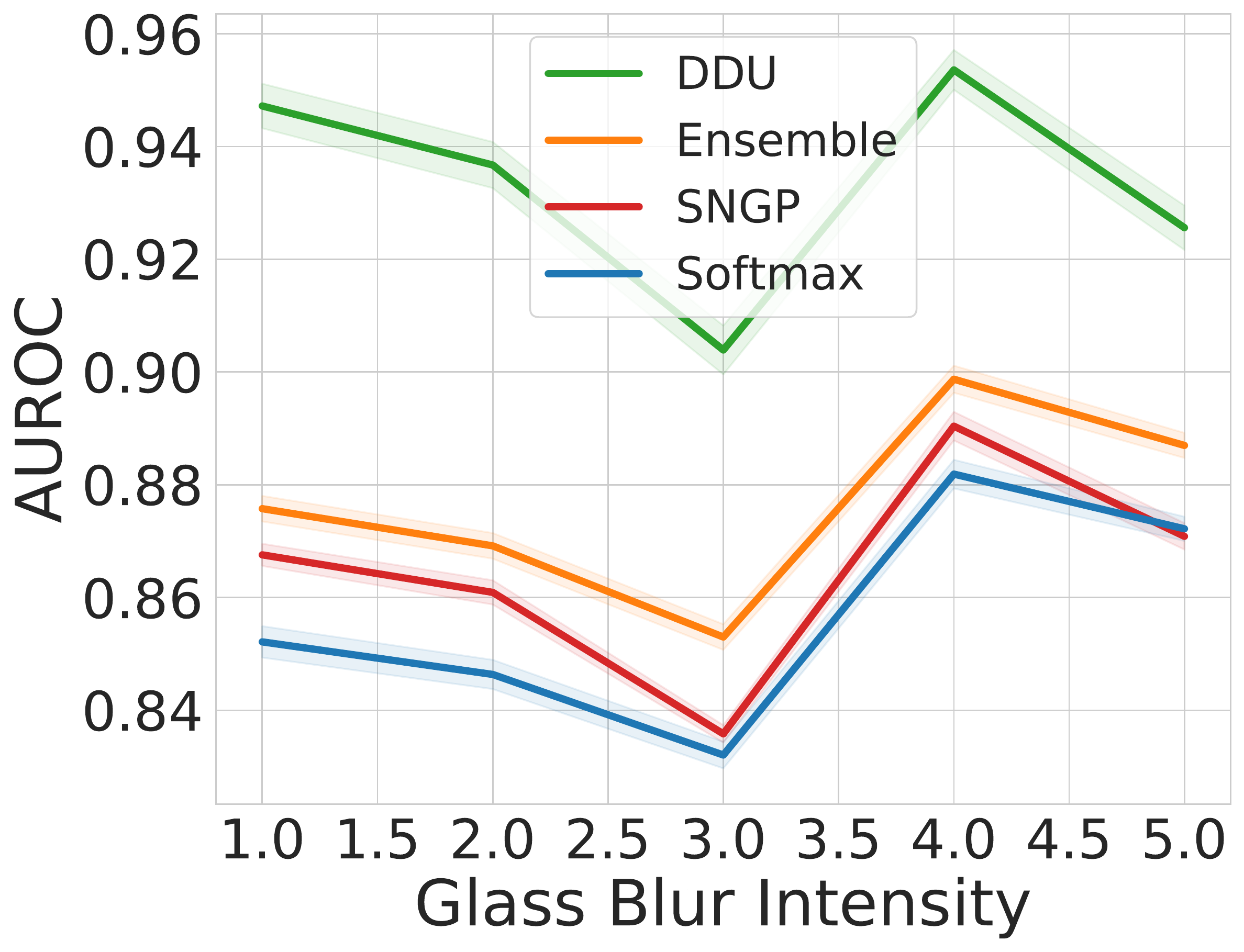}
    \end{subfigure}
    \begin{subfigure}{0.18\linewidth}
        \centering
        \includegraphics[width=\linewidth]{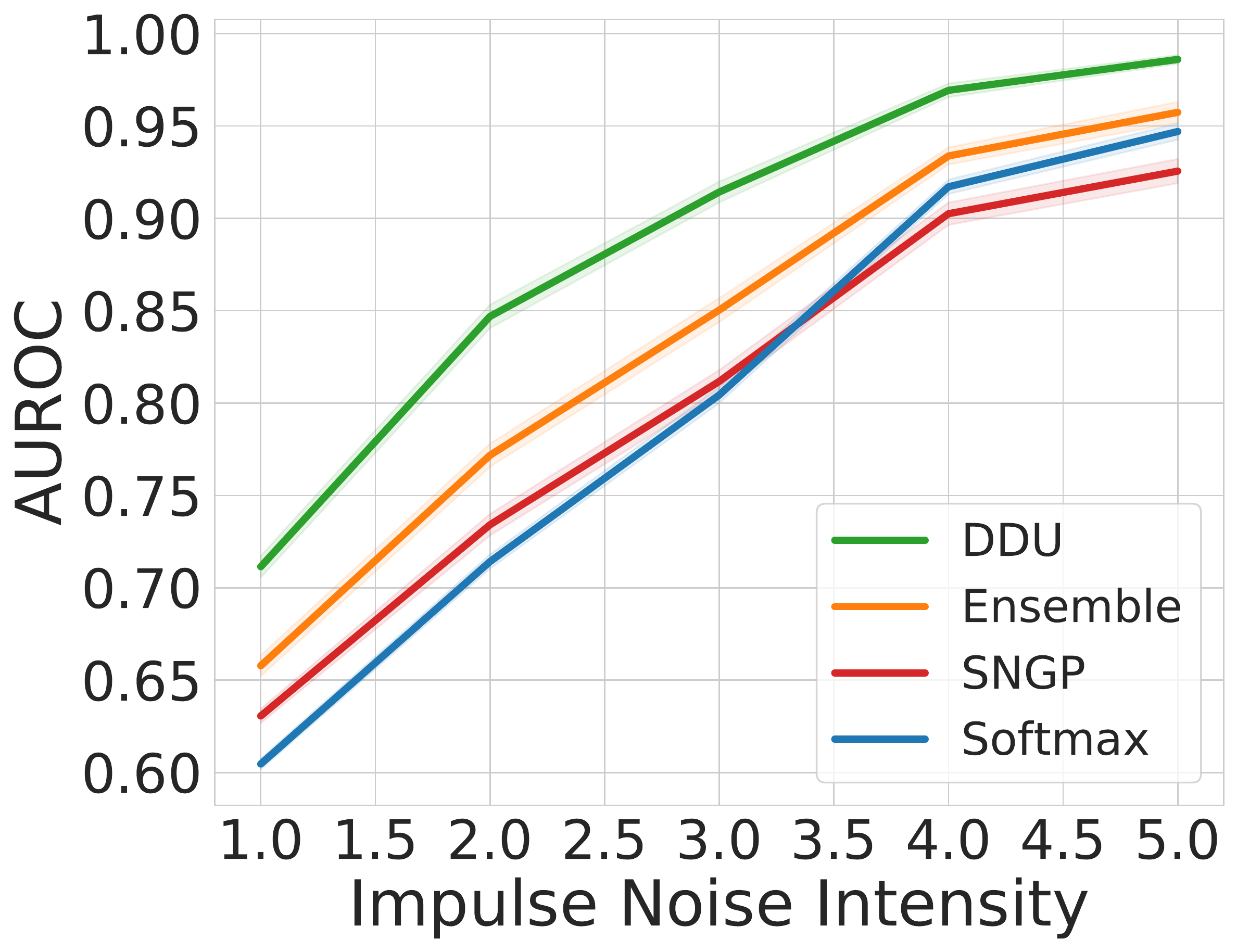}
    \end{subfigure}
    \begin{subfigure}{0.18\linewidth}
        \centering
        \includegraphics[width=\linewidth]{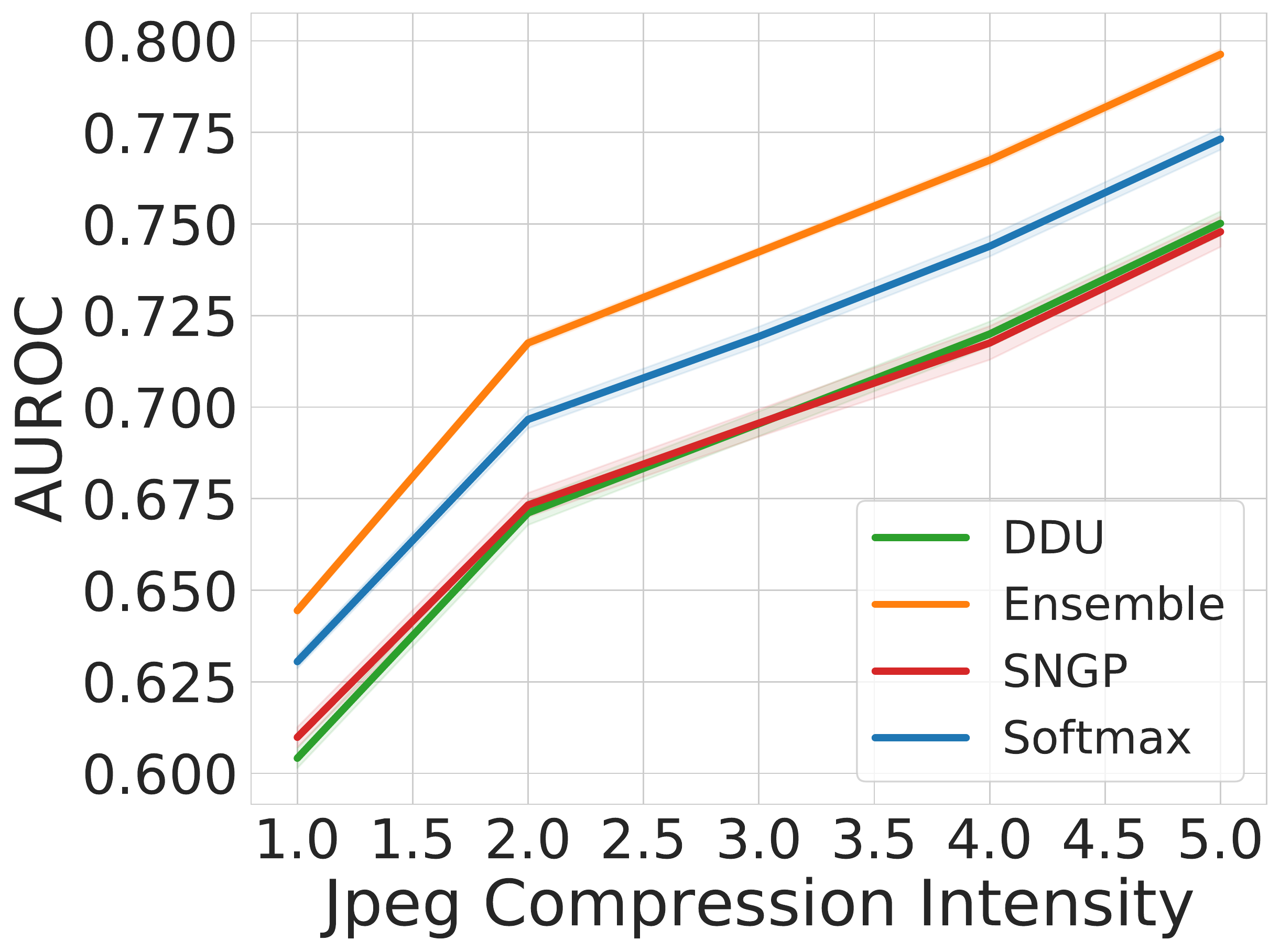}
    \end{subfigure}
    \begin{subfigure}{0.18\linewidth}
        \centering
        \includegraphics[width=\linewidth]{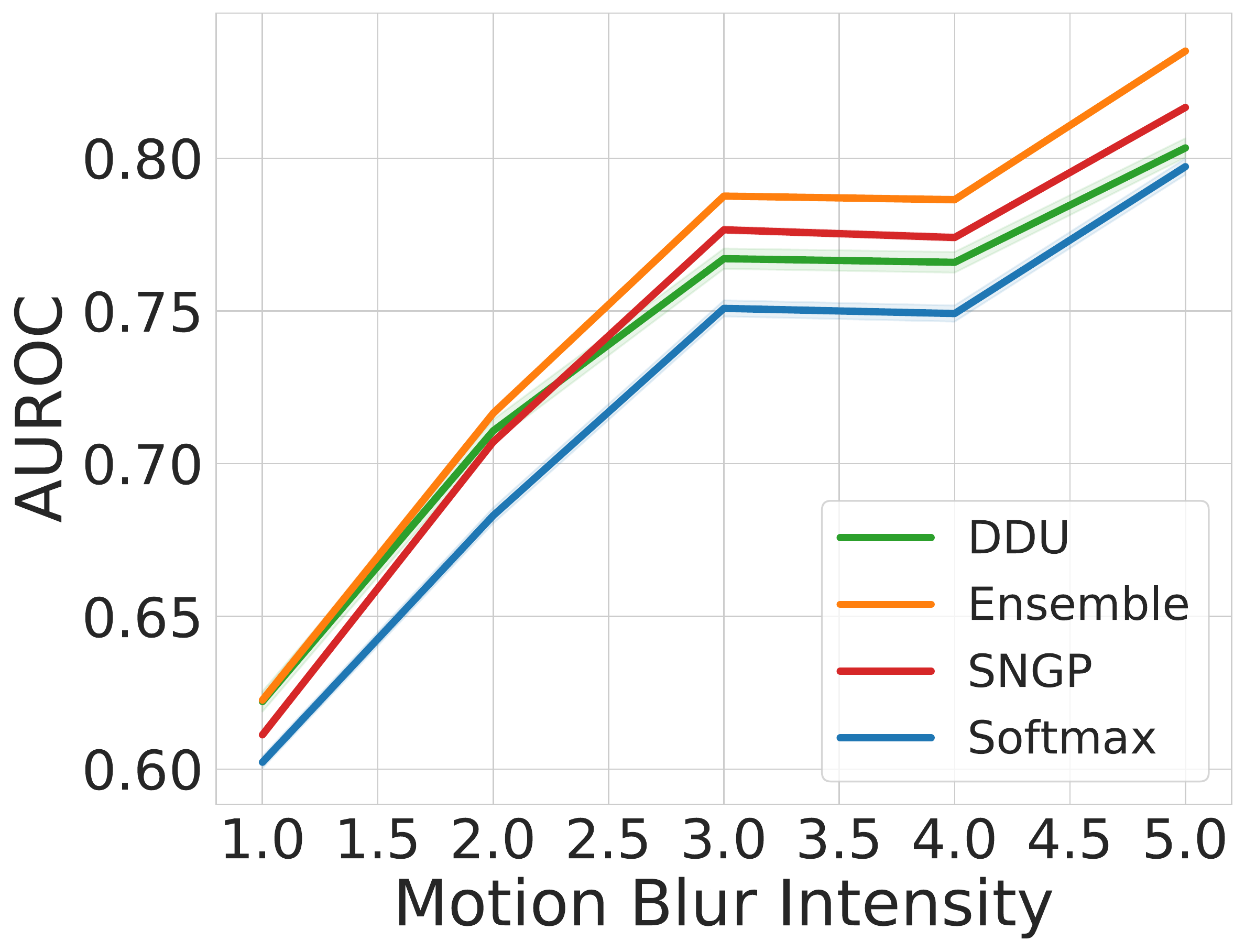}
    \end{subfigure}
    \begin{subfigure}{0.18\linewidth}
        \centering
        \includegraphics[width=\linewidth]{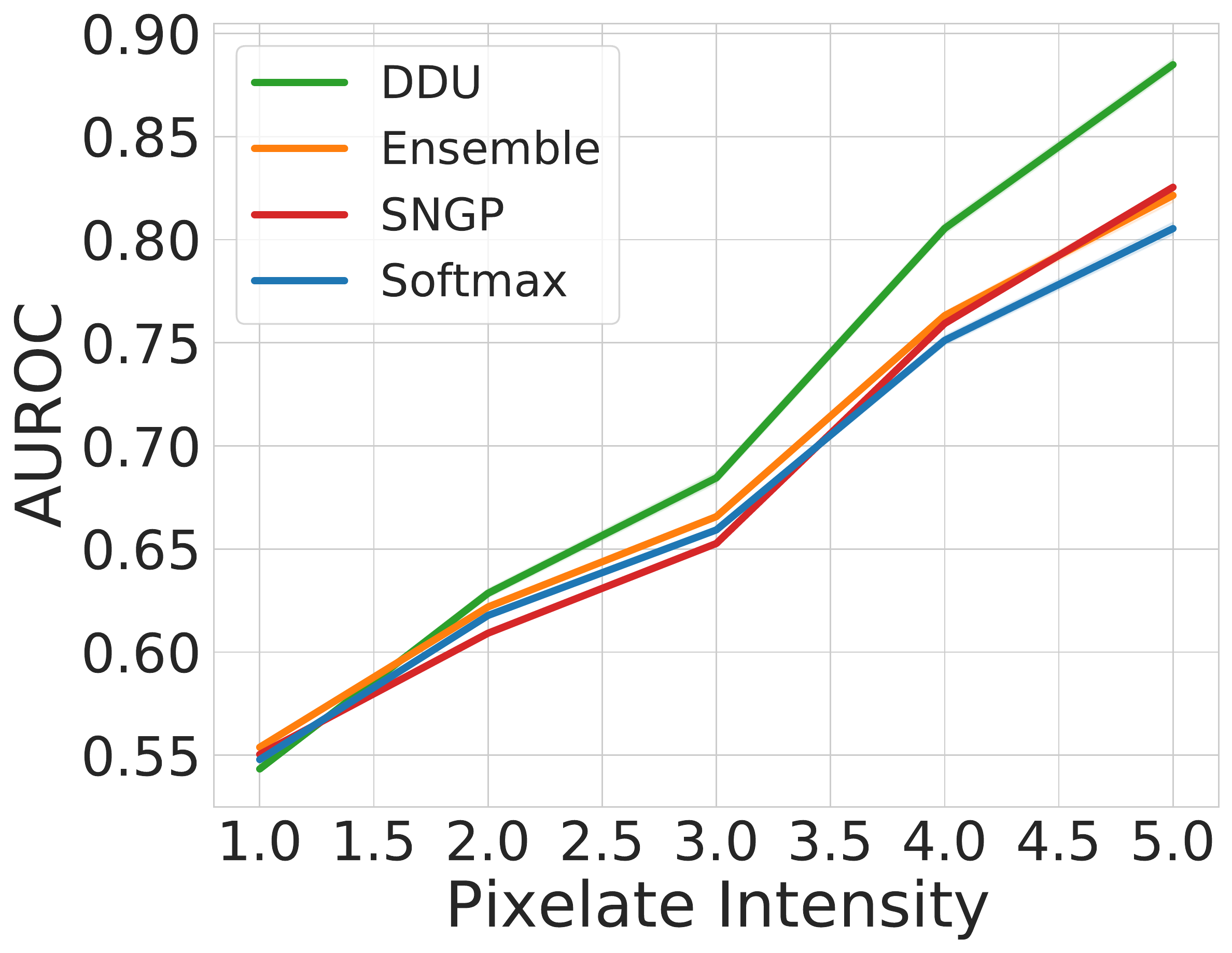}
    \end{subfigure}
    \begin{subfigure}{0.18\linewidth}
        \centering
        \includegraphics[width=\linewidth]{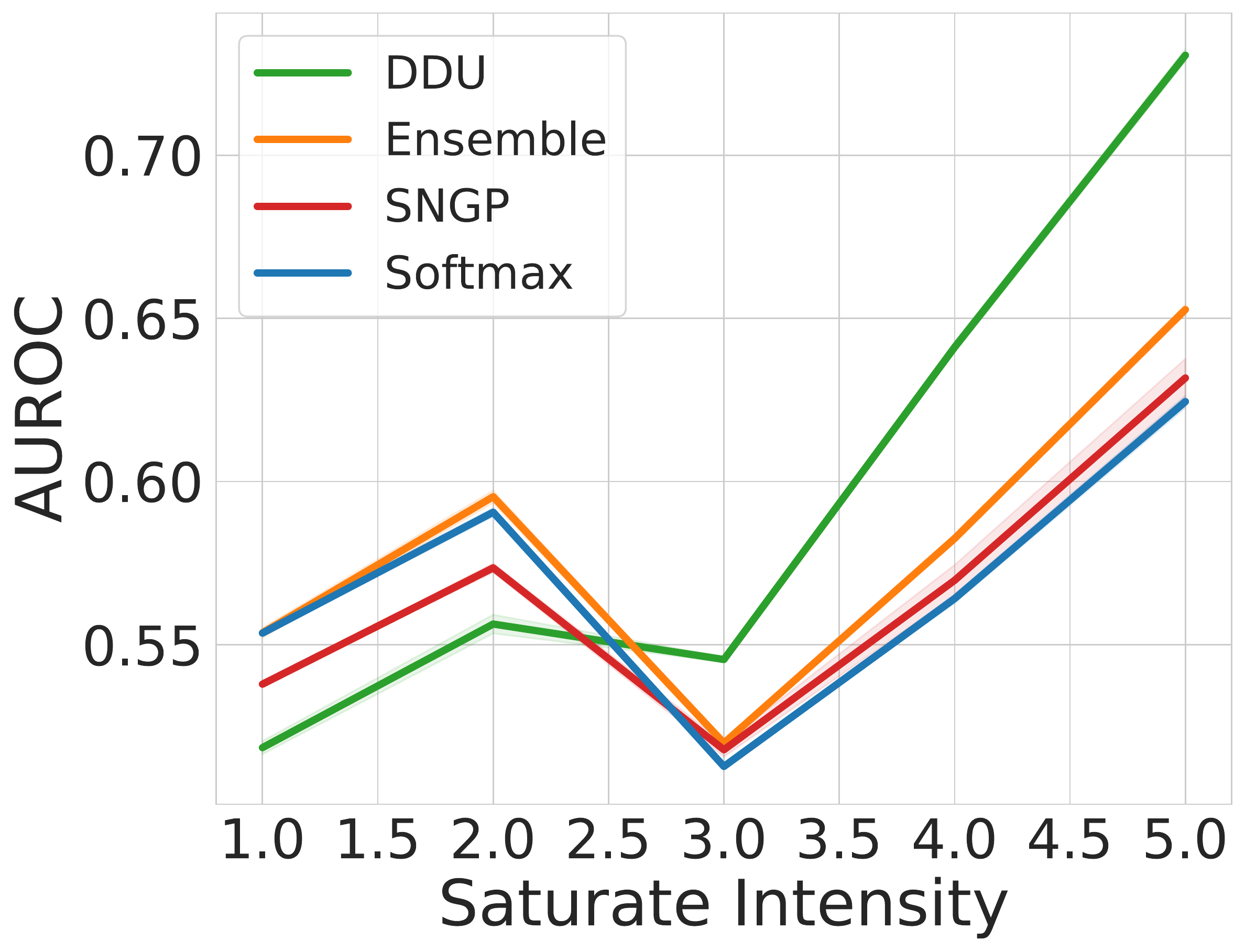}
    \end{subfigure}
    \begin{subfigure}{0.18\linewidth}
        \centering
        \includegraphics[width=\linewidth]{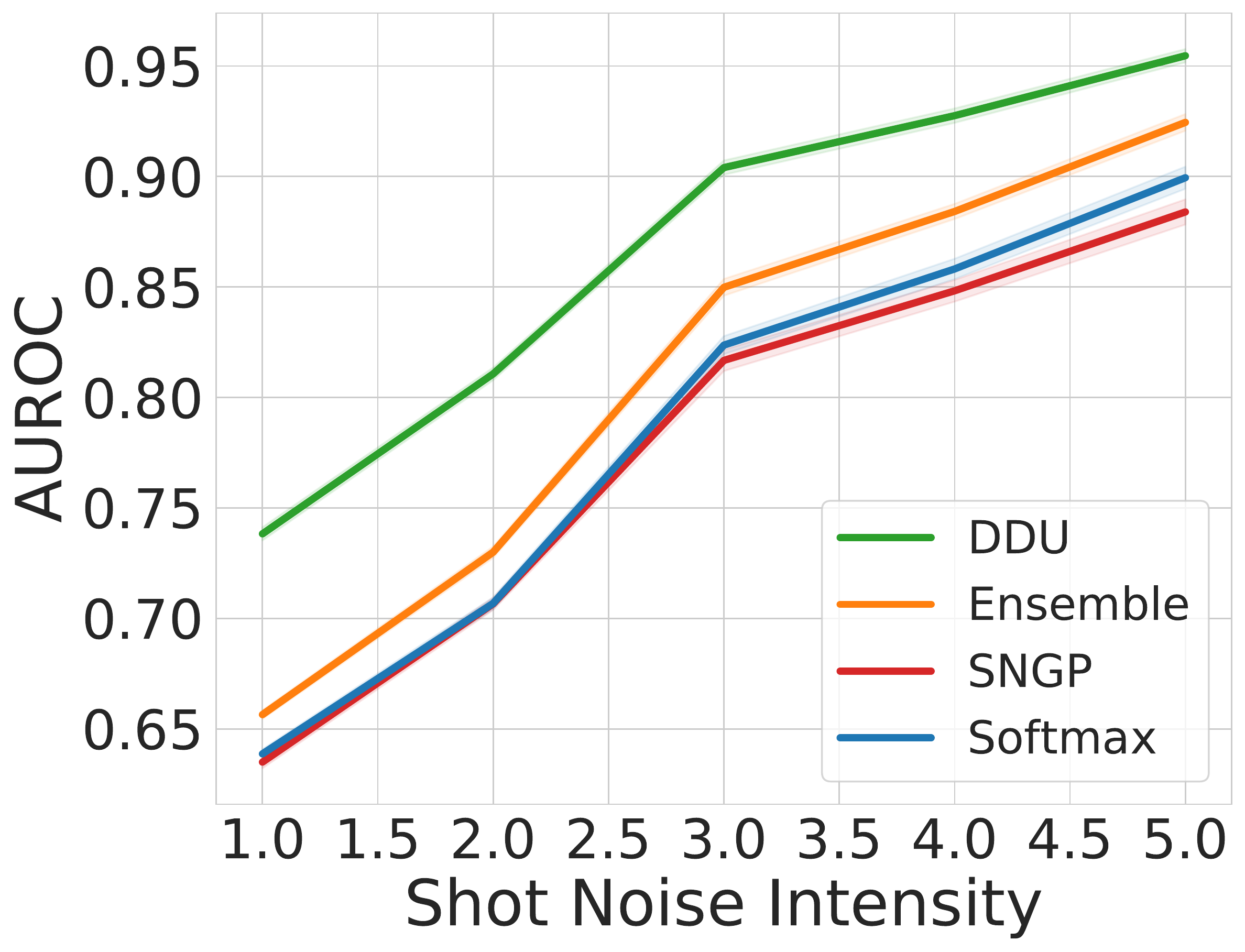}
    \end{subfigure}
    \begin{subfigure}{0.18\linewidth}
        \centering
        \includegraphics[width=\linewidth]{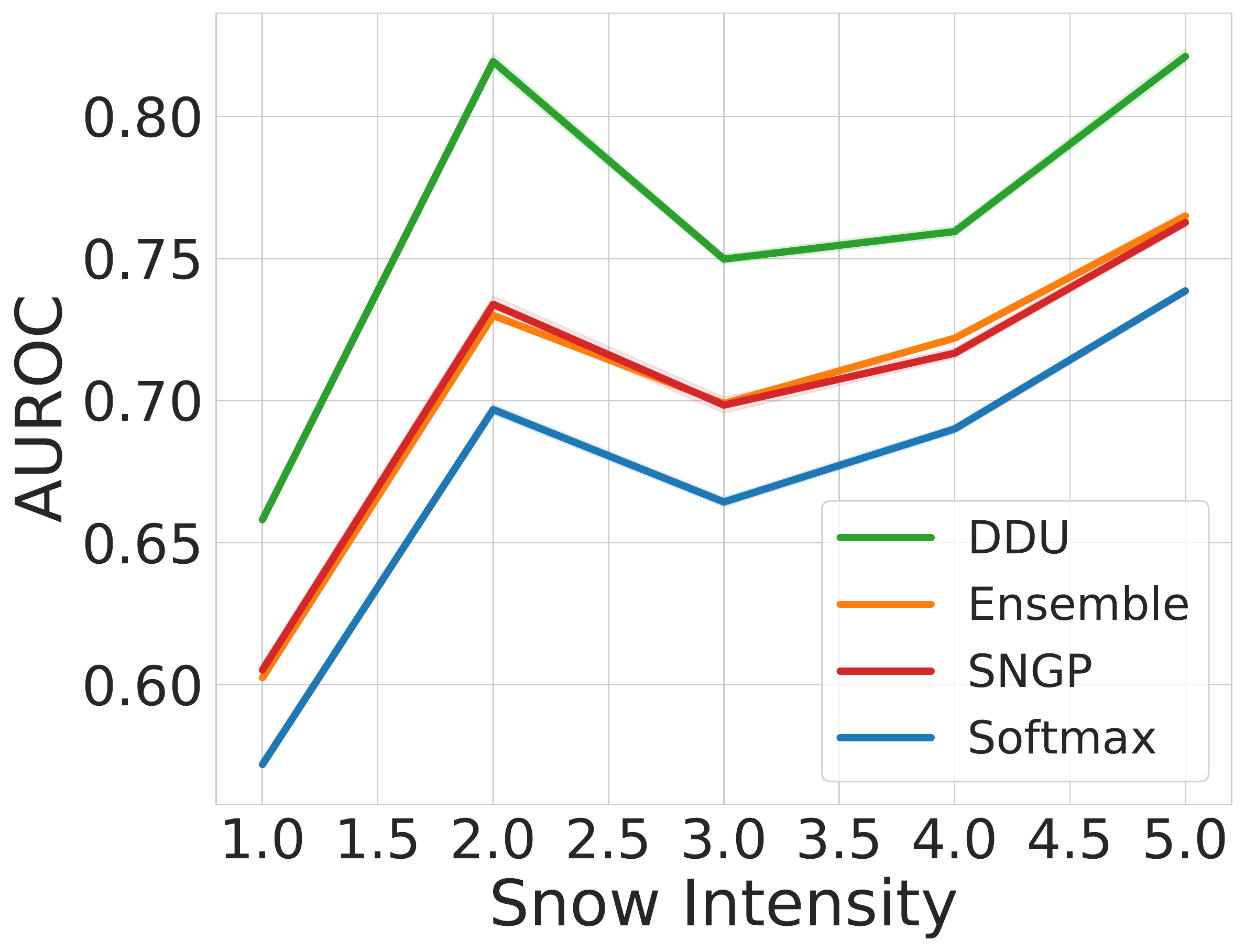}
    \end{subfigure}
    \begin{subfigure}{0.18\linewidth}
        \centering
        \includegraphics[width=\linewidth]{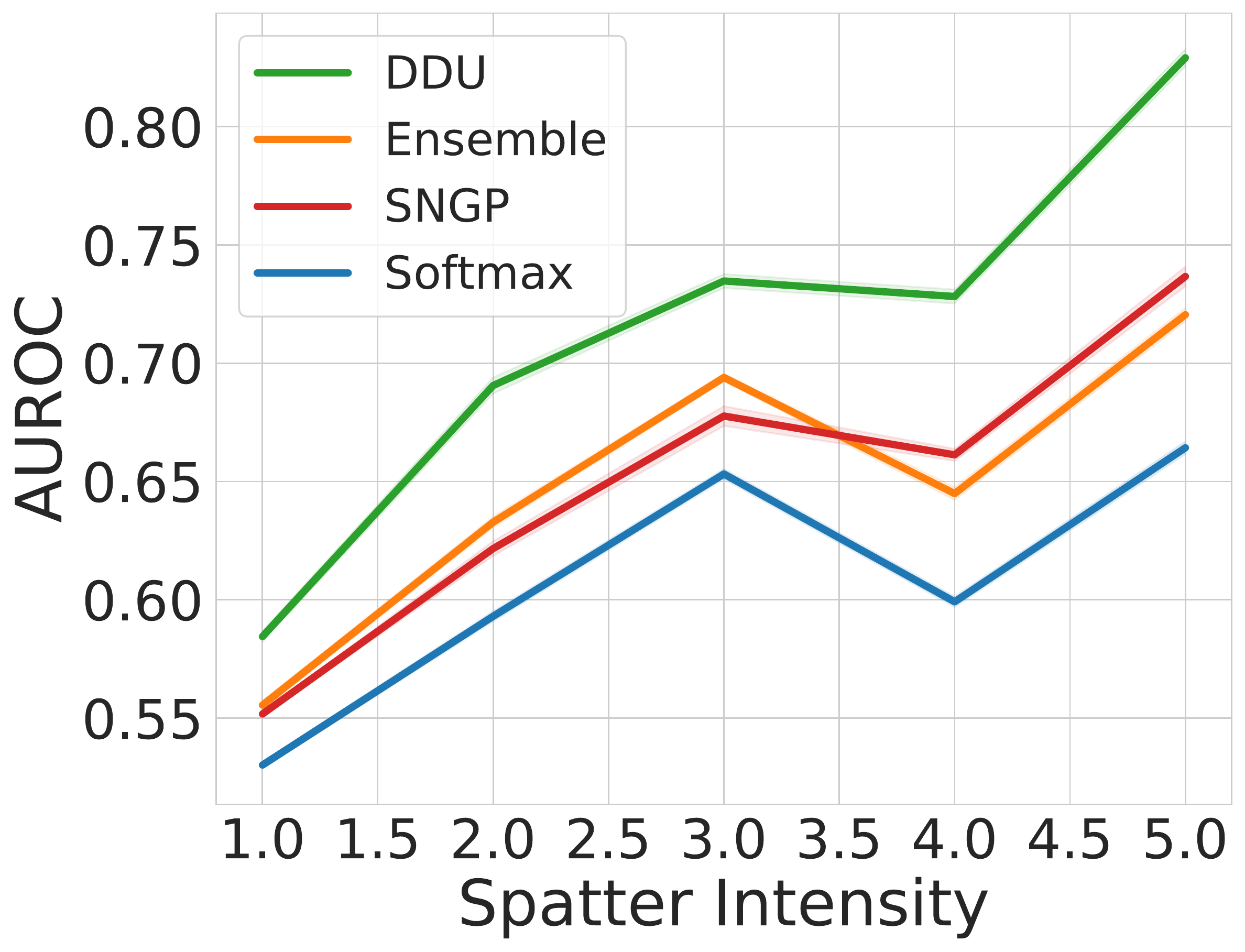}
    \end{subfigure}
    \begin{subfigure}{0.18\linewidth}
        \centering
        \includegraphics[width=\linewidth]{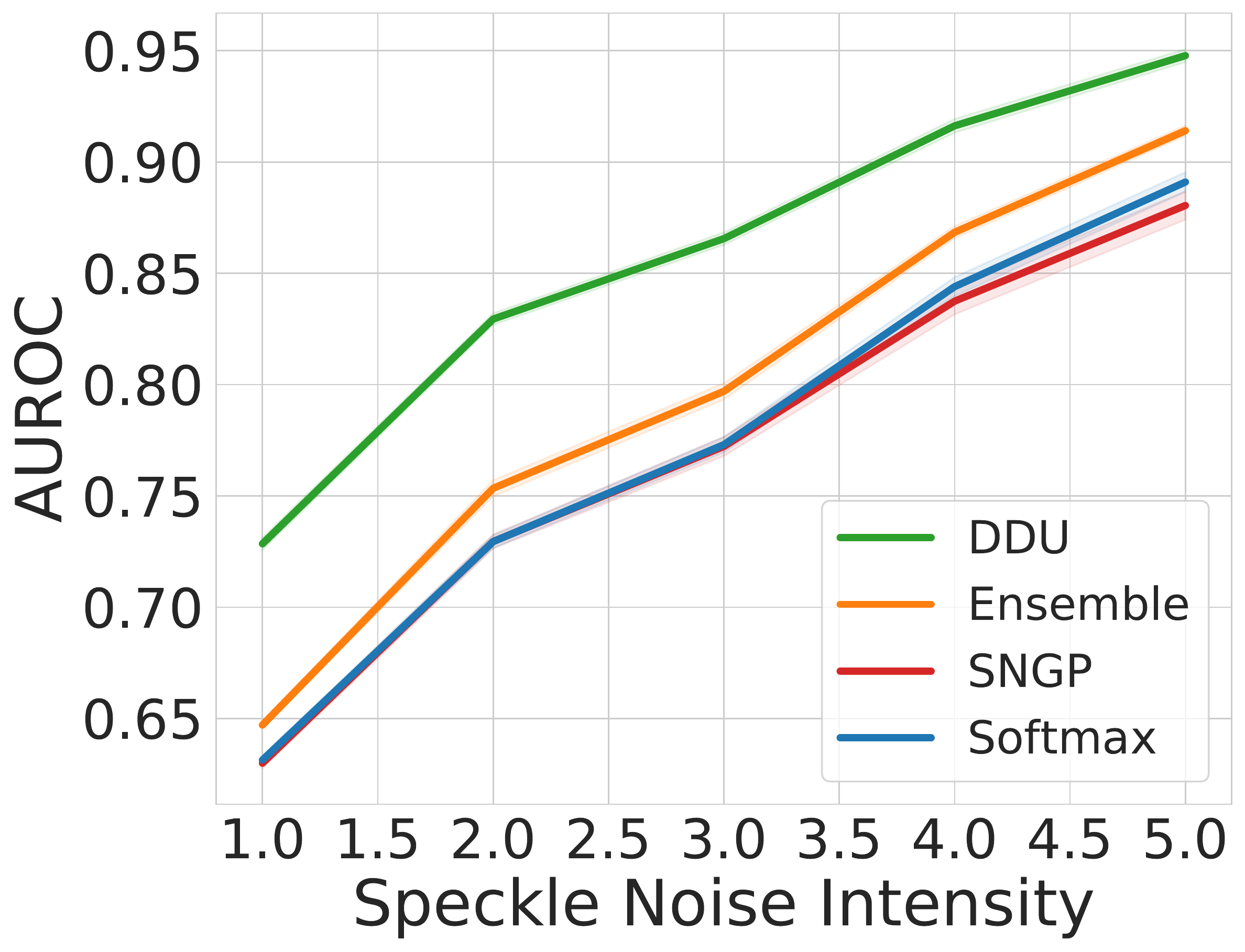}
    \end{subfigure}
    \begin{subfigure}{0.18\linewidth}
        \centering
        \includegraphics[width=\linewidth]{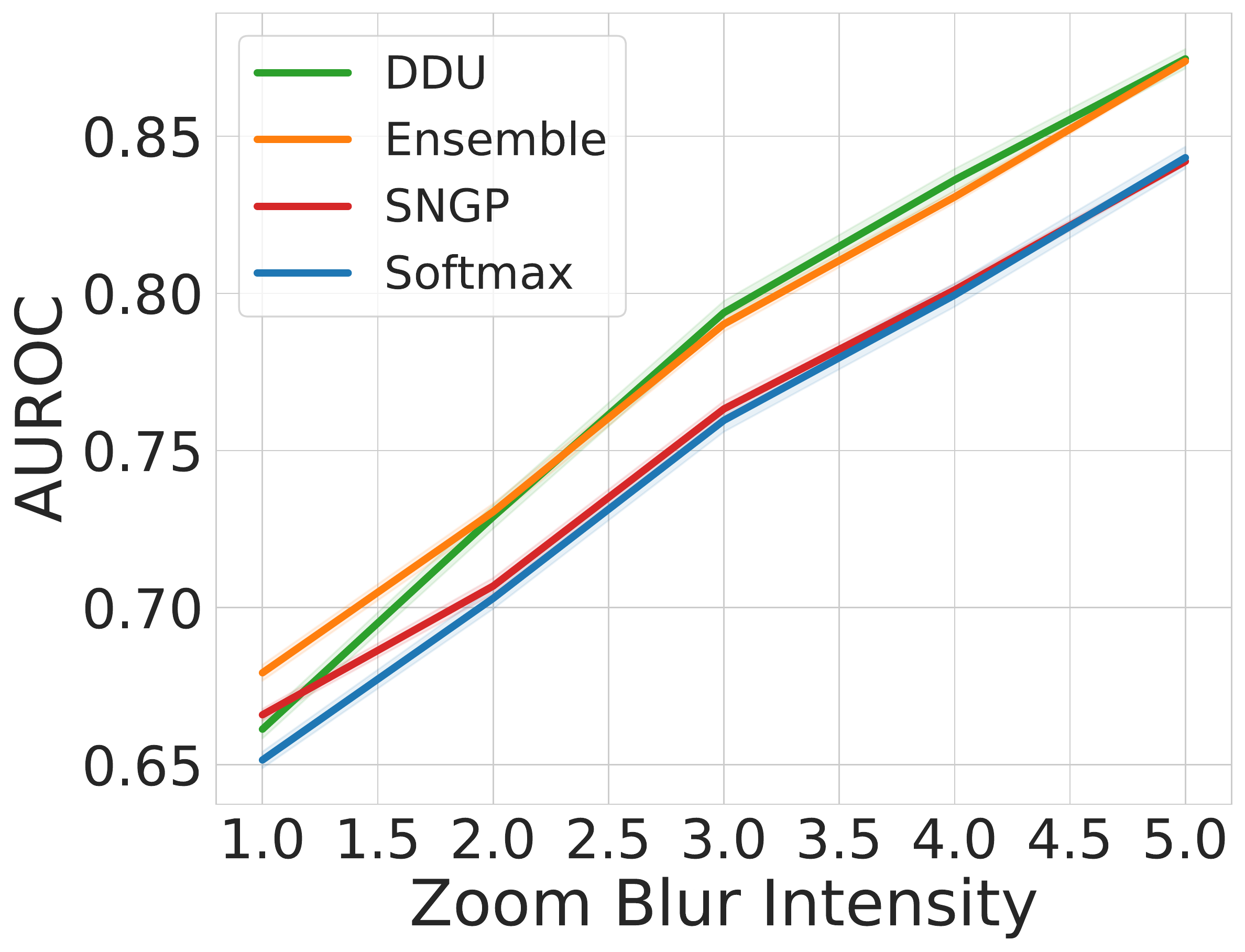}
    \end{subfigure}
    \caption{
    AUROC vs corruption intensity for all corruption types in CIFAR-10-C with ResNet-110 as the architecture and baselines: Softmax Entropy, Ensemble (using Predictive Entropy as uncertainty), SNGP and DDU feature density.
    }
    \label{fig:cifar10_c_results_resnet110}
\end{figure}

\begin{figure}[!t]
    \centering
    \begin{subfigure}{0.18\linewidth}
        \centering
        \includegraphics[width=\linewidth]{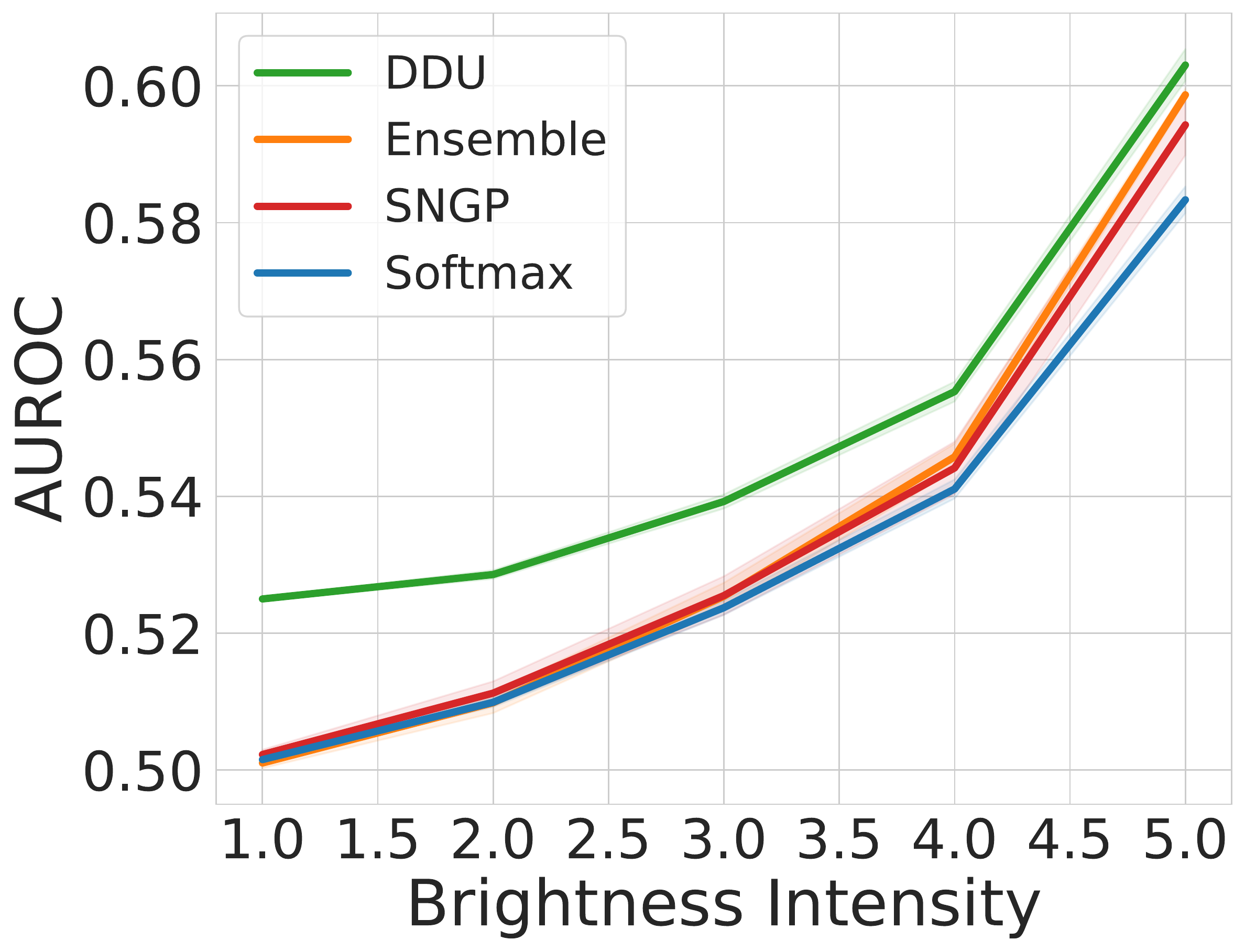}
    \end{subfigure}
    \begin{subfigure}{0.18\linewidth}
        \centering
        \includegraphics[width=\linewidth]{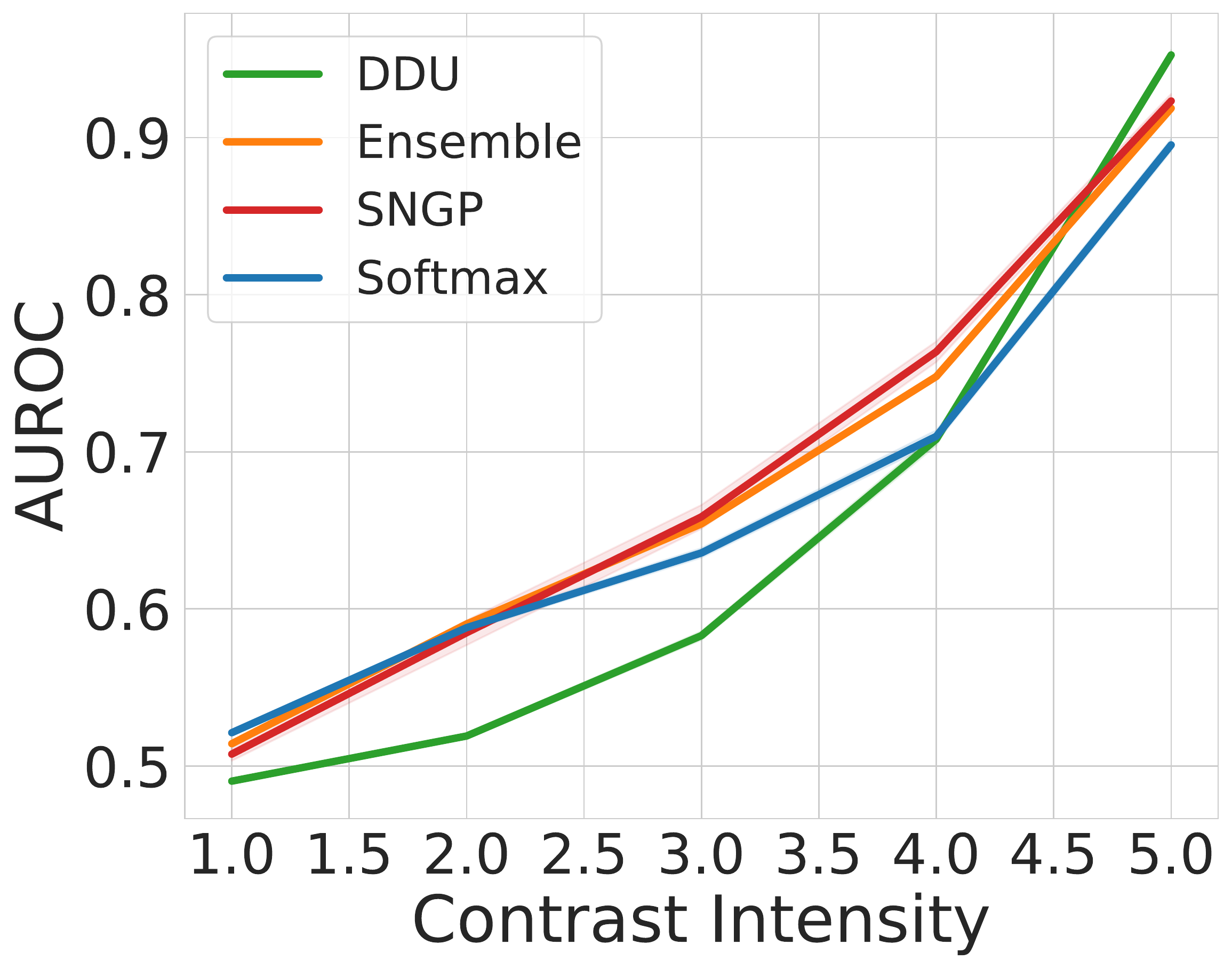}
    \end{subfigure} 
    \begin{subfigure}{0.18\linewidth}
        \centering
        \includegraphics[width=\linewidth]{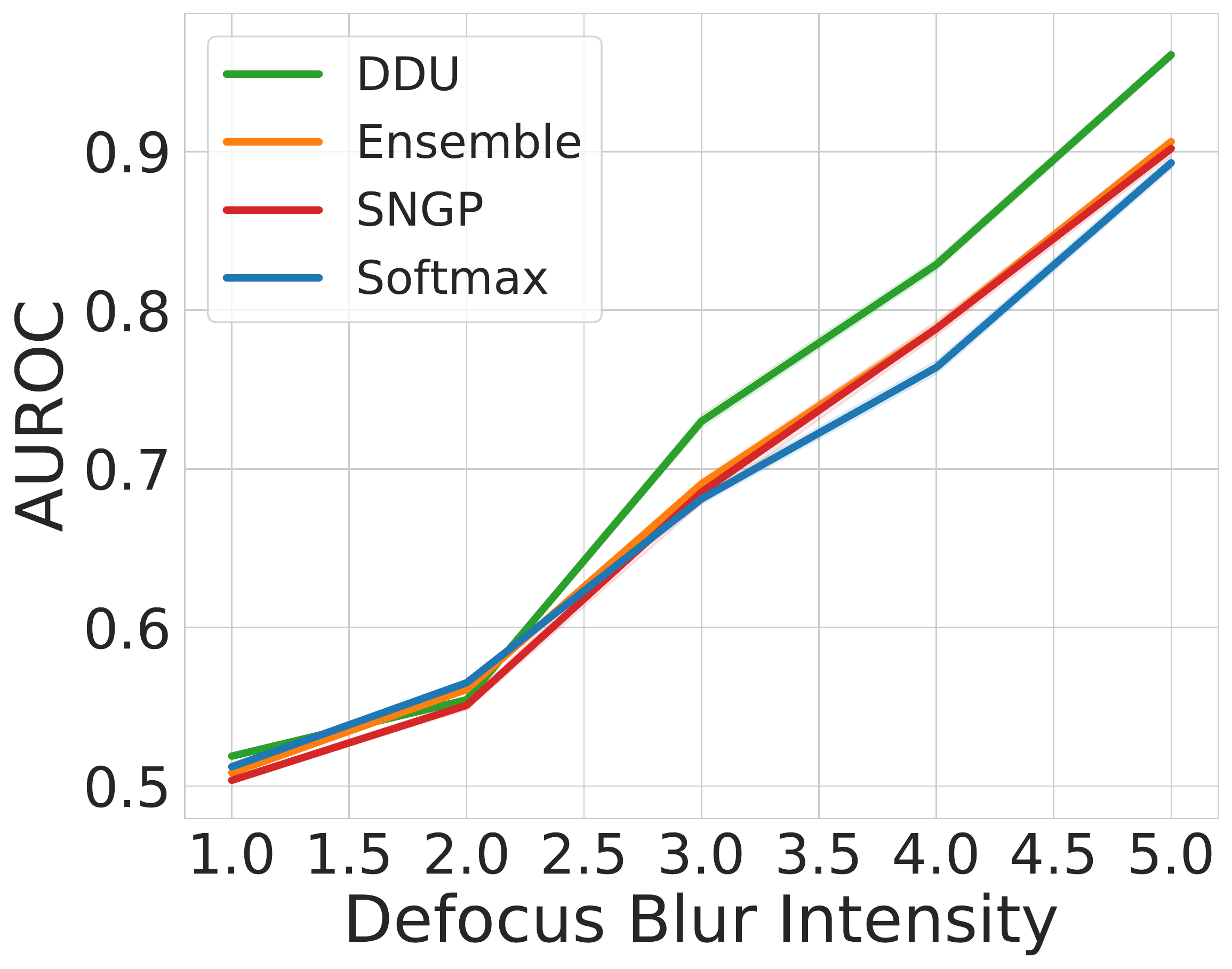}
    \end{subfigure} 
    \begin{subfigure}{0.18\linewidth}
        \centering
        \includegraphics[width=\linewidth]{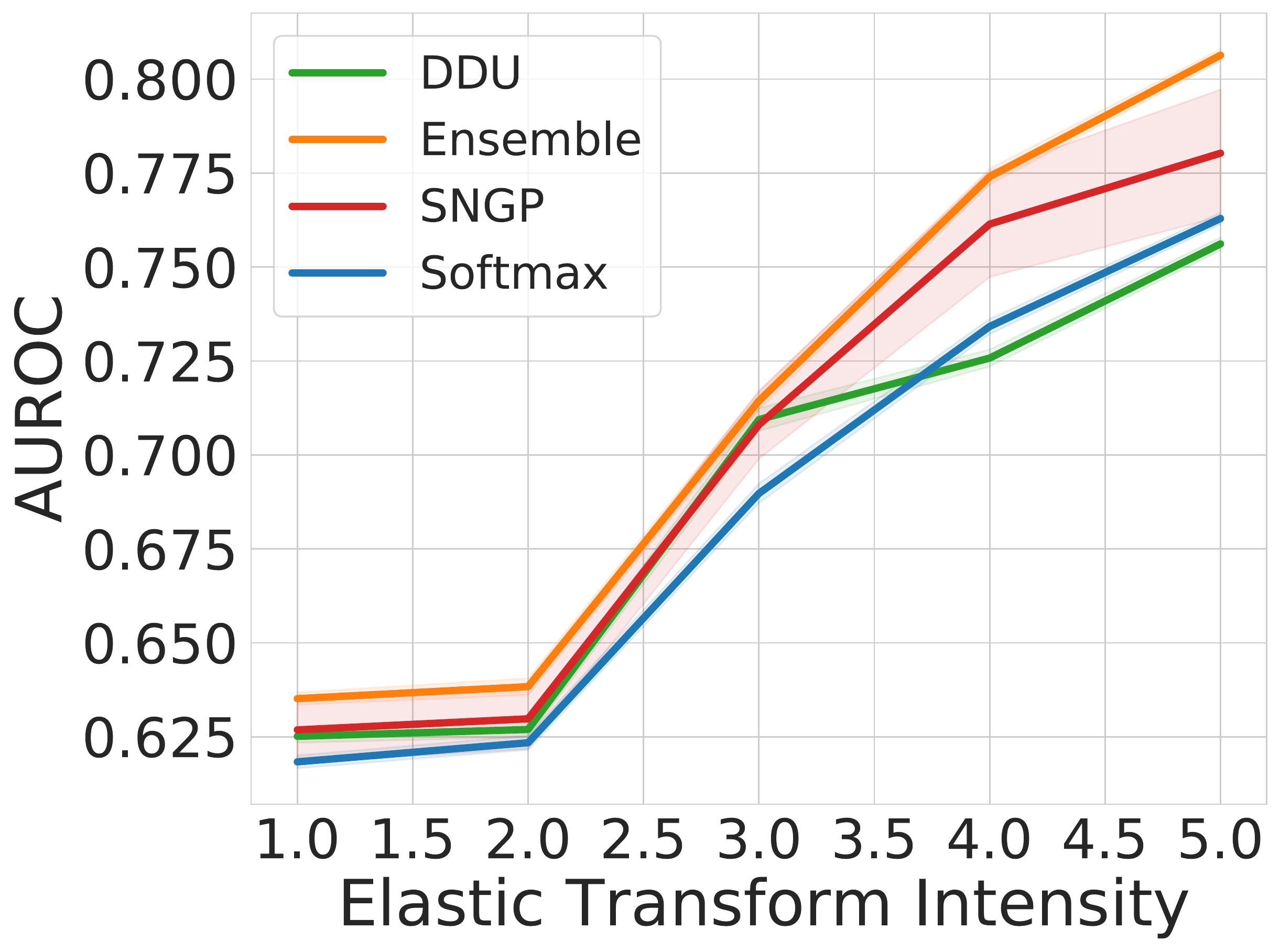}
    \end{subfigure}
    \begin{subfigure}{0.18\linewidth}
        \centering
        \includegraphics[width=\linewidth]{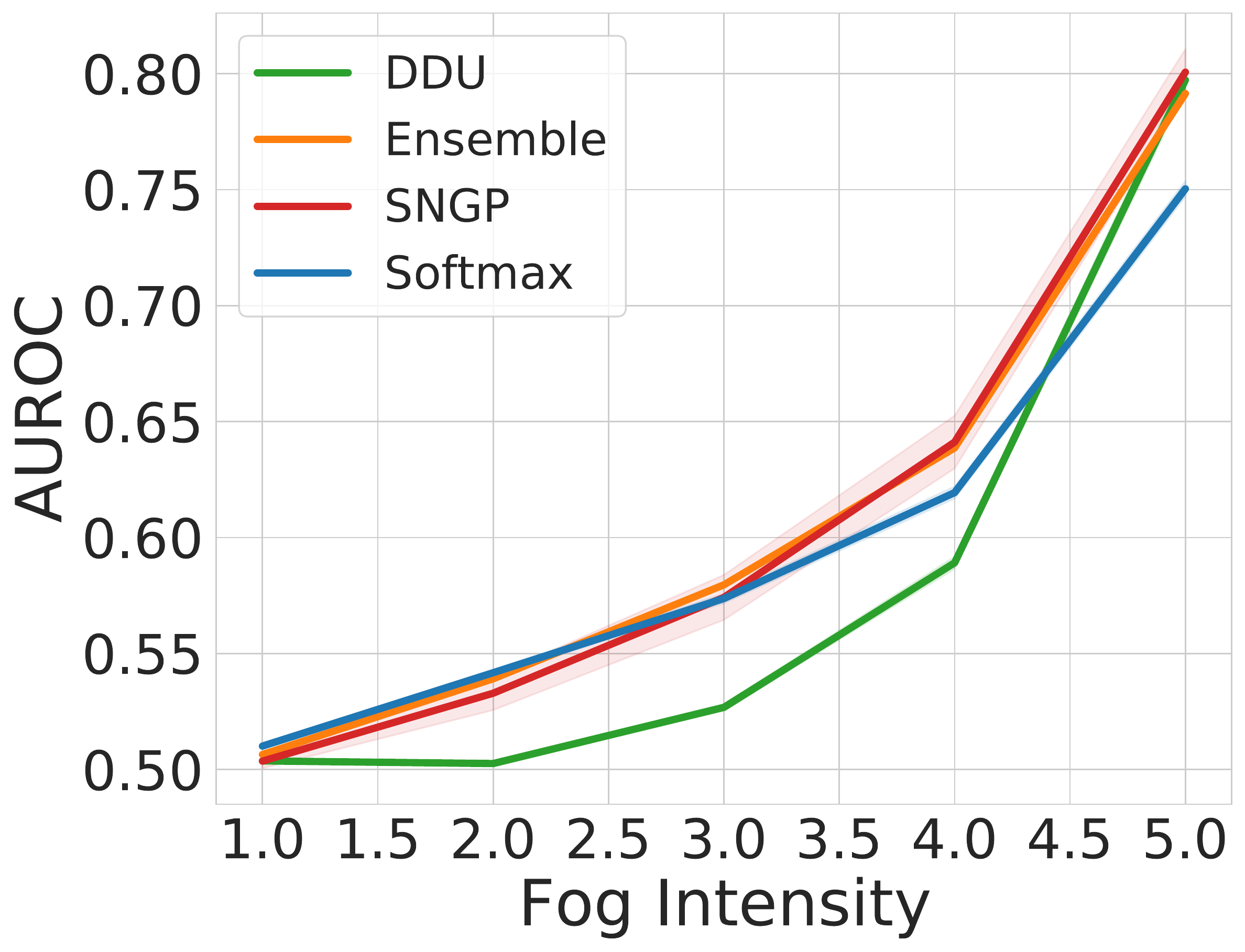}
    \end{subfigure}
    \begin{subfigure}{0.18\linewidth}
        \centering
        \includegraphics[width=\linewidth]{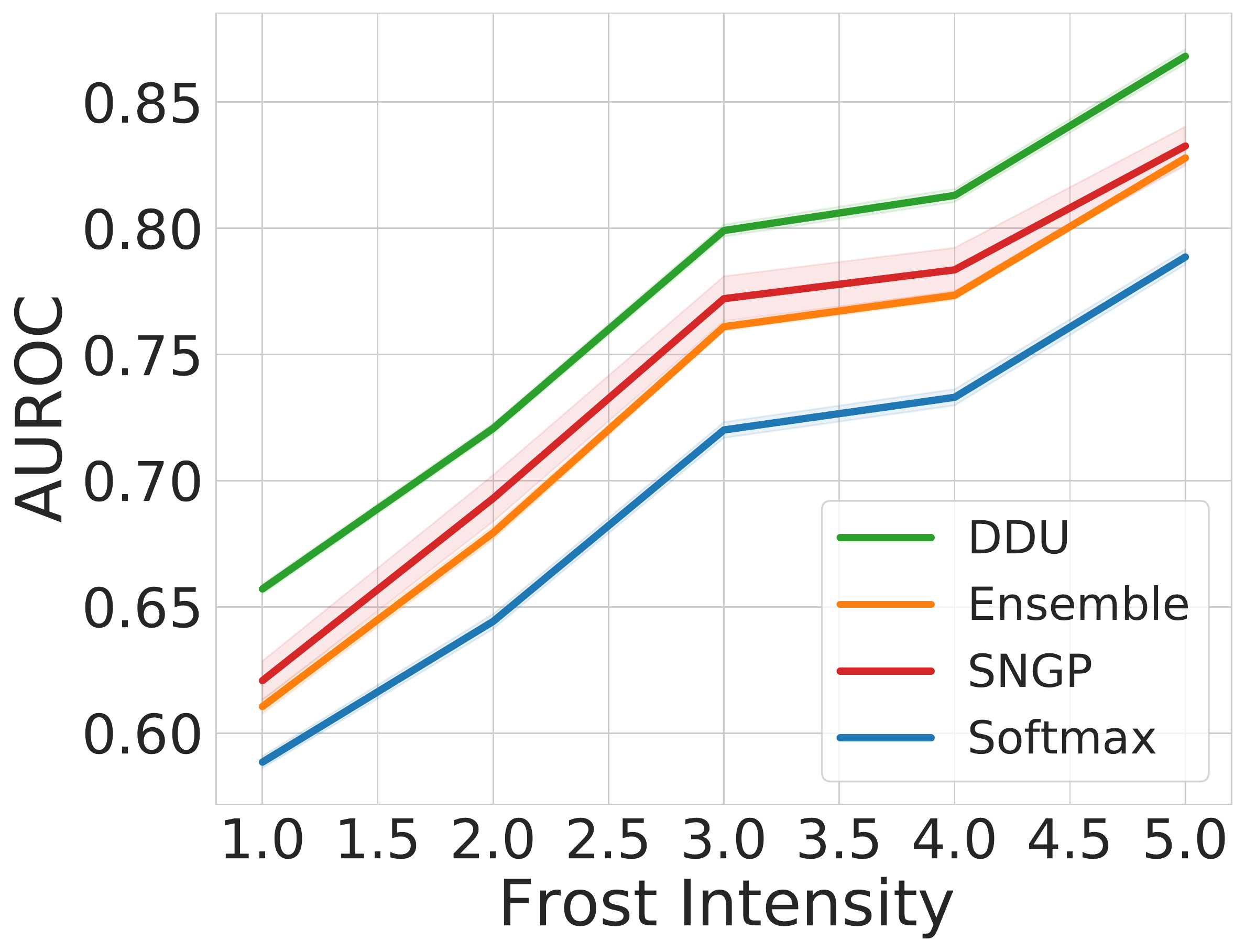}
    \end{subfigure}
    \begin{subfigure}{0.18\linewidth}
        \centering
        \includegraphics[width=\linewidth]{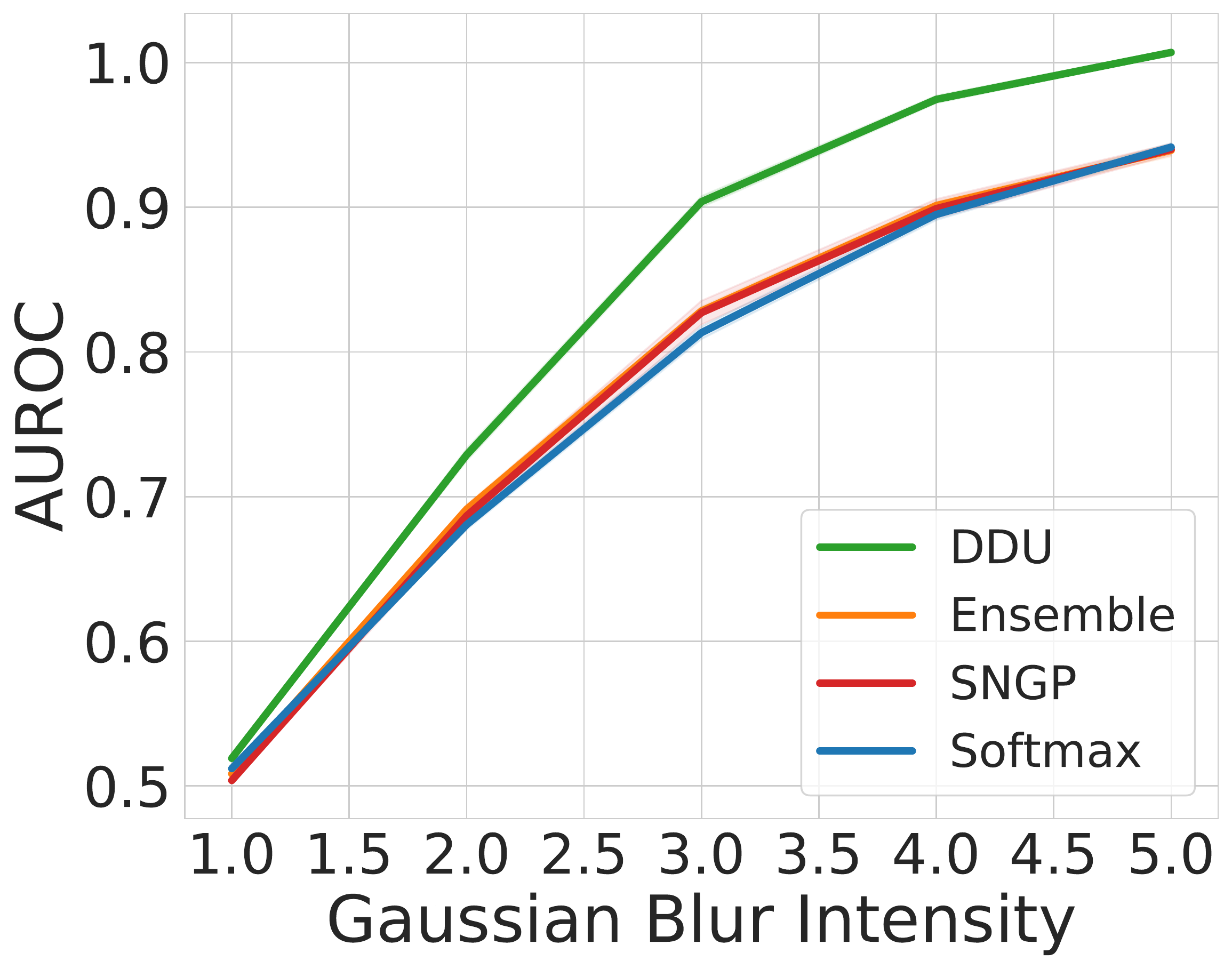}
    \end{subfigure}
    \begin{subfigure}{0.18\linewidth}
        \centering
        \includegraphics[width=\linewidth]{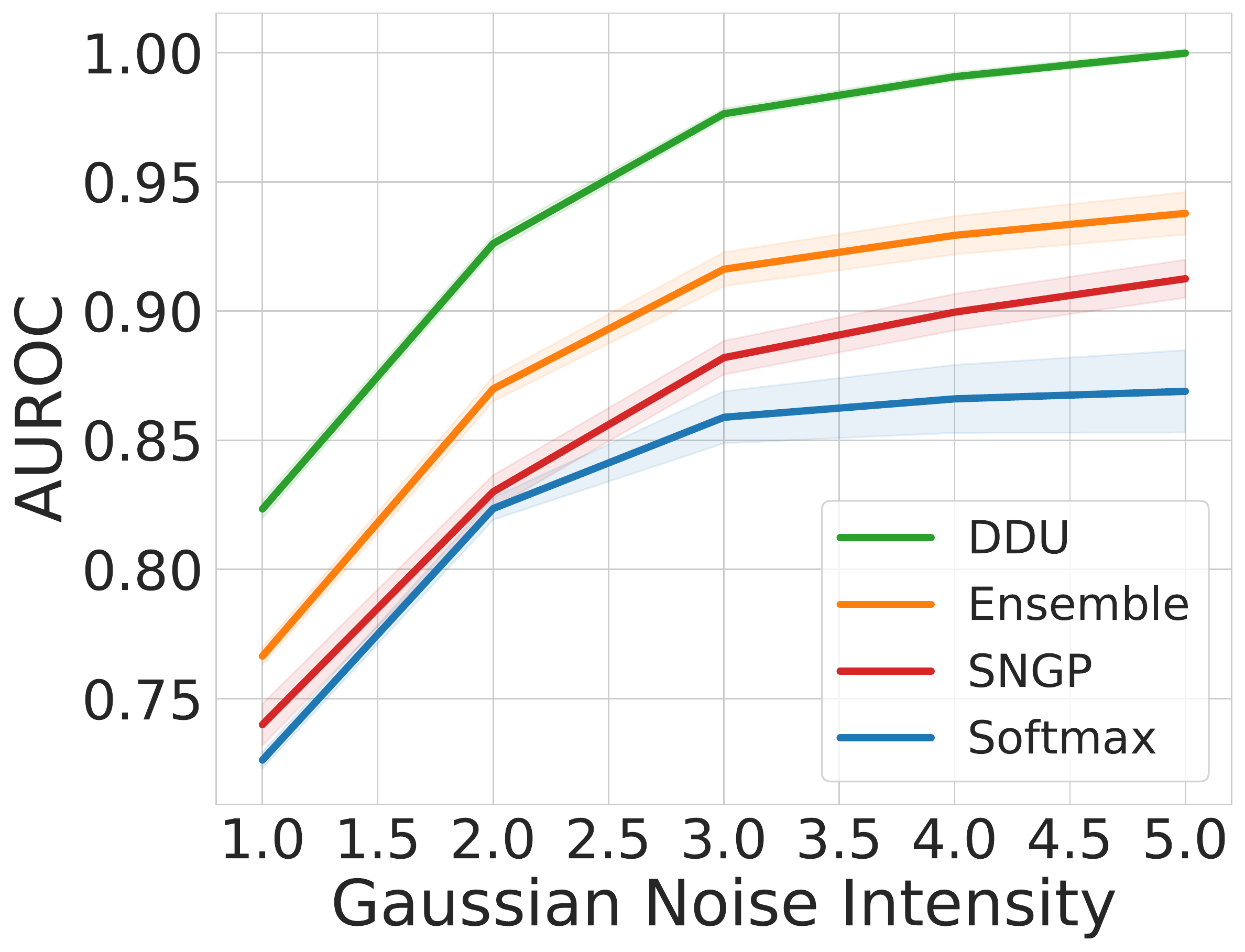}
    \end{subfigure}
    \begin{subfigure}{0.18\linewidth}
        \centering
        \includegraphics[width=\linewidth]{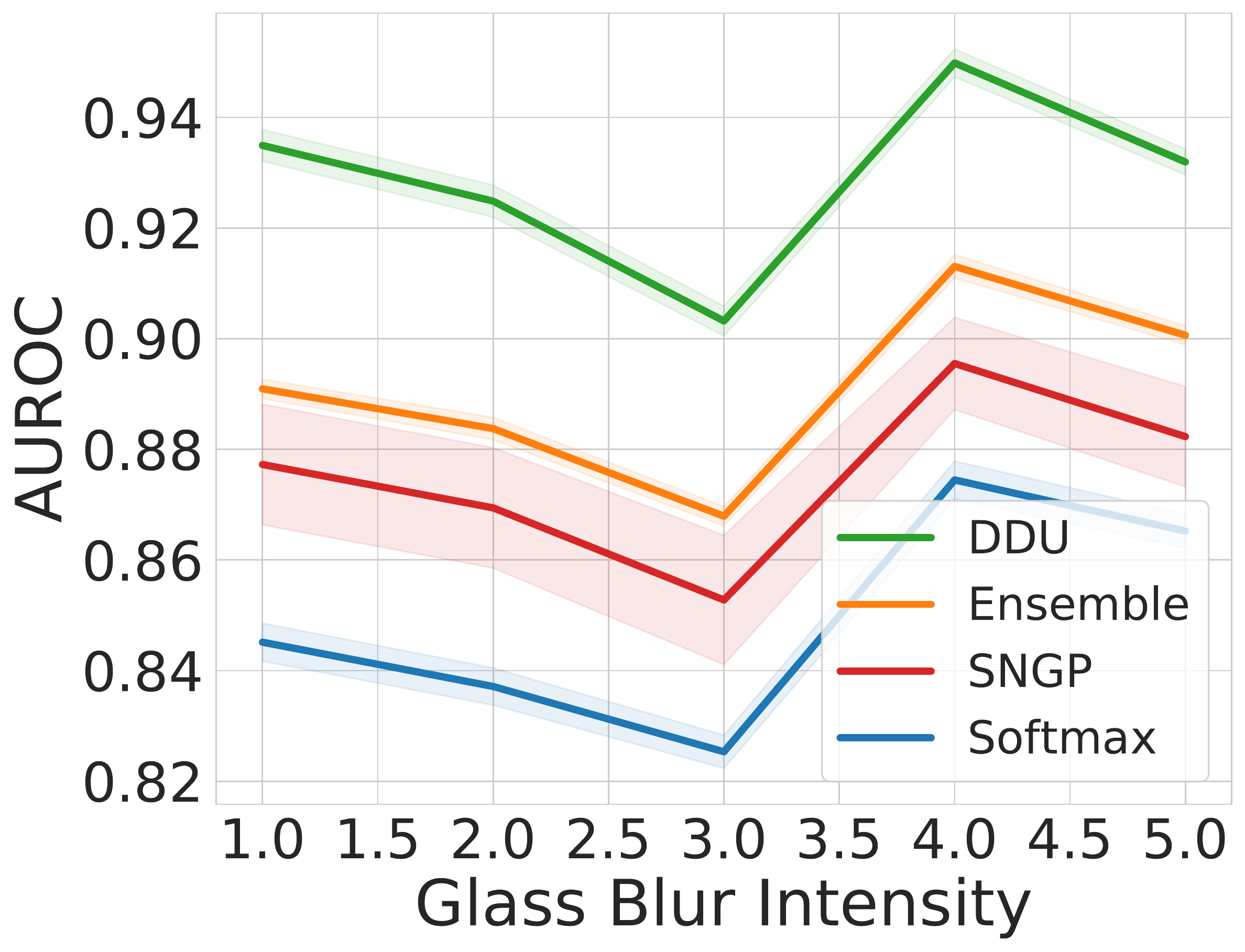}
    \end{subfigure}
    \begin{subfigure}{0.18\linewidth}
        \centering
        \includegraphics[width=\linewidth]{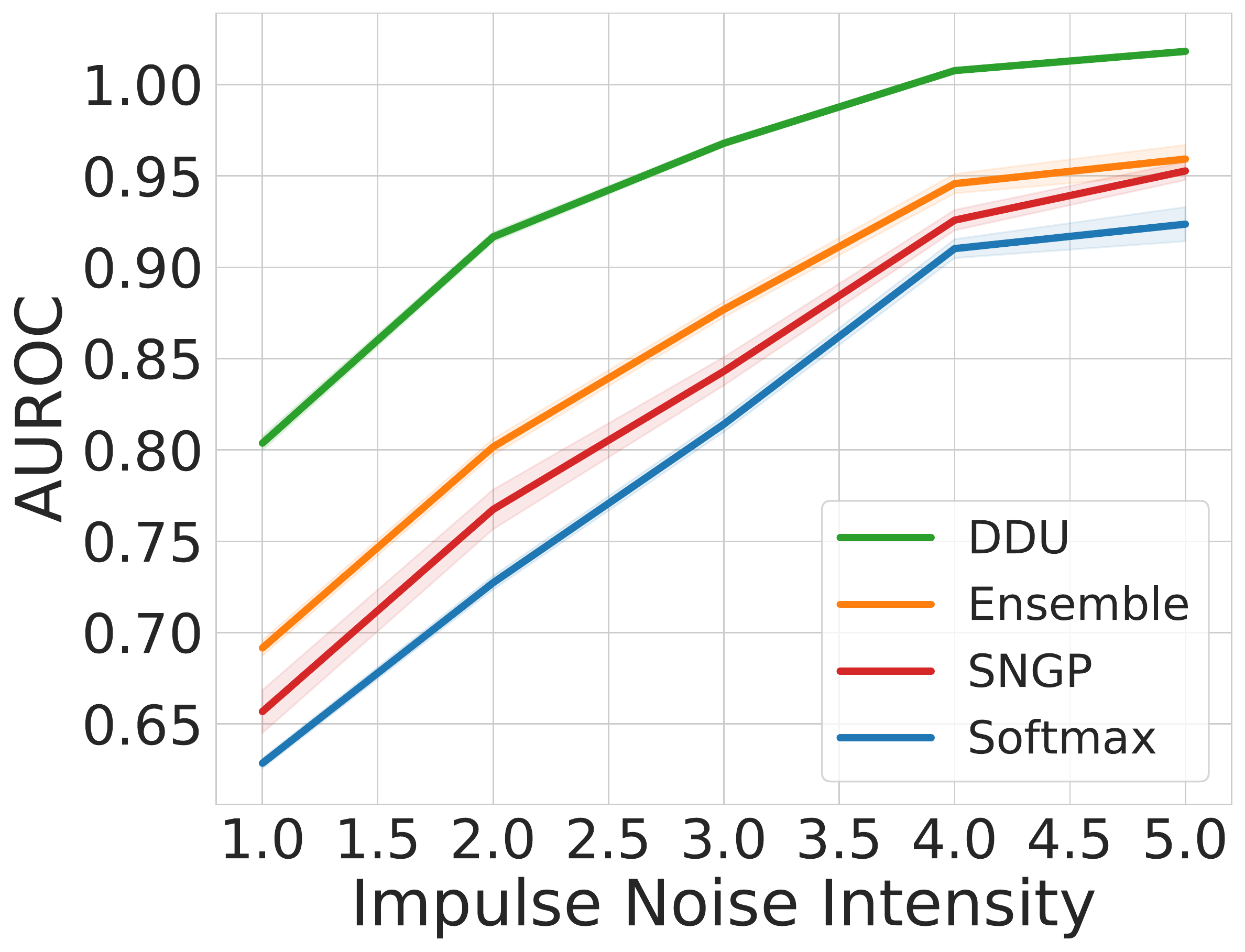}
    \end{subfigure}
    \begin{subfigure}{0.18\linewidth}
        \centering
        \includegraphics[width=\linewidth]{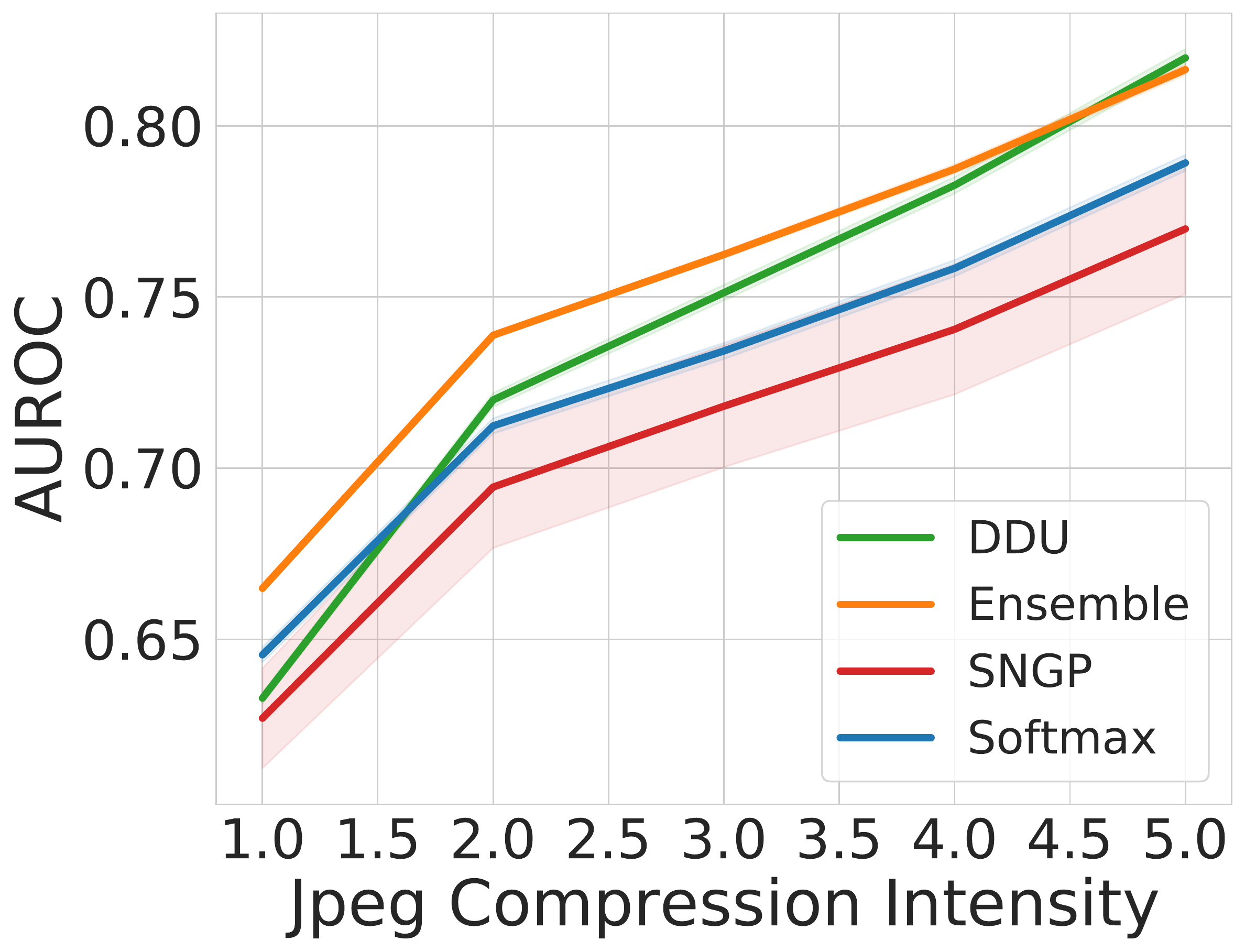}
    \end{subfigure}
    \begin{subfigure}{0.18\linewidth}
        \centering
        \includegraphics[width=\linewidth]{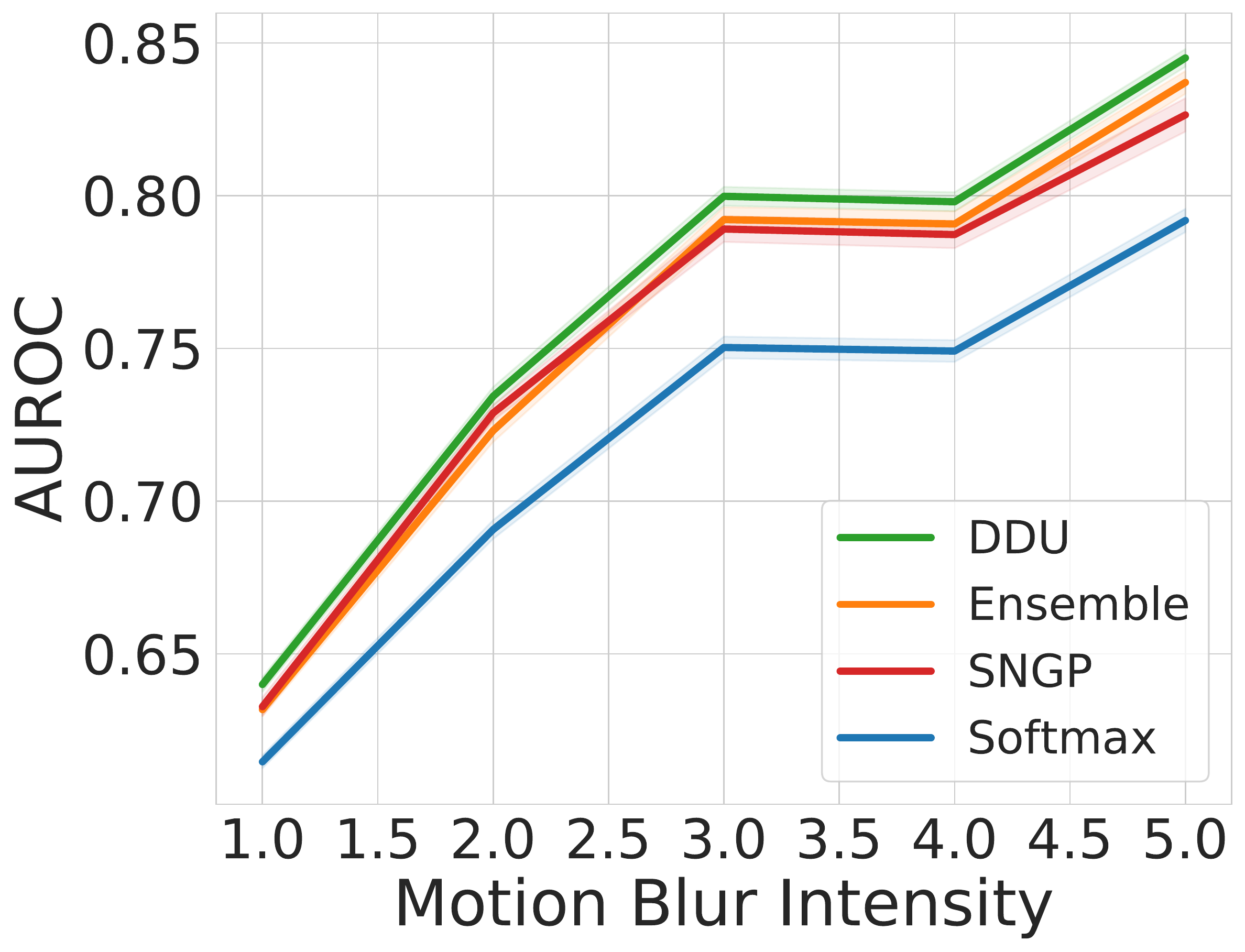}
    \end{subfigure}
    \begin{subfigure}{0.18\linewidth}
        \centering
        \includegraphics[width=\linewidth]{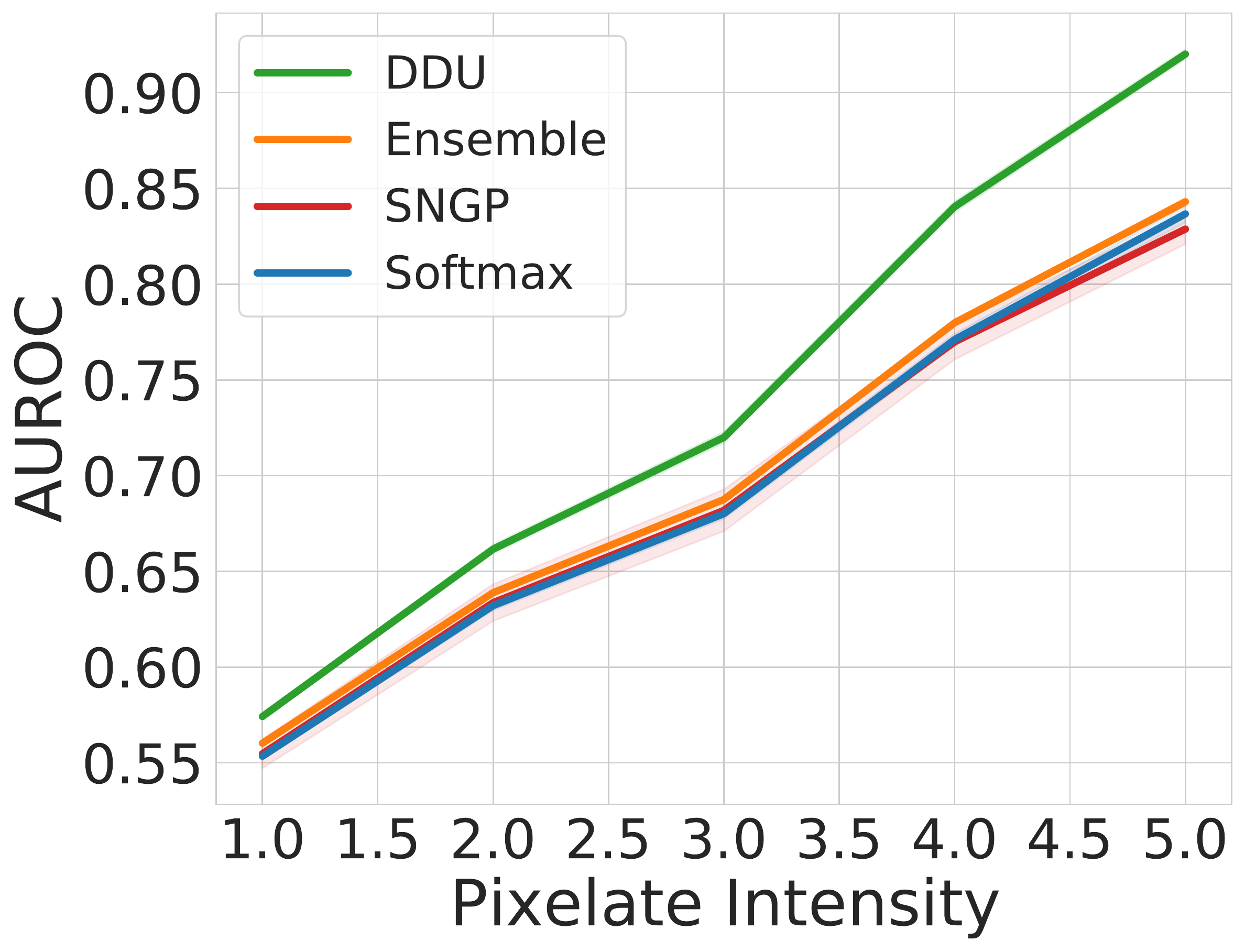}
    \end{subfigure}
    \begin{subfigure}{0.18\linewidth}
        \centering
        \includegraphics[width=\linewidth]{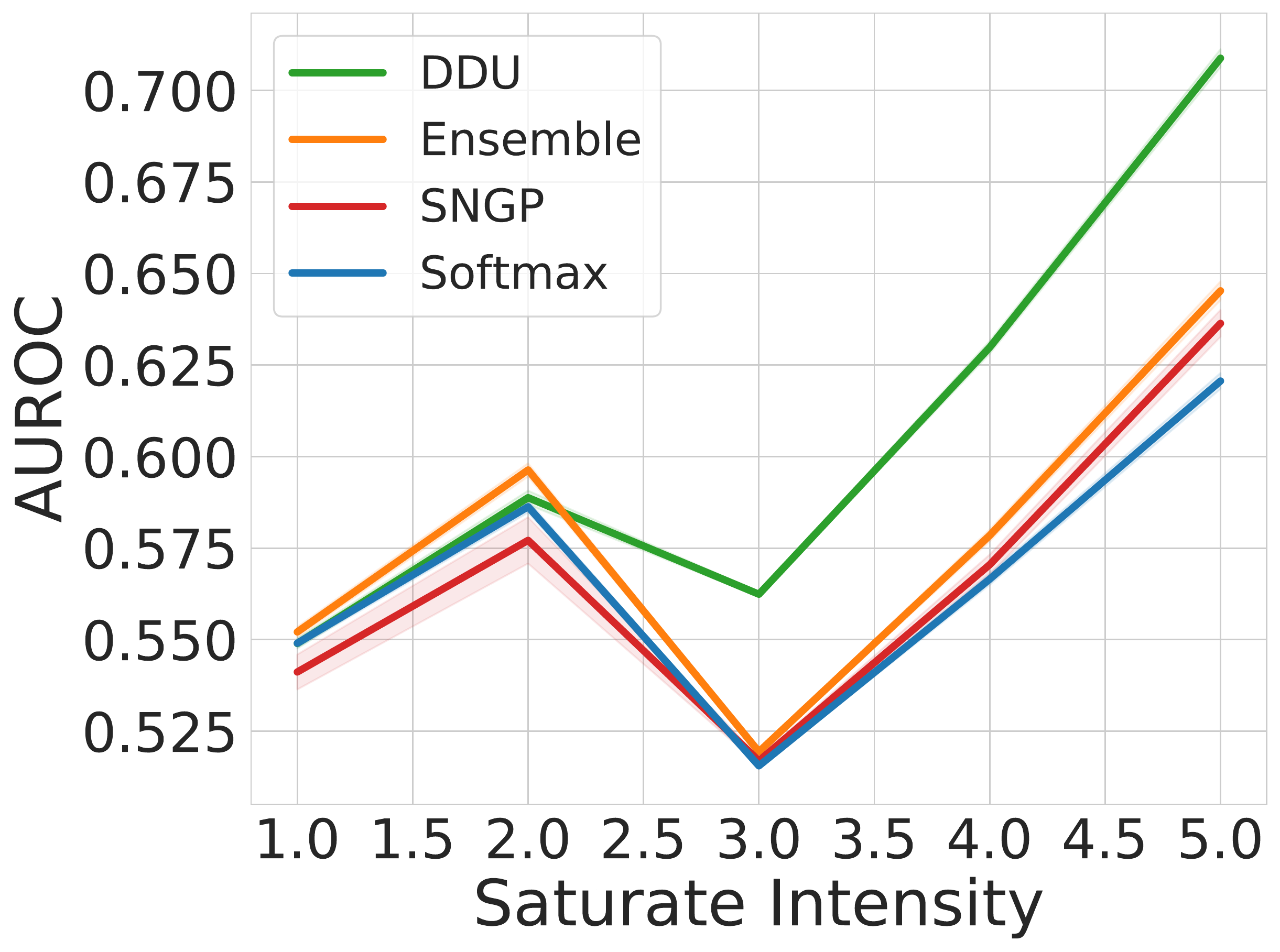}
    \end{subfigure}
    \begin{subfigure}{0.18\linewidth}
        \centering
        \includegraphics[width=\linewidth]{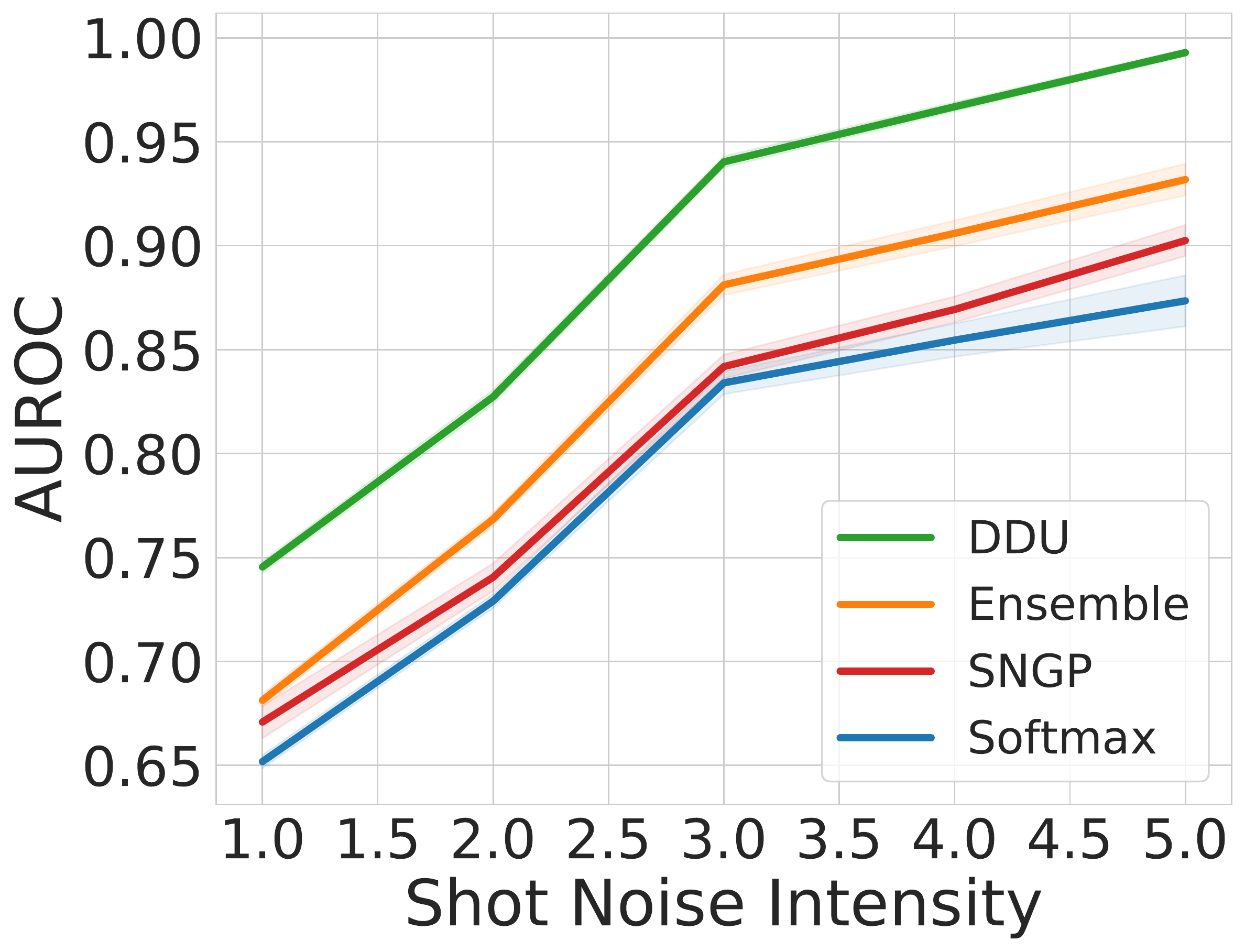}
    \end{subfigure}
    \begin{subfigure}{0.18\linewidth}
        \centering
        \includegraphics[width=\linewidth]{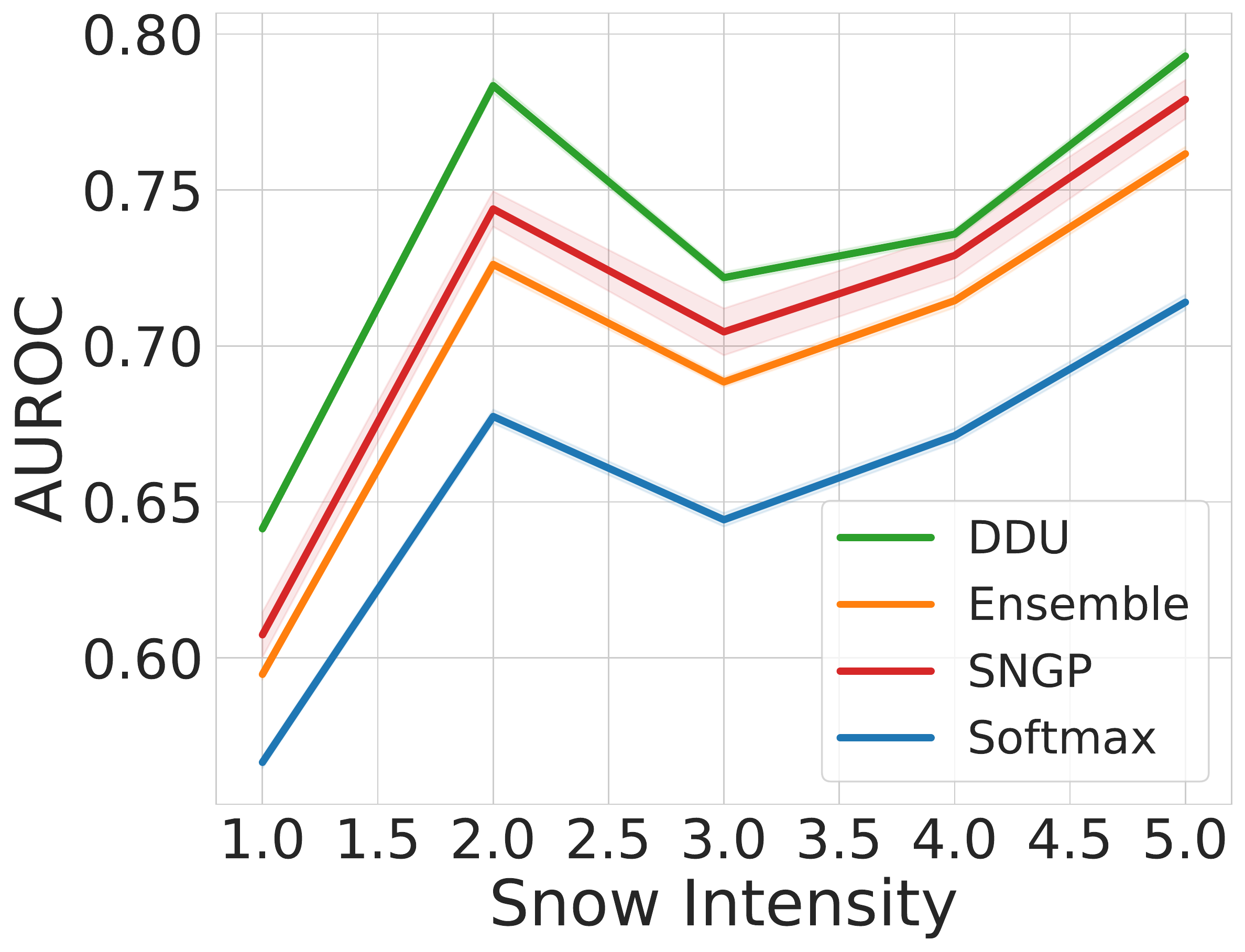}
    \end{subfigure}
    \begin{subfigure}{0.18\linewidth}
        \centering
        \includegraphics[width=\linewidth]{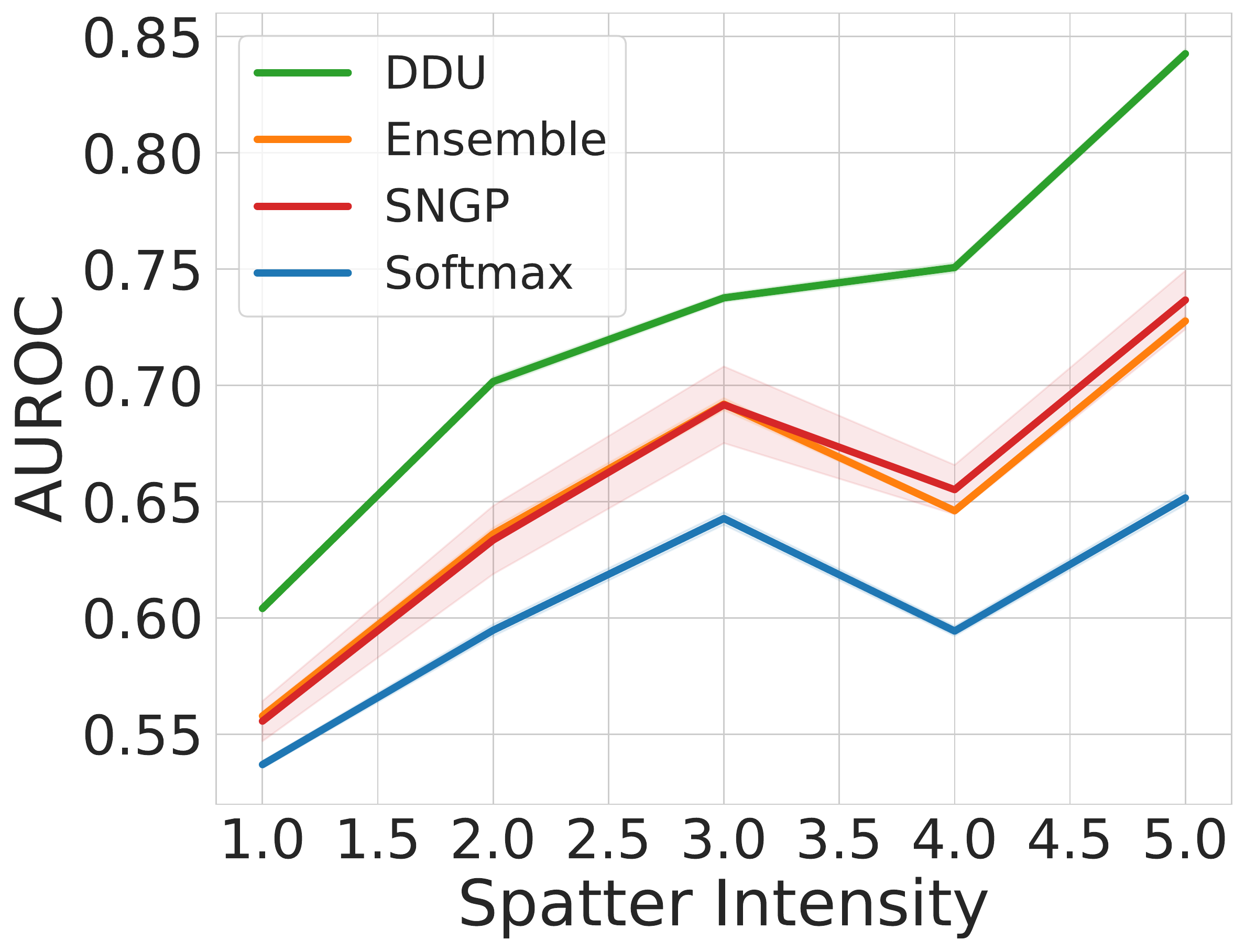}
    \end{subfigure}
    \begin{subfigure}{0.18\linewidth}
        \centering
        \includegraphics[width=\linewidth]{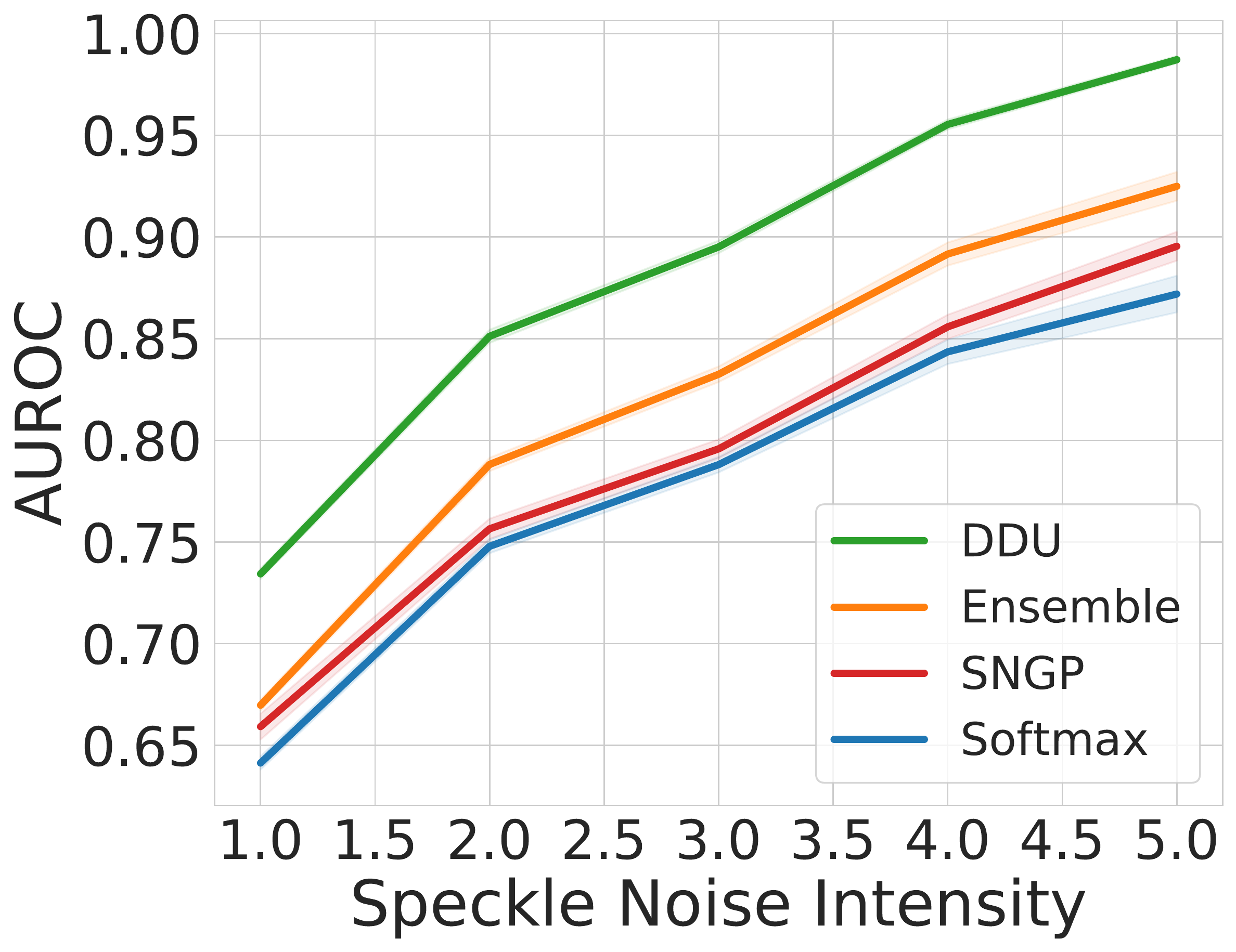}
    \end{subfigure}
    \begin{subfigure}{0.18\linewidth}
        \centering
        \includegraphics[width=\linewidth]{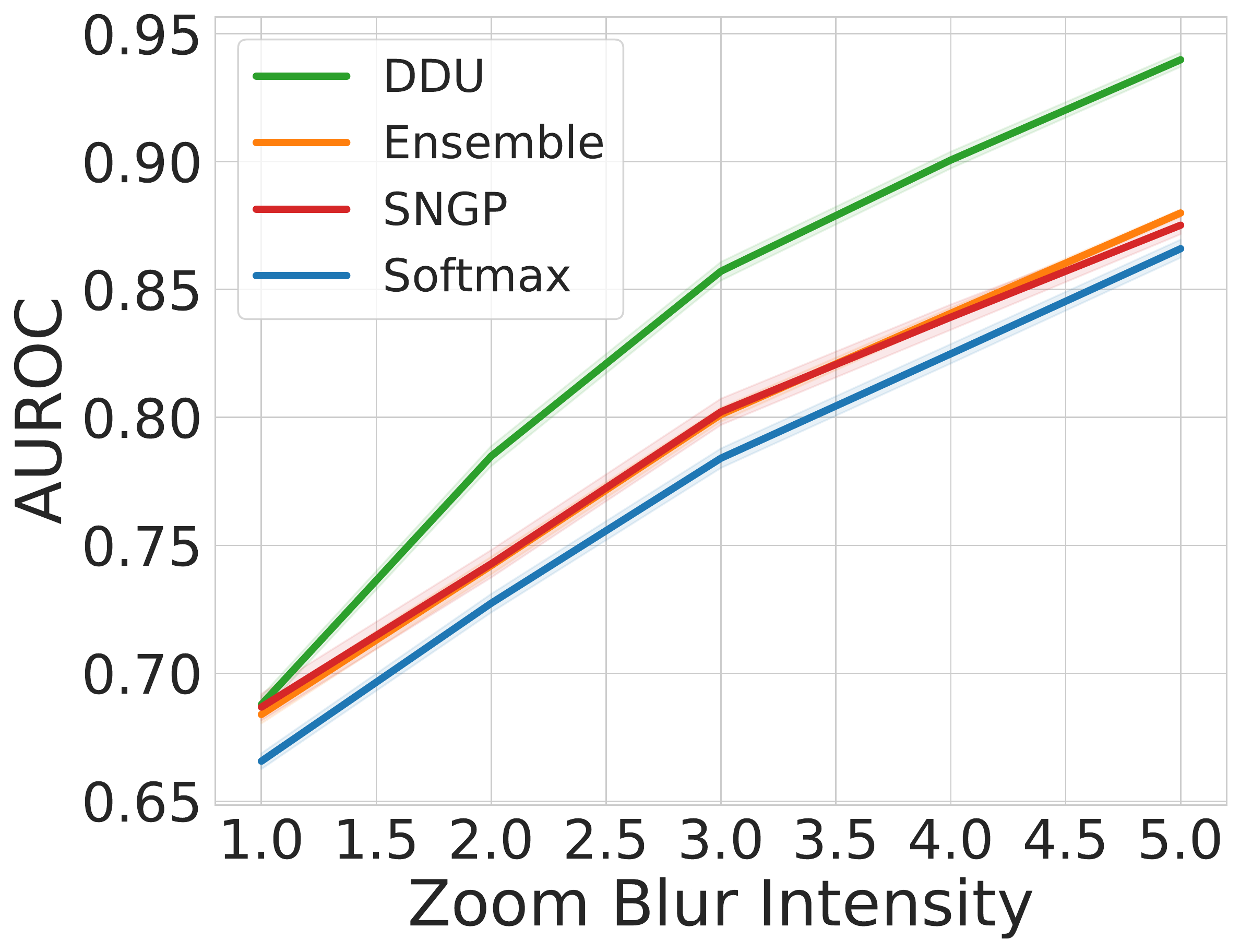}
    \end{subfigure}
    \caption{
    AUROC vs corruption intensity for all corruption types in CIFAR-10-C with DenseNet-121 as the architecture and baselines: Softmax Entropy, Ensemble (using Predictive Entropy as uncertainty), SNGP and DDU feature density.
    }
    \label{fig:cifar10_c_results_densenet121}
\end{figure}

\section{Additional Results}
\label{app:additional_exp_results}

In this section, we provide details of additional results on the OoD detection task using CIFAR-10 vs SVHN/CIFAR-100/Tiny-ImageNet/CIFAR-10-C and CIFAR-100 vs SVHN/Tiny-ImageNet for ResNet-50, ResNet-110 and DenseNet-121 architectures. We present results on ResNet-50, ResNet-110 and DenseNet-121 for CIFAR-10 vs SVHN/CIFAR-100/Tiny-ImageNet and CIFAR-100 vs SVHN/Tiny-ImageNet in \Cref{table:ood_resnet50}, \Cref{table:ood_resnet110} and \Cref{table:ood_densenet121} respectively. We also present results on individual corruption types for CIFAR-10-C for Wide-ResNet-28-10, ResNet-50, ResNet-110 and DenseNet-121 in \Cref{fig:cifar10_c_results_wide_resnet}, \Cref{fig:cifar10_c_results_resnet50}, \Cref{fig:cifar10_c_results_resnet110} and \Cref{fig:cifar10_c_results_densenet121} respectively.

Finally, we provide results for various ablations on DDU. As mentioned in \S\ref{section:methods}, DDU consists of a deterministic softmax model trained with appropriate inductive biases. It uses softmax entropy to quantify aleatoric uncertainty and feature-space density to quantify epistemic uncertainty. In the ablation, we try to experimentally evaluate the following scenarios:
\begin{enumerate}[leftmargin=*]
    \item \textbf{Effect of inductive biases (sensitivity + smoothness):} We want to see the effect of removing the proposed inductive biases (i.e.\ no sensitivity and smoothness constraints) on the OoD detection performance of a model. To do this, we train a VGG-16 with and without spectral normalisation. Note that VGG-16 does not have residual connections and hence, a VGG-16 does not follow the sensitivity and smoothness (bi-Lipschitz) constraints.
    
    \item \textbf{Effect of sensitivity alone:} Since residual connections make a model sensitive to changes in the input space by lower bounding its Lipschitz constant, we also want to see how a network performs with just the sensitivity constraint alone. To observe this, we train a Wide-ResNet-28-10 without spectral normalisation (i.e.\ no explicit upper bound on the Lipschitz constant of the model).
    
    \item \textbf{Metrics for aleatoric and epistemic uncertainty:} With the above combinations, we try to observe how different metrics for aleatoric and epistemic uncertainty perform. To quantify aleatoric uncertainty, we use the softmax entropy of the model.
    On the other hand, to quantify the epistemic uncertainty, we use \textbf{i)} the softmax entropy, \textbf{ii)} the softmax density \citep{liu2020energy} or \textbf{iii)} the GMM feature density (as described in \S\ref{section:methods}). 
\end{enumerate}

For the purposes of comparison, we also present scores obtained by a 5-Ensemble of the respective architectures (i.e.\ Wide-ResNet-28-10 and VGG-16) in \Cref{table:ood_2} for CIFAR-10 vs SVHN/CIFAR-100 and in \Cref{table:ood_3} for CIFAR-100 vs SVHN. Based on these results, we can make the following observations (in addition to the ones we make in \S\ref{sec:experiments_ood_detection}):

\textbf{Inductive biases are important for feature density.} From the AUROC scores in \Cref{table:ood_2}, we can see that using the feature density of a GMM in VGG-16 without the proposed inductive biases yields significantly lower AUROC scores as compared to Wide-ResNet-28-10 with inductive biases. In fact, in none of the datasets is the feature density of a VGG able to outperform its corresponding ensemble. This provides yet more evidence (in addition to \Cref{fig:intro_histograms}) to show that the GMM feature density alone cannot estimate epistemic uncertainty in a model that suffers from feature collapse. We need sensitivity and smoothness conditions (see \S\ref{sec:motivation}) on the feature space of the model to obtain feature densities that capture epistemic uncertainty.

\textbf{Sensitivity creates a bigger difference than smoothness.} We note that the difference between AUROC obtained from feature density between Wide-ResNet-28-10 models with and without spectral normalisation is minimal. Although Wide-ResNet-28-10 with spectral normalisation (i.e.\ smoothness constraints) still outperforms its counterpart without spectral normalisation, the small difference between the AUROC scores indicates that it might be the residual connections (i.e.\ sensitivity constraints) that make the model detect OoD samples better. This observation is also intuitive as a sensitive feature extractor should map OoD samples farther from iD ones.

\textbf{DDU as a simple baseline.} In DDU, we use the softmax output of a model to get aleatoric uncertainty. We use the GMM's feature-density to estimate the epistemic uncertainty. Hence, DDU does not suffer from miscalibration as the softmax outputs can be calibrated using post-hoc methods like temperature scaling. At the same time, the feature-densities of the model are not affected by temperature scaling and capture epistemic uncertainty well.

\section{Toy Experiments \& Additional Ablations}
Here, we provide details for toy experiments from the main paper which are visualized in \Cref{fig:intro_histograms}, \Cref{fig:lewis_vis} and \Cref{fig:kendall_viz}.

\subsection{Motivational Example in \Cref{fig:intro_histograms}}

\begin{table}[t]
    \centering
    \tiny
    \caption{\emph{ECE for Dirty-MNIST test set and AUROC for Dirty-MNIST vs Fashion-MNIST as proxies for aleatoric and epistemic uncertainty quality respectively.}
    }
    \label{table:auroc_tab1}
    \renewcommand{\arraystretch}{1.2} 
    \begin{tabular}{cccc}
    \toprule
    \textbf{Model} & \textbf{ECE} & \textbf{AUROC for Softmax Entropy (\textuparrow)} &
    \textbf{AUROC for Feature Density (\textuparrow)} \\
    
    \midrule
    LeNet & $2.22$ & $84.23$ & $71.41$\\
    VGG-16 & $\mathbf{2.11}$ & $84.04$ & $89.01$\\
    \textbf{ResNet-18+SN (DDU)} & $2.34$ & $83.01$ & $\mathbf{99.91}$\\
    \bottomrule
    \end{tabular}%
\end{table}

In \Cref{fig:intro_histograms} we train a LeNet \citep{lecun1998gradient}, a VGG-16 \citep{simonyan2014very} and a ResNet-18 with spectral normalisation \citep{he2016deep,miyato2018spectral} (ResNet-18+SN) on \emph{Dirty-MNIST}, a modified version of MNIST \citep{lecun1998gradient} with additional ambiguous digits (Ambiguous-MNIST).
\emph{Ambiguous-MNIST} contains samples with multiple plausible labels and thus higher aleatoric uncertainty (see \Cref{fig:intro_sample_viz}).
We refer to \S\ref{app:dirty_mnist} for details on how this dataset was generated. 
With ambiguous data having various levels of aleatoric uncertainty, Dirty-MNIST is more representative of real-world datasets compared to well-cleaned curated datasets, like MNIST and CIFAR-10, commonly used for benchmarking in ML \citep{filos2019systematic, krizhevsky2009learning}. Moreover, Dirty-MNIST also poses a challenge for recent uncertainty estimation methods, which often confound aleatoric and epistemic uncertainty \citep{van2020simple}. 
\Cref{fig:intro_softmax_ent} shows that the softmax entropy of a deterministic model is unable to distinguish between iD (Dirty-MNIST) and OoD (Fashion-MNIST \citep{xiao2017fashion}) samples as the entropy for the latter heavily overlaps with the entropy for Ambiguous-MNIST samples.
However, the feature-space density of the model with our inductive biases in \Cref{fig:intro_gmm} captures epistemic uncertainty reliably and is able to distinguish iD from OoD samples.
The same cannot be said for LeNet or VGG in \Cref{fig:intro_gmm}, whose densities are unable to separate OoD from iD samples.
This demonstrates the importance of the inductive bias to ensure the sensitivity and smoothness of the feature space as we further argue below.
Finally, \Cref{fig:intro_softmax_ent} and \Cref{fig:intro_gmm} demonstrate that our method is able to separate aleatoric from epistemic uncertainty: 
samples with low feature density have high epistemic uncertainty, whereas those with both high feature density and high softmax entropy have high aleatoric uncertainty---note the high softmax entropy for the most ambiguous Ambiguous-MNIST samples in \Cref{fig:intro_softmax_ent}.

\subsubsection{Disentangling Epistemic and Aleatoric Uncertainty}
\label{app:experiments_disentangling}

We used a simple example in \S\ref{sec:intro} to demonstrate that a single softmax model with a proper inductive bias can reliably capture epistemic uncertainty via its feature-space density and aleatoric uncertainty via its softmax entropy.
To recreate the natural characteristics of uncurated real-world datasets, which contain ambiguous samples, we use MNIST \citep{lecun1998gradient} as an iD dataset of unambiguous samples, Ambiguous-MNIST as an iD dataset of ambiguous samples and Fashion-MNIST \citep{xiao2017fashion} as an OoD dataset (see \Cref{fig:intro_sample_viz}), with more details
in \S\ref{app:dirty_mnist}.
We train a LeNet \citep{lecun1998gradient}, a VGG-16 \citep{simonyan2014very} and a ResNet-18 \citep{he2016deep} with spectral normalisation (SN) on Dirty-MNIST (a mix of Ambiguous- and standard MNIST) with the training setup detailed in \S\ref{app:exp_details_dirty_mnist}.

\Cref{table:auroc_tab1} gives a quantitative evaluation of the qualitative results in \S\ref{sec:intro}.
The AUROC metric reflects the quality of the epistemic uncertainty as it measures the probability that iD and OoD samples can be distinguished, and OoD samples are never seen during training while iD samples are semantically similar to training samples. The ECE metric measures the quality of the aleatoric uncertainty.
The softmax outputs capture aleatoric uncertainty well, as expected, and all 3 models obtain similar ECE scores on the Dirty-MNIST test set.
However, with an AUROC of around $84\%$ for all the 3 models, on Dirty-MNIST vs Fashion-MNIST, we conclude that softmax
entropy is unable to capture epistemic uncertainty well.
This is reinforced in \Cref{fig:intro_softmax_ent}, which shows a strong overlap between the softmax entropy of OoD and ambiguous iD samples.
At the same time, the feature-space densities of LeNet and VGG-16, with AUROC scores around $71\%$ and $89\%$ respectively, are unable to distinguish OoD from iD samples, indicating that simply using feature-space density without appropriate inductive biases (as seen in \citep{lee2018simple}) is not sufficient.

\emph{Only by fitting a GMM on top of a feature extractor with appropriate inductive biases (DDU) and using its feature density are we able to obtain performance far better (with AUROC of $99.9\%$) than the alternatives in the ablation study (see \Cref{table:auroc_tab1}, also noticeable in \Cref{fig:intro_gmm})}.
The entropy of a softmax model can capture aleatoric uncertainty, even without additional inductive biases, but it \emph{cannot} be used to estimate epistemic uncertainty (see \S\ref{sec:motivation}). On the other hand, feature-space density can \emph{only} be used to estimate epistemic uncertainty \emph{when the feature extractor is sensitive and smooth}, as achieved by using a ResNet and spectral normalisation in DDU.

\subsection{Effects of a well-regularized feature space on feature collapse}
\label{app:feature_collapse_fig2}
The effects of a well-regularized feature space on feature collapse are visible in \Cref{fig:intro_histograms}. In the case of feature collapse, we must have \textit{some} OoD inputs for which the features are mapped on top of the features of iD inputs. The distances of these OoD features to each class centroid must be equal to the distances of the corresponding iD inputs to class centroids, and hence the density for these OoD inputs must be equal to the density of the iD inputs. If the density histograms do not overlap, no feature collapse can be present\footnote{Note though that the opposite (`if the density histograms overlap then there must be feature collapse') needs not hold: the histograms can also overlap due to other reasons.}. We see no overlapping densities in \Cref{fig:intro_histograms}(c, right), therefore we indeed have no feature collapse.
First, the effects of having a well-regularized feature space on feature collapse can be analysed from \Cref{fig:intro_histograms}. In case of feature collapse we must have \textit{some} OoD features mapped ontop of iD features, therefore the distances of at  least some OoD features to the class centroids must be equal to iD's distances, hence OoD density must overlap with iD density. We see this in \Cref{fig:intro_histograms}(c) (left) in the case without a regularized feature space. On the other hand, when we regularize the feature space, we see the densities do not overlap, i.e.\ the distances of the features of OoD examples to the centroids are larger than the distances of iD examples (\Cref{fig:intro_histograms}(c) right), hence feature collapse is not present.

\subsection{Two Moons}
\label{app:2_moons}

\begin{figure*}[!t]
    \centering
    \begin{subfigure}{0.24\linewidth}
        \centering
        \includegraphics[width=\linewidth]{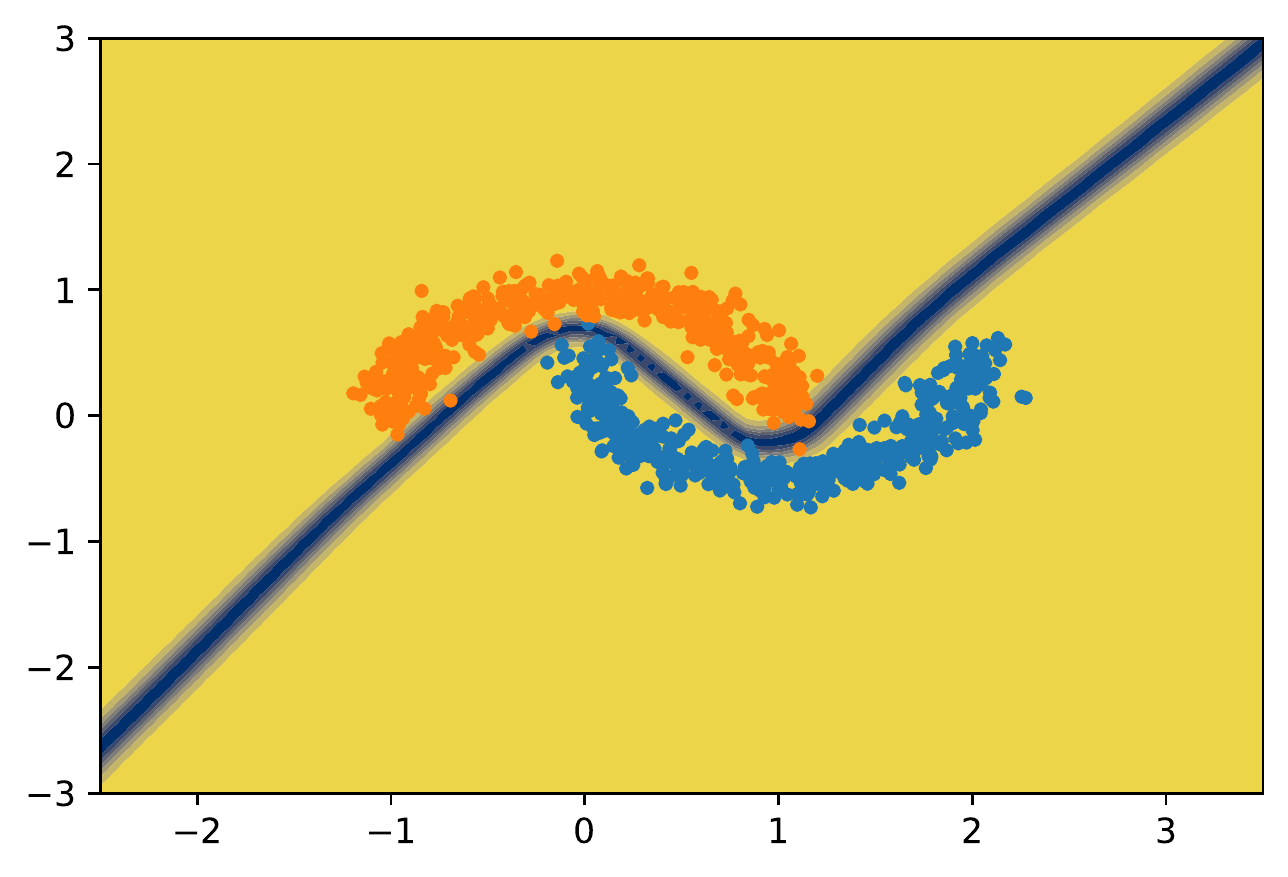}
        \caption{Softmax Entropy}
        \label{subfig:two_moons_softmax_entropy_no_sn}
    \end{subfigure} \hfill
    \begin{subfigure}{0.24\linewidth}
        \centering
        \includegraphics[width=\linewidth]{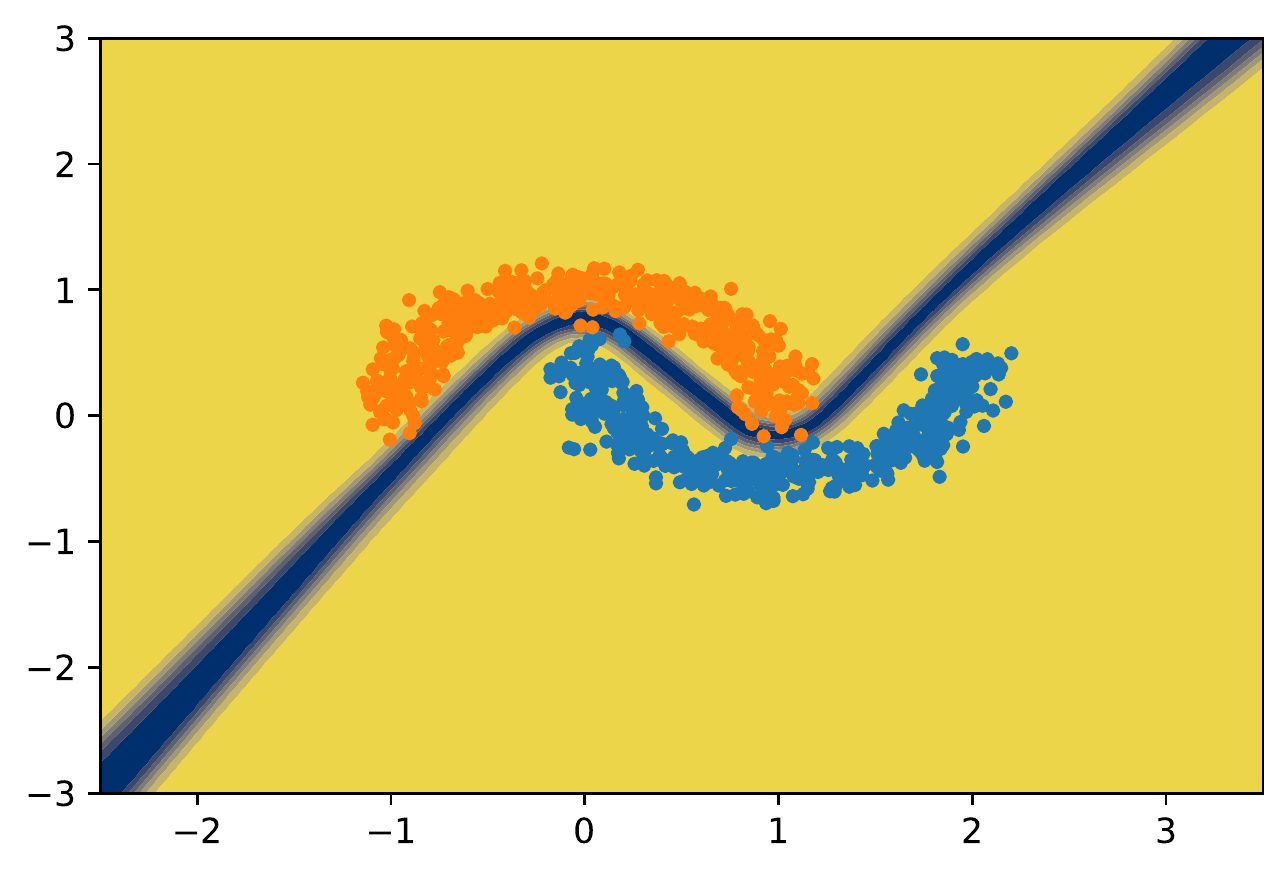}
        \caption{Ensemble Predictive Entropy}
        \label{subfig:two_moons_ensemble_predictive_entropy_no_sn}
    \end{subfigure} \hfill
    \begin{subfigure}{0.24\linewidth}
        \centering
        \includegraphics[width=\linewidth]{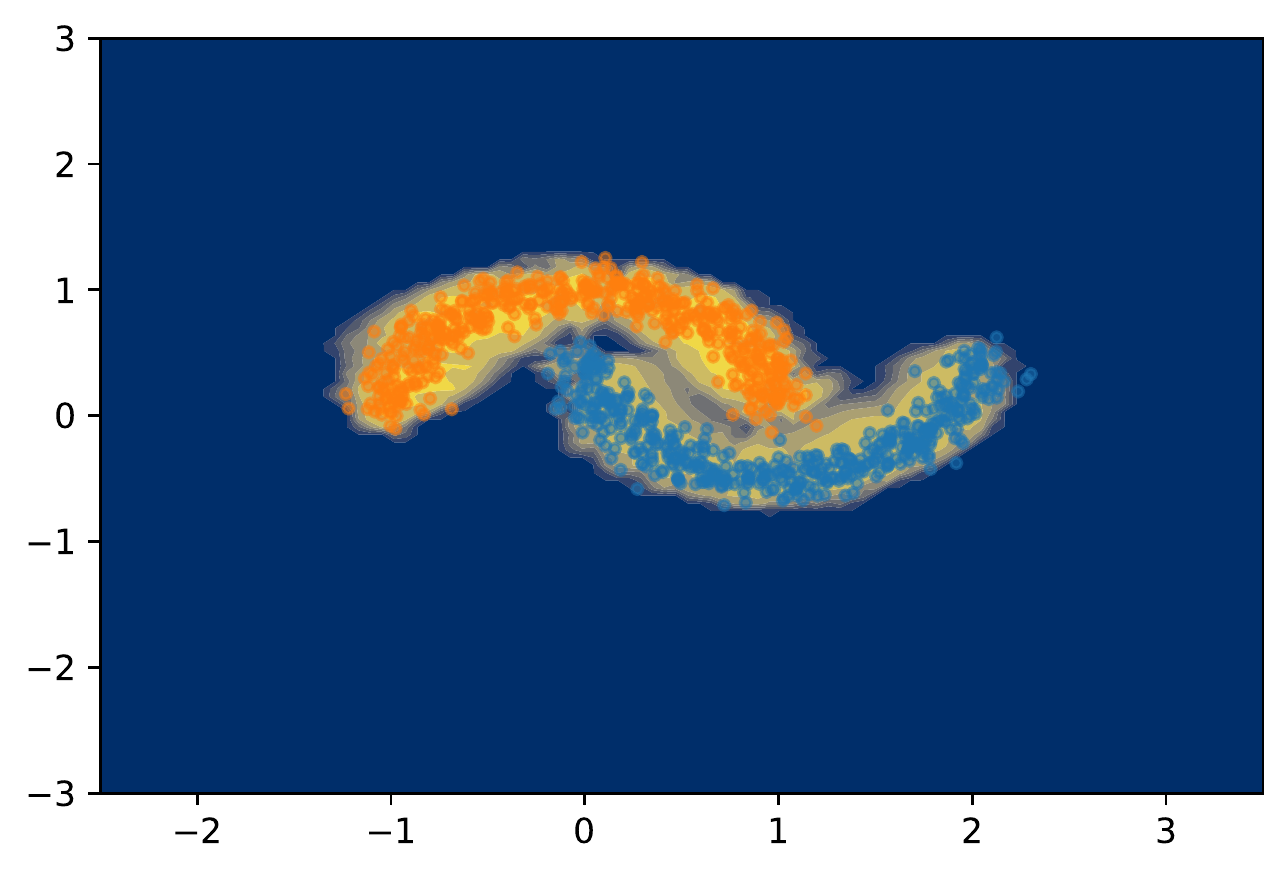}
        \caption{DDU Feature-space density}
        \label{subfig:two_moons_ddu_sn}
    \end{subfigure}
    \begin{subfigure}{0.24\linewidth}
        \centering
        \includegraphics[width=\linewidth]{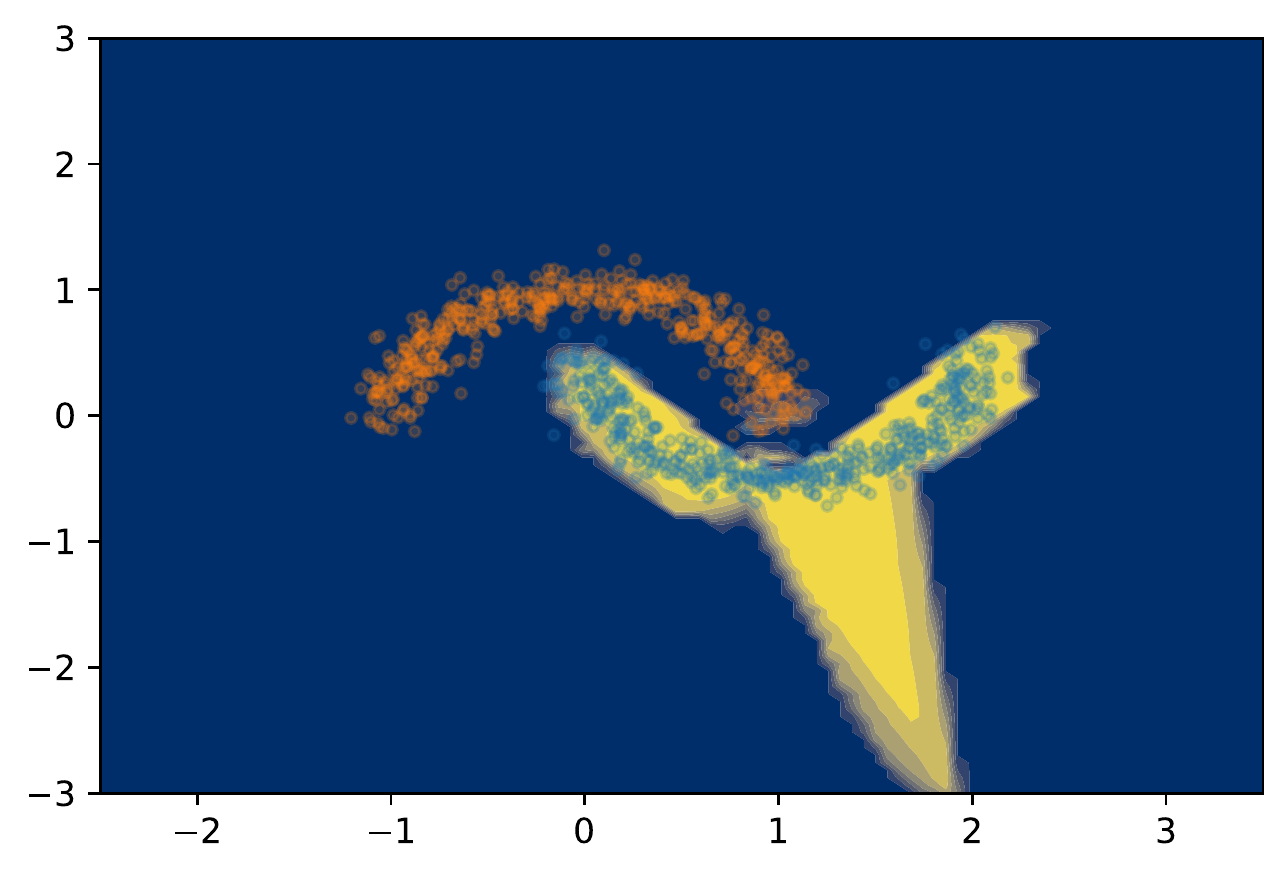}
        \caption{FC-Net Feature-space density}
        \label{subfig:two_moons_fcnet}
    \end{subfigure}
    \caption{
    \emph{Uncertainty on Two Moons dataset}. Blue indicates high uncertainty and yellow indicates low uncertainty. Both the softmax entropy of a single model as well as the predictive entropy of a deep ensemble are uncertain only along the decision boundary whereas the feature-space density of DDU is uncertain everywhere except on the data distribution (the ideal behaviour). However, the feature density of a normal fully connected network (FC-Net) without any inductive biases can't capture uncertainty properly.
    }
    \label{fig:2-moons}
\end{figure*}

In this section, we evaluate DDU's performance on a well-known toy setup: the Two Moons dataset. We use scikit-learn's \emph{datasets} package to generate 2000 samples with a noise rate of 0.1. We use a 4-layer fully connected architecture, ResFFN-4-128 with 128 neurons in each layer and a residual connection, following \citep{liu2020simple}. As an ablation, we also train using a 4-layer fully connected architecture with 128 neurons in each layer, but \emph{without the residual connection}. We name this architecture FC-Net. The input is 2-dimensional and is projected into the 128 dimensional space using a fully connected layer. Using the ResFFN-4-128 architecture we train 3 baselines:
\begin{enumerate}[leftmargin=*]
    \item \textbf{Softmax: } We train a single softmax model and use the softmax entropy as the uncertainty metric.
    \item \textbf{3-ensemble: } We train an ensemble of 3 softmax models and use the predictive entropy of the ensemble as the measure of uncertainty.
    \item \textbf{DDU: } We train a single softmax model applying spectral normalization on the fully connected layers and using the feature density as the measure of model confidence.
\end{enumerate}

Each model is trained using the Adam optimiser for 150 epochs. In \Cref{fig:2-moons}, we show the uncertainty results for all the above 3 baselines. It is clear that both the softmax entropy as well as the predictive entropy of the ensemble is uncertain only along the decision boundary between the two classes whereas DDU is confident only on the data distribution and is not confident anywhere else. It is worth mentioning that even DUQ and SNGP perform well in this setup and deep ensembles have been known to underperform in the Two-Moons setup primarily due to the simplicity of the dataset causing all the ensemble components to generalise in the same way. Finally, also note that the feature space density of FC-Net without residual connections is not able to capture uncertainty well (see \Cref{subfig:two_moons_fcnet}), thereby reaffirming our claim that proper inductive biases are indeed a necessary component to ensure that feature space density captures uncertainty reliably. Thus, in addition to its excellent performance in active learning, CIFAR-10 vs SVHN/CIFAR-100/Tiny-ImageNet/CIFAR-10-C, CIFAR-100 vs SVHN/Tiny-ImageNet, and ImageNet vs ImageNet-O, we note that DDU captures uncertainty reliably even in a small 2D setup like Two Moons.

\subsection{5-Ensemble Visualisation}
\label{app:5_ensemble}

In \Cref{fig:lewis_vis}, we provide a visualisation of a 5-ensemble (with five deterministic softmax networks) to see how softmax entropy fails to capture epistemic uncertainty precisely because the mutual information (MI) of an ensemble does not (see \S\ref{sec:motivation}). We train the networks on 1-dimensional data with binary labels 0 and 1. The data is shown in \Cref{subfig:softmax_output}. From \Cref{subfig:softmax_output} and \Cref{subfig:softmax_entropy}, we find that the softmax entropy is high in regions of ambiguity where the label can be both 0 and 1 (i.e.\ x between -4.5 and -3.5, and between 3.5 and 4.5). This indicates that softmax entropy can capture aleatoric uncertainty. Furthermore, in the x interval $(-2, 2)$, we find that the deterministic softmax networks disagree in their predictions (see \Cref{subfig:softmax_output}) and have softmax entropies which can be high, low or anywhere in between (see \Cref{subfig:softmax_entropy}) following our claim in \S\ref{sec:motivation}. In fact, this disagreement is the very reason why the MI of the ensemble is high in the interval $(-2, 2)$, thereby reliably capturing epistemic uncertainty. Finally, note that the predictive entropy (PE) of the ensemble is high both in the OoD interval $(-2, 2)$ as well as at points of ambiguity (i.e.\ at -4 and 4). This indicates that the PE of a Deep Ensemble captures both epistemic and aleatoric uncertainty well. From these visualisations, we draw the conclusion that the softmax entropy of a deterministic softmax model cannot capture epistemic uncertainty precisely because the MI of a Deep Ensemble can.

\subsection{Feature-Space Density \& Epistemic Uncertainty vs Softmax Entropy \& Aleatoric Uncertainty}
\label{app:kendall_viz}

\begin{figure*}[!t]
    \centering
    \begin{subfigure}{0.16\linewidth}
        \centering
        \includegraphics[width=\linewidth]{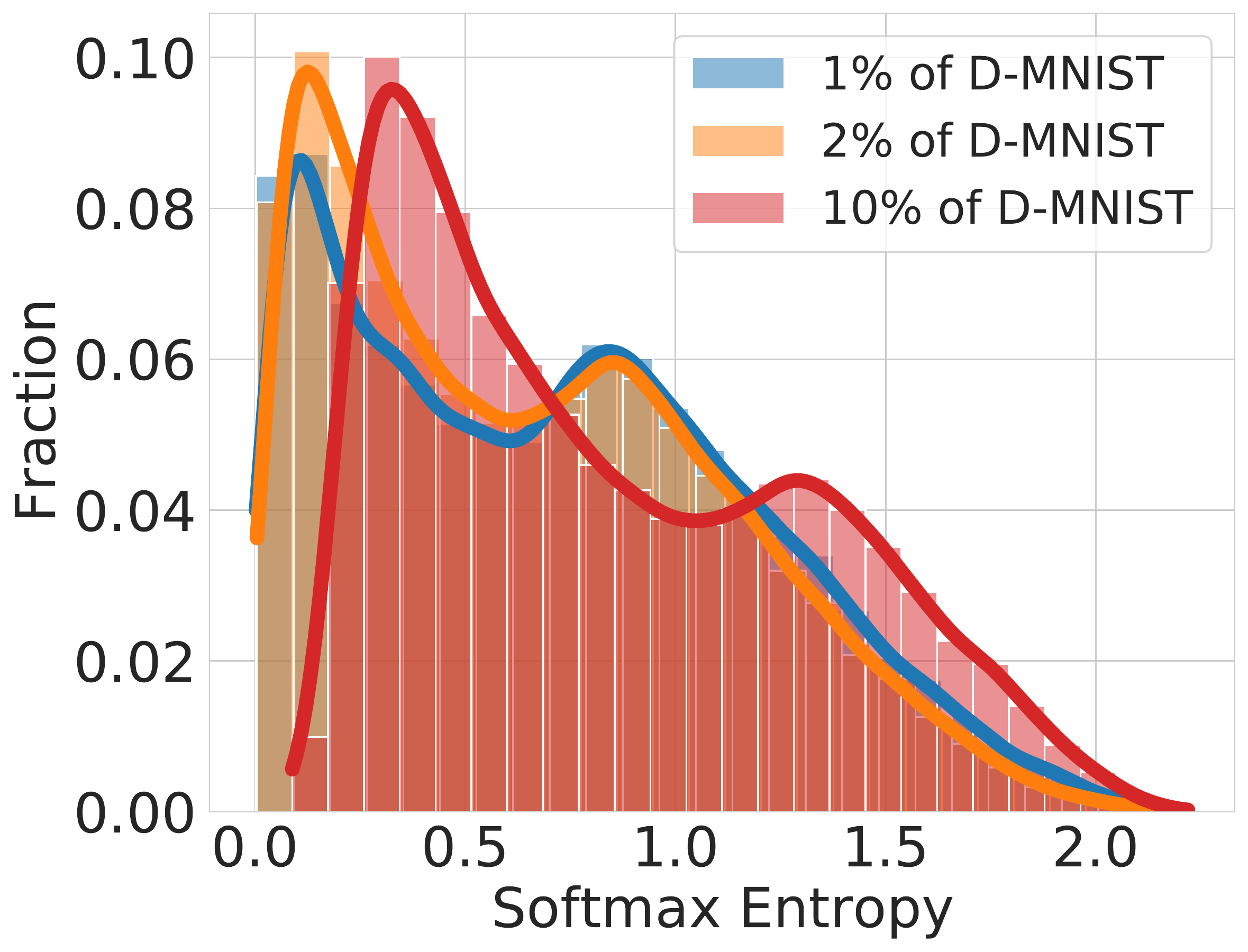}
        \caption{SE D-MNIST}
        \label{subfig:softmax_entropy_dmnist}
    \end{subfigure}
    \begin{subfigure}{0.16\linewidth}
        \centering
        \includegraphics[width=\linewidth]{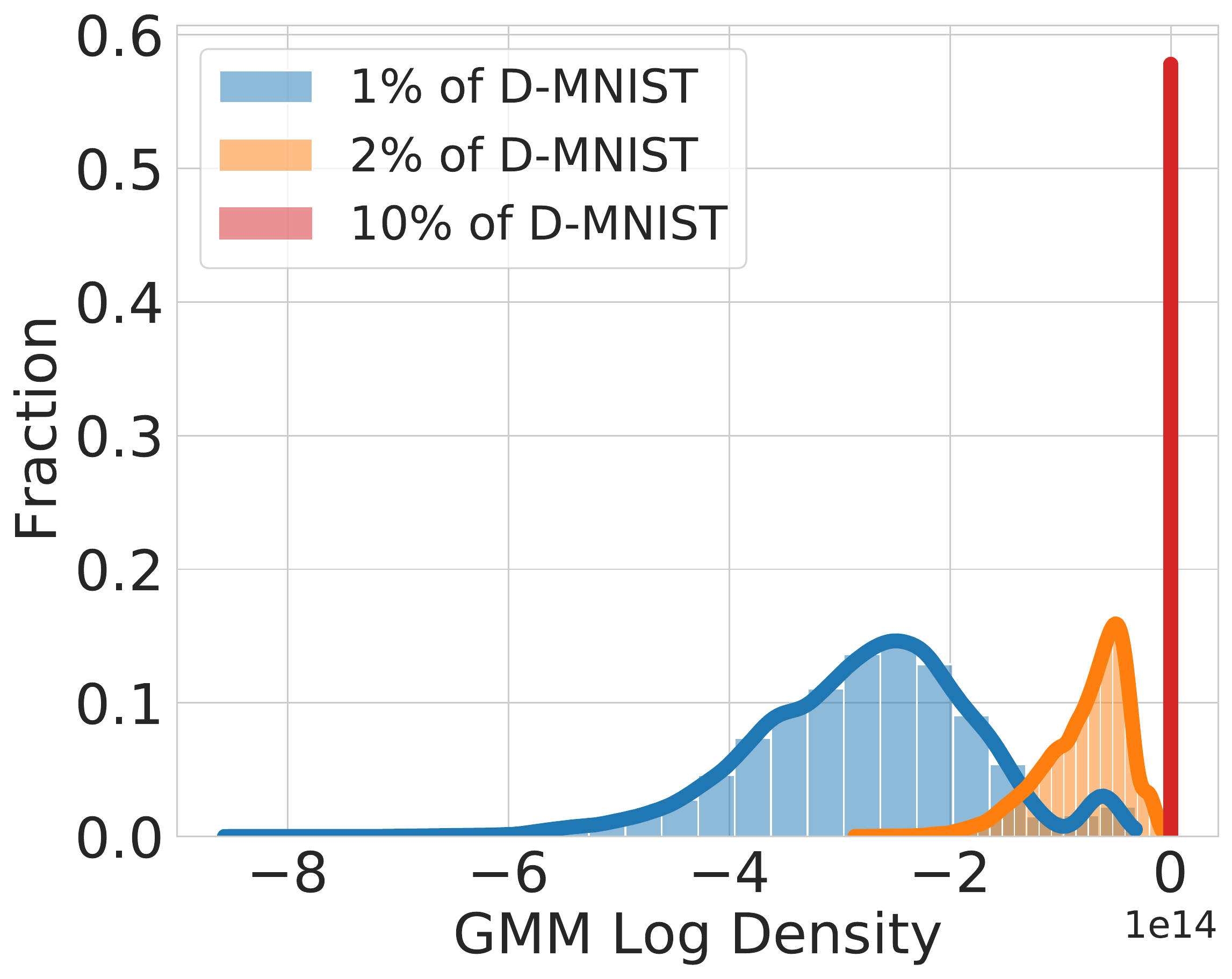}
        \caption{Density D-MNIST}
        \label{subfig:ddu_density_dmnist}
    \end{subfigure}
    \begin{subfigure}{0.16\linewidth}
        \centering
        \includegraphics[width=\linewidth]{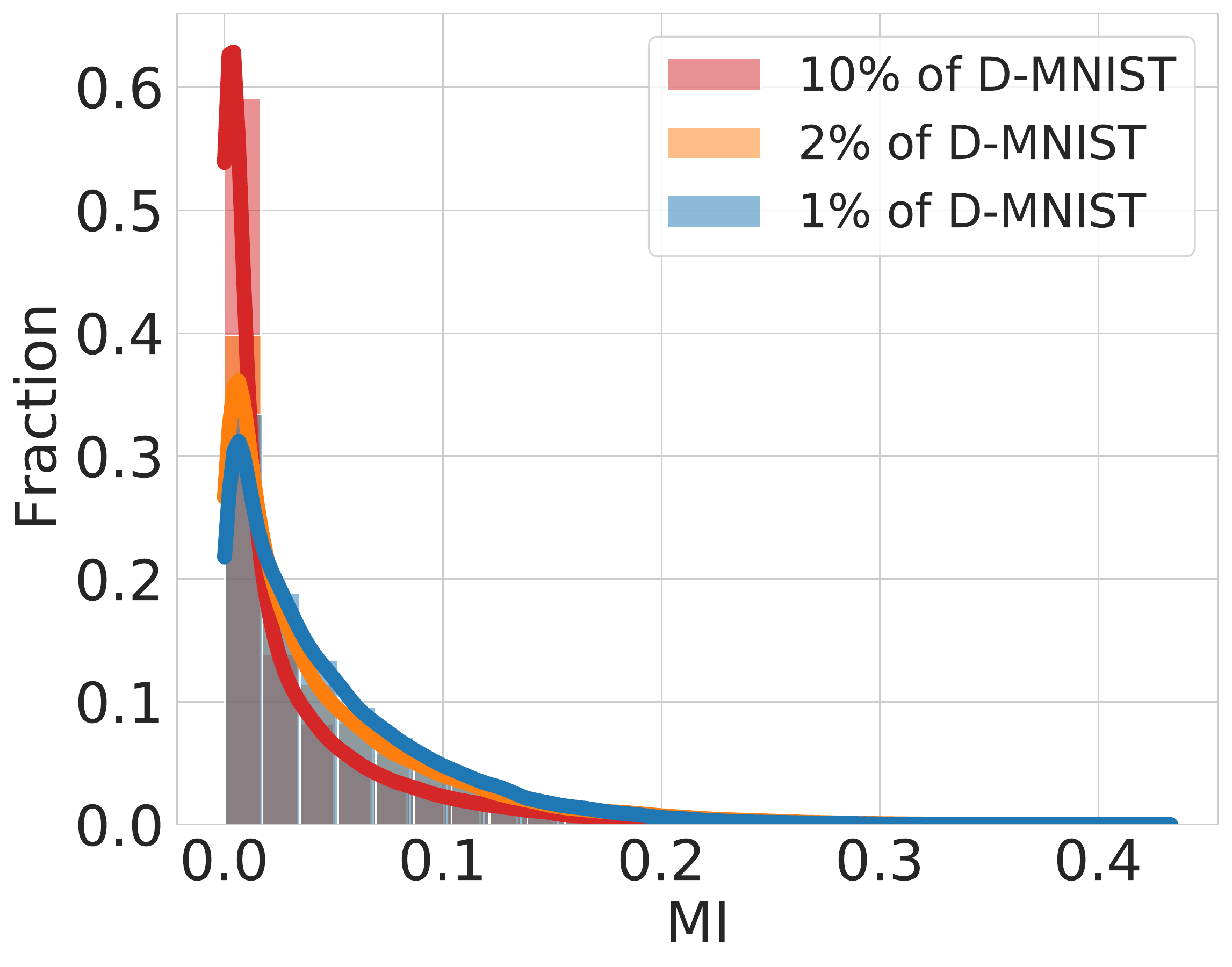}
        \caption{MI D-MNIST}
        \label{subfig:ensemble_mi_dmnist}
    \end{subfigure}
    \begin{subfigure}{0.16\linewidth}
        \centering
        \includegraphics[width=\linewidth]{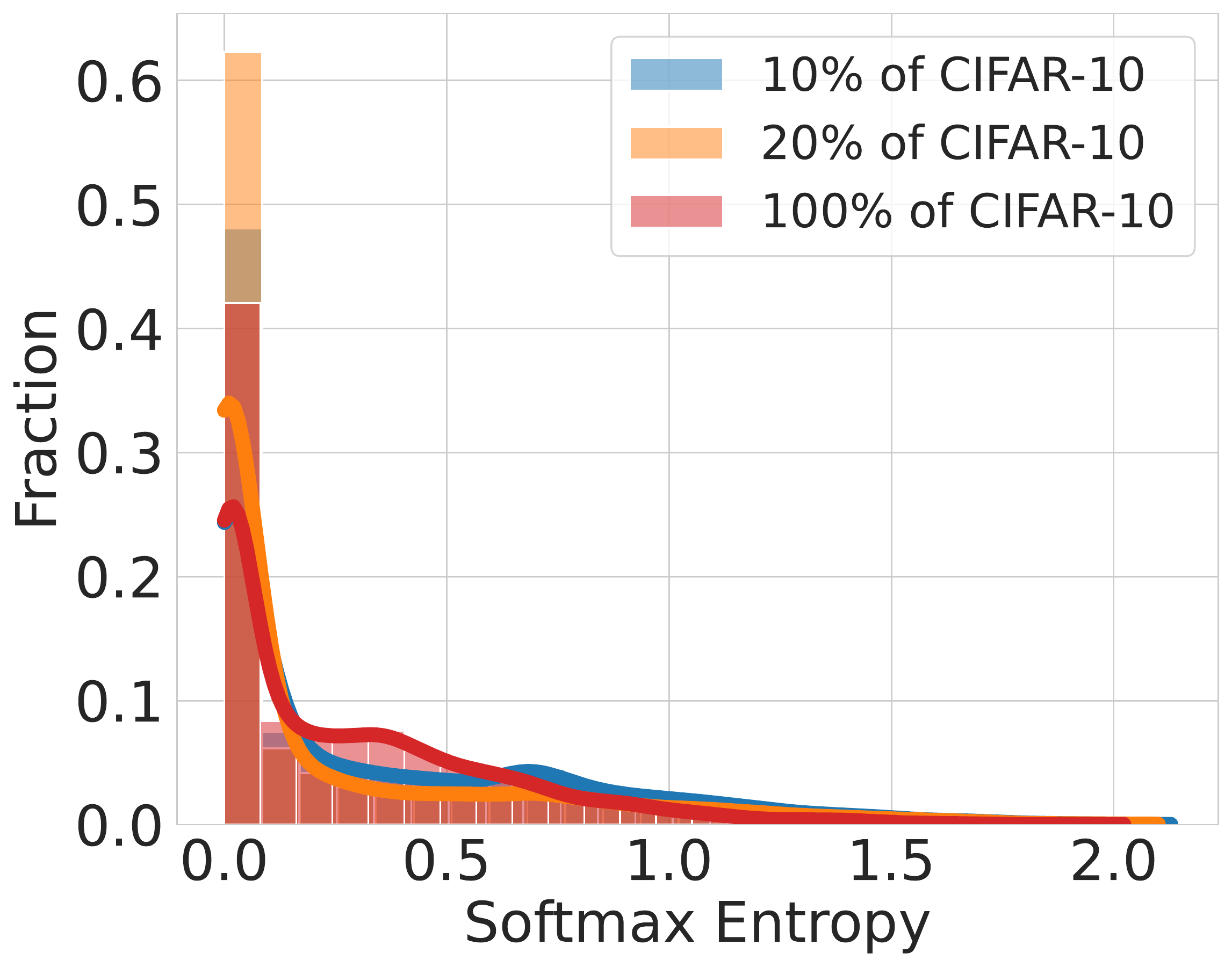}
        \caption{SE CIFAR-10}
        \label{subfig:softmax_entropy_cifar10}
    \end{subfigure}
    \begin{subfigure}{0.16\linewidth}
        \centering
        \includegraphics[width=\linewidth]{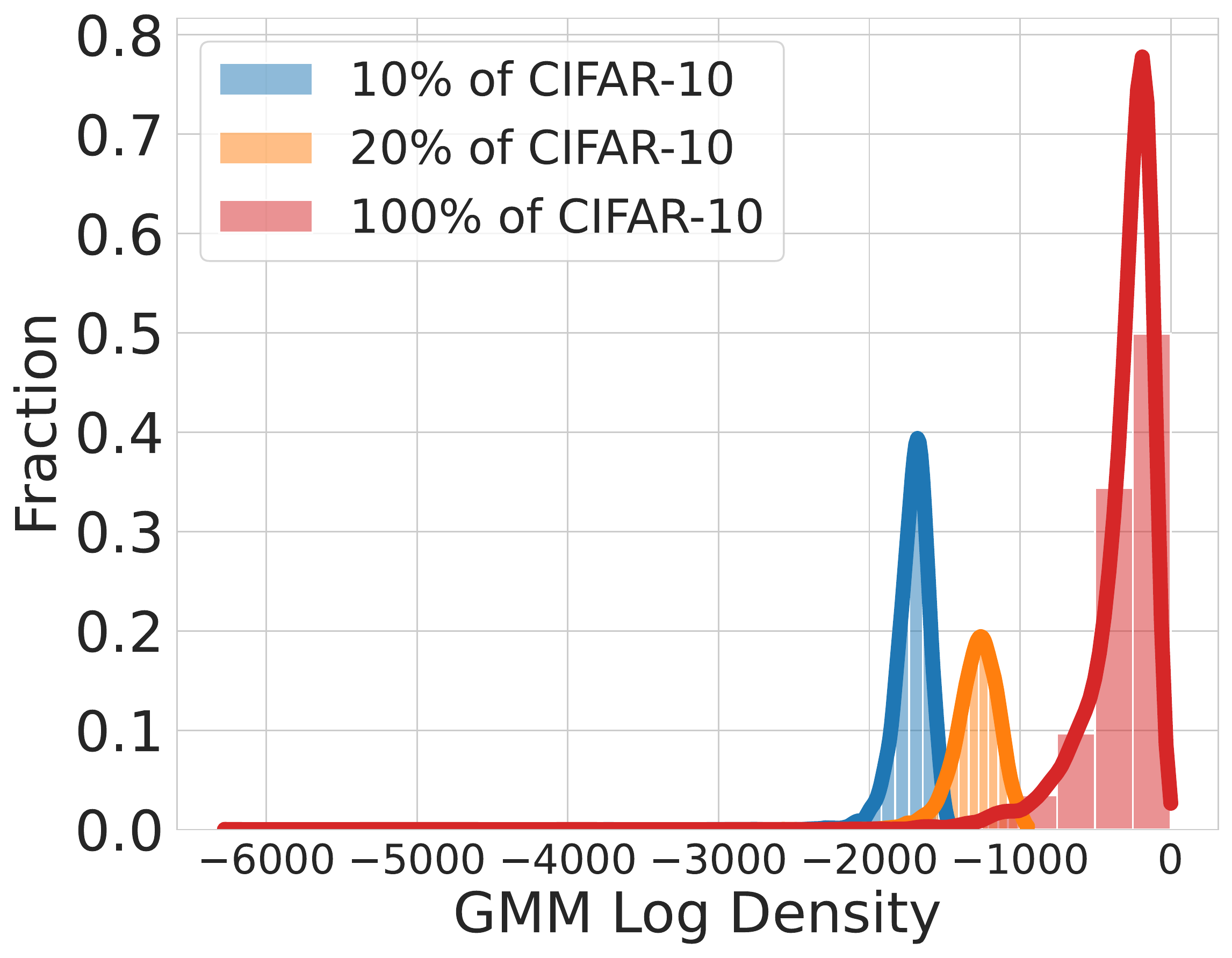}
        \caption{Density CIFAR-10}
        \label{subfig:ddu_density_cifar10}
    \end{subfigure}
    \begin{subfigure}{0.16\linewidth}
        \centering
        \includegraphics[width=\linewidth]{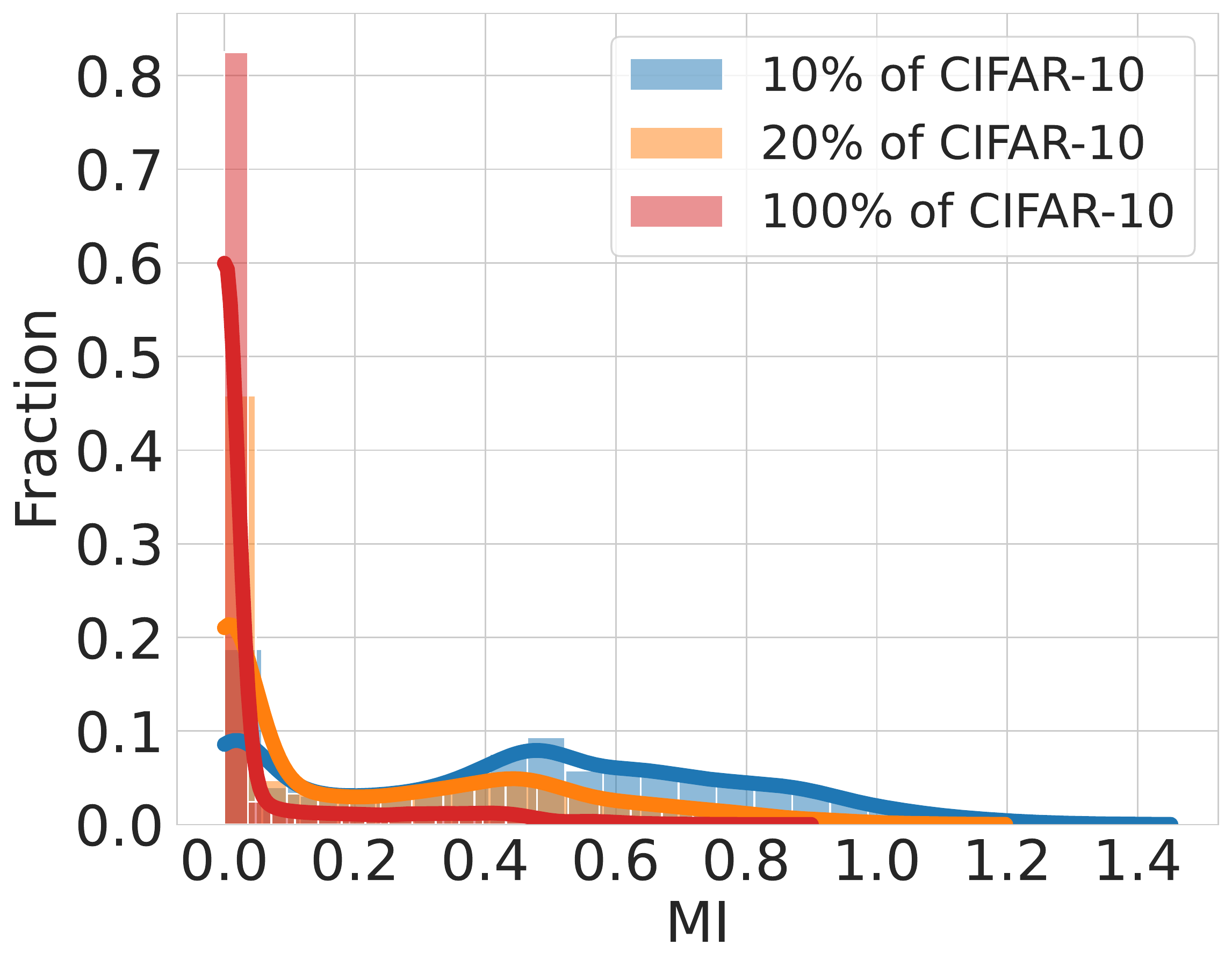}
        \caption{MI CIFAR-10}
        \label{subfig:ensemble_mi_cifar10}
    \end{subfigure}
    \caption{
    \emph{Comparison of epistemic and aleatoric uncertainty captured b(ResNet-18+SN) on increasingly large subsets of Dirty-MNIST and CIFAR-10. Clearly, feature density captures epistemic uncertainty which reduces when the model is trained on increasingly large subsets of training data, whereas softmax entropy (SE) does not. For comparison, we also plot a deep-ensemble's epistemic uncertainty, through mutual information (MI) for the same settings. For more details, see \Cref{table:kendall_viz}.}}
    \label{fig:kendall_viz_2}
\end{figure*}

\begin{table}[t!]
    \centering
    \caption{
    \emph{
        Average softmax entropy (SE) and feature-space density of the test set for models trained on different amounts of the training set (Dirty-MNIST and CIFAR-10) behave consistently with aleatoric and epistemic uncertainty.
    }
    Aleatoric uncertainty for individual samples does not change much as more data is added to the training set while epistemic uncertainty decreases as more data is added. This is also consistent with Table 3 in \citet{kendall2017uncertainties}. Finally, we observe a consistent strong positive correlation between the negative log feature space density and the mutual information (MI) of a deep ensemble trained on the same subsets of data for both Dirty-MNIST and CIFAR-10. However, the correlation between softmax entropy and MI is not consistent.}
    \label{table:kendall_viz}
    \resizebox{\linewidth}{!}{%
    \renewcommand{\arraystretch}{1.2} 
    \begin{tabular}{cccccc}
    \toprule
    \textbf{Training Set} & \textbf{Avg Test SE ($\boldmath\approx$)} & \textbf{Avg Test Log GMM Density (\textuparrow)} & \textbf{Avg Test MI} & \textbf{Correlation(SE || MI)} & \textbf{Correlation(-Log GMM Density || MI)} \\
    \midrule
    1\% of D-MNIST & $0.7407$ & $-2.7268e+14$ & $0.0476$ & \multirow{3}{*}{$-0.79897$} & \multirow{3}{*}{$0.8132$} \\ 
    2\% of D-MNIST & $0.6580$ & $-7.8633e+13$ & $0.0447$ && \\ 
    10\% of D-MNIST & $0.8295$ & $-1279.1753$ & $0.0286$ && \\
    \midrule
    10\% of CIFAR-10 & $0.3189$ & $-1715.3516$ & $0.4573$ & \multirow{3}{*}{$0.5663$} & \multirow{3}{*}{$0.9556$} \\
    20\% of CIFAR-10 & $0.2305$ & $-1290.1726$ & $0.2247$ && \\
    100\% of CIFAR-10 & $0.2747$ & $-324.8040$ & $0.0479$ && \\
    \bottomrule
    \end{tabular}%
  }
\end{table}

To empirically verify the connection between feature-space density and epistemic uncertainty on the one hand and the connection between softmax entropy and aleatoric uncertainty on the other hand, we train ResNet-18+SN models on increasingly large subsets of DirtyMNIST and CIFAR-10 and evaluate the epistemic and aleatoric uncertainty on the corresponding test sets using the feature-space density and softmax entropy, respectively. Moreover, we also train a 5-ensemble on the same subsets of data and use the ensemble's mutual information as a baseline measure of epistemic uncertainty.

In \Cref{fig:kendall_viz}, \Cref{fig:kendall_viz_2} and \Cref{table:kendall_viz}, we see that with larger training sets, the average feature-space density increases which is consistent with the epistemic uncertainty decreasing as more data is available as reducible uncertainty. This is also evident from the consistent strong positive correlation between the negative log density and mutual information of the ensemble.

On the other hand, the softmax entropy stays roughly the same which is consistent with aleatoric uncertainty as irreducible uncertainty, which is independent of the training data. Importantly, all of this is also consistent with the experiments comparing epistemic and aleatoric uncertainty on increasing training set sizes in Table 3 of \citet{kendall2017uncertainties}.

\subsection{Objective Mismatch Ablation with Wide-ResNet-28-10 on CIFAR-10}

\begin{table}[t!]
    \centering
    \caption{\emph{Objective Mismatch Ablation with WideResNet-28-10 models with and without spectral normalisation on CIFAR-10.} While GMMs perform much better than Softmax Entropy for feature-space density/epistemic uncertainty estimation, they underperform for aleatoric uncertainty estimation: both accuracy and in particular ECE are much worse than a regular softmax layer. Averaged over 25 runs.}
    \renewcommand{\arraystretch}{1.2} 
    \begin{tabular}{llrr}
    \toprule
        \textbf{Model} & \textbf{Prediction Source} & \textbf{Accuracy in \% (\textuparrow)} & \textbf{ECE (\textdownarrow)} \\
        \midrule
        WideResNet-28-10 & Softmax & $95.98\pm0.02$ & $2.29\pm0.02$ \\
         & GMM & $95.86\pm0.02$ & $4.13\pm0.02$ \\
        WideResNet-28-10+SN & Softmax & $95.97\pm0.03$ & $2.23\pm0.03$ \\
         & GMM & $95.88\pm0.02$ & $4.12\pm0.02$ \\
        \bottomrule
    \end{tabular}
    \label{table:objective_mismatch_real}
\end{table}

We further validate \Cref{pro:objectivemismatch} by running an ablation on Wide-ResNet-28-10 on CIFAR-10. \Cref{table:objective_mismatch_real} shows that the feature-space density estimator indeed performs worse than the softmax layer for aleatoric uncertainty (accuracy and ECE).

\section{Theoretical Results}
\label{app:theory}
\subsection{Softmax entropy ``cannot'' capture epistemic uncertainty because Deep Ensembles ``can''}

\subsubsection{Qualitative Statement}

We start with a proof of \Cref{pro:ensemble_softmax}, which quantitatively examines the qualitative statemets that given the same predictive entropy, higher epistemic uncertainty for one point than another will cause some ensemble members to have lower softmax entropy.

\ensemblesoftmax*

\begin{proof}
From \Cref{eq:BALD}, we obtain
\begin{align}
\begin{split}
    &|\MIc{Y_1}{\omega}{x_1, \data} + \E{\probc{\omega}{\data}}{\Hc{Y_1}{x_1, \omega}} \\
    &-\MIc{Y_2}{\omega}{x_2, \data} - \E{\probc{\omega}{\data}}{\Hc{Y_2}{x_2, \omega}}| \le \epsilon.
\end{split}
\end{align}
and hence we have
\begin{align}
& \begin{split}
    & \E{\probc{\omega}{\data}}{\Hc{Y_1}{x_1, \omega}} - \E{\probc{\omega}{\data}}{\Hc{Y_2}{x_2, \omega}} \\
    & + \underbrace{(\MIc{Y_1}{\omega}{x_1, \data} - \MIc{Y_2}{\omega}{x_2, \data})}_{> \delta} \le \epsilon.
\end{split}
\end{align}
We rearrange the terms:
\begin{equation}
    \E{\probc{\omega}{\data}}{\Hc{Y_1}{x_1, \omega}} <
    \E{\probc{\omega}{\data}}{\Hc{Y_2}{x_2, \omega}} - (\delta - \epsilon).
\end{equation}
Now, the statement follows by contraposition: if ${\Hc{Y_1}{x_1, \omega}} \ge \E{\probc{\omega}{\data}}{\Hc{Y_2}{x_2, \omega}} - (\delta - \epsilon)$ for all $\omega$, the monotonicity of the expectation would yield $\E{\probc{\omega}{\data}}{\Hc{Y_1}{x_1, \omega}} \ge \E{\probc{\omega}{\data}}{\Hc{Y_2}{x_2, \omega}} - (\delta - \epsilon).$
Thus, there is a non-null-set $\Omega'$ with $\prob{\Omega'} > 0$, such that
\begin{equation}
    {\Hc{Y_1}{x_1, \omega}} < {\Hc{Y_2}{x_2, \omega}} - (\delta - \epsilon),
\end{equation}
for all $\omega \in \Omega'$.
\end{proof}

While this statement provides us with an intuition for why ensemble members and thus deterministic models cannot provide epistemic uncertainty reliably through their softmax entropies, we can examine this further by establishing some upper bounds.

\subsubsection{Infinite Deep Ensemble}
There are two interpretations of the ensemble parameter distribution $\probc{\omega}{\data}$: we can view it as an empirical distribution given a specific ensemble with members $\omega_{i\in\{1,\ldots,K\}}$, or we can view it as a distribution over all possible trained models, given: random weight initializations, the dataset, stochasticity in the minibatches and the optimization process. In that case, any Deep Ensemble with $K$ members can be seen as finite Monte-Carlo sample of this posterior distribution. The predictions of an ensemble then are an unbiased estimate of the predictive distribution $\E{\probc{\omega}{\data}}{\probc{y|x,\omega}}$, and similarly the expected information gain computed using the members of the Deep Ensemble is just a (biased) estimator of $\MIc{Y}{\omega}{x, \data}$.

\newcommand{\infogain}{\MIc{Y}{\omega}{x}}
\newcommand{\expectedsment}{\chainedE{\omega}{\Hc{Y}{x,\omega}}}
\newcommand{\preddis}{\probc{y}{x}}
\newcommand{\predent}{\Hc{Y}{x}}
\newcommand{\smdis}{\probc{y}{x,\omega}}
\newcommand{\sment}{\Hc{Y}{x,\omega}}
\newcommand{\Dir}{\operatorname{Dir}}
\newcommand{\Cat}{\operatorname{Cat}}

\subsubsection{Analysis of Softmax Entropy of a Single Deterministic Model on OoD Data using Properties of Deep Ensembles}
\label{app:app_softmax_theory}
Based on the interpretation of Deep Ensembles as a distribution over model parameters, we can walk backwards and, given \emph{some value} for the predictive distribution and epistemic uncertainty of a Deep Ensemble, estimate what the softmax entropies from each ensemble component must have been. I.e.\ if we observe Deep Ensembles to have high epistemic uncertainty on OoD data, we can deduce from that what the softmax entropy of deterministic neural nets (the ensemble components) must look like.
More specifically, given a predictive distribution $\preddis$ and epistemic uncertainty, that is expected information gain $\infogain$, of the infinite Deep Ensemble, we estimate the expected softmax entropy from a single deterministic model, considered as a sample $\omega \sim \probc{\omega}{\data}$ and model the variance. Empirically, we find the real variance to be higher by a large amount for OoD samples, showing that softmax entropies do not capture epistemic uncertainty well for samples with high epistemic uncertainty. 

We will need to make several strong assumptions that limit the generality of our estimation, but we can show that our analysis models the resulting softmax entropy distributions appropriately. This will show that deterministic softmax models can have widely different entropies and confidence values.

Given the predictive distribution $\preddis$ and epistemic uncertainty $\infogain$, we can approximate the distribution over softmax probability vectors $p(y|x,\omega)$ for different $\omega$ using its  maximum-entropy estimate: a Dirichlet distribution $\left(Y_{1}, \ldots, Y_{K}\right) \sim \Dir(\alpha)$ with non-negative concentration parameters $\alpha = (\alpha_1, \ldots, \alpha_K)$ and $\alpha_0 := \sum \alpha_i$. Note that the Dirichlet distribution is used \emph{only as an analysis tool}, and at no point do we need to actually fit Dirichlet distributions to our data.

\paragraph{Preliminaries} ~\\

\newcommand{\dirp}{\mathbf{p}}

Before we can establish our main result, we need to look more closely at Dirichlet-Multinomial distributions. Given a Dirichlet distribution $\Dir(\alpha)$ and a random variable $\dirp \sim \Dir(\alpha)$, we want to quantify the expected entropy $\chainedE{\dirp \sim \Dir(\alpha)} {\opH_{Y\sim \Cat(\dirp)}[Y]}$ and its variance $\opVar_{\dirp \sim \Dir(\alpha)} {\opH_{Y\sim \Cat(\dirp)}[Y]}$. For this, we need to develop more theory. In the following, $\Gamma$ denotes the Gamma function, $\psi$ denotes the Digamma function, $\psi'$ denotes the Trigamma function.
\begin{lemma}
\label{lemma:dirichlet_basics}
\begin{enumerate}[wide, labelwidth=!, labelindent=0pt]
Given a Dirichlet distribution and random variable $\dirp\sim\Dir(\alpha)$, the following hold:
\item The expectation $\E{}{\log \dirp_i}$ is given by:
\begin{align}
    \label{eq:psi_log_expectation}
    \E{}{\log \dirp_i} = \psi(\alpha_i) - \psi(\alpha_0).
\end{align}
\item 
The covariance $\Cov{\log \dirp_i, \log \dirp_j}$ is given by
\begin{align}
    \label{eq:psip_log_expectation}
    \Cov{\log \dirp_i,\log \dirp_j} = \psi'(\alpha_i) \, \delta_{ij} - \psi'(\alpha_0).
\end{align}
\item The expectation $\E{}{\dirp_i^n \dirp_j^m \log \dirp_i}$ is given by:
\begin{align}
\begin{split}
\label{eq:nm_log_expectation}
\MoveEqLeft[3] \E{}{\dirp_i^n \dirp_j^m \log \dirp_i} \\
={}& \frac{\alpha_i^{\overline{n}} \, \alpha_j^{\overline{m}}}{\alpha_0^{\overline{n+m}}} \left ( \psi(\alpha_i + n) - \psi(\alpha_0 + n + m) \right ),
\end{split}
\end{align}
where $i \not =j$, and $n^{\overline{k}}=n \, (n+1) \, \ldots \, (n+k-1)$ denotes the rising factorial.
\end{enumerate}
\end{lemma}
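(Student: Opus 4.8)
The unifying tool is differentiation of the Dirichlet normalizing constant with respect to the concentration parameters, exploiting that $\Dir(\alpha)$ is an exponential family whose sufficient statistics are exactly the $\log \dirp_i$. Writing $\alpha_0 = \sum_k \alpha_k$, I would start from the identity that the density integrates to one,
\begin{equation}
\int_{\Delta^{K-1}} \prod_{k=1}^K \dirp_k^{\alpha_k - 1}\, d\dirp = \frac{\prod_{k=1}^K \Gamma(\alpha_k)}{\Gamma(\alpha_0)} =: B(\alpha),
\end{equation}
and differentiate both sides with respect to $\alpha_i$. Differentiating under the integral sign brings down a factor $\log \dirp_i$ on the left, while on the right $\frac{\partial}{\partial \alpha_i}\log B(\alpha) = \psi(\alpha_i) - \psi(\alpha_0)$; the $\psi(\alpha_0)$ term appears precisely because $\alpha_0$ depends on every $\alpha_i$ through the chain rule. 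Dividing by $B(\alpha)$ gives part (1) directly.

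For part (2), I would differentiate a second time, now with respect to $\alpha_j$. The mixed second derivative of $B(\alpha)$ produces $\E{}{\log \dirp_i \log \dirp_j}$, and carrying the derivative through $B(\alpha)\,(\psi(\alpha_i) - \psi(\alpha_0))$ yields $\E{}{\log\dirp_i}\E{}{\log\dirp_j} + \psi'(\alpha_i)\,\delta_{ij} - \psi'(\alpha_0)$. Subtracting the product of the means leaves the covariance, and one sees transparently where the structure comes from: the $\psi'(\alpha_i)\,\delta_{ij}$ survives only when $j=i$ (differentiating $\psi(\alpha_i)$), whereas the $-\psi'(\alpha_0)$ is always present because $\partial_{\alpha_j}\psi(\alpha_0) = \psi'(\alpha_0)$ for every $j$. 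For part (3) I would instead \emph{shift} the exponents: the integral of $\dirp_i^{\alpha_i + n - 1}\dirp_j^{\alpha_j + m - 1}\prod_{k\neq i,j}\dirp_k^{\alpha_k-1}$ equals $B(\alpha + n e_i + m e_j)$, whose ratio to $B(\alpha)$ is the stated moment $\alpha_i^{\overline{n}}\alpha_j^{\overline{m}}/\alpha_0^{\overline{n+m}}$ via the rising-factorial identity $\Gamma(\alpha+n)/\Gamma(\alpha) = \alpha^{\overline{n}}$. Treating $n$ as a continuous parameter and differentiating this shifted identity with respect to $n$ pulls down exactly the desired $\log \dirp_i$ on the left, while the right-hand side acquires the factor $\psi(\alpha_i + n) - \psi(\alpha_0 + n + m)$, giving the claim.

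The routine-but-essential point to discharge is the justification for differentiating under the integral sign: one needs local uniform integrability of $\prod_k \dirp_k^{\alpha_k-1}\,|\log \dirp_i|$ (and its products) over the open simplex, which holds since $\log \dirp_i$ is integrable against the Dirichlet density for positive concentrations and the integrand is dominated on compact neighbourhoods of $\alpha$, so dominated convergence applies. The other place to stay careful is the bookkeeping of the $\alpha_0$-dependence, since every derivative of $\log\Gamma(\alpha_0)$ contributes a $\psi$- or $\psi'$-term at $\alpha_0$; this is the source of all the ``minus $\psi(\alpha_0)$'' and ``minus $\psi'(\alpha_0)$'' corrections and is the only genuine subtlety in the calculation. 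I expect this dependence-tracking, rather than the analytic justification, to be the main thing to get right.
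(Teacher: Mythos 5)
Your proposal is correct and takes essentially the same route as the paper: parts (1)--(2) are obtained by differentiating the Dirichlet log-normalizer (the paper phrases this through the exponential-family identity $\E{}{T_i} = \partial A(\alpha)/\partial \alpha_i$, whose content is exactly your differentiation under the integral sign, $\alpha_0$-bookkeeping included), and part (3) uses the same exponent-shifting trick with the rising-factorial ratio of Gamma functions. The only cosmetic difference is in (3): the paper recognizes the shifted integrand as a new Dirichlet $\Dir(\alpha + n e_i + m e_j)$ and applies part (1) to it, whereas you differentiate the shifted moment identity in $n$; since $\partial_n \log B(\alpha + n e_i + m e_j) = \psi(\alpha_i+n) - \psi(\alpha_0+n+m)$, the two computations coincide term by term.
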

\begin{proof}
\begin{enumerate}[wide, labelwidth=!, labelindent=0pt]
\item
The Dirichlet distribution is members of the exponential family. Therefore the moments of the sufficient statistics are given by the derivatives of the partition function with respect to the natural parameters.
The natural parameters of the Dirichlet distribution are just its concentration parameters $\alpha_i$. The partition function is
\begin{align}
    A(\alpha) = \sum_{i=1}^{k} \log \Gamma\left(\alpha_{i}\right)-\log \Gamma\left(\alpha_{0}\right),
\end{align}
the sufficient statistics is $T(x) = \log x$, and the expectation $\E{}{T}$ is given by
\begin{align}
    \E{}{T_i} = \frac{\partial A(\alpha)}{\partial \alpha_{i}}
\end{align}
as the Dirichlet distribution is a member of the exponential family.
Substituting the definitions and evaluating the partial derivative yields
\begin{align}
    \E{}{\log \dirp_i} &= \frac{\partial}{\partial \alpha_{i}} \left [ \sum_{i=1}^{k} \log \Gamma\left(\alpha_{i}\right)-\log \Gamma\left(\sum_{i=1}^{k} \alpha_i\right) \right ] \\
    & = \psi\left(\alpha_{i}\right) - \psi\left(\alpha_{0}\right) \frac{\partial}{\partial \alpha_{i}} \alpha_0,
\end{align}
where we have used that the Digamma function $\psi$ is the log derivative of the Gamma function $\psi(x) = \frac{d}{dx} \ln \Gamma(x)$.
This proves \eqref{eq:psi_log_expectation} as $\frac{\partial}{\partial \alpha_{i}} \alpha_0 = 1$.
\item Similarly, the covariance is obtained using a second-order partial derivative:
\begin{align}
    \Cov{T_i, T_j} = \frac{\partial^2 A(\alpha)}{\partial \alpha_{i} \, \partial \alpha_{i}}.
\end{align}
Again, substituting yields
\begin{align}
    \Cov{\log \dirp_i, \log \dirp_j} &= \frac{\partial}{\partial \alpha_{j}} \left [ \psi\left(\alpha_{i}\right) - \psi\left(\alpha_{0}\right) \right ] \\
    &= \psi'\left(\alpha_{i}\right) \delta_{ij} - \psi'\left(\alpha_{0}\right).
\end{align}
\item
We will make use of a simple reparameterization to prove the statement using \Cref{eq:psi_log_expectation}.
Expanding the expectation and substituting the density $\Dir(\dirp; \alpha)$, we obtain
\begin{align}
    & \E{}{\dirp_i^n \dirp_j^m \log \dirp_i} = \int \Dir(\dirp; \alpha) \, \dirp_i^n \, \dirp_j^m \, \log \dirp_i \, d\dirp \\
    & \quad = \int \frac{\Gamma\left(\alpha_0\right)}{\prod_{i=1}^{K} \Gamma\left(\alpha_{i}\right)} \prod_{k=1}^{K} \dirp_{k}^{\alpha_{k}-1}  \, \dirp_i^n \, \dirp_j^m \, \log \dirp_i \, d\dirp \\
    \begin{split}
    & \quad = \frac{\Gamma(\alpha_i + n) \Gamma(\alpha_j + m) \Gamma(\alpha_0 + n + m)}
    {\Gamma(\alpha_i) \Gamma(\alpha_j) \Gamma(\alpha_0)} \\
    &\quad \quad  \int \Dir(\hat\dirp; \hat\alpha) \, \hat\dirp_i^n \, \hat\dirp_j^m \, \log \hat\dirp_i \, d\hat\dirp
    \end{split} \\
    & \quad = \frac{\alpha_i^{\overline{n}} \, \alpha_j^{\overline{m}}}{\alpha_0^{\overline{n+m}}} \E{}{\log \hat\dirp_i},
\end{align}
where $\hat\dirp \sim \Dir(\hat\alpha)$ with $\hat\alpha = (\alpha_0, \dots, \alpha_i + n, \ldots, \alpha_j + m, \ldots, \alpha_K)$ and we made use of the fact that $\frac{\Gamma(z+n)}{\Gamma(z)}=z^{\overline{n}}$. Finally, we can apply \Cref{eq:psi_log_expectation} on $\hat\dirp \sim \Dir(\hat\alpha)$ to show
\begin{align}
    \quad = \frac{\alpha_i^{\overline{n}} \, \alpha_j^{\overline{m}}}{\alpha_0^{\overline{n+m}}} \left ( \psi(\alpha_i + n) - \psi(\alpha_0 + n + m) \right ).
\end{align}
\end{enumerate}
\end{proof}
With this, we can already quantify the expected entropy $\chainedE{\dirp \sim \Dir(\alpha)} {\opH_{Y\sim \Cat(\dirp)}[Y]}$:
\begin{lemma}
\label{lemma:dirichlet_expected_categorical}
Given a Dirichlet distribution and a random variable $\dirp \sim Dir(\alpha$), the expected entropy $\chainedE{\dirp \sim \Dir(\alpha)} {\opH_{Y\sim \Cat(\dirp)}[Y]}$ of the categorical distribution $Y \sim \Cat(\dirp)$ is given by
\begin{align}
     \chainedE{\probc{\dirp}{\alpha}}{\Hc{Y}{\dirp}} = \psi(\alpha_0+1) - \sum_{y=1}^K \frac{\alpha_i}{\alpha_0} \psi(\alpha_i +1).  
\end{align}
\end{lemma}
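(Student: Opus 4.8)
The plan is to expand the categorical entropy explicitly and reduce the whole computation to a single invocation of the moment formula \eqref{eq:nm_log_expectation} from \Cref{lemma:dirichlet_basics}. Writing out the Shannon entropy of $Y \sim \Cat(\dirp)$ gives $\Hc{Y}{\dirp} = -\sum_{i=1}^K \dirp_i \log \dirp_i$, so by linearity of expectation the target quantity becomes
\begin{equation*}
    \chainedE{\dirp \sim \Dir(\alpha)}{\Hc{Y}{\dirp}} = -\sum_{i=1}^K \E{\dirp \sim \Dir(\alpha)}{\dirp_i \log \dirp_i}.
\end{equation*}
Everything now hinges on evaluating the scalar moment $\E{}{\dirp_i \log \dirp_i}$ for a single component.

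First I would recognize each summand as the special case $n=1$, $m=0$ of \eqref{eq:nm_log_expectation}. Setting $m=0$ makes the factor $\dirp_j^m$ identically $1$ and the empty rising factorial $\alpha_j^{\overline{0}}$ equal to $1$, so the auxiliary index $j$ drops out entirely and the two-index formula collapses to a one-index moment. Using $\alpha_i^{\overline{1}} = \alpha_i$ and $\alpha_0^{\overline{1}} = \alpha_0$, this yields
\begin{equation*}
    \E{\dirp \sim \Dir(\alpha)}{\dirp_i \log \dirp_i} = \frac{\alpha_i}{\alpha_0}\left(\psi(\alpha_i + 1) - \psi(\alpha_0 + 1)\right).
\end{equation*}
Substituting this back and splitting the sum, the $\psi(\alpha_i+1)$ piece produces the term $-\sum_i \frac{\alpha_i}{\alpha_0}\psi(\alpha_i + 1)$ directly, while the $\psi(\alpha_0+1)$ piece factors out to leave $\psi(\alpha_0 + 1)\sum_i \frac{\alpha_i}{\alpha_0}$. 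The final step is the normalization identity $\sum_{i=1}^K \alpha_i = \alpha_0$, which collapses $\sum_i \frac{\alpha_i}{\alpha_0}$ to $1$ and gives exactly the claimed expression (where the sum, despite the displayed index, runs over all components $i$).

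There is no genuine obstacle here, as the result follows mechanically from the already-established \Cref{lemma:dirichlet_basics}. The only point that warrants care is justifying the special case of \eqref{eq:nm_log_expectation}: one must confirm that taking $m=0$ legitimately eliminates the second index—the formula is stated for $i \neq j$, but since the $j$-dependent factors reduce to $1$ one may pick any $j \neq i$ and the expression is independent of that choice—so that the two-index moment reduces cleanly to the one-index moment we need.
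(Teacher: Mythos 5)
Your proof is correct and follows exactly the same route as the paper's: expand the entropy, apply linearity of expectation, invoke \eqref{eq:nm_log_expectation} with $n=1$, $m=0$ to get $\E{}{\dirp_i \log \dirp_i} = \frac{\alpha_i}{\alpha_0}\left(\psi(\alpha_i+1) - \psi(\alpha_0+1)\right)$, and finish with $\sum_i \frac{\alpha_i}{\alpha_0} = 1$. Your extra remark justifying why setting $m=0$ legitimately collapses the two-index moment (despite the $i \neq j$ stipulation) is a small point of care the paper's proof leaves implicit, but it does not change the argument.
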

\begin{proof}
Applying the sum rule of expectations and \Cref{eq:nm_log_expectation} from \Cref{lemma:dirichlet_basics}, we can write
\begin{align}
    & \chainedE{}{\Hc{Y}{\dirp}} = \E{}{-\sum_{i=1}^K \dirp_i \log \dirp_i} = -\sum_i \E{}{\dirp_i \log \dirp_i} \\
    & \quad \quad = - \sum_i \frac{\alpha_i}{\alpha_0} \left ( \psi(\alpha_i+1) - \psi(\alpha_0+1)\right).
\end{align}
The result follows after rearranging and making use of $\sum_i \frac{\alpha_i}{\alpha_0} = 1$.
\end{proof}
With these statements, we can answer a slightly more complex problem:
\begin{lemma}
\label{lemma:log_covariance}
Given a Dirichlet distribution and a random variable $\dirp\sim\Dir(\alpha)$,
the covariance $\Cov{\dirp_i^n \log \dirp_i,  \dirp_j^m \log \dirp_j}$ is given by
\begin{align}
    \MoveEqLeft[3] \Cov{\dirp_i^n \log \dirp_i,  \dirp_j^m \log \dirp_j} \\
    \begin{split}
        ={}& \frac{\alpha_i^{\overline{n}}\,\alpha_j^{\overline{m}}}{\alpha_0^{\overline{n+m}}}
            \left ((\psi(\alpha_i+n)-\psi(\alpha_0+n+m)) \right. \\
         &  (\psi(\alpha_j+m)-\psi(\alpha_0+n+m)) \\
         & \left. - \psi'(\alpha_0+n+m) \right)\\
         & + \frac{\alpha_i^{\overline{n}}\,\alpha_j^{\overline{m}}}{\alpha_0^{\overline{n}}\,\alpha_0^{\overline{m}}}
         (\psi(\alpha_i+n)-\psi(\alpha_0+n)) \\
         &  (\psi(\alpha_j+m)-\psi(\alpha_0+n)),
    \end{split}
\end{align}
for $i\not=j$, where $\psi$ is the Digamma function and $\psi'$ is the Trigamma function.
Similarly, the covariance $\Cov{\dirp_i^n \log \dirp_i,  \dirp_i^m \log \dirp_i}$ is given by
\begin{align}
    \MoveEqLeft[3] \Cov{\dirp_i^n \log \dirp_i,  \dirp_i^m \log \dirp_i} \\
    \begin{split}
        ={}& \frac{\alpha_i^{\overline{n+m}}}{\alpha_0^{\overline{n+m}}}
            \left((\psi(\alpha_i+n+m)-\psi(\alpha_0+n+m))^2 \right.\\
         & + \left. \psi'(\alpha_i+n+m) - \psi'(\alpha_0+n+m)\right) \\
         & + \frac{\alpha_i^{\overline{n}}\,\alpha_i^{\overline{m}}}{\alpha_0^{\overline{n}}\,\alpha_0^{\overline{m}}}
         (\psi(\alpha_i+n)-\psi(\alpha_0+n)) \\
         & \quad \quad (\psi(\alpha_i+m)-\psi(\alpha_0+n)).
    \end{split}
\end{align}
\end{lemma}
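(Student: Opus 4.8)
The plan is to expand each covariance through the elementary identity $\Cov{A,B} = \E{}{AB} - \E{}{A}\,\E{}{B}$ and to evaluate both pieces with the reparameterization already used in the proof of \Cref{lemma:dirichlet_basics}(3): multiplying the density $\Dir(\dirp;\alpha)$ by a monomial $\dirp_i^n \dirp_j^m$ rescales it by the factor $\frac{\alpha_i^{\overline{n}}\,\alpha_j^{\overline{m}}}{\alpha_0^{\overline{n+m}}}$ into a new Dirichlet density $\Dir(\hat\dirp;\hat\alpha)$ with shifted concentration parameters $\hat\alpha = (\alpha_1,\ldots,\alpha_i+n,\ldots,\alpha_j+m,\ldots,\alpha_K)$ and $\hat\alpha_0 = \alpha_0+n+m$. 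No new integration is then needed; the whole argument is assembling the three identities \eqref{eq:psi_log_expectation}, \eqref{eq:psip_log_expectation}, \eqref{eq:nm_log_expectation} under this substitution.

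First I would handle the cross term ($i\neq j$). For $\E{}{\dirp_i^n \dirp_j^m \log\dirp_i \log\dirp_j}$, the reparameterization pulls out $\frac{\alpha_i^{\overline{n}}\,\alpha_j^{\overline{m}}}{\alpha_0^{\overline{n+m}}}$ and leaves the residual integral $\E{\hat\dirp\sim\Dir(\hat\alpha)}{\log\hat\dirp_i \log\hat\dirp_j}$. I would rewrite this residual as $\Cov{\log\hat\dirp_i,\log\hat\dirp_j} + \E{}{\log\hat\dirp_i}\,\E{}{\log\hat\dirp_j}$ and substitute \eqref{eq:psi_log_expectation} and \eqref{eq:psip_log_expectation} evaluated at $\hat\alpha$. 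Since $i\neq j$, the Kronecker delta in \eqref{eq:psip_log_expectation} drops and the covariance contributes only $-\psi'(\alpha_0+n+m)$, while the two means give $(\psi(\alpha_i+n)-\psi(\alpha_0+n+m))$ and $(\psi(\alpha_j+m)-\psi(\alpha_0+n+m))$; this reproduces the first bracketed summand. Subtracting the product of marginal expectations $\E{}{\dirp_i^n\log\dirp_i}\,\E{}{\dirp_j^m\log\dirp_j}$, each obtained from \eqref{eq:nm_log_expectation} by specializing the unused exponent to zero (so $\alpha_\bullet^{\overline{0}}=1$ and $\alpha_0^{\overline{n}}$, $\alpha_0^{\overline{m}}$ replace $\alpha_0^{\overline{n+m}}$), then yields the remaining summand.

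Next, for the diagonal term ($i=j$) I would proceed identically, except the monomial is now $\dirp_i^{n+m}$, shifting $\alpha_i \to \alpha_i+n+m$ with prefactor $\frac{\alpha_i^{\overline{n+m}}}{\alpha_0^{\overline{n+m}}}$, and the residual integral is the second moment $\E{\hat\dirp}{(\log\hat\dirp_i)^2} = \Var{\log\hat\dirp_i} + \big(\E{}{\log\hat\dirp_i}\big)^2$. Here \eqref{eq:psip_log_expectation} with coincident indices keeps the diagonal term, giving $\psi'(\alpha_i+n+m) - \psi'(\alpha_0+n+m)$, and \eqref{eq:psi_log_expectation} supplies $(\psi(\alpha_i+n+m)-\psi(\alpha_0+n+m))^2$; together these form the first summand. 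The product of marginals from the single-index specialization of \eqref{eq:nm_log_expectation} supplies the second.

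The main difficulty here is purely bookkeeping: keeping the shifted arguments of $\psi$ and $\psi'$ consistent. The joint moment $\E{}{AB}$ lives on the fully shifted Dirichlet with total concentration $\alpha_0+n+m$, whereas each marginal is evaluated on its own singly shifted Dirichlet with total concentration $\alpha_0+n$ and $\alpha_0+m$ respectively; conflating these totals, or forgetting that the $\delta_{ij}$ term survives only in the diagonal case, is the easiest way to introduce an error. I expect no genuine analytic obstacle, only careful tracking of the rising-factorial prefactors and the two different reparameterizations.
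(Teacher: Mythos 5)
Your proposal is correct and is essentially the paper's own proof: you apply $\opCov[X,Y] = \E{}{XY} - \E{}{X}\E{}{Y}$ forward, pull out the rising-factorial prefactor via the reparameterization onto the shifted Dirichlet $\Dir(\hat\alpha)$, re-expand the residual moment $\E{}{\log\hat{\dirp}_i \log\hat{\dirp}_j}$ backward into covariance plus product of means, and finish with \eqref{eq:psi_log_expectation} and \eqref{eq:psip_log_expectation}, handling the $i=j$ case by the same pattern (which you, unlike the paper, spell out explicitly via $\E{}{(\log\hat{\dirp}_i)^2} = \Var{\log\hat{\dirp}_i} + (\E{}{\log\hat{\dirp}_i})^2$). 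One remark: your derivation correctly produces the marginal-product term subtracted (with $\psi(\alpha_0+m)$ in the final factor), which is what the paper's own proof also derives, so the plus sign and the $\psi(\alpha_0+n)$ appearing in that term of the printed lemma statement are typos in the statement rather than a discrepancy in your argument.
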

Regrettably, the equations are getting large. By abuse of notation, we introduce a convenient shorthand before proving the lemma.
\newcommand{\psishort}[2]{\overline{\E{}{\log \hat{\dirp}_{#1}^{#2}}}}
\newcommand{\psipshort}[4]{\overline{\Cov{\log \hat{\dirp}_{#1}^{#2}, \log \hat{\dirp}_{#3}^{#4}}}}
\begin{definition}
We will denote by 
\begin{align}
    \psishort{i}{n,m} = \psi(\alpha_i+n)-\psi(\alpha_0+n+m),
\end{align} and use $\psishort{i}{n}$ for $\psishort{i}{n,0}$.
Likewise,
\begin{align}
    \psipshort{i}{n,m}{j}{n,m} = \psi'(\alpha_i + n) \delta_{ij} - \psi'(\alpha_0 + n + m).
\end{align}
\end{definition}
This notation agrees with the proof of \Cref{eq:psi_log_expectation} and \eqref{eq:psip_log_expectation} in \Cref{lemma:dirichlet_basics}. With this, we can significantly simplify the previous statements:
\begin{corollary}
Given a Dirichlet distribution and random variable $\dirp\sim\Dir(\alpha)$,
\begin{align}
    \E{}{\dirp_i^n \dirp_j^m \log \dirp_i} &= \frac{\alpha_i^{\overline{n}} \, \alpha_j^{\overline{m}}}{\alpha_0^{\overline{n+m}}} \psishort{i}{n,m},
\end{align}
\begin{align}
\MoveEqLeft[3] \Cov{\dirp_i^n \log \dirp_i,  \dirp_j^m \log \dirp_j} \\
\begin{split}
={}& \frac{\alpha_i^{\overline{n}}\,\alpha_j^{\overline{m}}}{\alpha_0^{\overline{n+m}}}
\left( \psishort{i}{n,m} \psishort{j}{m,n} \right. \\
& \quad \quad \left. \psipshort{i}{n,m}{j}{n,m} \right) \\
& + \frac{\alpha_i^{\overline{n}}\,\alpha_j^{\overline{m}}}{\alpha_0^{\overline{n}}\,\alpha_0^{\overline{m}}}
\psishort{i}{n} \psishort{j}{m} \quad \text{for $i \not= j$, and}
\end{split} \\
\MoveEqLeft[3] \Cov{\dirp_i^n \log \dirp_i,  \dirp_i^m \log \dirp_i} \\
    \begin{split}
        ={}& \frac{\alpha_i^{\overline{n+m}}}{\alpha_0^{\overline{n+m}}}
        \left(\psishort{i}{n+m}^2 \right.\\ 
        & \left. + \psipshort{i}{n+m}{i}{n+m} \right)\\
        & + \frac{\alpha_i^{\overline{n}}\,\alpha_i^{\overline{m}}}{\alpha_0^{\overline{n}}\,\alpha_0^{\overline{m}}} \psishort{i}{n}\psishort{j}{m}.
    \end{split}
\end{align}
\end{corollary}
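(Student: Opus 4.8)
The plan is to compute the covariance straight from its definition,
$\Cov{\dirp_i^n \log \dirp_i,\, \dirp_j^m \log \dirp_j} = \E{}{\dirp_i^n \dirp_j^m \log \dirp_i \log \dirp_j} - \E{}{\dirp_i^n \log \dirp_i}\,\E{}{\dirp_j^m \log \dirp_j}$,
and to reduce every piece to Digamma/Trigamma expressions already delivered by \Cref{lemma:dirichlet_basics}. The product-of-means term is immediate: each factor is the degenerate case of \eqref{eq:nm_log_expectation} in which one exponent vanishes, so $\E{}{\dirp_i^n \log \dirp_i} = \frac{\alpha_i^{\overline{n}}}{\alpha_0^{\overline{n}}}\,\psishort{i}{n}$ and likewise for $j,m$, giving the summand $\frac{\alpha_i^{\overline{n}}\alpha_j^{\overline{m}}}{\alpha_0^{\overline{n}}\alpha_0^{\overline{m}}}\,\psishort{i}{n}\,\psishort{j}{m}$. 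The genuine work is the mixed second moment.

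For that moment I would reuse the reparameterization already employed to establish \eqref{eq:nm_log_expectation}: absorbing the monomial $\dirp_i^n \dirp_j^m$ into the Dirichlet density shifts the concentration parameters to $\hat\alpha = (\ldots,\alpha_i+n,\ldots,\alpha_j+m,\ldots)$ with $\hat\alpha_0 = \alpha_0+n+m$, and pulls out the rising-factorial prefactor $\frac{\alpha_i^{\overline{n}}\alpha_j^{\overline{m}}}{\alpha_0^{\overline{n+m}}}$, leaving the cleaner quantity $\E{\hat\dirp\sim\Dir(\hat\alpha)}{\log\hat\dirp_i \log\hat\dirp_j}$. I would then split this residual second moment of the log-coordinates as $\Cov{\log\hat\dirp_i,\log\hat\dirp_j} + \E{}{\log\hat\dirp_i}\,\E{}{\log\hat\dirp_j}$ and substitute \eqref{eq:psi_log_expectation} and \eqref{eq:psip_log_expectation} evaluated at $\hat\alpha$. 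In the off-diagonal case $i\neq j$ the Kronecker delta in \eqref{eq:psip_log_expectation} kills the diagonal Trigamma contribution, so the residual becomes $\psishort{i}{n,m}\,\psishort{j}{m,n} - \psi'(\alpha_0+n+m)$; multiplying by the prefactor and subtracting the product of means reproduces the first displayed identity.

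The main obstacle is the diagonal case $i=j$, where the monomial collapses to $\dirp_i^{n+m}$ and the two logarithms merge into $(\log\hat\dirp_i)^2$. Here the shift is $\hat\alpha_i = \alpha_i+n+m$, the prefactor is $\frac{\alpha_i^{\overline{n+m}}}{\alpha_0^{\overline{n+m}}}$, and the residual is the raw second moment $\E{}{(\log\hat\dirp_i)^2} = \Var{\log\hat\dirp_i} + \left(\E{}{\log\hat\dirp_i}\right)^2$; the variance is the $i=j$ diagonal of \eqref{eq:psip_log_expectation}, contributing $\psi'(\alpha_i+n+m)-\psi'(\alpha_0+n+m)$, while the squared mean contributes $\psishort{i}{n+m}^2$. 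The subtlety to keep in mind throughout is the mismatch of denominators: the joint-shift prefactor carries $\alpha_0^{\overline{n+m}}$, whereas the product-of-means term carries the split $\alpha_0^{\overline{n}}\,\alpha_0^{\overline{m}}$, and since $\alpha_0^{\overline{n+m}} \neq \alpha_0^{\overline{n}}\,\alpha_0^{\overline{m}}$ in general these cannot be merged. Keeping the two normalizations separate is precisely what forces the two-summand shape of the stated formulas, and collecting the pieces in the shorthand of the preceding Definition then yields both displayed identities.
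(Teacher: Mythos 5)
Your proposal is correct and follows essentially the same route as the paper's own proof of \Cref{lemma:log_covariance} (of which this corollary is just the shorthand restatement): expand the covariance by definition, absorb the monomial $\dirp_i^n \dirp_j^m$ into a shifted Dirichlet $\Dir(\hat\alpha)$ to pull out the rising-factorial prefactor as in \Cref{eq:nm_log_expectation}, re-expand the residual log-moment as covariance plus product of means, and substitute \Cref{eq:psi_log_expectation} and \eqref{eq:psip_log_expectation}, treating the merged case $i=j$ separately. One remark: your derivation (correctly) produces a \emph{minus} sign in front of the product-of-means term $\frac{\alpha_i^{\overline{n}}\,\alpha_j^{\overline{m}}}{\alpha_0^{\overline{n}}\,\alpha_0^{\overline{m}}}\psishort{i}{n}\psishort{j}{m}$, whereas the corollary as printed shows a plus; the paper's own proof of \Cref{lemma:log_covariance} also ends with the minus sign, so this is a sign typo in the stated formula rather than a gap in your argument.
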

\begin{proof}[Proof of \Cref{lemma:log_covariance}]
This proof applies the well-know formula \textbf{(cov)} $\Cov{X,Y} = \E{}{X \, Y} - \E{}{X} \E{}{Y}$ once forward and once backward \textbf{(rcov)} $\E{}{X \, Y} = \Cov{X,Y} + \E{}{X}\E{}{Y}$ while applying \Cref{eq:nm_log_expectation} several times:
\begin{align}
    & \Cov{\dirp_i^n \log \dirp_i,  \dirp_j^m \log \dirp_j} \\
    \begin{split}
    & \quad \overset{\textbf{cov}}{=} \E{}{\dirp_i^n \log(\dirp_i) \, \dirp_j^m \log(\dirp_j)} \\
    & \quad \quad - \E{}{\dirp_i^n \log \dirp_i}\E{}{\dirp_j^m \log \dirp_j}   
    \end{split} \\
    \begin{split}
    & \quad = \frac{\alpha_i^{\overline{n}}\,\alpha_j^{\overline{m}}}{\alpha_0^{\overline{n+m}}} \E{}{\log (\hat\dirp_i^{i,j}) \log(\hat\dirp_j^{i,j})} \\
    & \quad \quad - \E{}{\log \hat\dirp_i^{i}}\E{}{\log \dirp_j^{j}}
    \end{split} \\ 
    \begin{split}
    & \quad \overset{\textbf{(rcov)}}{=} \frac{\alpha_i^{\overline{n}}\,\alpha_j^{\overline{m}}}{\alpha_0^{\overline{n+m}}} 
    \left ( \Cov{\log \hat\dirp_i^{i,j}, \log \hat\dirp_j^{i,j} } \right. \\
    & \left. \quad \quad \quad \quad + \E{}{\log \hat\dirp_i^{i,j}}\E{}{\log \hat\dirp_j^{i,j}} \right ) \\
    & \quad \quad - \frac{\alpha_i^{\overline{n}}\,\alpha_j^{\overline{m}}}{\alpha_0^{\overline{n}}\,\alpha_0^{\overline{m}}} \E{}{\log \hat\dirp_i^{i}}\E{}{\log \dirp_j^{j}},
    \end{split}
\end{align}
where $\dirp^{i,j} \sim \Dir(\alpha^{i,j})$ with $\alpha^{i,j} = (\ldots, \alpha_i + n, \ldots, \alpha_j +m,\ldots)$. $\dirp^{i/j}$ and $\alpha^{i/j}$ are defined analogously. Applying \Cref{eq:psip_log_expectation} and \Cref{eq:psi_log_expectation} from \Cref{lemma:dirichlet_basics} yields the statement.
For $i=j$, the proof follows the same pattern.
\end{proof}
Now, we can prove the theorem that quantifies the variance of the entropy of $Y$:
\begin{theorem}
Given a Dirichlet distribution and a random variable $\dirp \sim Dir(\alpha)$, the variance of the entropy $\opVar_{\dirp \sim \Dir(\alpha)} {\opH_{Y\sim \Cat(\dirp)}[Y]}$ of the categorical distribution $Y \sim \Cat(\dirp)$ is given by
\begin{align}
    \MoveEqLeft[2] \Var{\Hc{Y}{\dirp}} & \\
    \begin{split}
    ={} & \sum_i \frac{\alpha_i^{\overline{2}}}{\alpha_0^{\overline{2}}} \left(\psipshort{i}{2}{i}{2} + \psishort{i}{2}^2 \right ) \\
    +{} & \sum_{i\not=j} \frac{\alpha_i \, \alpha_j}{\alpha_0^{\overline{2}}} \left( \psipshort{i}{1}{j}{1} + \psishort{i}{1,1} \, \psishort{j}{1,1} \right) \\
    -& \sum_{i,j} \frac{\alpha_i \, \alpha_j}{\alpha_0^{2}} \psishort{i}{1} \psishort{j}{1}.
    \end{split}\label{thm:E6}
\end{align}
\end{theorem}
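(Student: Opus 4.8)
The plan is to reduce the variance of the entropy to a double sum of covariances that \Cref{lemma:log_covariance} already evaluates. Since $\Hc{Y}{\dirp} = -\sum_i \dirp_i \log \dirp_i$ and variance is invariant under an overall sign, I would first write, using bilinearity of the covariance,
\[
\Var{\Hc{Y}{\dirp}} = \Var{\textstyle\sum_i \dirp_i \log \dirp_i} = \sum_{i,j} \Cov{\dirp_i \log \dirp_i, \dirp_j \log \dirp_j},
\]
and then split this double sum into its diagonal ($i = j$) and off-diagonal ($i \neq j$) parts, which are exactly the two cases handled separately by \Cref{lemma:log_covariance}.

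For the off-diagonal terms I would instantiate the $i \neq j$ branch of \Cref{lemma:log_covariance} at $n = m = 1$. Then $\alpha_i^{\overline{1}} = \alpha_i$, the reparametrised total concentration is $\alpha_0 + 2$, the covariance-of-logs factor collapses to $\psipshort{i}{1}{j}{1} = -\psi'(\alpha_0 + 2)$, and the expectation factors become $\psishort{i}{1,1}$, $\psishort{j}{1,1}$, yielding the prefactor $\alpha_i \alpha_j / \alpha_0^{\overline{2}}$ on the first group together with the product-of-means correction $-\tfrac{\alpha_i \alpha_j}{\alpha_0^2}\psishort{i}{1}\psishort{j}{1}$ arising from $\Cov{X,Y} = \E{}{X Y} - \E{}{X}\E{}{Y}$. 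For the diagonal I would instantiate the $i = j$ branch at $n = m = 1$: the reparametrisation sends $\alpha_i \mapsto \alpha_i + 2$ and $\alpha_0 \mapsto \alpha_0 + 2$, the second-moment factor is $\psipshort{i}{2}{i}{2} + \psishort{i}{2}^2$ with prefactor $\alpha_i^{\overline{2}}/\alpha_0^{\overline{2}}$, and the squared-mean correction is $-\tfrac{\alpha_i^2}{\alpha_0^2}\psishort{i}{1}^2$.

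The only genuinely delicate step — and the main obstacle — is the bookkeeping of the product-of-means corrections. The diagonal correction $-\tfrac{\alpha_i^2}{\alpha_0^2}\psishort{i}{1}^2$ is precisely the $i = j$ instance of the off-diagonal correction $-\tfrac{\alpha_i \alpha_j}{\alpha_0^2}\psishort{i}{1}\psishort{j}{1}$, so I would merge both into a single unrestricted sum $-\sum_{i,j}\tfrac{\alpha_i \alpha_j}{\alpha_0^2}\psishort{i}{1}\psishort{j}{1}$, which is the last line of the claimed identity. Using $\E{}{\dirp_i \log \dirp_i} = \tfrac{\alpha_i}{\alpha_0}\psishort{i}{1}$ from \Cref{eq:nm_log_expectation}, this merged sum equals $-\E{}{\Hc{Y}{\dirp}}^2$, so the three groups are nothing but the familiar $\Var = \E{}{(\cdot)^2} - \E{}{\cdot}^2$ decomposition — a useful sanity check that the signs have been combined correctly. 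The remaining diagonal and off-diagonal second-moment pieces then become the first two sums, giving \eqref{thm:E6}.

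To close, I would double-check the rising-factorial prefactors — $\alpha_i^{\overline{2}}/\alpha_0^{\overline{2}}$ on the diagonal and $\alpha_i \alpha_j / \alpha_0^{\overline{2}}$ off the diagonal — against those produced by the Gamma-ratio reparametrisation in \Cref{eq:nm_log_expectation}, since a dropped factor of $\alpha_0 + 1$ there is the most likely source of error; everything else is a mechanical substitution of the Digamma/Trigamma shorthands from \Cref{lemma:dirichlet_basics}.
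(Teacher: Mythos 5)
Your proof is correct and is essentially the paper's own argument: the paper's proof consists of exactly the two steps you describe, namely expanding $\Var{\Hc{Y}{\dirp}}=\sum_{i,j}\Cov{\dirp_i\log\dirp_i,\,\dirp_j\log\dirp_j}$ and then applying \Cref{lemma:log_covariance} repeatedly (i.e., at $n=m=1$ for the diagonal and off-diagonal terms). Your explicit bookkeeping of the product-of-means corrections, merged into the unrestricted sum $-\sum_{i,j}\tfrac{\alpha_i\alpha_j}{\alpha_0^{2}}\bigl(\psi(\alpha_i+1)-\psi(\alpha_0+1)\bigr)\bigl(\psi(\alpha_j+1)-\psi(\alpha_0+1)\bigr)=-\E{}{\Hc{Y}{\dirp}}^2$, simply fills in (with the correct signs) the detail the paper leaves implicit.
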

\begin{proof}
We start by applying the well-known formula $\Var{\sum_i X_i}=\sum_{i,j} \Cov{X_i, X_j}$ and then apply  \Cref{lemma:log_covariance} repeatedly.
\end{proof}

\paragraph{Main Result} ~\\

Given that we can view an ensemble member as a single deterministic model and vice versa, this provides an intuitive explanation for why single deterministic models report inconsistent and widely varying predictive entropies and confidence scores for OoD samples for which a Deep Ensemble would report high epistemic uncertainty (expected information gain) and high predictive entropy.

Assuming that $p(y|x, \omega)$ only depends on $\preddis$ and $\infogain$, we model the distribution of $p(y|x,\omega)$ (as a function of $\omega$) using a Dirichlet distribution $\Dir(\alpha)$ which satisfies:
\begin{align}
    \preddis &= \frac{\alpha_i}{\alpha_0} \\
    \predent - \infogain &= \psi(\alpha_0+1)\\
    &\quad - \sum_{y=1}^K \probc{y}{x} \psi(\alpha_0 \,\probc{y}{x} +1)..
\end{align}
Then, we can model the softmax distribution using a random variable $\dirp \sim \Dir(\alpha)$ as:
\begin{align}
    \smdis \overset{\approx}{\sim} \Cat(\dirp).
\end{align}
The variance $\Var{\sment}$ of the softmax entropy for different samples $x$ given $\preddis$ and $\infogain$ is then approximated by $\Var{\Hc{Y}{\dirp}}$:
\begin{align}
    \opVar_\omega[\sment] \approx \opVar_\dirp[\Hc{Y}{\dirp}]
\end{align}
with the latter term given in eq.\ \eqref{thm:E6}. We empirically find this to provide a lower bound on the true variance $\opVar_\omega[\sment]$. 

\paragraph{Empirical Results} ~\\

\begin{figure*}[t]
    \centering
    \includegraphics[width=\linewidth]{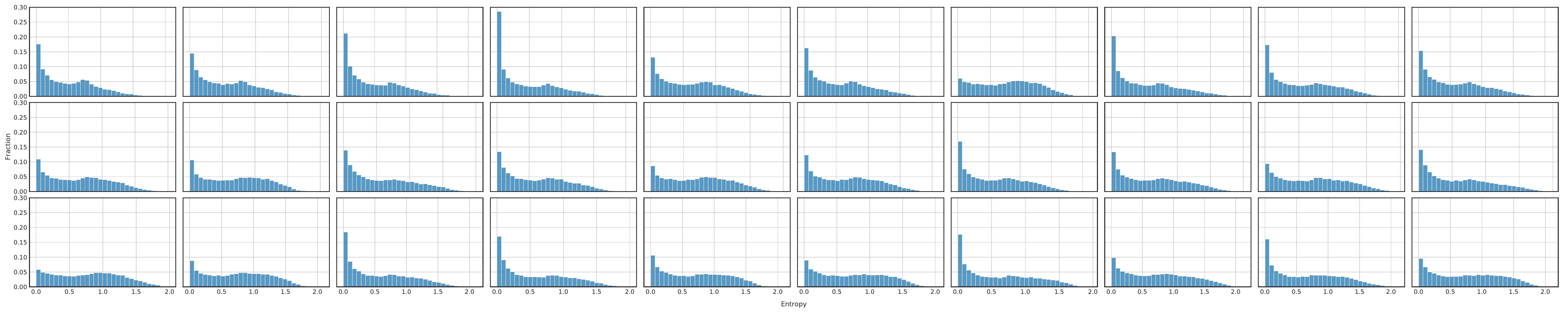}
    \caption{
    \emph{
    Softmax entropy histograms of 30 \emph{Wide-ResNet-28-10+SN} models trained on CIFAR-10, evaluated on SVHN (OoD).}   
    The softmax entropy distribution of the different models varies considerably.
    }
    \label{fig:wide_resnet_mod_sn_svhn_test_entropy_histograms}
\end{figure*}

\begin{figure*}[t]
    \centering
    \begin{subfigure}{0.32\linewidth}
        \centering
        \includegraphics[width=\linewidth]{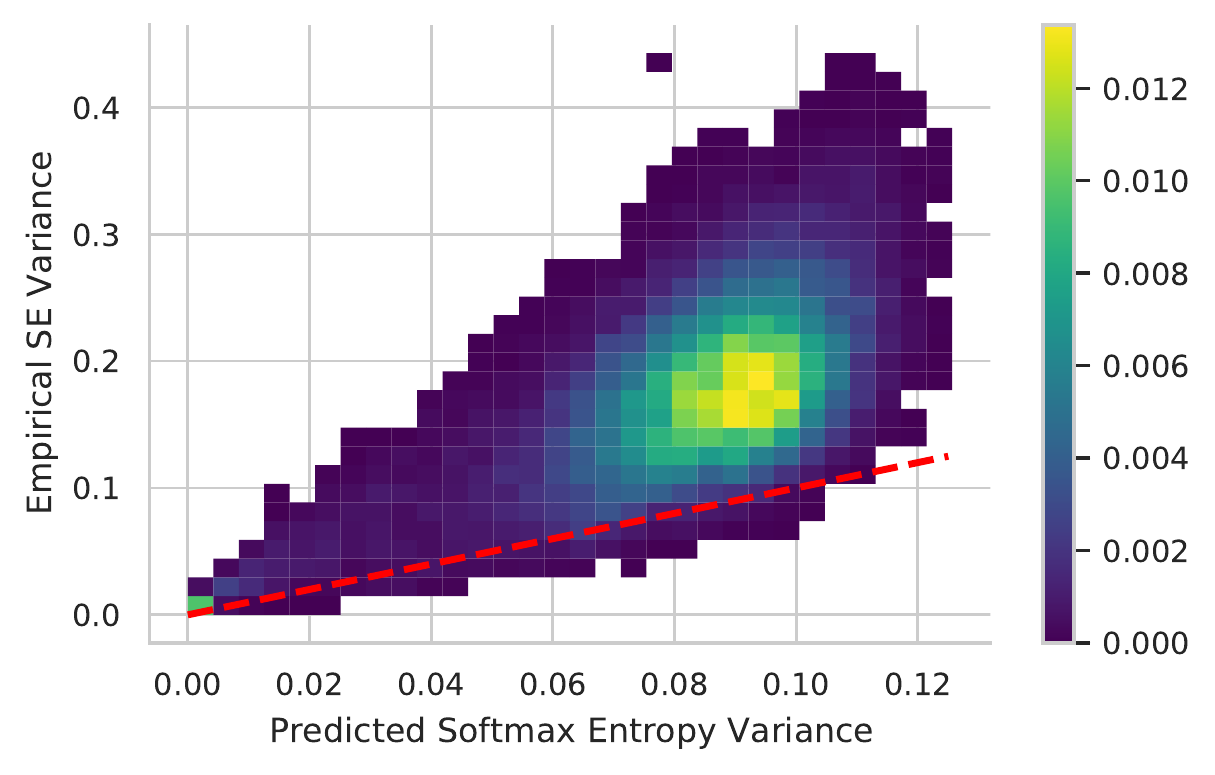}
        \caption{Simulated vs Ground-Truth}
        \label{fig:var_sm_lower_bound:verification}
    \end{subfigure}
    \begin{subfigure}{0.32\linewidth}
        \centering
        \includegraphics[width=\linewidth]{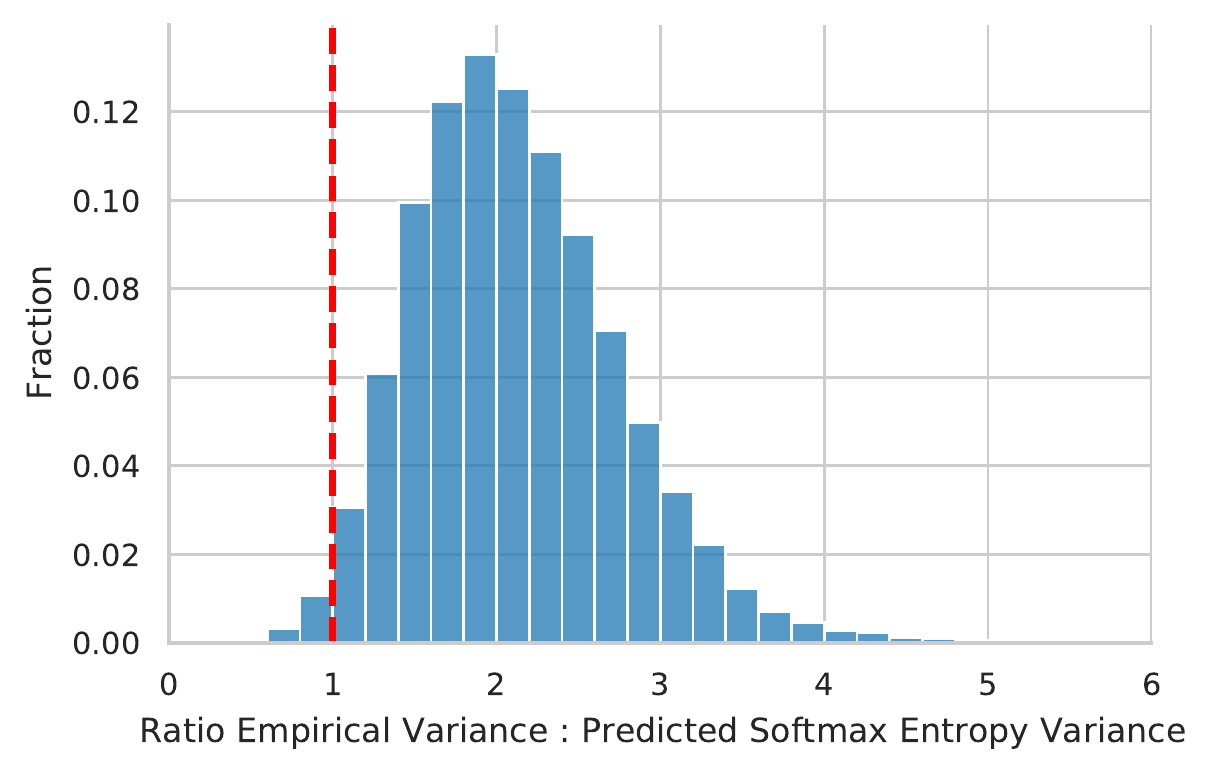}
        \caption{Ratio Histogram}
        \label{fig:var_sm_lower_bound:ratio}
    \end{subfigure}
    \begin{subfigure}{0.32\linewidth}
        \centering
        \includegraphics[width=\linewidth]{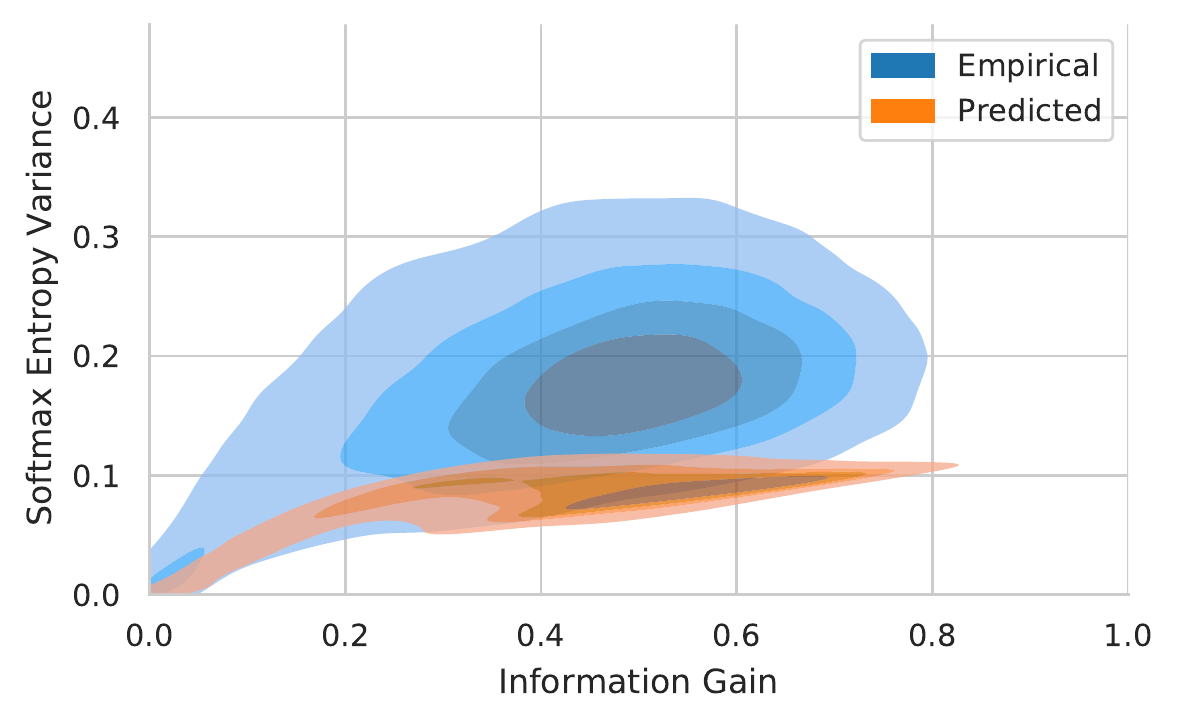}
        \caption{Density Plot}
        \label{fig:var_sm_lower_bound:density_infogain}
    \end{subfigure}
    \caption{
    \emph{The variance of softmax entropies can be lower-bounded by fitting Dirichlet distributions on the samples $\smdis$.}
    \textbf{\subref{fig:var_sm_lower_bound:verification}} The empirical variance of softmax entropies is lower-bounded by $\Var{\Hc{Y}{\dirp}}$. The red dashed line depicts equality.
    \textbf{\subref{fig:var_sm_lower_bound:ratio}} The ratio histogram shows that there are only few violations due to precision issues ($<2\%$).%
    \textbf{\subref{fig:var_sm_lower_bound:density_infogain}} The variance of the softmax entropy is not linearly correlated to the epistemic uncertainty. For both high and low epistemic uncertainty, the variance decreases.
    }
    \label{fig:var_sm_lower_bound}
\end{figure*}

\begin{figure}[t]
    \centering
    \begin{subfigure}{0.495\linewidth}
        \centering
        \includegraphics[width=\linewidth]{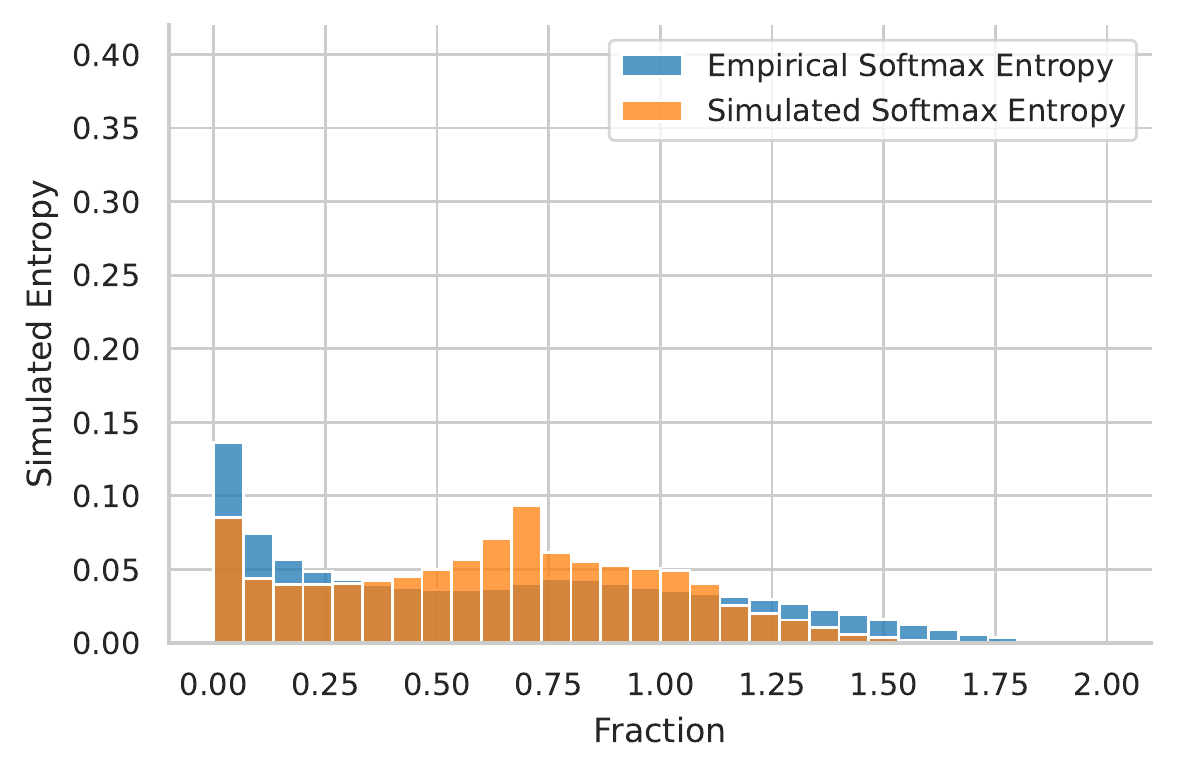}
        \caption{WideResNet-28-10+SN}
        \label{fig:vgg16_vs_wideresnet_sm:wrn_simulation}
    \end{subfigure} 
    \begin{subfigure}{0.495\linewidth}
        \centering
        \includegraphics[width=\linewidth]{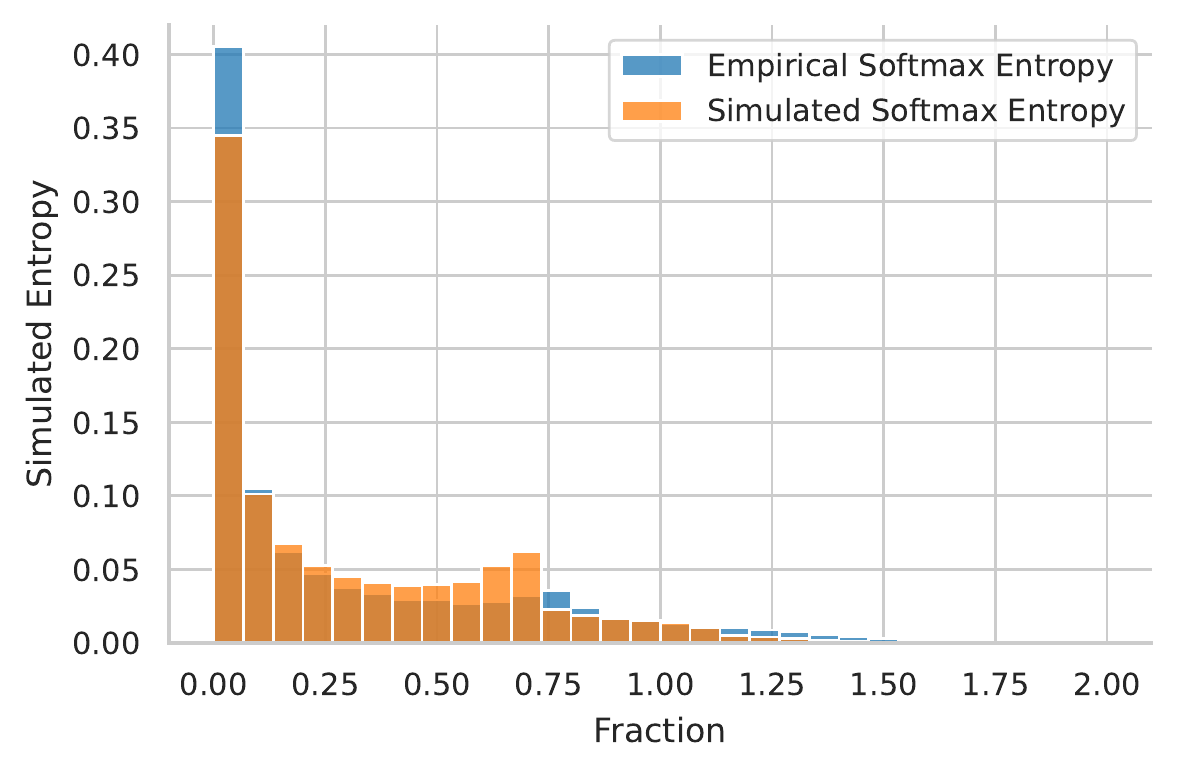}
        \caption{VGG16}
        \label{fig:vgg16_vs_wideresnet_sm:vgg16_simulation}
    \end{subfigure}
    \caption{
    \emph{Simulated vs empirical softmax entropy on WideResNet-28-10+SN and VGG16.} Even though the Dirichlet variance approximation lower-bounds the empirical softmax entropy variance, sampling from the fitted Dirichlet distributions does approximate the empirical entropy distribution quite well. 
    }
    \label{fig:vgg16_vs_wideresnet_sm}
\end{figure}

We empirically verify that softmax entropies vary considerably in \Cref{fig:wide_resnet_mod_sn_svhn_test_entropy_histograms}. In \Cref{fig:var_sm_lower_bound}, we verify that the predicted softmax entropy variance indeed lower-bounds the empirical softmax entropy variance. Morever, \Cref{fig:var_sm_lower_bound:density_infogain} shows both \textbf{i)} the non-linear relationship between epistemic uncertainty and variance in the softmax entropies and \textbf{ii)} that Dirichlet distributions cannot capture it and can only provide a lower bound. Nonetheless, this simple approximation seems to be able to capture the empirical entropy distribution quite well as shown in \Cref{fig:vgg16_vs_wideresnet_sm}.

\subsection{Objective Mismatch}
In \S\ref{sec:motivation}, we noted that the objectives that lead to optimal estimators for aleatoric and epistemic uncertainty via softmax entropy and feature-space density do not match, and DDU therefore uses the softmax layer as a discriminative classifier (implicit LDA) to estimate the predictive entropy, while it is using a GMM as generative classifier to estimate the feature-space density. Here we prove this.

\subsubsection{Preliminaries}
Before we prove \Cref{pro:objectivemismatch}, we will introduce some additional notation following \citet{kirsch2020unpacking}.
\begin{definition}
\label{def:it_notation}
\begin{enumerate}[wide, labelwidth=!, labelindent=0pt]
    \item $\hpprob{y, z}$ is the data distribution of the $\data$ in feature space with class labels $y$ and feature representation $z$.
    \item $\ptprob{\cdot}$ is a probability distribution parameterized by $\theta$.
    \item Entropies and conditional entropies are over the empirical data distribution $\hpprob{\cdot}$:
    \begin{align}
        \Entropy{\cdot} = \opH \left ( \hpprob{\cdot} \right )= \E{\hpprob{\cdot}}{-\log\hpprob{\cdot}}.
    \end{align}
    \item $\Hc{Y}{z}$ is the entropy of $\hpprobc{y}{z}$ for a given $z$, whereas $\Hc{Y}{Z}$ is the conditional entropy:
    \begin{align}
        \Hc{Y}{Z} = \chainedE{\hpprob{z}}{\Hc{Y}{z}}.
    \end{align}
    \item $\CrossEntropy{\prob{y, z}}{\qprobc{y|z}}$ is the cross-entropy of $\qprobc{y}{z}$ under $\probc{y}{z}$ in expectation over $\prob{z}$:
    \begin{align*}
        \CrossEntropy{\prob{y, z}}{\qprobc{y}{z}} &= \chainedE{\prob{z}}{\CrossEntropy{\probc{y}{z}}{\qprobc{y}{z}}} \\
        &= \E{\prob{y, z}}{-\log \qprobc{y}{z}}.
    \end{align*}
    \item Similarly, $\Kale{\prob{y, z}}{\qprobc{y|z}}$ is the Kullback-Leibler divergence of $\qprobc{y}{z}$ under $\probc{y}{z}$ in expectation over $\prob{z}$:
    \begin{align*}
        \Kale{\prob{y, z}}{\qprobc{y|z}} &=
        \chainedE{\prob{z}}{\Kale{\probc{y}{z}}{\qprobc{y|z}}} \\
        &=
        \CrossEntropy{\prob{y, z}}{\qprobc{y|z}} - \Hc{Y}{Z}
    \end{align*}
    \item For cross-entropies of $\ptprob{\cdot}$ under $\hpprob{z, y}$, we use the convenient short-hand $\Ht{\cdot} = \CrossEntropy{\hpprob{z, y}}{\ptprob{\cdot}}$. 
\end{enumerate}
\end{definition}
Then we can observe the following connection between $\Ht{\cdot}$ and $\Entropy{\cdot}$:
\begin{lemma}
Cross-entropies upper-bound the respective entropy with equality when $\ptprob{\cdot} = \hpprob{\cdot}$, which is important for variational arguments:
\begin{enumerate}
    \item $\Ht{Y, Z} \ge \Entropy{Y, Z}$,
    \item $\Ht{Z} \ge \Entropy{Z}$, and
    \item $\Htc{Y}{Z} \ge \Hc{Y}{Z}$.
\end{enumerate}
\end{lemma}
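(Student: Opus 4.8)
The plan is to reduce all three inequalities to a single fact—Gibbs' inequality, i.e.\ the non-negativity of the Kullback–Leibler divergence—which in turn follows from Jensen's inequality applied to the concave logarithm. First I would record, for arbitrary distributions $p$ and $q$ on a common space, the identity $\CrossEntropy{p}{q} - \opH(p) = \Kale{p}{q} \ge 0$, where non-negativity is immediate from $\Kale{p}{q} = \E{p}{-\log(q/p)} \ge -\log \E{p}{q/p} = -\log 1 = 0$ by Jensen, with equality exactly when $p = q$ almost everywhere. Each of the three claims is then this identity instantiated at the right pair of distributions, all taken with the empirical distribution $\hpprob{\cdot}$ as the reference measure per \Cref{def:it_notation}.

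For item (1), I would set $p = \hpprob{y, z}$ and $q = \ptprob{y, z}$; since $\Ht{Y, Z} = \CrossEntropy{\hpprob{z, y}}{\ptprob{y, z}}$ by definition, the identity gives $\Ht{Y, Z} - \Entropy{Y, Z} = \Kale{\hpprob{y, z}}{\ptprob{y, z}} \ge 0$. For item (2), the only extra step is to marginalize out $y$: because $\ptprob{z}$ does not depend on $y$, we have $\Ht{Z} = \E{\hpprob{y, z}}{-\log \ptprob{z}} = \E{\hpprob{z}}{-\log \ptprob{z}} = \CrossEntropy{\hpprob{z}}{\ptprob{z}}$, and then the identity with $p = \hpprob{z}$, $q = \ptprob{z}$ yields $\Ht{Z} - \Entropy{Z} = \Kale{\hpprob{z}}{\ptprob{z}} \ge 0$.

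Item (3) is the conditional case, where both $\Htc{Y}{Z}$ and $\Hc{Y}{Z}$ are already averages over $\hpprob{z}$, so rather than a single joint KL I would express the difference as an expectation of pointwise divergences:
\begin{align*}
    \Htc{Y}{Z} - \Hc{Y}{Z}
    &= \E{\hpprob{y, z}}{-\log \ptprobc{y}{z}} - \E{\hpprob{y, z}}{-\log \hpprobc{y}{z}} \\
    &= \chainedE{\hpprob{z}}{\Kale{\hpprobc{y}{z}}{\ptprobc{y}{z}}} \ge 0,
\end{align*}
where each inner term $\Kale{\hpprobc{y}{z}}{\ptprobc{y}{z}}$ is non-negative by Gibbs' inequality applied at fixed $z$, and an expectation of non-negative quantities is non-negative. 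In all three cases equality is attained precisely when the parameterized distribution matches the empirical one, recovering the claimed tightness at $\ptprob{\cdot} = \hpprob{\cdot}$.

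There is no substantive obstacle—the entire content is Gibbs' inequality—so the only real care is notational: keeping $\Ht{\cdot}$ read consistently as the empirical cross-entropy $\CrossEntropy{\hpprob{z, y}}{\ptprob{\cdot}}$, performing the marginalization in (2) and the conditioning in (3) against the correct reference $\hpprob{z}$, and, for (3), recognizing that averaging the pointwise KL terms over $\hpprob{z}$ is the right decomposition (these match the conditional-KL shorthand of \Cref{def:it_notation} via the chain rule).
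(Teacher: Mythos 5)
Your proposal is correct and takes essentially the same approach as the paper: each of the three inequalities is reduced to non-negativity of the Kullback--Leibler divergence, applied to the joint distributions for (1), the marginals over $z$ for (2), and the conditionals at fixed $z$ averaged over $\hpprob{z}$ for (3), with equality exactly at $\ptprob{\cdot} = \hpprob{\cdot}$. If anything, you are slightly more explicit than the paper, which dispatches item (2) with a terse ``follows from (1)''---your explicit marginalization over $y$ (using that $\ptprob{z}$ does not depend on $y$) is the argument actually needed there, since the joint inequality alone does not formally imply the marginal one.
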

\begin{proof}
\begin{enumerate}[wide, labelwidth=!, labelindent=0pt]
    \item \label{enum:joint_variational}
\begin{math}
\Ht{Y, Z} - \Entropy{Y, Z} = \Kale{\hpprob{y,z}}{\ptprob{y,z}} \ge 0.
\end{math}
\item follows from \Cref{enum:joint_variational}.
\item We expand the expectations and note that inequality commutes with expectations:
\begin{align*}
    \Htc{Y}{Z} - \Hc{Y}{Z} = \E{\hpprob{z}}{\Htc{Y}{z} - \Hc{Y}{z}} \ge 0,    
\end{align*}
because $\Htc{Y}{z} - \Hc{Y}{z} \ge 0$ for all $z$.
The equality conditions follows from the properties of the Kullback-Leibler divergence as well.
\end{enumerate}

\end{proof}
We also have:
\begin{lemma}
\begin{align}
\Ht{Y, Z} &= \Htc{Y}{Z} + \Ht{Z} \\
&= \Htc{Z}{Y} + \Ht{Y}.
\end{align}
\end{lemma}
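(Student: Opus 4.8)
The plan is to recognize this as the chain rule for cross-entropy and to prove it by unfolding the short-hand $\Ht{\cdot} = \CrossEntropy{\hpprob{z,y}}{\ptprob{\cdot}}$ from \Cref{def:it_notation}, factorizing the \emph{model} distribution $\ptprob{\cdot}$ inside the logarithm, and then exploiting linearity of the expectation taken over the \emph{empirical} distribution $\hpprob{y,z}$. The two displayed equalities are symmetric under exchanging the roles of $Y$ and $Z$, so it suffices to establish the first and then invoke symmetry for the second.

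First I would write $\Ht{Y,Z} = \E{\hpprob{y,z}}{-\log \ptprob{y,z}}$ directly from the definition. The key algebraic step is the factorization $\ptprob{y,z} = \ptprobc{y}{z}\,\ptprob{z}$, which holds because $\ptprob{\cdot}$ is a genuine joint distribution and hence obeys the product rule. Taking $-\log$ turns this product into a sum, $-\log\ptprob{y,z} = -\log\ptprobc{y}{z} - \log\ptprob{z}$, and linearity of expectation splits $\Ht{Y,Z}$ into $\E{\hpprob{y,z}}{-\log\ptprobc{y}{z}}$ and $\E{\hpprob{y,z}}{-\log\ptprob{z}}$. The first term is exactly $\Htc{Y}{Z}$ upon combining the short-hand $\Ht{\cdot}=\CrossEntropy{\hpprob{z,y}}{\ptprob{\cdot}}$ with the conditional-cross-entropy convention $\CrossEntropy{\prob{y,z}}{\qprobc{y}{z}}=\E{\prob{y,z}}{-\log\qprobc{y}{z}}$. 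For the second term, since $-\log\ptprob{z}$ does not depend on $y$, I would marginalize out $y$ via $\sum_y \hpprob{y,z} = \hpprob{z}$, reducing it to $\E{\hpprob{z}}{-\log\ptprob{z}} = \Ht{Z}$. This gives $\Ht{Y,Z} = \Htc{Y}{Z} + \Ht{Z}$.

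The second identity follows by the same argument with the alternative factorization $\ptprob{y,z} = \ptprobc{z}{y}\,\ptprob{y}$, marginalizing out $z$ rather than $y$ in the analogous step; equivalently, the whole derivation is invariant under swapping the symbols $Y$ and $Z$, so the result is immediate.

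The only real subtlety — and the step I would flag most carefully — is keeping track of which distribution plays which role: the expectation is always over the fixed empirical distribution $\hpprob{y,z}$, whereas the argument of the logarithm is the model distribution $\ptprob{\cdot}$, and it is the \emph{model's} product rule (not any factorization of $\hpprob{\cdot}$) that licenses splitting the log. Once this distinction is made explicit, the proof is a short computation requiring no inequality or variational argument, in contrast to the preceding lemma.
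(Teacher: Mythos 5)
Your proof is correct and follows essentially the same route as the paper's: substitute the definition of the cross-entropy, factor the model distribution $\ptprob{y,z} = \ptprobc{y}{z}\,\ptprob{z}$ inside the logarithm, and split the expectation over $\hpprob{y,z}$ into $\Htc{Y}{Z} + \Ht{Z}$, with the second identity by the symmetric factorization. Your added care about distinguishing the empirical (outer) distribution from the model (inner) distribution, and about marginalizing out $y$ in the second term, only makes explicit what the paper's terser computation leaves implicit.
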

\begin{proof}
We substitute the definitions and obtain:
\begin{align}
\Ht{Y, Z} &= \E{\prob{y, z}}{-\log \qprob{y ,z}} \\
&= \E{\prob{y, z}}{-\log \qprobc{y}{z}} + \E{\prob{y, z}}{-\log \qprob{z}} \\
&= \Htc{Y}{Z} + \Ht{Z}.
\end{align}
\end{proof}
The same holds for entropies: 
\begin{math}
    \Entropy{Y, Z} = \Hc{Y}{Z} + \Entropy{Z} 
    = \Hc{Y}{Z} + \Entropy{Y}
\end{math}
\citep{cover1999elements}.

\subsubsection{Proof}
We can now prove the observation.
\newcommand{\trivialminimizer}{\sqprob{y, z}{^*}}
\objectivemismatch*
\begin{proof}
\begin{enumerate}[wide, labelwidth=!, labelindent=0pt]
    \item 
    The conditional log-likelihood is a strictly proper scoring rule \citep{gneiting2007strictly}. The optimization objective can be rewritten as
    \begin{align}
        \label{eq:discriminative_classifier}
        \max_\theta \chainedE{\log \ptprobc{y}{z}} = \min_\theta \Htc{Y}{Z} \ge \Hc{Y}{Z}.
    \end{align}
    An optimal discriminative classifier $\ptprobc{y}{z}$ would thus capture the true (empirical) distribution everywhere: $\ptprobc{y}{z} = \hpprobc{y}{z}$. This means the negative conditional log-likelihood will be equal $\Hc{Y}{Z}$ and $\Htc{Y}{z} = \Hc{Y}{z}$ for all $z$. 
    \item
    For density estimation $\qprob{z}$, the maximum likelihood $\E{}{\log \qprob{z}}$ using the empirical data distribution is maximized. We can rewrite this as
    \begin{align}
        \label{eq:density_estimator}
        \max_\theta \chainedE{\hpprob{y,z}}{\log \ptprob{z}}= \min_\theta \Ht{Z} \ge \Entropy{Z}.
    \end{align}
    We see that the negative marginalized likelihood of the density estimator upper-bounds the entropy of the feature representations $\Entropy{Z}$. We have equality and $\ptprob{z} = \hpprob{z}$ in the optimum case.
    \item
    Using $\Ht{Y, Z} = \Htc{Y}{Z} + \Ht{Z}$, we can relate the objectives from \Cref{eq:discriminative_classifier} and \eqref{eq:density_estimator} to each other. 
    First, we characterize a shared optimum, and then we show that both objectives are generally not minimized at the same time.
    For both objectives to be minimized, we have $\nabla \Htc{Y}{Z} = 0$ and $\nabla \Ht{Z} = 0$, and we obtain
    \begin{align}
        \nabla \Ht{Y, Z} = \nabla \Htc{Y}{Z} + \nabla \Ht{Z} = 0.
    \end{align}
    From this we conclude that minimizing both objectives also minimizes $\Ht{Y, Z}$, and that generally the objectives trade-off with each other at stationary points $\theta$ of $\Ht{Y, Z}$:
    \begin{align}
        \label{eq:classifiers_trade_off}
        \nabla \Htc{Y}{Z} = -\nabla \Ht{Z} \quad \text{when $\nabla \Ht{Y, Z} = 0$}.
    \end{align}
    This tells us that to construct a case where the optima do not coincide, discriminative classification needs to be opposed better density estimation. 

    Specifially, when we have a GMM with one component per class, minimizing $\Ht{Y, Z}$ on an empirical data distribution is equivalent to Gaussian Discriminant Analysis, as is easy to check, and minimizing $\Ht{Z}$ is equivalent to fitting a density estimator, following \Cref{eq:density_estimator}.
    The difference is that using a GMM as a density estimator does not constrain the component assignment, unlike in GDA.
    
    Consequently, we see that \emph{both objectives can be minimized at the same time exactly when the feature representations of different classes are perfectly separated}, such that a GMM fit as density estimator would assign each class's feature representations to a single component.

    By the above, we can construct a simple case: if the samples of different classes are not separated in feature-space, optimas for the objectives will not coincide, so for example if samples were drawn from the same Gaussian and labeled randomly. On the other hand, if we have classes whose features lie in well-separated clusters, GDA will minimize all objectives.
\end{enumerate}
\end{proof}

Given that perfect separation is impossible with ambiguous data for a GMM, a shared optimum will be rare with noisy real-world data, but only then would GDA be optimal.
In all other cases, GDA does not optimize both objectives, and neither can any other GMM with one component per class.
Moreover, \Cref{eq:classifiers_trade_off} shows that a GMM fit using EM is a better density estimator than GDA, and a softmax layer is a better classifier, as optimizing the softmax objective $\Htc{Y}{Z}$ or density objective $\Entropy{Z}$ using gradient descent will move away from the GDA optimum.

As can easily be verified, a trivial optimal minimizer $\trivialminimizer$ for $\Ht{Y, Z}$ given an empirical data distribution $\hpprob{y,z}$ is an adapted Parzen estimator:
\begin{align}
    \trivialminimizer = \sum_y \hpprob{y} \chainedE{\hat{z} \sim \hpprobc{z}{y}}{\normaldistpdf{z}{\hat{z}}{\sigma^2\mathbf{I}}},
\end{align}
for small enough $\sigma$.
This shows that above proposition is not general.

\subsubsection{Intuitions \& Validation with a Toy Example} %
\label{app:objective_mismatch_toy_example}

\begin{table}[b!]
    \centering
    \caption{\emph{Realized objective scores (columns) for different optimization objectives (rows) for the synthetic 2D toy example depicted in \Cref{fig:objective_mismatch}.} Smaller is better. We see that each objectives minimizes its own score while being suboptimal in regards to the other two objectives (when it is possible to compute the scores). This empirically further validates \Cref{pro:objectivemismatch}.
    }
    \label{table:objective_mismatch}
    \begin{tabular}{lrrr}
    \toprule
        \textbf{Objective} & {\boldmath$\Htc{Y}{Z}$} (\textdownarrow) & {\boldmath$\Ht{Y, Z}$} (\textdownarrow) & {\boldmath$\Ht{Z}$} (\textdownarrow) \\ 
        \midrule
        $\min \Htc{Y}{Z}$ & 0.1794 & 5.4924 & 5.2995 \\ 
        $\min \Ht{Y, Z}$ & 0.2165 & 4.9744 & 4.7580 \\ 
        $\min \Ht{Z}$ & n/a & n/a & 4.7073 \\
        \bottomrule
    \end{tabular}%
\end{table}

\Cref{fig:objective_mismatch} visualises this on a synthetic 2D dataset with three classes and 4\% label noise, which causes the optima to diverge as described in the proof. Label noise is a common issue in real-world datasets. Non-separability even more so. To explain \Cref{pro:objectivemismatch} in an intuitive way, we focus on on a simple 2D toy case and fit a GMM using the different objectives. We sample "latents" z from 3 Gaussians (each representing a different class y) with 4\% label noise. Following the construction in the proof, this will lead the objectives to have different optima.

We know discuss the different objectives in \Cref{fig:objective_mismatch} and the resulting scores in more detail:

{\boldmath${\min \Htc{Y}{Z}}$}. A softmax linear layer is equivalent to an LDA (Linear Discriminant Analysis) with conditional likelihood as detailed in \citet{murphy2012machine}, for example. We optimize an LDA with the usual objective "$\min -1/N \sum \log \probc{y}{z}$", i.e. the cross-entropy of $\probc{y}{z}$ or (average) negative log-likelihood (NLL). Following \Cref{def:it_notation}, we use the short-hand "$\min \Htc{Y}{Z}$" for this cross-entropy.

Because we optimize only $\probc{y}{z}$, $\prob{z}$ does not affect the objective and is thus not optimized. Indeed, the components do not actually cover the latents well, as can be seen in the first density plot of \Cref{fig:objective_mismatch_density}. However, it does provide the lowest NLL.

{\boldmath$\min \Ht{Y, Z}$}. We optimize a GDA for the combined objective "$\min -1/N \sum \log \qprob{y, z}$", i.e. the cross-entropy of $\qprob{y, z}$. We use the short-hand "$\min \Htc{Y}{Z}$" for this.

{\boldmath$\min \Ht{Z}$}. We optimize a GMM for the objective "$\min -1/N \sum \log \qprob{z}$", i.e. the cross-entropy of $\qprob{z}$. We use the short-hand "$\min \Ht{Z}$" for this.

We do not provide scores for $\Htc{Y}{Z}$ and $\Ht{Y,Z}$ for the third objective $\min \Ht{Z}$ in \Cref{table:objective_mismatch} as it does not depend on $Y$, and hence the different components do not actually model the different classes necessarily. Hence, we also use a single color to visualize the components for this objective in \Cref{fig:objective_mismatch_density}.

In \Cref{table:objective_mismatch} and \Cref{fig:objective_mismatch_density}, we see that each solution minimizes its own objective best. The GMM provides the best density model (best fit according to the entropy), while the LDA (like a softmax linear layer) provides the best NLL for the labels. The GDA provides a density model that is almost as good.

\textbf{Entropy.} Looking at the entropy plots in \Cref{fig:objective_mismatch_entropy}, we first notice that the LDA solution optimized for $\min \Htc{Y}{Z}$ has a wide decision boundary. This is due to the overlap of the Gaussian components, which is necessary to provide the right aleatoric uncertainty.

Optimizing the negative log-likehood $-\log \probc{y}{z}$ is a proper scoring rule, and hence is optimized for calibrated predictions.

Compared to this, the GDA solution (optimized for $\min \Ht{Y, Z}$ has a much narrower decision boundary and cannot capture aleatoric uncertainty as well. This is reflected in the higher NLL. Moreover, unlike for LDA, GDA decision boundaries behave differently than one would naively expect due to the untied covariance matrices. They can be curved and the decisions change far away from the data \citep{murphy2012machine}.

To show the difference between the two objectives we have marked an ambiguous point near $(0, -5)$ with  a yellow star . Under the first objective $\min \Ht{Y, Z}$, the point has high aleatoric uncertainty (high entropy), as seen in the left entropy plot while under the second objective ($\min \Ht{Y, Z}$) the point is only assigned very low entropy. The GDA optimized for the second objective thus is overconfident.

As above explained above, we do not show an entropy plot of $Y \mid Z$ for the third objective $\min \Ht{Z}$ in \Cref{fig:objective_mismatch_entropy} because the objective does not depend on $Y$, and there are thus no class predictions.

Intuitively, for aleatoric uncertainty, the Gaussian components need to overlap to express high aleatoric uncertainty (uncertain labelling). At the same time, this necessarily provides looser density estimates. On the other hand, the GDA density is much tighter, but this comes at the cost of NLL for classification because it cannot express aleatoric uncertainty that well. \Cref{fig:objective_mismatch} visualizes how the objectives trade-off between each other, and why we use the softmax layer trained for $\probc{y}{z}$ for classification and aleatoric uncertainty, and GDA as density model for $\qprob{z}$.

\begin{sidewaysfigure}[t!]
    \centering
    \begin{subfigure}{\linewidth}
        \includegraphics[width=\linewidth]{figs/dirty_ambiguous_fashion_mnist.pdf}%
        \caption{{\color{sns-blue} Dirty-MNIST (iD)} and {\color{sns-orange} Fashion-MNIST (OoD)}}
        \label{app_fig:intro_sample_viz}
    \end{subfigure}
    \begin{subfigure}{\linewidth}
        \begin{subfigure}{0.33\linewidth}
            \centering
            \includegraphics[width=\linewidth]{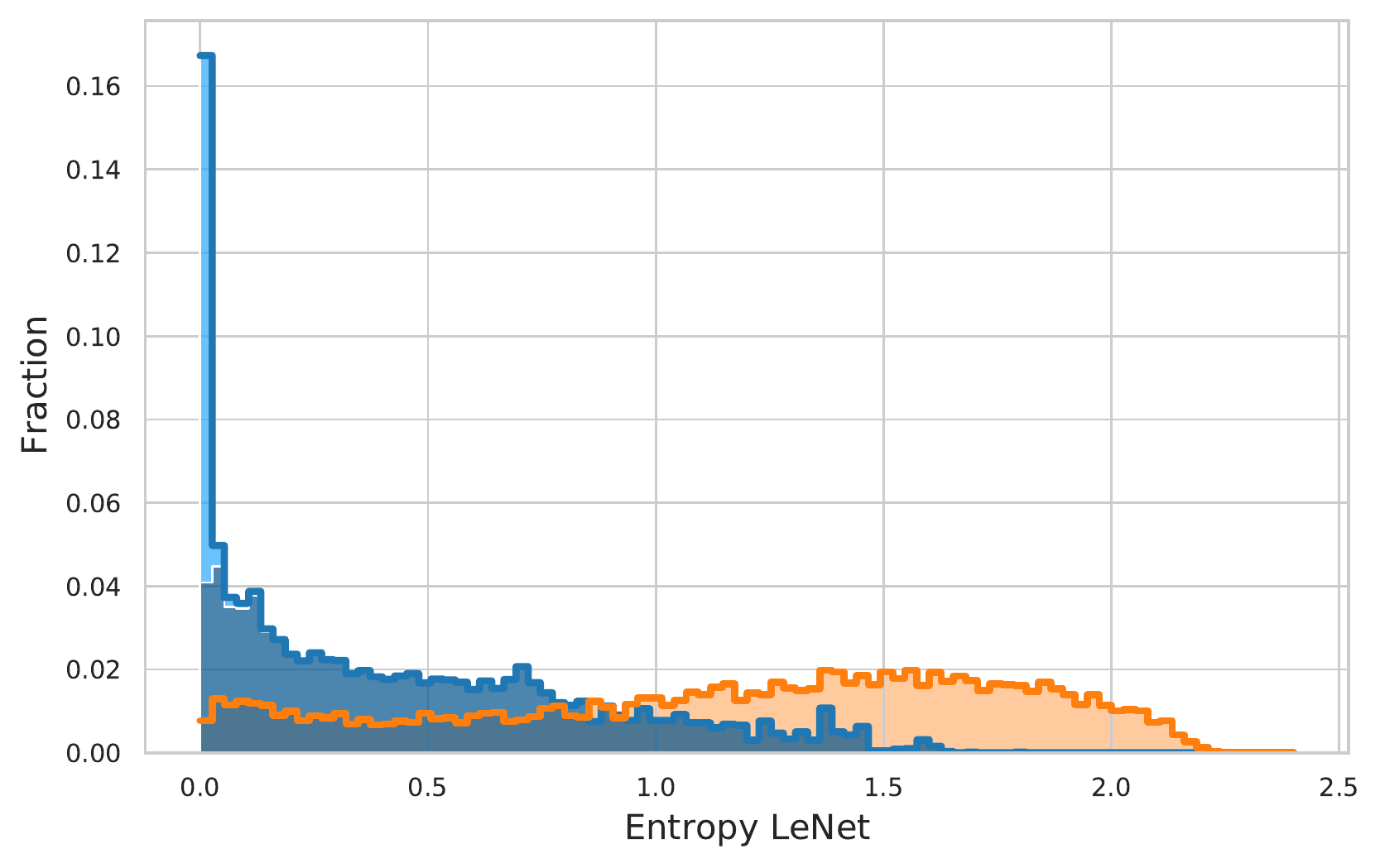}%
        \end{subfigure}%
        \begin{subfigure}{0.33\linewidth}
            \centering
            \includegraphics[width=\linewidth]{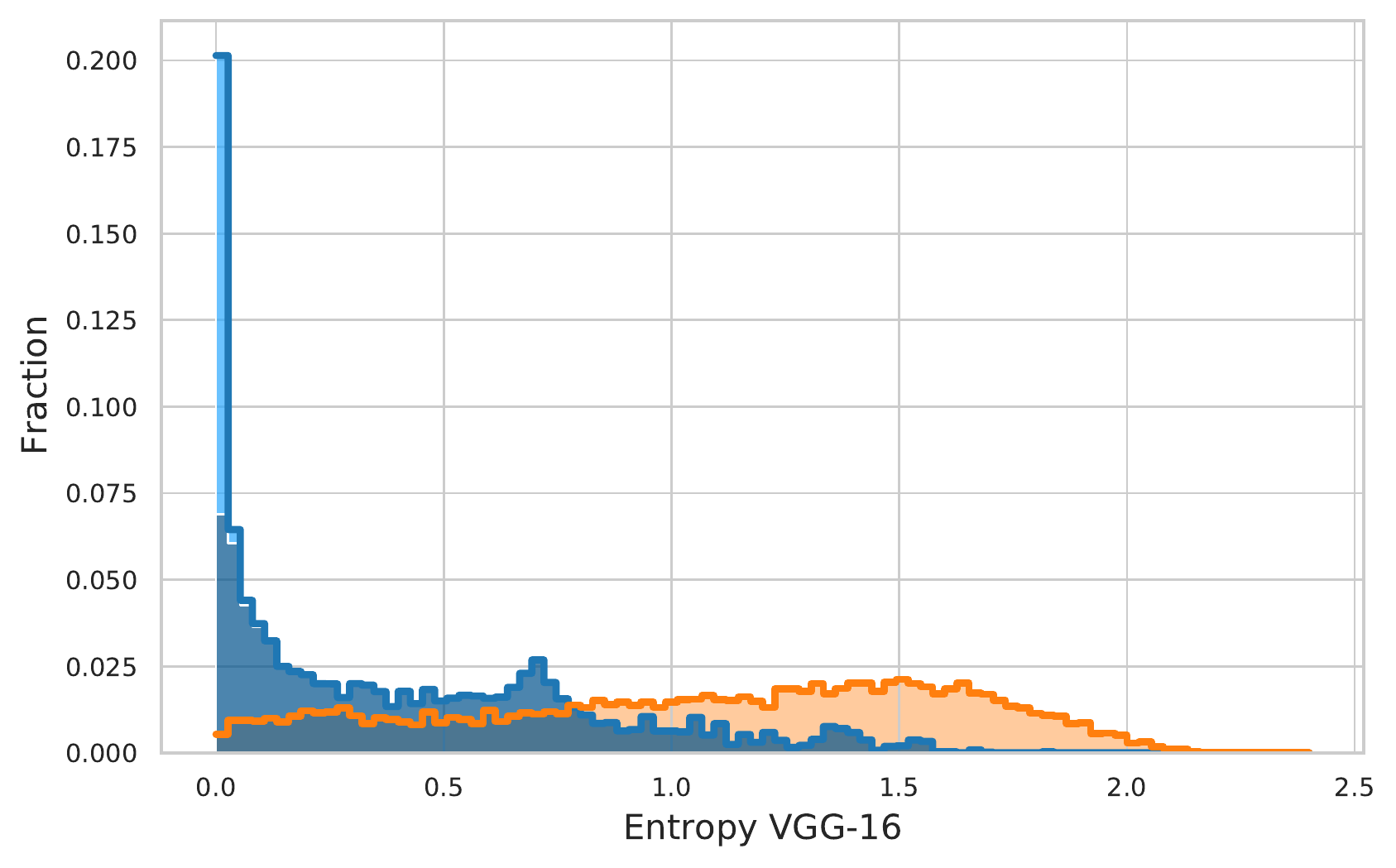}%
        \end{subfigure}%
        \begin{subfigure}{0.33\linewidth}
            \centering
            \includegraphics[width=\linewidth]{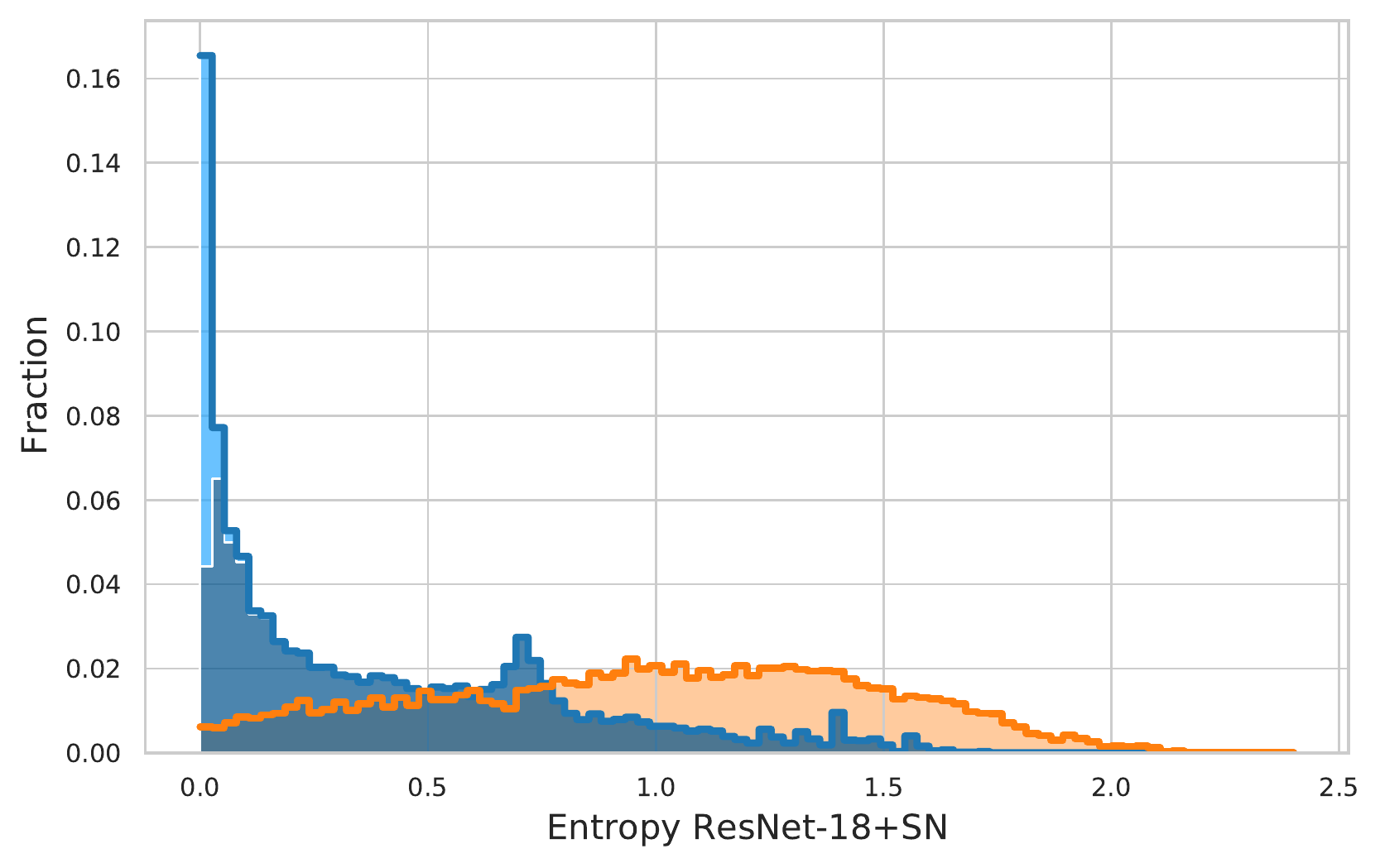}%
        \end{subfigure}%
        \subcaption{{Softmax entropy}}
        \label{app_fig:intro_softmax_ent}
    \end{subfigure}
    \begin{subfigure}{\linewidth}
        \begin{subfigure}{0.33\linewidth}
            \centering
            \includegraphics[width=\linewidth]{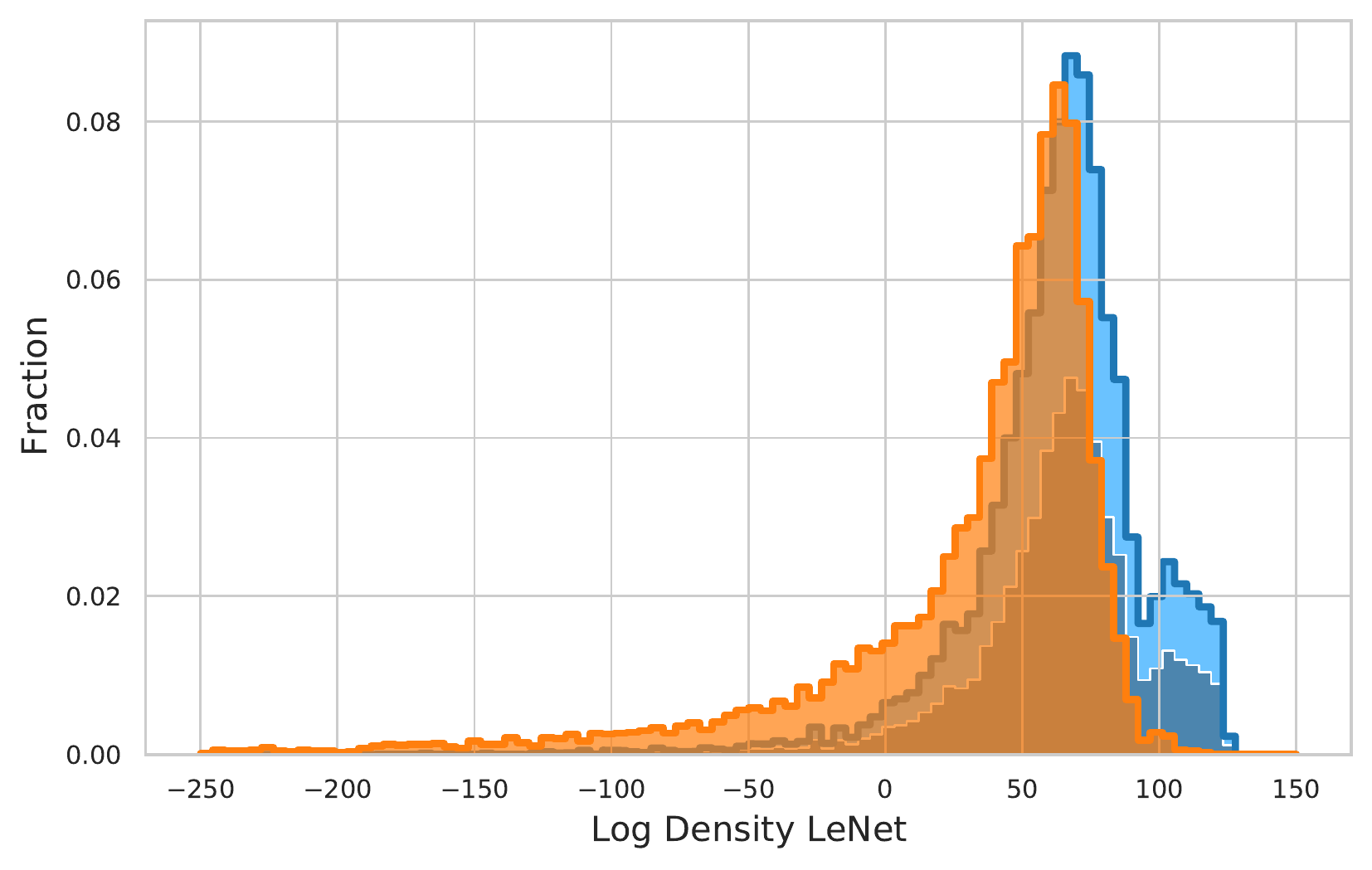}%
        \end{subfigure}%
        \begin{subfigure}{0.33\linewidth}
            \centering
            \includegraphics[width=\linewidth]{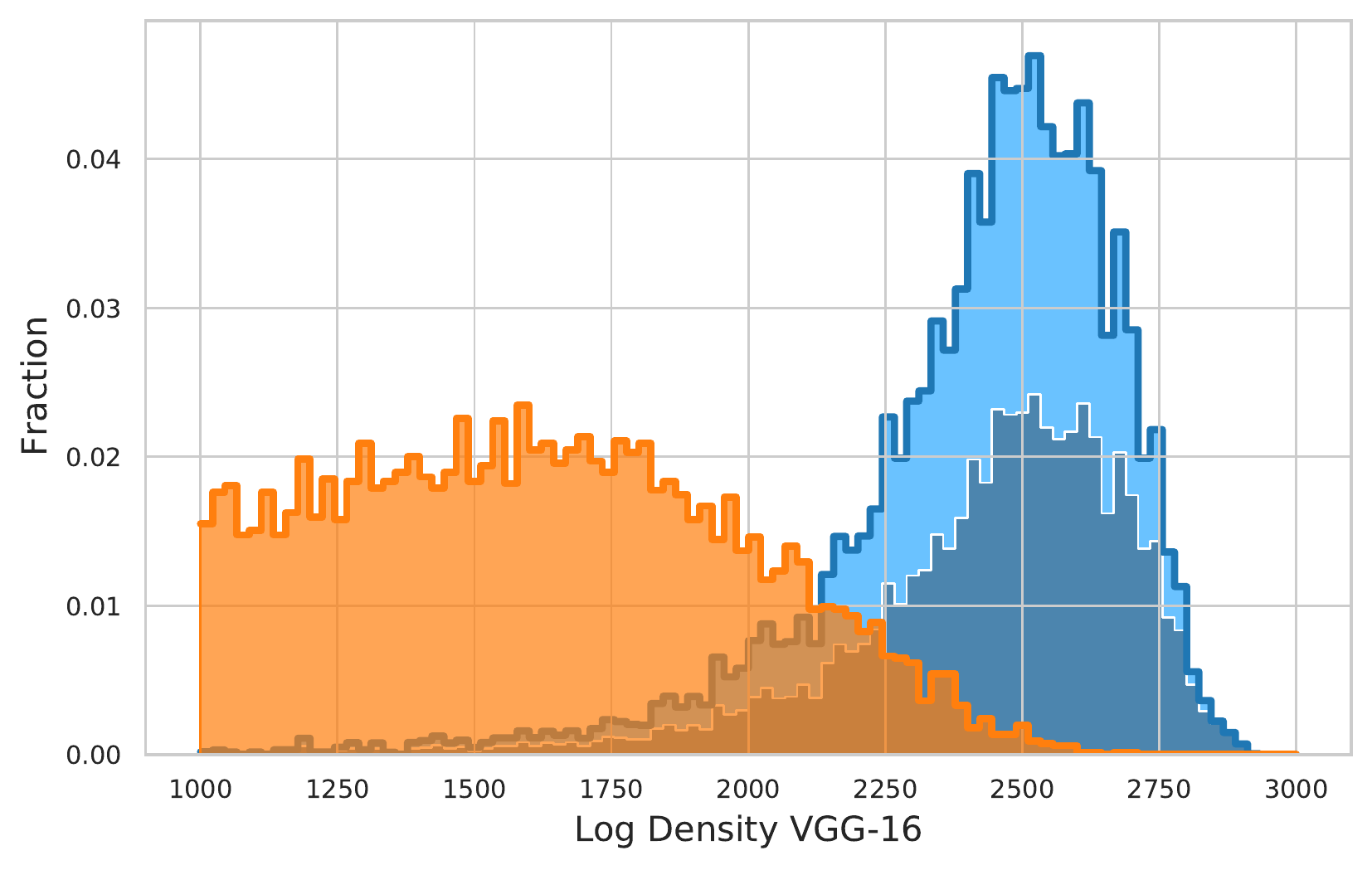}%
        \end{subfigure}%
        \begin{subfigure}{0.33\linewidth}
            \centering
            \includegraphics[width=\linewidth]{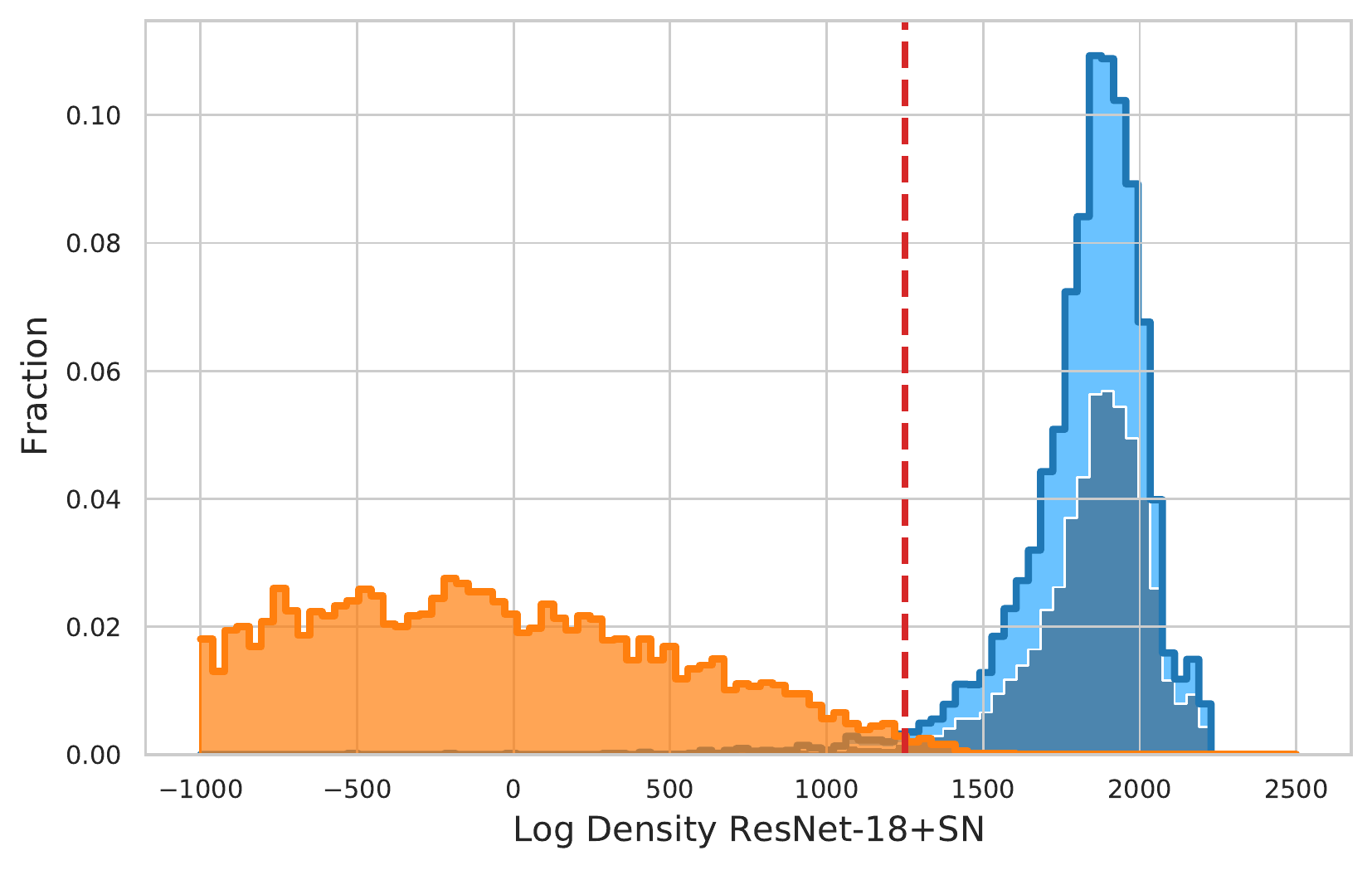}%
        \end{subfigure}%
        \subcaption{{Feature-space density}}
        \label{app_fig:intro_gmm}
    \end{subfigure}%
    \caption{
    \emph{
    Disentangling aleatoric and epistemic uncertainty on {\color{sns-blue}Dirty-MNIST (iD)} and {\color{sns-orange} Fashion-MNIST (OoD)} \textbf{\subref{app_fig:intro_sample_viz}} requires using \emph{softmax entropy} \textbf{\subref{app_fig:intro_softmax_ent}} and \emph{feature-space density (GMM)} \textbf{\subref{app_fig:intro_gmm}} with appropriate inductive biases (\emph{ResNet-18+SN} vs \emph{LeNet} \& \emph{VGG-16} without them). Enlarged version.
    }
    \textbf{\subref{app_fig:intro_softmax_ent}:}
    Softmax entropy captures aleatoric uncertainty for iD data (Dirty-MNIST), thereby separating {\color{sns-nonambiguous}unambiguous MNIST samples} and {\color{sns-ambiguous}Ambiguous-MNIST samples}. However, iD and OoD are confounded: softmax entropy has arbitrary values for OoD, indistinguishable from iD.
    \textbf{\subref{app_fig:intro_gmm}:}
    With appropriate inductive biases (DDU with ResNet-18+SN), iD and OoD densities do not overlap, capturing epistemic uncertainty. However, without appropriate inductive biases (LeNet \& VGG-16), feature density suffers from \emph{feature collapse}: iD and OoD densities overlap. Generally, feature-space density confounds unambiguous and ambiguous iD samples as their densities overlap.
    \textbf{Note:} {\color{sns-nonambiguous}Unambiguous MNIST samples} and {\color{sns-ambiguous}Ambiguous-MNIST samples} are shown as stacked histograms with the total fractions adding up to 1 for {\color{sns-blue}Dirty-MNIST}. 
    }
    \label{app_fig:intro_histograms}
\end{sidewaysfigure}
\end{document}